\def\eqref#1{equation~\ref{#1}}
\def\1{\bm{1}}
\DeclareMathAlphabet{\mathsfit}{\encodingdefault}{\sfdefault}{m}{sl}
\SetMathAlphabet{\mathsfit}{bold}{\encodingdefault}{\sfdefault}{bx}{n}
\DeclareMathOperator*{\argmax}{arg\,max}
\DeclareMathOperator*{\argmin}{arg\,min}
\definecolor{greyC}{RGB}{180,180,180}
\definecolor{greyL}{RGB}{235,235,235}
\definecolor{Gray}{gray}{0.9}
\definecolor{mydarkred}{rgb}{0.6,0,0}
\definecolor{myblue}{HTML}{268BD2}
\definecolor{mygreen}{HTML}{658354}
\definecolor{orangeinplot}{HTML}{e29c7a}
\definecolor{purpleinplot}{HTML}{7676a4}
\definecolor{greeninplot}{HTML}{288308}
\def\*#1{\mathbf{#1}}
\newcommand{\model}{\textsc{SAL}\xspace}
\newtheorem{theorem}{\textbf{Theorem}}
\newtheorem{theorem*}{Theorem}
\newtheorem{lemma}{Lemma}
\newtheorem{Definition}{Definition}
\newtheorem{Proposition}{Proposition}
\newtheorem{assumption}{Assumption}
\newtheorem{Remark}{Remark}
\title{How Does Unlabeled Data Provably Help \\Out-of-Distribution Detection?}
\author{Xuefeng Du$^{1,}$\thanks{Equal contributions} , Zhen Fang$^{2,*}$, Ilias Diakonikolas$^{1}$, Yixuan Li$^{1}$ \\
$^{1}$Department of Computer Sciences, University of Wisconsin-Madison\\
$^{2}$Australian Artificial Intelligence Institute, University of Technology Sydney\\
\texttt{\{xfdu,ilias,sharonli\}@cs.wisc.edu, zhen.fang@uts.edu.au} \\
}
\begin{document}

\maketitle
\begin{abstract}
 Using unlabeled data to regularize the machine learning models has demonstrated promise for improving safety and reliability in detecting out-of-distribution (OOD) data. Harnessing the power of unlabeled \textcolor{black}{in-the-wild} data is non-trivial due to the heterogeneity of both in-distribution (ID) and OOD data. This lack of a clean set of OOD samples poses significant challenges in learning an optimal OOD classifier. Currently, there is a lack of research on formally understanding how unlabeled data helps OOD detection. 
 This paper bridges the gap by introducing a new learning framework \model (\textbf{S}eparate
\textbf{A}nd \textbf{L}earn) that offers both strong theoretical guarantees and empirical effectiveness. The framework separates candidate outliers from the unlabeled data and then trains an OOD classifier using the candidate outliers and the labeled ID data. Theoretically, we provide rigorous error bounds from the lens of separability and
learnability, formally justifying the two components in our algorithm.  Our theory shows that \model can separate the candidate outliers with small error rates, which leads to a generalization guarantee for the learned OOD classifier.  Empirically, \model achieves state-of-the-art performance on common benchmarks, reinforcing our theoretical insights. Code is publicly available at
\url{https://github.com/deeplearning-wisc/sal}.
\end{abstract}

\section{Introduction}

When deploying machine learning models in real-world environments, their safety and reliability are
often challenged by the occurrence of out-of-distribution (OOD) data, \textcolor{black}{which arise from unknown categories and should not be predicted by the model}. Concerningly, neural networks are brittle and lack the necessary awareness of OOD data in the wild~\citep{nguyen2015deep}. 
Identifying OOD inputs is a vital but fundamentally challenging problem---the models are not explicitly exposed to the unknown distribution during training, and therefore cannot capture a reliable boundary between in-distribution (ID) vs. OOD data.
To circumvent the challenge, researchers have started to explore training with additional data, which can facilitate a conservative and safe decision boundary against OOD data. In particular, a recent work \textcolor{black}{by}~\cite{katzsamuels2022training} proposed to leverage unlabeled data in the wild to regularize model training, while learning to classify labeled ID data. Such unlabeled \textcolor{black}{wild} data offer the benefits of being freely collectible upon deploying
any machine learning model in its operating environment, and allow capturing the true test-time OOD distribution.

Despite the promise, harnessing the power of unlabeled \textcolor{black}{wild} data is non-trivial due to the heterogeneous mixture of ID and OOD data. This lack of a clean set of OOD training data poses significant challenges in designing effective OOD learning algorithms. Formally, the unlabeled data can be characterized by a Huber contamination model $\mathbb{P}_\text{wild}:= (1-\pi) \mathbb{P}_\text{in} + \pi \mathbb{P}_\text{out}$, where  $\mathbb{P}_\text{in}$ and $\mathbb{P}_\text{out}$ are the marginal distributions of the ID and OOD data. 
It is important to note that the learner only observes samples drawn from such mixture distributions, without knowing the clear membership of whether being ID or OOD. \textcolor{black}{Currently, a formalized understanding of the
problem is lacking for the field}.
This prompts the question underlying the present work: 
\begin{center}
\begin{tcolorbox}[enhanced,attach boxed title to top center={yshift=-3mm,yshifttext=-1mm},colback=gray!5!white,colframe=gray!75!black,colbacktitle=red!80!black,
  title=,fonttitle=\bfseries,
  boxed title style={size=small,colframe=red!50!black},width=3.8in, boxsep=1pt ]
\emph{How does unlabeled \textcolor{black}{wild} data provably help OOD detection?}
\end{tcolorbox}
\end{center}

\textbf{Algorithmic contribution.} In this paper, we propose a new learning framework \model (\textbf{S}eparate \textbf{A}nd \textbf{L}earn), that effectively exploits the unlabeled wild data for OOD detection. At a high level, our framework \model 
builds on two consecutive components: \textbf{(1)}  \textcolor{black}{filtering}---separate \emph{candidate outliers} from the unlabeled data, and \textbf{(2)} \textcolor{black}{classification}---learn an OOD classifier with the candidate outliers, in conjunction with the labeled ID data. To separate the candidate outliers,
our key idea is to
perform singular value decomposition on a gradient matrix, defined over all the unlabeled data \textcolor{black}{whose gradients are computed based on a classification model trained on the clean labeled ID data.} 
\textcolor{black}{In the \model framework, unlabeled wild data are considered candidate outliers when their projection onto the top singular vector exceeds a given threshold. The filtering strategy for identifying candidate outliers is theoretically supported by Theorem \ref{MainT-1}. {We show in Section~\ref{sec:method} (Remark \ref{R1}) that under proper conditions, with a high probability, there exist some specific directions (e.g., the top singular vector direction) where the mean magnitude of the gradients for the wild outlier data is \textcolor{black}{larger than that of ID data}}.} After obtaining the outliers from the wild data, we train an OOD classifier that optimizes the classification between the ID vs. candidate outlier data for OOD detection. 

\textbf{Theoretical significance.} Importantly, we provide new theories from the lens of \emph{separability} and \emph{learnability}, formally justifying the two components in our algorithm. 
Our main Theorem~\ref{MainT-1} analyzes the separability of outliers from unlabeled \textcolor{black}{wild} data using our filtering procedure, and gives a rigorous bound on the error rate. Our theory has practical implications. For example, when the size of the labeled ID data and unlabeled data is sufficiently large, Theorems~\ref{MainT-1} and \ref{The-1.1} imply that the error rates of filtering outliers can be bounded by a small bias proportional to the optimal ID risk, which is a small value close to zero in reality~\citep{frei2022benign}. Based on the error rate estimation, we give a generalization error of the OOD classifier in  Theorem \ref{the:main2}, to quantify its learnability on the  ID data and a noisy set of candidate outliers. Under proper conditions, the generalization error of the learned OOD classifier is \textcolor{black}{upper bounded by the risk  associated with the optimal OOD classifier.}

\textbf{Empirical validation.} Empirically, we show that the generalization bound w.r.t. \model (Theorem~\ref{the:main2}) indeed translates into strong empirical performance. \model can be broadly applicable to non-convex models such as modern neural networks.
We extensively evaluate \model on common OOD detection tasks and establish state-of-the-art performance. For
completeness, we compare \model with two families of methods:
(1) trained with only $\mathbb{P}_\text{in}$, and (2) trained with both
$\mathbb{P}_\text{in}$ and an unlabeled dataset. On \textsc{Cifar-100}, compared to
a strong baseline KNN+~\citep{sun2022out} using only $\mathbb{P}_\text{in}$, \model outperforms
by 44.52\% (FPR95) on average.
While methods such as Outlier Exposure~\citep{hendrycks2018deep} require a clean set of auxiliary unlabeled data, our results are achieved without imposing any such assumption on the unlabeled data and hence offer stronger flexibility. 
Compared to the most related baseline WOODS~\citep{katzsamuels2022training}, our framework can reduce the FPR95 from
7.80\% to 1.88\% on \textsc{Cifar-100}, establishing  near-perfect results on this challenging benchmark. 

\section{Problem Setup}

Formally, we describe the data setup, models and losses and learning goal.

\textbf{Labeled ID data and ID distribution.} \textcolor{black}{Let $\mathcal{X}$ be the input space, and $\mathcal{Y}=\{1,...,K\}$ be the label space for ID data. Given an unknown ID joint distribution $\mathbb{P}_{\mathcal{X}\mathcal{Y}}$ defined over $\mathcal{X}\times \mathcal{Y}$, the labeled ID data $\mathcal{S}^{\text{in}} = \{(\*x_1, y_1),...,(\*x_n, y_n)\}$ are drawn independently and identically from $\mathbb{P}_{\mathcal{X}\mathcal{Y}}$.  We also denote $\mathbb{P}_{\text{in}}$ as the marginal distribution of $\mathbb{P}_{\mathcal{X}\mathcal{Y}}$ on $\mathcal{X}$, which is referred to as the ID distribution. }

\textbf{Out-of-distribution detection.} Our framework concerns a common real-world scenario in which the algorithm is trained on the \textcolor{black}{labeled} ID data, but will then be deployed in environments containing OOD data from {unknown} class, i.e., $y\notin \mathcal{Y}$, and therefore should not be predicted by the model. 
At test time, the goal is to decide whether a test-time input is from  ID  or not (OOD).

\textbf{Unlabeled wild data.} A key challenge in OOD detection is the lack of labeled OOD data. In particular, the sample space for potential OOD data can be prohibitively large, making it expensive to collect labeled OOD data.  In this paper,  to model the realistic environment, we incorporate unlabeled wild
data $\mathcal{S}_{\text{wild}}=\left\{\tilde{\mathbf{x}}_1,...,\tilde{\mathbf{x}}_m \right\}$ into our learning framework. Wild data
 consists of both ID and OOD data, and can be collected   freely upon deploying an existing model trained on $\mathcal{S}^{\text{in}}$. Following~\cite{katzsamuels2022training}, we use the Huber contamination model to characterize the marginal distribution
of the wild data
\begin{equation}
\mathbb{P}_\text{wild} := (1-\pi) \mathbb{P}_\text{in} + \pi \mathbb{P}_\text{out}, 
\label{eq:wild_w_cor}
\end{equation}
where $\pi \in (0,1]$ and \textcolor{black}{$\mathbb{P}_\text{out}$ is the OOD distribution defined over $\mathcal{X}$}.  Note that the case $\pi=0$  is
straightforward since no novelties occur.

\textbf{Models and losses.} We denote by $\*h_{\*w}: \mathcal{X} \mapsto \mathbb{R}^K$  a predictor for ID classification 
 with parameter $\*w \in \mathcal{W}$, where $\mathcal{W}$ is the parameter space. $\*h_{\*w}$ returns the soft classification output. We consider the loss function $\ell: \mathbb{R}^{K} \times \mathcal{Y}\mapsto \mathbb{R}$ on the labeled ID data. In addition, we denote the OOD classifier $\*g_{\boldsymbol{\theta}}:  \mathcal{X} \mapsto \mathbb{R}$ with parameter ${\boldsymbol{\theta}} \in \Theta$, where $\Theta$ is the parameter space. We use $\ell_{\text{b}}(\*g_{\boldsymbol{\theta}}(\mathbf{x}),y_{\text{b}})$ to denote the binary loss function \emph{w.r.t.} $\*g_{\boldsymbol{\theta}}$ and binary label $y_{\text{b}}\in \mathcal{Y}_{\text{b}}:=\{y_{+},y_{-}\}$, where $y_+ \in \mathbb{R}_{>0}$ and $y_-\in \mathbb{R}_{<0}$ correspond to the ID class and the OOD class, respectively.

\textbf{Learning goal.} Our learning framework aims to build the OOD classifier $\*g_{\boldsymbol{\theta}}$ by leveraging data from both $\mathcal{S}^{\text{in}}$ and $\mathcal{S}_{\text{wild}}$. In evaluating our
model, we are interested in the following measurements:
\begin{equation}
\begin{aligned}
&(1)~\downarrow \text{FPR}(\*g_{\boldsymbol{\theta}};\lambda):=\mathbb{E}_{\*x \sim \mathbb{P}_\text{out}}(\mathds{1}{\{ \*g_{\boldsymbol{\theta}} (\*x)>\lambda\}}), \\
 &(2)~\uparrow \text{TPR}(\*g_{\boldsymbol{\theta}};\lambda):=\mathbb{E}_{\*x \sim \mathbb{P}_\text{in}}(\mathds{1}{\{ \*g_{\boldsymbol{\theta}}(\*x)>\lambda\}}),
\end{aligned}
\end{equation}
where $\lambda$ is a threshold, typically chosen so that a high fraction of ID data is correctly classified.

\section{Proposed Methodology}
\label{sec:method}
In this section, we introduce a new learning framework \model that performs
OOD detection \textcolor{black}{by leveraging} the unlabeled wild data. The framework offers substantial advantages over the counterpart approaches that rely only on the ID data, and naturally suits many applications where machine learning models are deployed in the open world.
\model has two integral components: \textbf{(1)} \textcolor{black}{filtering}---separate the candidate outlier data from the unlabeled wild data  (Section~\ref{sec:detect}),  and \textbf{(2)} \textcolor{black}{classification}---train a binary OOD classifier  with the ID data and candidate outliers  (Section~\ref{sec:training}).  In Section~\ref{sec:theory}, we provide theoretical guarantees for \model, provably justifying the two components in our method.

\subsection{Separating Candidate Outliers from the Wild Data}
\label{sec:detect}
\textcolor{black}{To separate candidate outliers from the wild mixture $\mathcal{S}_{\text{wild}}$, our framework employs a level-set estimation based on the gradient information. The gradients are estimated from a classification predictor $\*h_\*w$ trained on the ID data $\mathcal{S}^{\text{in}}$.  We describe the procedure  formally below. }

\textbf{Estimating the reference gradient from ID data.} To begin with, \model estimates the reference gradients by training a classifier  $\*h_\*w$ on the ID data $\mathcal{S}^{\text{in}}$ by empirical risk minimization (ERM):
\begin{equation}
\*w_{\mathcal{S}^{\text{in}}} \in \argmin_{\*w\in \mathcal{W}} R_{\mathcal{S}^{\text{in}}}(\*h_\*w),~~\text{where}~~R_{\mathcal{S}^{\text{in}}} (\*h_\*w)= \frac{1}{n}\sum_{(\*x_i, y_i) \in \mathcal{S}^{\text{in}}} \ell (\*h_\*w(\*x_i),y_i),
    \label{eq:erm}
\end{equation}
$\*w_{\mathcal{S}^{\text{in}}}$ is the learned parameter and $n$ is the size of ID training set $\mathcal{S}^{\text{in}}$. {The average gradient $\bar{\nabla}$ is}
\begin{equation}
    \bar{\nabla}=\frac{1}{n} \sum_{(\*x_i, y_i) \in \mathcal{S}^{\text{in}}} \nabla \ell (\*h_{\*w_{\mathcal{S}^{\text{in}}}}(\*x_i),y_i),
    \label{eq:reference_gradient}
\end{equation}
 where $\bar{\nabla} $  acts as a reference gradient that allows measuring the deviation of any other points from it.

\textcolor{black}{\textbf{Separate candidate outliers from the unlabeled wild  data.} After training the classification predictor on the labeled ID data, we deploy the trained predictor $\*h_{\*w_{\mathcal{S}^{\text{in}}}}$ in the wild, and naturally receives data $\mathcal{S}_{\text{wild}}$---a mixture of unlabeled ID and OOD data. Key to our framework, we perform a filtering procedure on the wild data $\mathcal{S}_{\text{wild}}$, identifying candidate outliers based on a filtering score. To define the filtering score, we represent each point in $\mathcal{S}_{\text{wild}}$ as a gradient vector, relative to the reference gradient $\bar{\nabla} $.}
\textcolor{black}{
Specifically, we calculate the gradient matrix (after subtracting the reference gradient $\bar{\nabla}$) for the wild data as follows:}
\begin{equation}
    \mathbf{G} = \left[\begin{array}{c}
    \nabla \ell (\*h_{\*w_{\mathcal{S}^{\text{in}}}}(\tilde{\*x}_1), \widehat{y}_{\tilde{\*x}_1})-\bar{\nabla} \\
    ...\\
    \nabla \ell (\*h_{\*w_{\mathcal{S}^{\text{in}}}}(\tilde{\*x}_m), \widehat{y}_{\tilde{\*x}_m})-\bar{\nabla}
    \end{array}\right]^\top,
    \label{eq:gradient_matrix}
\end{equation}
\textcolor{black}{where $m$ denotes the size of the wild data, \textcolor{black}{and $\widehat{y}_{\tilde{\*x}}$ is the predicted label for a wild sample $\tilde{\*x}$}. }\textcolor{black}{For each data  point $\tilde{\*x}_i$ in $\mathcal{S}_{\text{wild}}$, we then define our filtering score  as follows:}
\begin{equation}
    \tau_i =\left<\nabla \ell (\*h_{\*w_{\mathcal{S}^{\text{in}}}}\big(\tilde{\*x}_i), \widehat{y}_{\tilde{\*x}_i})-\bar{\nabla} , \*v\right > ^2,
    \label{eq:score}
\end{equation}
\textcolor{black}{where $\left<\cdot, \cdot\right>$ is the dot product operator and $\*v$ is the top singular vector of  $\mathbf{G}$. The top singular vector $\*v$  can be regarded as the principal component of the matrix $\mathbf{G}$ in Eq.~\ref{eq:gradient_matrix}, which maximizes the total distance from the projected gradients (onto the direction of $\*v$) to the origin (sum over all points in $\mathcal{S}_{\text{wild}}$)~\citep{hotelling1933analysis}. Specifically, $\*v$ is a unit-norm vector and can be computed as follows:}
\vspace{-0.1em}
\begin{equation}
 \vspace{-0.1em}
    \*v \in  \argmax_{\|\*u\|_2 = 1} \sum_{\tilde{\*x}_i \in \mathcal{S}_{\text{wild}}} \left< \*u, \nabla \ell (\*h_{\*w_{\mathcal{S}^{\text{in}}}}\big(\tilde{\*x}_i),   \widehat{y}_{\tilde{\*x}_i})-\bar{\nabla} \right>^2.
    \label{eq:pca}
\end{equation}
Essentially, the filtering score $\tau_i$ in Eq.~\ref{eq:score} measures the $\ell_2$ norm of the projected vector. \textcolor{black}{To help readers better understand our design rationale, we provide an illustrative example of the gradient vectors and their projections in Figure~\ref{fig:pca} (see caption for details)}. \textcolor{black}{Theoretically, Remark~\ref{rem:1} below shows that the projection of the OOD gradient vector to the top singular vector of the gradient matrix $\*G$ is  on average provably larger than that of the ID gradient vector, which rigorously justifies our idea of using the score $\tau$ for separating the ID and OOD data. }
\vspace{-0.5em}
\textcolor{black}{
\begin{Remark}\label{R1}
\label{rem:1}
    Theorem \ref{the:main4.0-app} in Appendix~\ref{sec:proof_1} has shown that under proper assumptions, if we have sufficient data and large-size model, then with the high probability:
    \begin{itemize}
    \item the mean projected magnitude of OOD gradients in the direction of the top singular vector of $\*G$ can be lower bounded by a positive constant $C/\pi$;
    \item the mean projected magnitude of ID gradients in the direction of the top singular vector is upper bounded by a small value close to zero.
\end{itemize}
\end{Remark}}

Finally, we regard $\mathcal{S}_T = \{\tilde{\mathbf{x}}_i\in \mathcal{S}_{\text{wild}}: \mathbf{\tau}_i>T\}$ as the (potentially noisy) candidate outlier set, where $T$ is the filtering threshold. The threshold can be chosen on the ID data $\mathcal{S}^{\text{in}}$ so that a high fraction (e.g., 95\%) of ID samples is below it. In Section~\ref{sec:theory}, we will provide formal guarantees, rigorously justifying that the set $\mathcal{S}_T$ returns outliers with \textcolor{black}{a large probability}. \textcolor{black}{We discuss and compare with alternative gradient-based scores (e.g., GradNorm~\citep{huang2021importance}) for filtering in Section~\ref{sec:results}.} {In Appendix~\ref{sec:num_of_sing_vectors}, we discuss the variants of using multiple singular vectors, which yield similar results.}

\begin{figure*}
    \centering
    \scalebox{1.0}{
    \includegraphics[width=\linewidth]{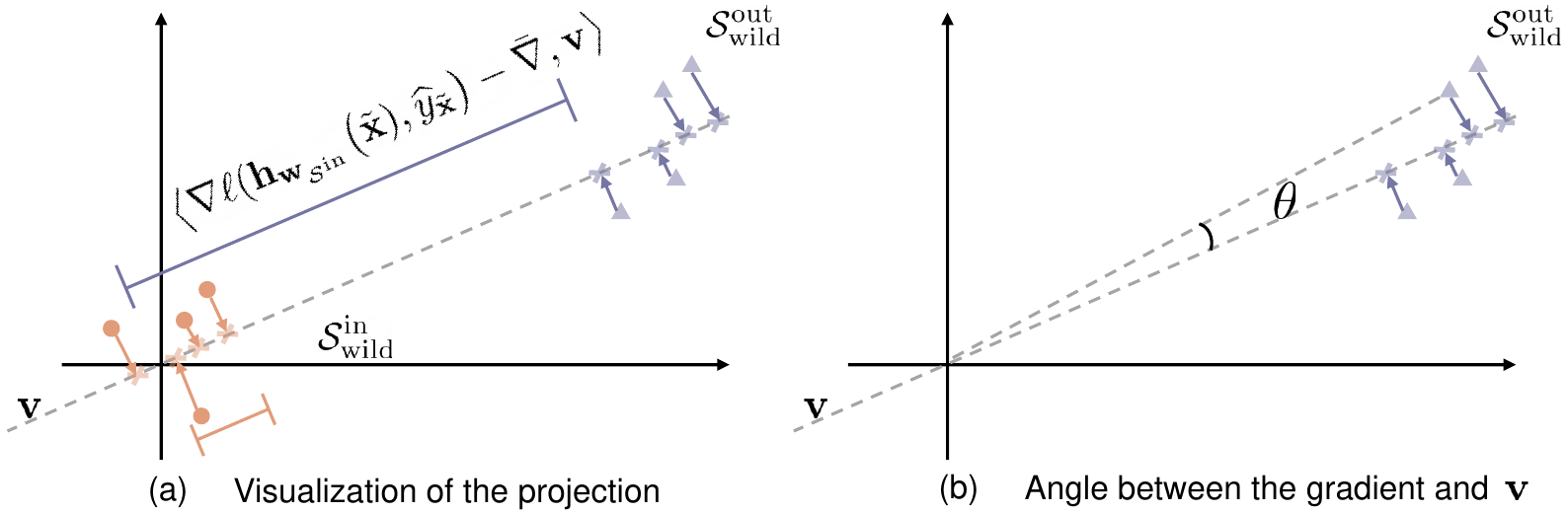}
    }
      \vspace{-1.5em}
    \caption{\small (a) 
    \textcolor{black}{Visualization of the gradient vectors, and their projection onto the top singular vector $\*v$ (in gray dashed line). The gradients of inliers from $\mathcal{S}_{\text{wild}}^{\text{in}}$ (colored in \textcolor{orangeinplot}{orange}) are close to the origin (reference gradient $\bar{\nabla}$). In contrast, the gradients of outliers from $\mathcal{S}_{\text{wild}}^{\text{out}}$ (colored in \textcolor{purpleinplot}{purple}) are farther away.     } (b) The angle $\theta$ between the gradient of set $\mathcal{S}_{\text{wild}}^{\text{out}}$ and the singular vector $\*v$. Since $\*v$ is searched to maximize the distance from the projected points (cross marks) to the origin (sum over all the gradients in $\mathcal{S}_{\text{wild}}$), $\*v$ points to the direction of OOD data in the wild with a small $\theta$. \textcolor{black}{This further translates into a high  filtering score $\tau$, which is essentially the norm after projecting a gradient vector onto $\*v$. As a result, filtering outliers by $\mathcal{S}_T = \{\tilde{\mathbf{x}}_i\in \mathcal{S}_{\text{wild}}: \mathbf{\tau}_i>T\}$ will approximately return the purple  OOD samples in the wild data. }}
  \vspace{-0.5em}
    \label{fig:pca}
\end{figure*}

\textbf{An illustrative example of algorithm effect.} To see the effectiveness of our filtering score, we test on two simulations in Figure~\ref{fig:toy} (a). These simulations are constructed with simplicity in mind, to facilitate understanding. Evaluations on complex high-dimensional data will be provided in Section~\ref{sec:exp}. In particular, the wild data is a mixture of ID (multivariate Gaussian with three classes) and OOD. We consider two scenarios of OOD distribution, with ground truth colored in \textcolor{purpleinplot}{purple}. Figure~\ref{fig:toy} (b) exemplifies the  outliers (in \textcolor{greeninplot}{green}) identified using our proposed method, which largely aligns with the ground truth. The error rate of $\mathcal{S}_T$ containing ID data is only $8.4\%$ and $6.4\%$ for the two scenarios considered. Moreover, the filtering score distribution displays a clear separation between the ID vs. OOD parts, as evidenced in Figure~\ref{fig:toy} (c).

\textbf{Remark 2.} \emph{Our filtering process can be easily extended into $K$-class classification. In this case, one can maintain a class-conditional reference gradient $ \bar{\nabla}_k$, one for each class $k \in [1,K]$, estimated on ID data belonging to class $k$, {which captures the characteristics for each ID class.} Similarly, the top singular vector computation can also be performed in a class-conditional manner, where we replace the gradient matrix with the {class-conditional} $\mathbf{G}_k$, containing gradient vectors of wild samples being predicted as class $k$.}

\subsection{Training the OOD Classifier with the Candidate Outliers}
\label{sec:training}
After obtaining the candidate outlier set $\mathcal{S}_T$  from the wild data, we train an OOD classifier $\*g_{\boldsymbol{\theta}}$ that optimizes for the separability between the ID vs. candidate outlier
data. In particular, our training objective can be viewed as explicitly optimizing the level-set 
based on the model output (threshold at 0), where the labeled ID
data $\*x$ from $\mathcal{S}^{\text{in}}$ has positive values and vice versa.
\begin{equation}\label{eq:reg_loss}
\begin{split}
        R_{\mathcal{S}^{\text{in}},\mathcal{S}_T}(\*g_{\boldsymbol{\theta}}) & =   R_{\mathcal{S}^{\text{in}}}^{+}(\*g_{\boldsymbol{\theta}})+R_{\mathcal{S}_T}^{-}(\*g_{\boldsymbol{\theta}}) \\& =  \mathbb{E}_{\*x \in  \mathcal{S}^{\text{in}}}~~\mathds{1}\{\*g_{\boldsymbol{\theta}}(\*x) \leq 0\}+\mathbb{E}_{\tilde{\*x} \in  \mathcal{S}_T}~~\mathds{1} \{\*g_{\boldsymbol{\theta}}(\tilde{\*x}) > 0\}.
        \end{split}
\end{equation}
To make the $0/1$ loss tractable, we replace it with the
binary sigmoid loss, a smooth approximation of the $0/1$
loss. We  train $\*g_{\boldsymbol{\theta}}$ along with the ID risk in Eq.~\ref{eq:erm} to ensure ID accuracy. 
Notably, the training enables strong generalization performance for test OOD samples drawn from $\mathbb{P}_\text{out}$.
  We provide formal guarantees on the generalization bound in Theorem~\ref{the:main2}, as well as empirical support in Section~\ref{sec:exp}. \textcolor{black}{A pseudo algorithm of \model is  in Appendix (see Algorithm~\ref{alg:algo})}.

\begin{figure*}
    \centering
    \scalebox{1.0}{
    \includegraphics[width=\linewidth]{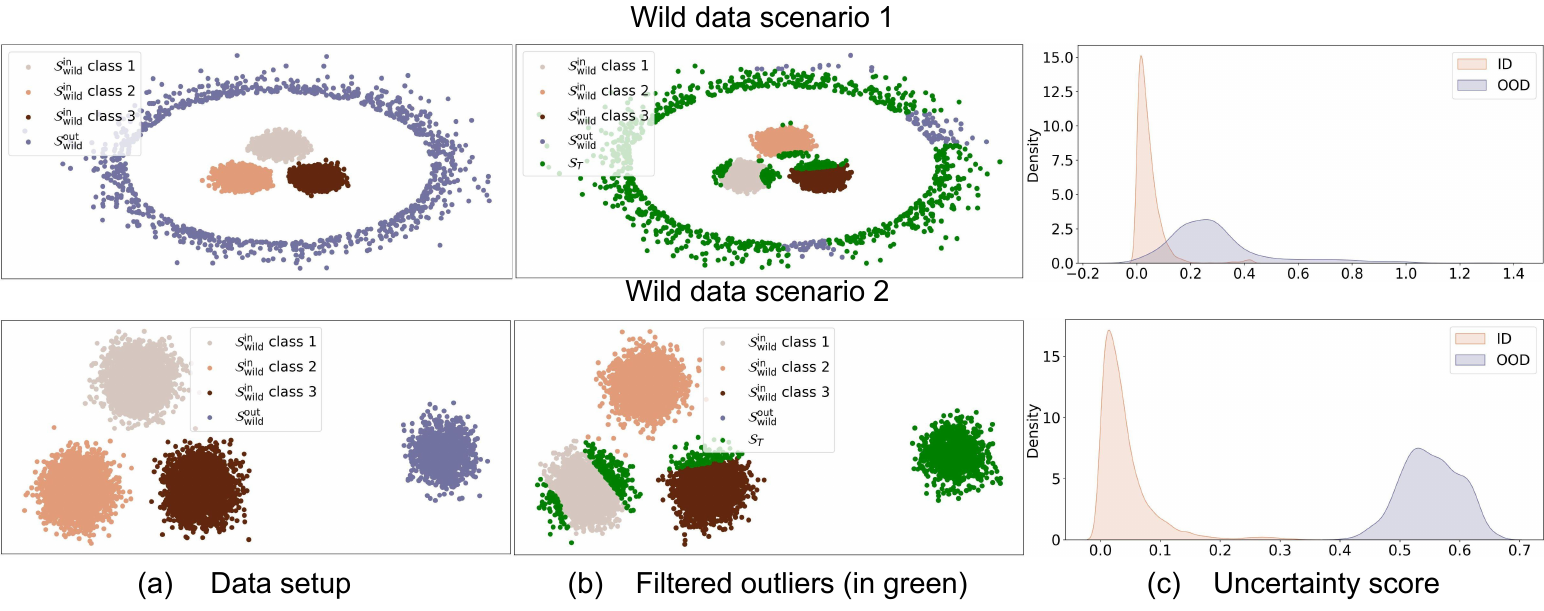}
    }
    \caption{\small Example of \model on two different scenarios of the unlabeled wild  data. (a) Setup of the ID/inlier $\mathcal{S}_{\text{wild}}^{\text{in}}$ and OOD/outlier data  $\mathcal{S}_{\text{wild}}^{\text{out}}$ in the wild. The inliers are sampled from three multivariate Gaussians. We construct two different distributions of outliers (see details in Appendix~\ref{sec:details_of_toy_app}). (b) The filtered outliers (in green) by \model, where the error rate of filtered outliers $\mathcal{S}_T$ containing inlier data  is $8.4\%$ and $6.4\%$, respectively. (c) The density distribution of the filtering score $\tau$, which is separable for inlier and outlier data in the wild and thus benefits the training of the OOD classifier leveraging the filtered outlier data for binary classification. }
    \label{fig:toy}
\vspace{-0.2cm}
\end{figure*}

\section{Theoretical Analysis}
\label{sec:theory}

We now  provide theory to support our proposed algorithm. Our main theorems justify the two components in our algorithm. As an overview,  Theorem~\ref{MainT-1} provides a provable bound on the error rates using our filtering procedure. Based on the estimations on error rates, Theorem \ref{the:main2} gives the generalization bound \emph{w.r.t.} the empirical OOD classifier $\*g_{\boldsymbol{\theta}}$, learned on ID data and noisy set of outliers. We specify several mild assumptions and necessary notations for our theorems in Appendix~\ref{notation,definition,Ass,Const}. {Due to space limitation, we omit unimportant constants and simplify the statements of our theorems. We defer the \textbf{full formal} statements in  Appendix \ref{main_theorems}. All proofs can be found in Appendices \ref{Proofmain} and \ref{NecessaryLemma}.}

\subsection{Analysis on Separability}
\label{sec:ana_1}
Our main theorem quantifies the separability of the outliers in the wild by using the filtering procedure (\emph{c.f.} Section~\ref{sec:detect}).  Let $\text{ERR}_{\text{out}}$ and $\text{ERR}_{\text{in}}$ be the error rate of OOD data being regarded as ID and the error rate of ID data being regarded as OOD, i.e., $\text{ERR}_{\text{out}}=|\{\tilde{\*x}_i\in \mathcal{S}^{\text{out}}_{\text{wild}}: \tau_i \leq T\}|/| \mathcal{S}^{\text{out}}_{\text{wild}}| $ and $\text{ERR}_{\text{in}}= |\{\tilde{\*x}_i\in \mathcal{S}^{\text{in}}_{\text{wild}}: \tau_i > T\}|/| \mathcal{S}^{\text{in}}_{\text{wild}}| $,  where $\mathcal{S}^{\text{in}}_{\text{wild}}$ and $\mathcal{S}^{\text{out}}_{\text{wild}}$ denote the sets of inliers and outliers from the wild data $\mathcal{S}_{\text{wild}}$. Then $\text{ERR}_{\text{out}}$ and $\text{ERR}_{\text{in}}$ have the following  generalization bounds.

\begin{tcolorbox}[enhanced,attach boxed title to top center={yshift=-3mm,yshifttext=-1mm},
  colback=gray!5!white,colframe=gray!75!black,colbacktitle=red!80!black,
  title=,fonttitle=\bfseries,
  boxed title style={size=small,colframe=red!50!black} ]
\begin{theorem}\label{MainT-1}
 (Informal).  Under mild conditions, if $\ell(\mathbf{h}_{\mathbf{w}}(\mathbf{x}),y)$ is $\beta_1$-smooth w.r.t. $\mathbf{w}$, \textcolor{black}{$\mathbb{P}_{\text{wild}}$ has $(\gamma,\zeta)$-discrepancy  w.r.t. $\mathbb{P}_{\mathcal{X}\mathcal{Y}}$ (\emph{c.f.} Appendices~\ref{sec:definition_app},~\ref{sec:assumption_app})}, and there is $\eta\in (0,1)$ s.t. $\Delta = (1-\eta)^2\zeta^2 - 8\beta_1 R_{{\text{in}}}^*>0$,  then when
     $
         n = {\Omega} \big ({d}/{\min \{ \eta^2 \Delta,(\gamma-R_{{\text{in}}}^*)^2\}}  \big), m = \Omega \big ({d}/{\eta^2\zeta^2} \big),
    $  with the probability at least $0.9$, for $0<T<0.9M'$ $($$M'$ is the upper bound of score $\tau_i$),
    \begin{equation}
      {\rm ERR}_{\text{in}}  \leq    \frac{8\beta_1 }{T}R_{\text{in}}^*+ O \Big ( \frac{1}{T}\sqrt{\frac{d}{n}}\Big )+   O \Big ( \frac{1}{T}\sqrt{\frac{d}{(1-\pi)m}}\Big ),
\end{equation}

\begin{equation}\label{Eq::bound1.0}
\begin{split}
    {\rm ERR}_{\text{out}} &\leq \delta(T) + O \Big(  \sqrt{\frac{d}{{\pi}^2n}}\Big)+ O \Big( \sqrt{\frac{\max\{d,{\Delta_{\zeta}^{\eta}}^2/{\pi}^2\}}{{\pi}^2(1-\pi)m}}\Big),
     \end{split}
\end{equation}
     where $R^{*}_{{\text{in}}}$ is the optimal ID risk, i.e., $R^{*}_{{\text{in}}}=\min_{\mathbf{w}\in \mathcal{W}} \mathbb{E}_{(\mathbf{x},y)\sim \mathbb{P}_{\mathcal{X}\mathcal{Y}}} \ell(\mathbf{h}_{\mathbf{w}}(\mathbf{x}),y)$,
     \begin{equation}\label{main-error}
         \delta(T) = {\max\{0,1-\Delta_{\zeta}^{\eta}/\pi\}}/{(1-T/M')},~~~\Delta_{\zeta}^{\eta} = {0.98\eta^2 \zeta^2} - 8\beta_1R_{\text{in}}^*,
     \end{equation}
$d$ is the dimension of the space $\mathcal{W}$, and $\pi$ is the OOD class-prior  probability in the wild.
\end{theorem}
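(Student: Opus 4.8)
I would split the two bounds but drive both with one elementary fact: a $\beta_1$-smooth nonnegative loss obeys the self-bounding inequality $\|\nabla_{\mathbf w}\ell(\mathbf h_{\mathbf w}(\mathbf x),y)\|^2\le 2\beta_1\,\ell(\mathbf h_{\mathbf w}(\mathbf x),y)$, and replacing a true label $y$ by the predicted label $\widehat{y}$ does not increase a softmax-type loss, so $\ell(\mathbf h_{\mathbf w}(\mathbf x),\widehat{y})\le\ell(\mathbf h_{\mathbf w}(\mathbf x),y)$. Since the top singular vector $\mathbf v$ in \eqref{eq:pca} has unit norm, $\tau_i\le\|\nabla\ell(\mathbf h_{\mathbf w_{\mathcal S^{\text{in}}}}(\tilde{\mathbf x}_i),\widehat{y}_{\tilde{\mathbf x}_i})-\bar\nabla\|^2$ by Cauchy--Schwarz, which is the bridge from the filtering score back to the loss. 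The overall probability $0.9$ comes from a union bound over a handful of standard high-probability events: uniform convergence of the empirical ID risk to the population risk over $\mathbf w\in\mathcal W\subseteq\mathbb R^d$ --- so $\mathbf w_{\mathcal S^{\text{in}}}$ has population risk within $O(\sqrt{d/n})$ of $R^*_{\text{in}}$, hence below $\gamma$, which is precisely what $n=\Omega(d/(\gamma-R^*_{\text{in}})^2)$ secures and what lets the $(\gamma,\zeta)$-discrepancy be invoked at $\mathbf w_{\mathcal S^{\text{in}}}$; concentration of $\bar\nabla$ and of $\tfrac1n\sum\|\nabla\ell\|^2$ on $\mathcal S^{\text{in}}$; and concentration of the relevant empirical averages over $\mathcal S^{\text{in}}_{\text{wild}}$ ($\approx(1-\pi)m$ points) and $\mathcal S^{\text{out}}_{\text{wild}}$ ($\approx\pi m$ points). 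Every $\sqrt d$ enters through covering-number arguments over $\mathcal W$ (using that $\beta_1$-smoothness makes $\mathbf w\mapsto\nabla\ell$ Lipschitz) and, where the direction $\mathbf v$ intervenes, over the unit sphere.

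\textbf{The bound on $\mathrm{ERR}_{\text{in}}$.} For a wild inlier $\tilde{\mathbf x}_i\in\mathcal S^{\text{in}}_{\text{wild}}$, bound $\tau_i\le 2\|\nabla\ell(\mathbf h_{\mathbf w_{\mathcal S^{\text{in}}}}(\tilde{\mathbf x}_i),\widehat{y}_{\tilde{\mathbf x}_i})\|^2+2\|\bar\nabla\|^2\le 4\beta_1\,\ell(\mathbf h_{\mathbf w_{\mathcal S^{\text{in}}}}(\tilde{\mathbf x}_i),\widehat{y}_{\tilde{\mathbf x}_i})+4\beta_1 R_{\mathcal S^{\text{in}}}(\mathbf h_{\mathbf w_{\mathcal S^{\text{in}}}})$, using the self-bound on both terms and Jensen on $\bar\nabla$. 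Averaging over $\mathcal S^{\text{in}}_{\text{wild}}$: the empirical mean of the predicted-label loss concentrates (uniformly in $\mathbf w$) around $\mathbb E_{\mathbb P_{\text{in}}}\ell(\mathbf h_{\mathbf w}(\mathbf x),\widehat{y}_{\mathbf x})\le R_{\text{in}}(\mathbf h_{\mathbf w})\le R^*_{\text{in}}+O(\sqrt{d/n})$ (first inequality from $\ell(\cdot,\widehat{y})\le\ell(\cdot,y)$, second from ERM generalization), and likewise $R_{\mathcal S^{\text{in}}}(\mathbf h_{\mathbf w_{\mathcal S^{\text{in}}}})\le R^*_{\text{in}}+O(\sqrt{d/n})$; the two concentration passages cost the $\sqrt{d/((1-\pi)m)}$ and $\sqrt{d/n}$ terms. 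Hence the average wild-inlier score is at most $8\beta_1 R^*_{\text{in}}+O(\sqrt{d/n})+O(\sqrt{d/((1-\pi)m)})$, and Markov's inequality $\mathrm{ERR}_{\text{in}}=|\{i:\tau_i>T\}|/|\mathcal S^{\text{in}}_{\text{wild}}|\le\frac1T\cdot(\text{average wild-inlier score})$ yields the stated bound.

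\textbf{The bound on $\mathrm{ERR}_{\text{out}}$.} Now one has to show most wild outliers carry a large score. Since $\mathbf v$ maximizes $\mathbf u\mapsto\sum_{\tilde{\mathbf x}_i\in\mathcal S_{\text{wild}}}\langle\nabla\ell(\cdot)-\bar\nabla,\mathbf u\rangle^2$ over the unit sphere, it beats the (analysis-only) test direction $\mathbf v^\star$ pointing along $\bar g_{\text{out}}-\bar\nabla$, where $\bar g_{\text{out}}$ is the empirical mean gradient over $\mathcal S^{\text{out}}_{\text{wild}}$; discarding the inlier terms and applying $N\sum a_i^2\ge(\sum a_i)^2$ to the outlier terms gives $\sum_{\mathcal S_{\text{wild}}}\tau_i\ge|\mathcal S^{\text{out}}_{\text{wild}}|\cdot\|\bar g_{\text{out}}-\bar\nabla\|^2$. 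I would then lower bound $\|\bar g_{\text{out}}-\bar\nabla\|$ through the Huber decomposition $\mathbb E_{\mathbb P_{\text{wild}}}\nabla\ell-\bar\nabla=(1-\pi)(\mathbb E_{\mathbb P_{\text{in}}}\nabla\ell-\bar\nabla)+\pi(\mathbb E_{\mathbb P_{\text{out}}}\nabla\ell-\bar\nabla)$: the discrepancy assumption lower-bounds $\|\mathbb E_{\mathbb P_{\text{wild}}}\nabla\ell-\bar\nabla\|$ by essentially $\zeta$, the ID term is at most $(1-\pi)\sqrt{8\beta_1 R^*_{\text{in}}}$ by Jensen plus the self-bound, so $\|\mathbb E_{\mathbb P_{\text{out}}}\nabla\ell-\bar\nabla\|\gtrsim(\zeta-\sqrt{8\beta_1 R^*_{\text{in}}})/\pi$; squaring, and letting the $\eta$ and the numerical constant $0.98$ absorb this slack together with the concentration of $\bar g_{\text{out}}$ around $\mathbb E_{\mathbb P_{\text{out}}}\nabla\ell$ over $\approx\pi m$ points, produces $\tfrac{1}{|\mathcal S^{\text{out}}_{\text{wild}}|}\sum_{\mathcal S^{\text{out}}_{\text{wild}}}\tau_i\gtrsim\Delta_{\zeta}^{\eta}/\pi$ up to $O(\sqrt{d/(\pi^2 n)})+O(\sqrt{\max\{d,(\Delta_{\zeta}^{\eta})^2/\pi^2\}/(\pi^2(1-\pi)m)})$ --- essentially the singular-vector lemma behind Remark~\ref{rem:1} --- after also subtracting $\sum_{\mathcal S^{\text{in}}_{\text{wild}}}\tau_i\le|\mathcal S^{\text{in}}_{\text{wild}}|\cdot 8\beta_1 R^*_{\text{in}}$ (the same inlier self-bound, with $|\mathcal S^{\text{in}}_{\text{wild}}|/|\mathcal S^{\text{out}}_{\text{wild}}|=(1-\pi)/\pi\le1/\pi$) from $\sum_{\mathcal S_{\text{wild}}}\tau_i$. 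Finally, because $0\le\tau_i\le M'$, the averaging identity $\tfrac{1}{|\mathcal S^{\text{out}}_{\text{wild}}|}\sum_{\mathcal S^{\text{out}}_{\text{wild}}}\tau_i\le T\cdot\mathrm{ERR}_{\text{out}}+M'(1-\mathrm{ERR}_{\text{out}})$ rearranges to $\mathrm{ERR}_{\text{out}}\le(M'-\text{mean OOD score})/(M'-T)$, which is exactly $\delta(T)$ plus the two error terms; the $\max\{0,\cdot\}$ in $\delta(T)$ simply records that an error rate is nonnegative, and $0<T<0.9M'$ keeps the denominator bounded away from $0$.

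\textbf{Where the difficulty lies.} The $\mathrm{ERR}_{\text{in}}$ half is Markov plus self-bounding and should be essentially routine. The crux is the $\mathrm{ERR}_{\text{out}}$ half, and within it two things. First, making the test-direction step rigorous: one needs the $(\gamma,\zeta)$-discrepancy together with the strict gap $\Delta=(1-\eta)^2\zeta^2-8\beta_1 R^*_{\text{in}}>0$ to guarantee that a direction along which the OOD gradients coherently align exists and is not washed out by the $(1-\pi)m$ noisy inlier gradients that also influence $\mathbf v$. Second, carrying the $1/\pi$ amplification faithfully through every population-to-empirical step --- this is what forces $n=\Omega(d/\min\{\eta^2\Delta,(\gamma-R^*_{\text{in}})^2\})$ and $m=\Omega(d/(\eta^2\zeta^2))$, so that each concentration slack is at most an $\eta$-fraction of the corresponding main term and $\Delta_{\zeta}^{\eta}$ stays positive. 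Two smaller but pervasive nuisances are that $\widehat{y}$ is itself $\mathbf w$-dependent, so every self-bound and concentration statement must be phrased with the predicted label and leans on $\ell(\mathbf h_{\mathbf w}(\mathbf x),\widehat{y})\le\ell(\mathbf h_{\mathbf w}(\mathbf x),y)$, and that $\mathbf v$ and $\mathbf w_{\mathcal S^{\text{in}}}$ are data-dependent, which is handled by uniform convergence over $\mathcal W$ (and the unit sphere) via $\beta_1$-smoothness and covering numbers --- the source of every $\sqrt d$ in the statement.
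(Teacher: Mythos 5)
Your proposal is correct and follows essentially the same route as the paper's proof: the $\mathrm{ERR}_{\text{in}}$ half via the self-bounding inequality $\|\nabla\ell\|^2\le 2\beta_1\ell$, Cauchy--Schwarz against the unit vector $\*v$, uniform convergence, and Markov (the paper's Lemmas~\ref{gamma-approximation-1}--\ref{gamma-approximation-3} and Step~4); and the $\mathrm{ERR}_{\text{out}}$ half via the maximality of the top singular vector against an analysis-only test direction, the $(\gamma,\zeta)$-discrepancy invoked at $\*w_{\mathcal{S}^{\text{in}}}$ (enabled by $n=\Omega(d/(\gamma-R^*_{\text{in}})^2)$), subtraction of the inlier contribution, and the reverse Markov inequality for bounded scores (Theorem~\ref{T1} and Steps~2--5). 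The only divergence is bookkeeping in the $\mathrm{ERR}_{\text{out}}$ step: the paper takes the test direction along $\mathbb{E}_{\mathcal{S}^{\text{in}}}\nabla\ell-\mathbb{E}_{\mathcal{S}_{\text{wild}}}\nabla\ell$, applies Jensen over the whole wild set to get $\mathbb{E}_{\mathcal{S}_{\text{wild}}}\tau_i\gtrsim 0.98\eta^2\zeta^2$, and then rescales by $m/|\mathcal{S}^{\text{out}}_{\text{wild}}|\approx 1/\pi$, whereas you take the direction along $\bar g_{\text{out}}-\bar\nabla$, apply Jensen over the outlier subset only, and recover the $1/\pi$ amplification from the Huber decomposition $\nabla R_{\mathbb{P}_{\text{in}}}-\nabla R_{\mathbb{P}_{\text{wild}}}=\pi(\nabla R_{\mathbb{P}_{\text{in}}}-\nabla R_{\mathbb{P}_{\text{out}}})$; both yield the same $\Delta^{\eta}_{\zeta}/\pi$-type main term up to the constants the informal statement absorbs, so this is a cosmetic variant rather than a different argument.
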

\end{tcolorbox}

\textbf{Practical implications of Theorem~\ref{MainT-1}.} 
    The above theorem states that under mild assumptions, the errors $\text{ERR}_{\text{out}}$ and $\text{ERR}_{\text{in}}$ are upper bounded. For $\text{ERR}_{\text{in}}$,  if the following two regulatory conditions hold: 1) the sizes of the labeled ID $n$ and wild data $m$  are sufficiently large; 2) the optimal ID risk $R_{\text{in}}^*$ is small, then the upper bound is tight. For $\text{ERR}_{\text{out}}$, $\delta(T)$ defined in Eq. \ref{main-error} becomes the main error, if we have sufficient data. To further study the main error $\delta(T)$ in Eq. \ref{Eq::bound1.0}, Theorem~\ref{The-1.1} shows that the error $\delta(T)$ could be close to zero under  practical conditions.

\begin{tcolorbox}[enhanced,attach boxed title to top center={yshift=-3mm,yshifttext=-1mm},
  colback=gray!5!white,colframe=gray!75!black,colbacktitle=red!80!black,
  title=,fonttitle=\bfseries,
  boxed title style={size=small,colframe=red!50!black} ]
\begin{theorem}\label{The-1.1}
 (Informal). 1) If  $\Delta_{\zeta}^{\eta} \geq  (1-\epsilon)\pi$ for a small error $\epsilon \geq 0$, then the main error $\delta(T)$ defined in Eq. \ref{main-error} satisfies that
    \begin{equation}
        \delta(T) \leq \frac{\epsilon}{1-T/{M'}}.
    \end{equation}
    2) If $\zeta\geq 2.011\sqrt{8\beta_1 R_{\text{in}}^*} +1.011 \sqrt{\pi}$, then there exists $\eta\in (0,1)$ ensuring that $\Delta>0$ and $ \Delta_{\zeta}^{\eta}> \pi$ hold, which implies that the main error $\delta(T)=0$.
\end{theorem}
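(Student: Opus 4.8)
I would treat Theorem~\ref{The-1.1} as a purely elementary statement about the scalar function $\delta(T)=\max\{0,\,1-\Delta_{\zeta}^{\eta}/\pi\}/(1-T/M')$, with $\Delta_{\zeta}^{\eta}=0.98\,\eta^2\zeta^2-8\beta_1 R_{\text{in}}^*$; no probabilistic content is involved, since Theorem~\ref{MainT-1} has already done the substantive work. Throughout I would use that we are in the regime $0<T<0.9M'$ of Theorem~\ref{MainT-1}, so the denominator satisfies $1-T/M'\in(0.1,1)$ and is in particular strictly positive.

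For part~1 the plan is simply to substitute. Since $\pi>0$, the hypothesis $\Delta_{\zeta}^{\eta}\ge(1-\epsilon)\pi$ gives $\Delta_{\zeta}^{\eta}/\pi\ge 1-\epsilon$, hence $1-\Delta_{\zeta}^{\eta}/\pi\le\epsilon$; as $\epsilon\ge 0$ this yields $\max\{0,\,1-\Delta_{\zeta}^{\eta}/\pi\}\le\epsilon$, and dividing by the positive number $1-T/M'$ gives $\delta(T)\le\epsilon/(1-T/M')$. That is all part~1 needs.

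For part~2 the first step is the observation that $\delta(T)=0$ whenever $\Delta_{\zeta}^{\eta}>\pi$, since then $1-\Delta_{\zeta}^{\eta}/\pi<0$ and the outer maximum vanishes. So it suffices to exhibit a single $\eta\in(0,1)$ with both $\Delta=(1-\eta)^2\zeta^2-8\beta_1 R_{\text{in}}^*>0$ (this being the hypothesis Theorem~\ref{MainT-1} requires on $\eta$) and $\Delta_{\zeta}^{\eta}>\pi$. Writing $a:=\sqrt{8\beta_1 R_{\text{in}}^*}\ge 0$ and using $\zeta>0$, for $\eta\in(0,1)$ these two inequalities hold simultaneously if and only if
\begin{equation}\label{eq:eta-interval}
    \frac{1}{\sqrt{0.98}}\cdot\frac{\sqrt{a^2+\pi}}{\zeta}\;<\;\eta\;<\;1-\frac{a}{\zeta}.
\end{equation}
Since the left endpoint of \eqref{eq:eta-interval} is $\ge 0$ and the right endpoint is $\le 1$, any nonempty such interval is contained in $(0,1)$; so it suffices to verify nonemptiness, and then any $\eta$ inside — e.g.\ its midpoint — proves the claim. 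Nonemptiness amounts to $\frac{1}{\sqrt{0.98}}\sqrt{a^2+\pi}+a<\zeta$, which I would get from the elementary bound $\sqrt{a^2+\pi}\le a+\sqrt{\pi}$ together with the numerical facts $1/\sqrt{0.98}<1.011$ and $1+1/\sqrt{0.98}<2.011$:
\begin{equation}
    \frac{1}{\sqrt{0.98}}\sqrt{a^2+\pi}+a\;\le\;\Big(1+\tfrac{1}{\sqrt{0.98}}\Big)a+\tfrac{1}{\sqrt{0.98}}\sqrt{\pi}\;<\;2.011\,a+1.011\sqrt{\pi}\;\le\;\zeta ,
\end{equation}
where the strict step uses $\sqrt{\pi}>0$ and the last inequality is exactly the hypothesis of part~2.

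The only place that needs care is the bookkeeping of constants in the last display: one must confirm that the slack discarded in the bound $\sqrt{a^2+\pi}\le a+\sqrt{\pi}$ is absorbed by the gap between $1/\sqrt{0.98}\approx 1.0102$ and $1.011$ (and between $1+1/\sqrt{0.98}\approx 2.0102$ and $2.011$) — this is precisely why the statement is phrased with $2.011$ and $1.011$ rather than $2$ and $1$. Beyond that, both parts are routine algebra.
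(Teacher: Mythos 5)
Your proof is correct and follows essentially the same route as the paper's: part~1 is the same direct substitution, and part~2 rests on the same key inequality $\sqrt{8\beta_1 R_{\text{in}}^*+\pi}\le\sqrt{8\beta_1 R_{\text{in}}^*}+\sqrt{\pi}$ combined with the slack between $1/\sqrt{0.98}$ and the constants $1.011$, $2.011$. The only difference is presentational: the paper exhibits one explicit choice, $\eta=\sqrt{8\beta_1 R_{\text{in}}^*+0.99\pi}\big/\big(\sqrt{0.98}\sqrt{8\beta_1 R_{\text{in}}^*}+\sqrt{8\beta_1 R_{\text{in}}^*+\pi}\big)$, and verifies both conditions for it, whereas you characterize the full admissible interval of $\eta$ and show it is nonempty, which arguably makes the origin of the constants more transparent.
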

\end{tcolorbox}

\textbf{Practical implications of Theorem~\ref{The-1.1}.}   Theorem~\ref{The-1.1} states that if   the discrepancy $\zeta$ between two data distributions $\mathbb{P}_{\text{wild}}$ and $\mathbb{P}_{\text{in}}$  is larger than some small values,  the main error $\delta(T)$ could be close to zero.  Therefore, by combining with the two regulatory conditions mentioned in Theorem~\ref{MainT-1}, the error $\text{ERR}_{\text{out}}$ could be close to zero. Empirically, we  verify the conditions of Theorem~\ref{The-1.1} in Appendix~\ref{sec:verification_discrepancy}, which can hold true easily in practice. In addition, given fixed optimal ID risk $R_{\text{in}}^*$ and fixed sizes of the labeled ID $n$ and wild data $m$, we observe that the bound of $  {\rm ERR}_{\text{in}} $ will increase when $\pi$ goes from 0 to 1. In contrast, the bound of $  {\rm ERR}_{\text{out}} $ \textcolor{black}{is non-monotonic} when $\pi$ increases, which will firstly decrease and then increase. The observations align well with empirical results in Appendix~\ref{sec:verification_discrepancy}.

 \textbf{Impact of using predicted labels for the wild data.}  Recall in Section~\ref{sec:detect} that the filtering step uses the predicted labels to estimate the gradient for wild data, which is unlabeled. 
 \textcolor{black}{To analyze the impact theoretically,  we show in Appendix Assumption~\ref{Ass2} that the loss incurred by using the predicted label is smaller than the loss by using any label in the label space. This property is included in Appendix Lemmas~\ref{gamma-approximation-1} and \ref{gamma-approximation-2} to constrain the filtering score in Appendix Theorem~\ref{T1} and then filtering error in Theorem~\ref{MainT-1}.  In harder classification cases, the predicted label deviates more from the true label for the wild ID data, which leads to a looser bound for the filtering accuracy in Theorem~\ref{MainT-1}.}

\textcolor{black}{Empirically, we calculate and compare the filtering accuracy and its OOD detection result on  \textsc{Cifar-10} and  \textsc{Cifar-100} ( \textsc{Textures}~\citep{cimpoi2014describing} as the wild OOD). \model achieves a result of $\rm ERR_{\text{in}}=0.018$ and  $\rm ERR_{\text{out}}=0.17$  on \textsc{Cifar-10} (easier classification case), which outperforms the result of $\rm ERR_{\text{in}}=0.037$ and  $\rm ERR_{\text{out}}=0.30$ on \textsc{Cifar-100} (harder classification case), aligning with our reasoning above. The experimental details are provided in Appendix~\ref{sec:experiments_on_predicted_label_app}. Analysis of using random labels for the wild data is provided in Appendix~\ref{sec:experiments_on_random_label_app}.}

\subsection{Analysis on Learnability}
\label{sec:ana_2}
Leveraging the filtered outliers $\mathcal{S}_T$, \model then trains an OOD classifier $\*g_{\boldsymbol{\theta}}$ with the data from  in-distribution $\mathcal{S}^{\text{in}}$ and data from $\mathcal{S}_T$ as OOD. In this section, we provide the generalization error bound for the learned OOD classifier to quantify its learnability. Specifically, we show that a small error guarantee in Theorem~\ref{MainT-1} implies that we can get a tight generalization error bound.

\begin{tcolorbox}[enhanced,attach boxed title to top center={yshift=-3mm,yshifttext=-1mm},
  colback=gray!5!white,colframe=gray!75!black,colbacktitle=red!80!black,
  title=,fonttitle=\bfseries,
  boxed title style={size=small,colframe=red!50!black} ]
\begin{theorem}\label{the:main2}
(Informal). Let $L$ be the upper bound of $\ell_{\text{b}}(\mathbf{g}_{\boldsymbol{\theta}}(\mathbf{x}),y_{\text{b}})$, i.e., $\ell_{\text{b}}(\mathbf{g}_{\boldsymbol{\theta}}(\mathbf{x}),y_{\text{b}}) \leq L$. Under  conditions in Theorem~\ref{MainT-1}, if we  
further require $n = \Omega \big ({d}/{\min \{\pi,\Delta_{\zeta}^{\eta}\}^2}\big)$, $m= \Omega \big( {(d+\Delta_{\zeta}^{\eta})}/{(\pi^2(1-\pi)\min \{\pi,\Delta_{\zeta}^{\eta}\}^2)}\big)$,

then with the probability at least $0.89$, for any $0<T<0.9 M' \min \{1,\Delta_{\zeta}^{\eta}/\pi\}$, the OOD classifier $\*g_{\widehat{\boldsymbol{\theta}}_T}$ learned by \model satisfies
      \begin{equation}\label{Eq::Bound2}
     \begin{aligned}
               & R_{\mathbb{P}_{\text{in}},\mathbb{P}_{\text{out}}}(\mathbf{g}_{\widehat{\boldsymbol{\theta}}_T}) \leq  \min_{\boldsymbol{\theta} \in \Theta} R_{\mathbb{P}_{\text{in}},\mathbb{P}_{\text{out}}}(\mathbf{g}_{{\boldsymbol{\theta}}})+\frac{3.5 L}{1-\delta(T)}\delta({T})+ \frac{9(1-\pi)L\beta_1}{\pi(1-\delta(T))T} R_{\text{in}}^*
                \\ + &O\Big(\frac{\max\{\sqrt{d},\sqrt{d'}\}}{\min\{\pi,\Delta_{\zeta}^{\eta}\}T'}\sqrt{\frac{1}{n}}\Big )+O\Big(\frac{\max \{\sqrt{d},\sqrt{d'},\Delta_{\zeta}^{\eta}\}}{\min\{\pi,\Delta_{\zeta}^{\eta}\}T'}\sqrt{\frac{1}{\pi^2(1-\pi)m}}\Big ),    
             \end{aligned}
             \end{equation}
                where  $\Delta_{\zeta}^{\eta}$, $d$ and $\pi$ are shown in Theorem \ref{MainT-1}, $d'$ is the dimension of space $\Theta$, $T'=T/(1+T)$,
                 and the risk $R_{\mathbb{P}_{\text{in}},\mathbb{P}_{\text{out}}}(\mathbf{g}_{{\boldsymbol{\theta}}})$ corresponds to the  empirical risk in Eq. \ref{eq:reg_loss} with loss $\ell_{\text{b}}$, i.e.,
                \begin{equation}\label{Real-risk}
                    R_{\mathbb{P}_{\text{in}},\mathbb{P}_{\text{out}}}(\mathbf{g}_{\widehat{\boldsymbol{\theta}}_T}) = \mathbb{E}_{\mathbf{x}\sim \mathbb{P}_{\text{in}}} \ell_{\text{b}}(\mathbf{g}_{\boldsymbol{\theta}}(\mathbf{x}),y_{+})+\mathbb{E}_{\mathbf{x}\sim \mathbb{P}_{\text{out}}} \ell_{\text{b}}(\mathbf{g}_{\boldsymbol{\theta}}(\mathbf{x}),y_{-}).
                \end{equation}
\end{theorem}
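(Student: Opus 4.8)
The plan is to run a standard excess-risk decomposition against the noisy empirical objective $R_{\mathcal{S}^{\text{in}},\mathcal{S}_T}$ that \model actually minimizes, and then pay separately for the mismatch between the filtered set $\mathcal{S}_T$ and a clean i.i.d.\ sample from $\mathbb{P}_{\text{out}}$ --- a mismatch whose size is exactly what Theorem~\ref{MainT-1} controls. First I would condition on the event of Theorem~\ref{MainT-1} (probability at least $0.9$ under the sample-size requirements of both theorems), so that $\text{ERR}_{\text{in}}\le\epsilon_{\text{in}}$ and $\text{ERR}_{\text{out}}\le\epsilon_{\text{out}}$ hold simultaneously, with $\epsilon_{\text{in}} = \tfrac{8\beta_1}{T}R_{\text{in}}^* + O(\tfrac1T\sqrt{d/n}) + O(\tfrac1T\sqrt{d/((1-\pi)m)})$ and $\epsilon_{\text{out}} = \delta(T) + O(\sqrt{d/(\pi^2 n)}) + O(\sqrt{\max\{d,(\Delta_{\zeta}^{\eta}/\pi)^2\}/(\pi^2(1-\pi)m)})$. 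Let $\boldsymbol{\theta}^\star$ minimize $R_{\mathbb{P}_{\text{in}},\mathbb{P}_{\text{out}}}(\mathbf{g}_{\boldsymbol{\theta}})$ over $\Theta$. Since $\widehat{\boldsymbol{\theta}}_T$ minimizes $R_{\mathcal{S}^{\text{in}},\mathcal{S}_T}$ with the bounded sigmoid loss $\ell_{\text{b}}\le L$, the usual two-sided comparison gives $R_{\mathbb{P}_{\text{in}},\mathbb{P}_{\text{out}}}(\mathbf{g}_{\widehat{\boldsymbol{\theta}}_T}) \le R_{\mathbb{P}_{\text{in}},\mathbb{P}_{\text{out}}}(\mathbf{g}_{\boldsymbol{\theta}^\star}) + 2\sup_{\boldsymbol{\theta}\in\Theta}|R_{\mathbb{P}_{\text{in}},\mathbb{P}_{\text{out}}}(\mathbf{g}_{\boldsymbol{\theta}}) - R_{\mathcal{S}^{\text{in}},\mathcal{S}_T}(\mathbf{g}_{\boldsymbol{\theta}})|$, so the whole task reduces to bounding that supremum.

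Next I would split both risks into their ID (``$+$'') and OOD (``$-$'') halves. The ID half $\sup_{\boldsymbol{\theta}}|R^{+}_{\mathbb{P}_{\text{in}}}(\mathbf{g}_{\boldsymbol{\theta}}) - R^{+}_{\mathcal{S}^{\text{in}}}(\mathbf{g}_{\boldsymbol{\theta}})|$ is a textbook uniform-convergence/Rademacher bound over $\Theta$ on the $n$ i.i.d.\ labeled ID points, contributing the $O(\max\{\sqrt d,\sqrt{d'}\}/(\min\{\pi,\Delta_{\zeta}^{\eta}\}T')\cdot\sqrt{1/n})$ term, where $T'=T/(1+T)$ is the level-set threshold $T$ rescaled into an effective margin as in the filtering analysis. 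For the OOD half I would route everything through the clean wild-OOD pool $\mathcal{S}^{\text{out}}_{\text{wild}}$, which is genuinely i.i.d.\ from $\mathbb{P}_{\text{out}}$: writing $\mathcal{S}_T = (\mathcal{S}_T\cap\mathcal{S}^{\text{out}}_{\text{wild}})\sqcup(\mathcal{S}_T\cap\mathcal{S}^{\text{in}}_{\text{wild}})$ and comparing the average over $\mathcal{S}_T$ with the average over $\mathcal{S}^{\text{out}}_{\text{wild}}$, three contributions appear. (a) The ID-contaminated summands number at most $\text{ERR}_{\text{in}}\,|\mathcal{S}^{\text{in}}_{\text{wild}}|$, each at most $L$, while $|\mathcal{S}_T|\ge(1-\text{ERR}_{\text{out}})|\mathcal{S}^{\text{out}}_{\text{wild}}|$; using $|\mathcal{S}^{\text{in}}_{\text{wild}}|\approx(1-\pi)m$ and $|\mathcal{S}^{\text{out}}_{\text{wild}}|\approx\pi m$, this is at most $\tfrac{(1-\pi)L}{\pi(1-\delta(T))}\epsilon_{\text{in}}$, and substituting $\epsilon_{\text{in}}$ yields the $\tfrac{9(1-\pi)L\beta_1}{\pi(1-\delta(T))T}R_{\text{in}}^*$ term plus lower-order sampling residuals. (b) Discarding at most an $\text{ERR}_{\text{out}}$-fraction of the clean pool and renormalizing by $|\mathcal{S}_T|$ costs at most $O(L\delta(T)/(1-\delta(T)))$, which after tracking constants is $\tfrac{3.5L}{1-\delta(T)}\delta(T)$; this uses only boundedness of $\ell_{\text{b}}$ and so holds uniformly in $\boldsymbol{\theta}$. (c) The residual $\sup_{\boldsymbol{\theta}}|\tfrac{1}{|\mathcal{S}^{\text{out}}_{\text{wild}}|}\sum_{\tilde{\mathbf{x}}\in\mathcal{S}^{\text{out}}_{\text{wild}}}\ell_{\text{b}}(\mathbf{g}_{\boldsymbol{\theta}}(\tilde{\mathbf{x}}),y_{-}) - R^{-}_{\mathbb{P}_{\text{out}}}(\mathbf{g}_{\boldsymbol{\theta}})|$ is pure uniform convergence over the $|\mathcal{S}^{\text{out}}_{\text{wild}}|$ i.i.d.\ draws from $\mathbb{P}_{\text{out}}$; combined with a Binomial concentration of $|\mathcal{S}^{\text{out}}_{\text{wild}}|$ around $\pi m$ (fluctuation $\sqrt{\pi(1-\pi)m}$) and the same $1/(\min\{\pi,\Delta_{\zeta}^{\eta}\}T')$ prefactor, it produces the $O(\max\{\sqrt d,\sqrt{d'},\Delta_{\zeta}^{\eta}\}/(\min\{\pi,\Delta_{\zeta}^{\eta}\}T')\cdot\sqrt{1/(\pi^2(1-\pi)m)})$ term. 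Collecting (a)--(c), absorbing constants, and using the extra requirements on $n,m$ to force the residuals down to the stated orders gives Eq.~\ref{Eq::Bound2}; a union bound over the constant number of additional uniform-convergence failure events costs at most $0.01$, bringing the total success probability to $0.89$.

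The step I expect to be the main obstacle is exactly this OOD transfer: $\mathcal{S}_T$ is built from the very same ID and wild samples used to compute $\bar{\nabla}$ and the top singular vector, so one cannot apply a uniform-convergence argument to $R^{-}_{\mathcal{S}_T}$ directly, nor treat $\mathcal{S}_T$ as a fresh sample. The resolution is to collapse all of this data-dependence into the two scalars $\text{ERR}_{\text{in}}$ and $\text{ERR}_{\text{out}}$ --- which Theorem~\ref{MainT-1} already bounds on a high-probability event --- and to be meticulous that the normalization by $|\mathcal{S}_T|$ rather than $|\mathcal{S}^{\text{out}}_{\text{wild}}|$ is precisely what injects the $1/(1-\delta(T))$ amplification and the $(1-\pi)/\pi$ population ratio into the bias terms; making these two factors (and the $\Delta_{\zeta}^{\eta}$ inside the $\max$) come out exactly as in Eq.~\ref{Eq::Bound2} is the delicate bookkeeping, whereas the two Rademacher/VC bounds over $\Theta$ on the i.i.d.\ ID pool and the i.i.d.\ wild-OOD pool are routine given the complexity assumptions in the appendix.
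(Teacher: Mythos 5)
Your proposal is correct and follows essentially the same route as the paper: the standard ERM excess-risk decomposition against the noisy objective $R_{\mathcal{S}^{\text{in}},\mathcal{S}_T}$, uniform convergence over $\Theta$ for the ID half, and for the OOD half the exact set decomposition $\mathcal{S}_T=(\mathcal{S}_T\cap\mathcal{S}^{\text{out}}_{\text{wild}})\sqcup(\mathcal{S}_T\cap\mathcal{S}^{\text{in}}_{\text{wild}})$ compared against $\mathcal{S}^{\text{out}}_{\text{wild}}$ with the contamination, renormalization, and missed-outlier terms controlled by ${\rm ERR}_{\text{in}}$ and ${\rm ERR}_{\text{out}}$ from Theorem~\ref{MainT-1} (this is precisely Lemmas~\ref{Error-0}--\ref{Error-5} in the appendix), followed by uniform convergence and binomial concentration on the i.i.d.\ wild-OOD pool. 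Your identification of the key subtlety --- that the data-dependence of $\mathcal{S}_T$ must be collapsed into the two filtering error rates and the $|\mathcal{S}_T|$ normalization rather than handled by a fresh-sample argument --- matches the paper's resolution.
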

\end{tcolorbox}

\textbf{Insights.}
The above theorem presents the generalization error bound of the OOD classifier $\*g_{\widehat{\boldsymbol{\theta}}_T}$ learned by using the filtered OOD data $\mathcal{S}_T$. When we have sufficient labeled ID data and wild data,  then the risk of the OOD classifier $\*g_{\widehat{\boldsymbol{\theta}}_T}$ is close to the optimal risk, i.e., $\min_{\boldsymbol{\theta} \in \Theta} R_{\mathbb{P}_{\text{in}},\mathbb{P}_{\text{out}}}(\mathbf{g}_{{\boldsymbol{\theta}}})$, if the optimal ID risk $R_{\text{in}}^*$ is small, and either one of the conditions in Theorem~\ref{The-1.1} is satisfied. 
\vspace{-1em}

\section{Experiments}
\label{sec:exp}

In this section, we verify the effectiveness of our algorithm on modern neural networks. We aim to show that the generalization bound of the OOD classifier (Theorem~\ref{the:main2}) indeed translates into strong empirical performance, establishing state-of-the-art results (Section~\ref{sec:results}).

\vspace{-0.3cm}

\subsection{Experimental Setup}
\label{sec:exp_steup}
\textbf{Datasets.} 
We follow exactly the same experimental setup as WOODS~\citep{katzsamuels2022training}, which introduced the problem of learning OOD detectors with wild data. This allows us to draw fair comparisons. WOODS considered \textsc{Cifar-10} and \textsc{Cifar-100}~\citep{krizhevsky2009learning} as ID datasets ($\mathbb{P}_{\text{in}}$). For OOD test datasets ($\mathbb{P}_{\text{out}}$), we use a suite of natural image datasets including \textsc{Textures}~\citep{cimpoi2014describing}, \textsc{Svhn}~\citep{netzer2011reading}, \textsc{Places365}~\citep{zhou2017places}, \textsc{Lsun-Resize} \& \textsc{Lsun-C}~\citep{DBLP:journals/corr/YuZSSX15}. To simulate the wild data ($\mathbb{P}_{\text{wild}}$), we mix a subset of ID data
(as $\mathbb{P}_{\text{in}}$) with the outlier dataset (as $\mathbb{P}_{\text{out}}$) under the default $\pi=0.1$, which reflects the practical scenario that most data would remain ID.
Take~\textsc{Svhn} as an example, we use \textsc{Cifar+Svhn} as the unlabeled wild  data and test on \textsc{Svhn} as OOD. We simulate this for all OOD datasets {and} provide analysis of differing $\pi \in \{0.05, 0.1,..., 1.0\}$ in Appendix~\ref{sec:verification_discrepancy}. Note that we split \textsc{Cifar} datasets into two halves: $25,000$
images as ID training data, and the remainder $25,000$ for
creating the wild mixture data. {We use the weights from the penultimate layer for gradient calculation, which was shown to be the most informative for OOD detection~\citep{huang2021importance}.} Experimental details are provided in Appendix~\ref{sec:detail_app}.

\textbf{Evaluation metrics.} We report the following metrics: (1) the false positive rate (FPR95$\downarrow$) of OOD samples when the true positive rate of ID samples is 95\%, (2) the area under the receiver operating characteristic curve (AUROC$\uparrow$), and (3) ID classification Accuracy (ID ACC$\uparrow$).

\begin{table}[t]
  \centering
  \small
  \vspace{-1em}
  \caption{\small OOD detection performance on \textsc{Cifar-100} as ID. All methods are trained on Wide ResNet-40-2 for 100 epochs. For each dataset, we create corresponding wild mixture distribution
$\mathbb{P}_\text{wild} = (1 - \pi) \mathbb{P}_\text{in} + \pi \mathbb{P}_\text{out}$ for training and test on the corresponding OOD dataset. Values are percentages {averaged over 10 runs}. {Bold} numbers highlight the best results. Table format credit to~\citet{katzsamuels2022training}.}
    \scalebox{0.67}{
    \begin{tabular}{cccccccccccccc}
    \toprule
    \multirow{3}[4]{*}{Methods} & \multicolumn{12}{c}{OOD Datasets}                                                             & \multirow{3}[4]{*}{ID ACC} \\
    \cmidrule{2-13}
          & \multicolumn{2}{c}{\textsc{Svhn}} & \multicolumn{2}{c}{\textsc{Places365}} & \multicolumn{2}{c}{\textsc{Lsun-C}} & \multicolumn{2}{c}{\textsc{Lsun-Resize}} & \multicolumn{2}{c}{\textsc{Textures}} & \multicolumn{2}{c}{Average} &  \\
\cmidrule{2-13}          & FPR95 & AUROC & FPR95 & AUROC & FPR95 & AUROC & FPR95 & AUROC & FPR95 & AUROC & FPR95 & AUROC &  \\
  \hline
     \multicolumn{14}{c}{With $\mathbb{P}_{\text{in}}$ only} \\
    MSP   &84.59 &71.44 & 82.84 & 73.78  & 66.54 &  83.79 & 82.42& 75.38&83.29& 73.34& 79.94 &75.55 &75.96\\
ODIN&  84.66& 67.26&  87.88&  71.63  &55.55 &87.73&71.96 &81.82 &79.27 &73.45&75.86 &76.38& 75.96
 \\
Mahalanobis & 57.52 &86.01  &  88.83& 67.87&91.18& 69.69&21.23& 96.00&39.39& 90.57&59.63& 82.03& 75.96
\\
Energy & 85.82 &73.99 &80.56& 75.44&  35.32& 93.53& 79.47 &79.23&79.41 &76.28&  72.12& 79.69 &75.96\\
KNN& 66.38	&83.76&	79.17&	71.91&	70.96&	83.71&	77.83&	78.85&	88.00	&67.19	&76.47	&77.08&	75.96 \\
ReAct & 74.33 & 88.04 &  81.33 & 74.32 &  39.30 & 91.19 & 79.86 & 73.69  & 67.38 & 82.80  & 68.44 & 82.01 & 75.96 \\
 DICE&   88.35 & 72.58 & 81.61 & 75.07 & 26.77 & 94.74  & 80.21 & 78.50 &  76.29 & 76.07& 70.65 & 79.39 & 75.96 \\

 ASH   &21.36 &94.28 &68.37 &71.22 &15.27& 95.65& 68.18 &85.42& 40.87 &92.29&  42.81& 87.77 & 75.96\\
CSI&  64.70 &84.97&  82.25& 73.63  &  38.10  & 92.52   &  91.55&  63.42& 74.70 &92.66& 70.26 &81.44& 69.90\\
KNN+ & 32.21 & 93.74 &68.30 & 75.31 &40.37 & 86.13  & 44.86 & 88.88 & 46.26 & 87.40 &46.40 & 86.29& 73.78\\

\hline
 \multicolumn{14}{c}{With $\mathbb{P}_{\text{in}}$ and $\mathbb{P}_{\text{wild}}$ } \\
     OE& 1.57 &  99.63  & 60.24&  83.43 & 3.83 &  99.26 & 0.93 &  99.79 & 27.89 &  93.35 & 18.89 & 95.09 &  71.65\\
  Energy (w/ OE) &1.47 &  99.68  & 54.67 & 86.09 & 2.52 &  99.44 & 2.68 &  99.50 & 37.26 &  91.26 & 19.72 &95.19 &  73.46\\

WOODS&0.12& \textbf{99.96} & 29.58&  90.60 & 0.11&  \textbf{99.96} & 0.07&  \textbf{99.96} & 9.12& 96.65 & 7.80&  97.43& 75.22\\

\rowcolor[HTML]{EFEFEF} \model  & \textbf{0.07} & 99.95 & \textbf{3.53} & \textbf{99.06} & \textbf{0.06} & 99.94 &   \textbf{0.02} & 99.95 & \textbf{5.73} & \textbf{98.65}  & \textbf{1.88}& \textbf{99.51}& 73.71\\
\rowcolor[HTML]{EFEFEF}  (Ours)   & {$^{\pm}$0.02} &$^{\pm}$0.00 & $^{\pm}$0.17 & $^{\pm}$0.06& $^{\pm}$0.01 & $^{\pm}$0.21 & $^{\pm}$0.00 & $^{\pm}$0.03 & $^{\pm}$0.34 & $^{\pm}$0.02 & $^{\pm}$0.11 & $^{\pm}$0.02 & $^{\pm}$0.78  \\
   \hline
\end{tabular}}
    \label{tab:c100}
    \vspace{-1em}
\end{table}

\vspace{-0.5cm}
\subsection{Empirical Results}
\label{sec:results}
\textbf{\model achieves superior empirical performance.}  
We present results in Table~\ref{tab:c100} on \textsc{Cifar-100},  where \model outperforms the state-of-the-art method. 
Our comparison covers an extensive collection of competitive OOD detection methods, which can be divided into two categories: trained with and without the wild data.
For methods using ID data $\mathbb{P}_{\text{in}}$ only, we compare with methods 
such as {MSP}~\citep{hendrycks2016baseline}, ODIN \citep{liang2018enhancing},  Mahalanobis distance~\citep{lee2018simple}, Energy score \citep{liu2020energy}, ReAct~\citep{sun2021react}, DICE~\citep{sun2022dice}, KNN distance~\citep{sun2022out}, and ASH~\citep{djurisic2023extremely}---all of which use a model trained with cross-entropy loss.
We also include the method  based on contrastive loss, including CSI~\citep{tack2020csi} and KNN+~\citep{sun2022out}. 
For methods using both ID and wild data, we compare with Outlier Exposure (OE)~\citep{hendrycks2018deep} and energy-regularization learning~\citep{liu2020energy}, which regularize the model by producing
lower confidence or higher energy on the auxiliary outlier data. Closest to ours is WOODS~\citep{katzsamuels2022training}, which leverages wild data for OOD learning with a constrained optimization approach. For a fair comparison, all the methods in this group are trained using the same ID and in-the-wild data, under
the same mixture ratio $\pi=0.1$.

The results demonstrate that: \textbf{(1)} Methods trained with both ID and wild data perform much better than those trained with only ID data. For example, on \textsc{Places365}, \model reduces the FPR95 by 64.77\% compared with KNN+, which highlights the advantage of using in-the-wild data for model regularization. \textbf{(2)} \model performs even better compared to the competitive methods using $\mathbb{P}_{\text{wild}}$. On \textsc{Cifar-100}, \model achieves an average FPR95 of 1.88\%, which is a 5.92\% improvement from WOODS. At the same time, \model maintains a comparable ID accuracy. {{The slight discrepancy is due to that our method only observes 25,000 labeled ID samples, whereas baseline methods (without using wild data) utilize the entire \textsc{Cifar} training data with 50,000 samples.}} \textbf{(3)} The strong empirical performance achieved by \model directly justifies and echoes our theoretical result in Section~\ref{sec:theory}, where we showed the algorithm has a provably small generalization error. \emph{Overall, our algorithm enjoys both theoretical guarantees and empirical effectiveness.}

\textbf{Comparison with GradNorm as filtering score.} 
\cite{huang2021importance} proposed directly employing the vector norm of gradients, backpropagated
from the KL divergence between the softmax output and a uniform probability
distribution for OOD detection. Differently, our \model derives the filtering score by performing singular value decomposition and using the norm of the projected gradient onto the top singular vector (\emph{c.f.} Section~\ref{sec:detect}). We compare \model with a variant in Table~\ref{tab:gradnorm+}, where we replace the filtering score in \model with the GradNorm score and then train the OOD classifier. The result underperforms \model, showcasing the effectiveness of our filtering score.

\begin{table}[t]
  \centering
  \small
  \caption{\small Comparison with using GradNorm as the filtering score. We use \textsc{Cifar-100} as ID. All methods are trained on Wide ResNet-40-2 for 100 epochs with $\pi=0.1$.  {Bold} numbers are superior results.   }
  \vspace{-1em}
    \scalebox{0.68}{
    \begin{tabular}{cccccccccccccc}
    \toprule
    \multirow{3}[4]{*}{Filter score} & \multicolumn{12}{c}{OOD Datasets}                                                             & \multirow{3}[4]{*}{ID ACC} \\
    \cmidrule{2-13}
          & \multicolumn{2}{c}{\textsc{Svhn}} & \multicolumn{2}{c}{\textsc{Places365}} & \multicolumn{2}{c}{\textsc{Lsun-C}} & \multicolumn{2}{c}{\textsc{Lsun-Resize}} & \multicolumn{2}{c}{\textsc{Textures}} & \multicolumn{2}{c}{Average} &  \\
\cmidrule{2-13}          & FPR95 & AUROC & FPR95 & AUROC & FPR95 & AUROC & FPR95 & AUROC & FPR95 & AUROC & FPR95 & AUROC &  \\
    \midrule

GradNorm &  1.08 & 99.62&62.07 & 84.08& 0.51 & 99.77 & 5.16 & 98.73 &50.39 & 83.39   &23.84 &93.12 & 73.89 \\

\rowcolor[HTML]{EFEFEF} Ours &\textbf{0.07} & \textbf{99.95} & \textbf{3.53} & \textbf{99.06} & \textbf{0.06} & \textbf{99.94} &   \textbf{0.02} & \textbf{99.95} & \textbf{5.73} & \textbf{98.65}  & \textbf{1.88}& \textbf{99.51}& 73.71 \\
   \hline
   
\end{tabular}}
    \label{tab:gradnorm+}
    \vspace{-1.5em}
\end{table}

\textbf{Additional ablations.} {Due to space limitations, we defer additional experiments in the Appendix, including \textbf{(1)} analyzing the effect of  ratio $\pi$ (Appendix~\ref{sec:verification_discrepancy}), \textbf{(2)} results on \textsc{Cifar}-10 (Appendix~\ref{sec:c10_app}), \textbf{(3)} evaluation on  \textbf{\emph{unseen}} OOD datasets (Appendix~\ref{sec:mixing_ratio}), \textbf{(4)} near OOD evaluations (Appendix~\ref{sec:near_ood}), and \textbf{(5)} the effect of using multiple singular vectors for calculating the filtering score (Appendix~\ref{sec:num_of_sing_vectors}) .
\vspace{-1em}

\section{Related Work}
\label{sec:related_work}
\vspace{-1em}

\textbf{OOD detection} has attracted a surge of interest in recent years~\citep{fort2021exploring,yang2021generalized,fang2022learnable,zhu2022boosting,ming2022delving,ming2022spurious,yang2022openood,wang2022outofdistribution,galil2023a,djurisic2023extremely,tao2023nonparametric,zheng2023out,wang2022watermarking,wang2023outofdistribution,narasimhan2023learning,yang2023auto,uppaal2023fine,zhu2023diversified,zhu2023unleashing,bai2023feed,ming2023finetune,zhang2023openood,gu2023critical,ghosal2024how}. 
One line of work performs OOD detection by devising scoring functions, including confidence-based methods~\citep{bendale2016towards,hendrycks2016baseline,liang2018enhancing}, energy-based score~\citep{liu2020energy,wang2021canmulti,wu2023energybased}, distance-based approaches~\citep{lee2018simple,tack2020csi,DBLP:journals/corr/abs-2106-09022,2021ssd,sun2022out, du2022siren, ming2023cider,ren2023outofdistribution}, gradient-based score~\citep{huang2021importance}, and {Bayesian approaches~\citep{gal2016dropout,lakshminarayanan2017simple,maddox2019simple,dpn19nips,Wen2020BatchEnsemble,kristiadi2020being}.} Another  line of work addressed OOD detection by training-time regularization~\citep{bevandic2018discriminative,malinin2018predictive,geifman2019selectivenet,hein2019relu,meinke2019towards,DBLP:conf/nips/JeongK20,liu2020simple,DBLP:conf/icml/AmersfoortSTG20,DBLP:conf/iccv/YangWFYZZ021,DBLP:conf/icml/WeiXCF0L22,du2022unknown,du2023dream,wang2023learning}. For example, the model is
regularized to produce lower confidence~\citep{lee2018training,hendrycks2018deep} or higher energy~\citep{liu2020energy,du2022towards,DBLP:conf/icml/MingFL22} on the outlier data. Most regularization methods
assume the availability of a \emph{clean} set of auxiliary OOD data. Several works~\citep{zhou2021step,katzsamuels2022training,he2023topological} relaxed this assumption by leveraging the unlabeled wild data, but did not have an explicit mechanism for filtering the outliers. Compared to positive-unlabeled learning, which learns classifiers from positive and unlabeled data~\citep{letouzey2000learning,hsieh2015pu, du2015convex,niu2016theoretical,gong2018margin,chapel2020partial,garg2021mixture,xu2021positive,garg2022domain,zhao2022dist,acharya2022positive}, the key difference is that it only considers the task of distinguishing $\mathbb{P}_\text{out}$ and $\mathbb{P}_\text{in}$, not the task of doing classification simultaneously. Moreover, we propose a new filtering score to separate outliers from the unlabeled data, which has a bounded error guarantee.

\textbf{Robust statistics} has systematically studied the estimation in the presence of outliers since the pioneering work of~\citep{tukey1960survey}. Popular methods include RANSAC~\citep{fischler1981random}, minimum covariance determinant~\citep{rousseeuw1999fast}, Huberizing the loss~\citep{owen2007robust}, removal based on $k$-nearest neighbors~\citep{breunig2000lof}. More recently, there are several works that scale up the  robust estimation into high-dimensions~\citep{awasthi2014power,kothari2017outlier,steinhardt2017does,diakonikolas2019recent,diakonikolas2019robust,diakonikolas2022outlier,diakonikolas2022streaming}. \cite{diakonikolas2019sever} designed a gradient-based score for outlier removal but they focused on the error bound for the ID classifier. Instead, we provide new theoretical guarantees on outlier filtering (Theorem~\ref{MainT-1} and Theorem~\ref{The-1.1}) and the generalization bound of OOD detection (Theorem~\ref{the:main2}).

\vspace{-1em}

\section{Conclusion}
\vspace{-1em}
In this paper, we propose a novel learning framework \model that exploits the unlabeled in-the-wild data for OOD detection. \model first explicitly filters the candidate outliers from the wild data using a new filtering score and then trains a binary OOD classifier leveraging the filtered outliers. Theoretically, \model answers the question of \emph{how does unlabeled wild data help OOD detection} by analyzing the separability of the outliers in the wild and the learnability of the OOD classifier, which provide provable error guarantees  for the two integral components. Empirically, \model achieves strong performance compared to competitive baselines, echoing our theoretical insights.  \textcolor{black}{A broad impact statement is included in Appendix~\ref{sec:broader}}. We hope our work will inspire future research on OOD detection with unlabeled wild data.

\section*{Acknowledgement}
We thank Yifei Ming and Yiyou Sun for their valuable suggestions on the draft. The authors would also like to thank
ICLR anonymous reviewers for their helpful feedback. Du is supported by the Jane Street Graduate
Research Fellowship. Li gratefully acknowledges the support from the AFOSR
Young Investigator Program under award number FA9550-23-1-0184, National Science Foundation
(NSF) Award No. IIS-2237037 \& IIS-2331669, Office of Naval Research under grant number
N00014-23-1-2643, Philanthropic Fund from SFF, and faculty research awards/gifts from Google
and Meta.


\bibliography{citation}

\begin{thebibliography}{104}
\providecommand{\natexlab}[1]{#1}
\providecommand{\url}[1]{\texttt{#1}}
\expandafter\ifx\csname urlstyle\endcsname\relax
  \providecommand{\doi}[1]{doi: #1}\else
  \providecommand{\doi}{doi: \begingroup \urlstyle{rm}\Url}\fi

\bibitem[Acharya et~al.(2022)Acharya, Sanghavi, Jing, Bhushanam, Choudhary, Rabbat, and Dhillon]{acharya2022positive}
Anish Acharya, Sujay Sanghavi, Li~Jing, Bhargav Bhushanam, Dhruv Choudhary, Michael Rabbat, and Inderjit Dhillon.
\newblock Positive unlabeled contrastive learning.
\newblock \emph{arXiv preprint arXiv:2206.01206}, 2022.

\bibitem[Awasthi et~al.(2014)Awasthi, Balcan, and Long]{awasthi2014power}
Pranjal Awasthi, Maria~Florina Balcan, and Philip~M Long.
\newblock The power of localization for efficiently learning linear separators with noise.
\newblock In \emph{Proceedings of the forty-sixth annual ACM symposium on Theory of computing}, pp.\  449--458, 2014.

\bibitem[Bai et~al.(2023)Bai, Canal, Du, Kwon, Nowak, and Li]{bai2023feed}
Haoyue Bai, Gregory Canal, Xuefeng Du, Jeongyeol Kwon, Robert~D Nowak, and Yixuan Li.
\newblock Feed two birds with one scone: Exploiting wild data for both out-of-distribution generalization and detection.
\newblock In \emph{International Conference on Machine Learning}, 2023.

\bibitem[Bartlett et~al.(2020)Bartlett, Long, Lugosi, and Tsigler]{bartlett2020benign}
Peter~L Bartlett, Philip~M Long, G{\'a}bor Lugosi, and Alexander Tsigler.
\newblock Benign overfitting in linear regression.
\newblock \emph{Proceedings of the National Academy of Sciences}, 117\penalty0 (48):\penalty0 30063--30070, 2020.

\bibitem[Bendale \& Boult(2016)Bendale and Boult]{bendale2016towards}
Abhijit Bendale and Terrance~E Boult.
\newblock Towards open set deep networks.
\newblock In \emph{Proceedings of the IEEE/CVF Conference on Computer Vision and Pattern Recognition}, pp.\  1563--1572, 2016.

\bibitem[Bevandi{\'c} et~al.(2018)Bevandi{\'c}, Kre{\v{s}}o, Or{\v{s}}i{\'c}, and {\v{S}}egvi{\'c}]{bevandic2018discriminative}
Petra Bevandi{\'c}, Ivan Kre{\v{s}}o, Marin Or{\v{s}}i{\'c}, and Sini{\v{s}}a {\v{S}}egvi{\'c}.
\newblock Discriminative out-of-distribution detection for semantic segmentation.
\newblock \emph{arXiv preprint arXiv:1808.07703}, 2018.

\bibitem[Breunig et~al.(2000)Breunig, Kriegel, Ng, and Sander]{breunig2000lof}
Markus~M Breunig, Hans-Peter Kriegel, Raymond~T Ng, and J{\"o}rg Sander.
\newblock Lof: identifying density-based local outliers.
\newblock In \emph{Proceedings of the 2000 ACM SIGMOD international conference on Management of data}, pp.\  93--104, 2000.

\bibitem[Chapel et~al.(2020)Chapel, Alaya, and Gasso]{chapel2020partial}
Laetitia Chapel, Mokhtar~Z Alaya, and Gilles Gasso.
\newblock Partial optimal tranport with applications on positive-unlabeled learning.
\newblock \emph{Advances in Neural Information Processing Systems}, 33:\penalty0 2903--2913, 2020.

\bibitem[Cimpoi et~al.(2014)Cimpoi, Maji, Kokkinos, Mohamed, and Vedaldi]{cimpoi2014describing}
Mircea Cimpoi, Subhransu Maji, Iasonas Kokkinos, Sammy Mohamed, and Andrea Vedaldi.
\newblock Describing textures in the wild.
\newblock In \emph{Proceedings of the IEEE/CVF Conference on Computer Vision and Pattern Recognition}, pp.\  3606--3613, 2014.

\bibitem[Diakonikolas \& Kane(2019)Diakonikolas and Kane]{diakonikolas2019recent}
Ilias Diakonikolas and Daniel~M Kane.
\newblock Recent advances in algorithmic high-dimensional robust statistics.
\newblock \emph{arXiv preprint arXiv:1911.05911}, 2019.

\bibitem[Diakonikolas et~al.(2019{\natexlab{a}})Diakonikolas, Kamath, Kane, Li, Moitra, and Stewart]{diakonikolas2019robust}
Ilias Diakonikolas, Gautam Kamath, Daniel Kane, Jerry Li, Ankur Moitra, and Alistair Stewart.
\newblock Robust estimators in high-dimensions without the computational intractability.
\newblock \emph{SIAM Journal on Computing}, 48\penalty0 (2):\penalty0 742--864, 2019{\natexlab{a}}.

\bibitem[Diakonikolas et~al.(2019{\natexlab{b}})Diakonikolas, Kamath, Kane, Li, Steinhardt, and Stewart]{diakonikolas2019sever}
Ilias Diakonikolas, Gautam Kamath, Daniel Kane, Jerry Li, Jacob Steinhardt, and Alistair Stewart.
\newblock Sever: A robust meta-algorithm for stochastic optimization.
\newblock In \emph{International Conference on Machine Learning}, pp.\  1596--1606, 2019{\natexlab{b}}.

\bibitem[Diakonikolas et~al.(2022{\natexlab{a}})Diakonikolas, Kane, Lee, and Pensia]{diakonikolas2022outlier}
Ilias Diakonikolas, Daniel Kane, Jasper Lee, and Ankit Pensia.
\newblock Outlier-robust sparse mean estimation for heavy-tailed distributions.
\newblock \emph{Advances in Neural Information Processing Systems}, 35:\penalty0 5164--5177, 2022{\natexlab{a}}.

\bibitem[Diakonikolas et~al.(2022{\natexlab{b}})Diakonikolas, Kane, Pensia, and Pittas]{diakonikolas2022streaming}
Ilias Diakonikolas, Daniel~M Kane, Ankit Pensia, and Thanasis Pittas.
\newblock Streaming algorithms for high-dimensional robust statistics.
\newblock In \emph{International Conference on Machine Learning}, pp.\  5061--5117, 2022{\natexlab{b}}.

\bibitem[Djurisic et~al.(2023)Djurisic, Bozanic, Ashok, and Liu]{djurisic2023extremely}
Andrija Djurisic, Nebojsa Bozanic, Arjun Ashok, and Rosanne Liu.
\newblock Extremely simple activation shaping for out-of-distribution detection.
\newblock In \emph{International Conference on Learning Representations}, 2023.

\bibitem[Du et~al.(2022{\natexlab{a}})Du, Gozum, Ming, and Li]{du2022siren}
Xuefeng Du, Gabriel Gozum, Yifei Ming, and Yixuan Li.
\newblock Siren: Shaping representations for detecting out-of-distribution objects.
\newblock In \emph{Advances in Neural Information Processing Systems}, 2022{\natexlab{a}}.

\bibitem[Du et~al.(2022{\natexlab{b}})Du, Wang, Gozum, and Li]{du2022unknown}
Xuefeng Du, Xin Wang, Gabriel Gozum, and Yixuan Li.
\newblock Unknown-aware object detection: Learning what you don’t know from videos in the wild.
\newblock In \emph{Proceedings of the IEEE/CVF Conference on Computer Vision and Pattern Recognition}, 2022{\natexlab{b}}.

\bibitem[Du et~al.(2022{\natexlab{c}})Du, Wang, Cai, and Li]{du2022towards}
Xuefeng Du, Zhaoning Wang, Mu~Cai, and Yixuan Li.
\newblock Vos: Learning what you don’t know by virtual outlier synthesis.
\newblock In \emph{Proceedings of the International Conference on Learning Representations}, 2022{\natexlab{c}}.

\bibitem[Du et~al.(2023)Du, Sun, Zhu, and Li]{du2023dream}
Xuefeng Du, Yiyou Sun, Xiaojin Zhu, and Yixuan Li.
\newblock Dream the impossible: Outlier imagination with diffusion models.
\newblock In \emph{Advances in Neural Information Processing Systems}, 2023.

\bibitem[Du~Plessis et~al.(2015)Du~Plessis, Niu, and Sugiyama]{du2015convex}
Marthinus Du~Plessis, Gang Niu, and Masashi Sugiyama.
\newblock Convex formulation for learning from positive and unlabeled data.
\newblock In \emph{International conference on machine learning}, pp.\  1386--1394, 2015.

\bibitem[Fang et~al.(2022)Fang, Li, Lu, Dong, Han, and Liu]{fang2022learnable}
Zhen Fang, Yixuan Li, Jie Lu, Jiahua Dong, Bo~Han, and Feng Liu.
\newblock Is out-of-distribution detection learnable?
\newblock In \emph{Advances in Neural Information Processing Systems}, 2022.

\bibitem[Fischler \& Bolles(1981)Fischler and Bolles]{fischler1981random}
Martin~A Fischler and Robert~C Bolles.
\newblock Random sample consensus: a paradigm for model fitting with applications to image analysis and automated cartography.
\newblock \emph{Communications of the ACM}, 24\penalty0 (6):\penalty0 381--395, 1981.

\bibitem[Fort et~al.(2021)Fort, Ren, and Lakshminarayanan]{fort2021exploring}
Stanislav Fort, Jie Ren, and Balaji Lakshminarayanan.
\newblock Exploring the limits of out-of-distribution detection.
\newblock \emph{Advances in Neural Information Processing Systems}, 34:\penalty0 7068--7081, 2021.

\bibitem[Frei et~al.(2022)Frei, Chatterji, and Bartlett]{frei2022benign}
Spencer Frei, Niladri~S Chatterji, and Peter Bartlett.
\newblock Benign overfitting without linearity: Neural network classifiers trained by gradient descent for noisy linear data.
\newblock In \emph{Conference on Learning Theory}, pp.\  2668--2703, 2022.

\bibitem[Gal \& Ghahramani(2016)Gal and Ghahramani]{gal2016dropout}
Yarin Gal and Zoubin Ghahramani.
\newblock Dropout as a bayesian approximation: Representing model uncertainty in deep learning.
\newblock In \emph{Proceedings of the International Conference on Machine Learning}, pp.\  1050--1059, 2016.

\bibitem[Galil et~al.(2023)Galil, Dabbah, and El-Yaniv]{galil2023a}
Ido Galil, Mohammed Dabbah, and Ran El-Yaniv.
\newblock A framework for benchmarking class-out-of-distribution detection and its application to imagenet.
\newblock In \emph{International Conference on Learning Representations}, 2023.

\bibitem[Garg et~al.(2021)Garg, Wu, Smola, Balakrishnan, and Lipton]{garg2021mixture}
Saurabh Garg, Yifan Wu, Alexander~J Smola, Sivaraman Balakrishnan, and Zachary Lipton.
\newblock Mixture proportion estimation and pu learning: a modern approach.
\newblock \emph{Advances in Neural Information Processing Systems}, 34:\penalty0 8532--8544, 2021.

\bibitem[Garg et~al.(2022)Garg, Balakrishnan, and Lipton]{garg2022domain}
Saurabh Garg, Sivaraman Balakrishnan, and Zachary Lipton.
\newblock Domain adaptation under open set label shift.
\newblock \emph{Advances in Neural Information Processing Systems}, 35:\penalty0 22531--22546, 2022.

\bibitem[Geifman \& El-Yaniv(2019)Geifman and El-Yaniv]{geifman2019selectivenet}
Yonatan Geifman and Ran El-Yaniv.
\newblock Selectivenet: A deep neural network with an integrated reject option.
\newblock In \emph{Proceedings of the International Conference on Machine Learning}, pp.\  2151--2159, 2019.

\bibitem[Ghosal et~al.(2024)Ghosal, Sun, and Li]{ghosal2024how}
Soumya~Suvra Ghosal, Yiyou Sun, and Yixuan Li.
\newblock How to overcome curse-of-dimensionality for ood detection?
\newblock In \emph{Proceedings of the AAAI Conference on Artificial Intelligence}, 2024.

\bibitem[Gong et~al.(2018)Gong, Wang, Ye, Xu, and Lin]{gong2018margin}
Tieliang Gong, Guangtao Wang, Jieping Ye, Zongben Xu, and Ming Lin.
\newblock Margin based pu learning.
\newblock In \emph{Proceedings of the AAAI Conference on Artificial Intelligence}, volume~32, 2018.

\bibitem[Gu et~al.(2023)Gu, Ming, Zhou, Kuen, Morariu, Liu, Li, Sun, and Nenkova]{gu2023critical}
Jiuxiang Gu, Yifei Ming, Yi~Zhou, Jason Kuen, Vlad Morariu, Anqi Liu, Yixuan Li, Tong Sun, and Ani Nenkova.
\newblock A critical analysis of out-of-distribution detection for document understanding.
\newblock In \emph{EMNLP-Findings}, 2023.

\bibitem[He et~al.(2023)He, Li, Han, Yang, and Yin]{he2023topological}
Rundong He, Rongxue Li, Zhongyi Han, Xihong Yang, and Yilong Yin.
\newblock Topological structure learning for weakly-supervised out-of-distribution detection.
\newblock In \emph{Proceedings of the 31st ACM International Conference on Multimedia}, pp.\  4858--4866, 2023.

\bibitem[Hein et~al.(2019)Hein, Andriushchenko, and Bitterwolf]{hein2019relu}
Matthias Hein, Maksym Andriushchenko, and Julian Bitterwolf.
\newblock Why relu networks yield high-confidence predictions far away from the training data and how to mitigate the problem.
\newblock In \emph{Proceedings of the IEEE/CVF Conference on Computer Vision and Pattern Recognition}, pp.\  41--50, 2019.

\bibitem[Hendrycks \& Gimpel(2017)Hendrycks and Gimpel]{hendrycks2016baseline}
Dan Hendrycks and Kevin Gimpel.
\newblock A baseline for detecting misclassified and out-of-distribution examples in neural networks.
\newblock \emph{Proceedings of the International Conference on Learning Representations}, 2017.

\bibitem[Hendrycks et~al.(2019)Hendrycks, Mazeika, and Dietterich]{hendrycks2018deep}
Dan Hendrycks, Mantas Mazeika, and Thomas Dietterich.
\newblock Deep anomaly detection with outlier exposure.
\newblock In \emph{Proceedings of the International Conference on Learning Representations}, 2019.

\bibitem[Hotelling(1933)]{hotelling1933analysis}
Harold Hotelling.
\newblock Analysis of a complex of statistical variables into principal components.
\newblock \emph{Journal of educational psychology}, 24\penalty0 (6):\penalty0 417, 1933.

\bibitem[Hsieh et~al.(2015)Hsieh, Natarajan, and Dhillon]{hsieh2015pu}
Cho-Jui Hsieh, Nagarajan Natarajan, and Inderjit Dhillon.
\newblock Pu learning for matrix completion.
\newblock In \emph{International conference on machine learning}, pp.\  2445--2453, 2015.

\bibitem[Huang et~al.(2021)Huang, Geng, and Li]{huang2021importance}
Rui Huang, Andrew Geng, and Yixuan Li.
\newblock On the importance of gradients for detecting distributional shifts in the wild.
\newblock In \emph{Advances in Neural Information Processing Systems}, 2021.

\bibitem[Jeong \& Kim(2020)Jeong and Kim]{DBLP:conf/nips/JeongK20}
Taewon Jeong and Heeyoung Kim.
\newblock Ood-maml: Meta-learning for few-shot out-of-distribution detection and classification.
\newblock \emph{Advances in Neural Information Processing Systems}, 33:\penalty0 3907--3916, 2020.

\bibitem[Katz-Samuels et~al.(2022)Katz-Samuels, Nakhleh, Nowak, and Li]{katzsamuels2022training}
Julian Katz-Samuels, Julia Nakhleh, Robert Nowak, and Yixuan Li.
\newblock Training ood detectors in their natural habitats.
\newblock In \emph{International Conference on Machine Learning}, 2022.

\bibitem[Kothari \& Steurer(2017)Kothari and Steurer]{kothari2017outlier}
Pravesh~K Kothari and David Steurer.
\newblock Outlier-robust moment-estimation via sum-of-squares.
\newblock \emph{arXiv preprint arXiv:1711.11581}, 2017.

\bibitem[Kristiadi et~al.(2020)Kristiadi, Hein, and Hennig]{kristiadi2020being}
Agustinus Kristiadi, Matthias Hein, and Philipp Hennig.
\newblock Being bayesian, even just a bit, fixes overconfidence in relu networks.
\newblock In \emph{International conference on machine learning}, pp.\  5436--5446, 2020.

\bibitem[Krizhevsky et~al.(2009)Krizhevsky, Hinton, et~al.]{krizhevsky2009learning}
Alex Krizhevsky, Geoffrey Hinton, et~al.
\newblock Learning multiple layers of features from tiny images.
\newblock 2009.

\bibitem[Lakshminarayanan et~al.(2017)Lakshminarayanan, Pritzel, and Blundell]{lakshminarayanan2017simple}
Balaji Lakshminarayanan, Alexander Pritzel, and Charles Blundell.
\newblock Simple and scalable predictive uncertainty estimation using deep ensembles.
\newblock In \emph{Advances in Neural Information Processing Systems}, volume~30, pp.\  6402--6413, 2017.

\bibitem[Lee et~al.(2018{\natexlab{a}})Lee, Lee, Lee, and Shin]{lee2018training}
Kimin Lee, Honglak Lee, Kibok Lee, and Jinwoo Shin.
\newblock Training confidence-calibrated classifiers for detecting out-of-distribution samples.
\newblock In \emph{Proceedings of the International Conference on Learning Representations}, 2018{\natexlab{a}}.

\bibitem[Lee et~al.(2018{\natexlab{b}})Lee, Lee, Lee, and Shin]{lee2018simple}
Kimin Lee, Kibok Lee, Honglak Lee, and Jinwoo Shin.
\newblock A simple unified framework for detecting out-of-distribution samples and adversarial attacks.
\newblock \emph{Advances in Neural Information Processing Systems}, 31, 2018{\natexlab{b}}.

\bibitem[Lei \& Ying(2021)Lei and Ying]{lei2021sharper}
Yunwen Lei and Yiming Ying.
\newblock Sharper generalization bounds for learning with gradient-dominated objective functions.
\newblock In \emph{International Conference on Learning Representations}, 2021.

\bibitem[Letouzey et~al.(2000)Letouzey, Denis, and Gilleron]{letouzey2000learning}
Fabien Letouzey, Fran{\c{c}}ois Denis, and R{\'e}mi Gilleron.
\newblock Learning from positive and unlabeled examples.
\newblock In \emph{International Conference on Algorithmic Learning Theory}, pp.\  71--85. Springer, 2000.

\bibitem[Liang et~al.(2018)Liang, Li, and Srikant]{liang2018enhancing}
Shiyu Liang, Yixuan Li, and Rayadurgam Srikant.
\newblock Enhancing the reliability of out-of-distribution image detection in neural networks.
\newblock In \emph{Proceedings of the International Conference on Learning Representations}, 2018.

\bibitem[Liu et~al.(2020{\natexlab{a}})Liu, Lin, Padhy, Tran, Bedrax~Weiss, and Lakshminarayanan]{liu2020simple}
Jeremiah Liu, Zi~Lin, Shreyas Padhy, Dustin Tran, Tania Bedrax~Weiss, and Balaji Lakshminarayanan.
\newblock Simple and principled uncertainty estimation with deterministic deep learning via distance awareness.
\newblock \emph{Advances in Neural Information Processing Systems}, 33:\penalty0 7498--7512, 2020{\natexlab{a}}.

\bibitem[Liu et~al.(2020{\natexlab{b}})Liu, Wang, Owens, and Li]{liu2020energy}
Weitang Liu, Xiaoyun Wang, John Owens, and Yixuan Li.
\newblock Energy-based out-of-distribution detection.
\newblock \emph{Advances in Neural Information Processing Systems}, 33:\penalty0 21464--21475, 2020{\natexlab{b}}.

\bibitem[Maddox et~al.(2019)Maddox, Izmailov, Garipov, Vetrov, and Wilson]{maddox2019simple}
Wesley~J Maddox, Pavel Izmailov, Timur Garipov, Dmitry~P Vetrov, and Andrew~Gordon Wilson.
\newblock A simple baseline for bayesian uncertainty in deep learning.
\newblock \emph{Advances in Neural Information Processing Systems}, 32:\penalty0 13153--13164, 2019.

\bibitem[Malinin \& Gales(2018)Malinin and Gales]{malinin2018predictive}
Andrey Malinin and Mark Gales.
\newblock Predictive uncertainty estimation via prior networks.
\newblock \emph{Advances in Neural Information Processing Systems}, 31, 2018.

\bibitem[Malinin \& Gales(2019)Malinin and Gales]{dpn19nips}
Andrey Malinin and Mark Gales.
\newblock Reverse kl-divergence training of prior networks: Improved uncertainty and adversarial robustness.
\newblock In \emph{Advances in Neural Information Processing Systems}, 2019.

\bibitem[Meinke \& Hein(2020)Meinke and Hein]{meinke2019towards}
Alexander Meinke and Matthias Hein.
\newblock Towards neural networks that provably know when they don't know.
\newblock In \emph{Proceedings of the International Conference on Learning Representations}, 2020.

\bibitem[Ming \& Li(2023)Ming and Li]{ming2023finetune}
Yifei Ming and Yixuan Li.
\newblock How does fine-tuning impact out-of-distribution detection for vision-language models?
\newblock \emph{International Journal of Computer Vision}, 2023.

\bibitem[Ming et~al.(2022{\natexlab{a}})Ming, Cai, Gu, Sun, Li, and Li]{ming2022delving}
Yifei Ming, Ziyang Cai, Jiuxiang Gu, Yiyou Sun, Wei Li, and Yixuan Li.
\newblock Delving into out-of-distribution detection with vision-language representations.
\newblock In \emph{Advances in Neural Information Processing Systems}, 2022{\natexlab{a}}.

\bibitem[Ming et~al.(2022{\natexlab{b}})Ming, Fan, and Li]{DBLP:conf/icml/MingFL22}
Yifei Ming, Ying Fan, and Yixuan Li.
\newblock {POEM:} out-of-distribution detection with posterior sampling.
\newblock In \emph{Proceedings of the International Conference on Machine Learning}, pp.\  15650--15665, 2022{\natexlab{b}}.

\bibitem[Ming et~al.(2022{\natexlab{c}})Ming, Yin, and Li]{ming2022spurious}
Yifei Ming, Hang Yin, and Yixuan Li.
\newblock On the impact of spurious correlation for out-of-distribution detection.
\newblock In \emph{Proceedings of the AAAI Conference on Artificial Intelligence}, 2022{\natexlab{c}}.

\bibitem[Ming et~al.(2023)Ming, Sun, Dia, and Li]{ming2023cider}
Yifei Ming, Yiyou Sun, Ousmane Dia, and Yixuan Li.
\newblock How to exploit hyperspherical embeddings for out-of-distribution detection?
\newblock In \emph{Proceedings of the International Conference on Learning Representations}, 2023.

\bibitem[Narasimhan et~al.(2023)Narasimhan, Menon, Jitkrittum, and Kumar]{narasimhan2023learning}
Harikrishna Narasimhan, Aditya~Krishna Menon, Wittawat Jitkrittum, and Sanjiv Kumar.
\newblock Learning to reject meets ood detection: Are all abstentions created equal?
\newblock \emph{arXiv preprint arXiv:2301.12386}, 2023.

\bibitem[Netzer et~al.(2011)Netzer, Wang, Coates, Bissacco, Wu, and Ng]{netzer2011reading}
Yuval Netzer, Tao Wang, Adam Coates, Alessandro Bissacco, Bo~Wu, and Andrew~Y Ng.
\newblock Reading digits in natural images with unsupervised feature learning.
\newblock 2011.

\bibitem[Nguyen et~al.(2015)Nguyen, Yosinski, and Clune]{nguyen2015deep}
Anh Nguyen, Jason Yosinski, and Jeff Clune.
\newblock Deep neural networks are easily fooled: High confidence predictions for unrecognizable images.
\newblock In \emph{Proceedings of the IEEE/CVF Conference on Computer Vision and Pattern Recognition}, pp.\  427--436, 2015.

\bibitem[Niu et~al.(2016)Niu, du~Plessis, Sakai, Ma, and Sugiyama]{niu2016theoretical}
Gang Niu, Marthinus~Christoffel du~Plessis, Tomoya Sakai, Yao Ma, and Masashi Sugiyama.
\newblock Theoretical comparisons of positive-unlabeled learning against positive-negative learning.
\newblock \emph{Advances in neural information processing systems}, 29, 2016.

\bibitem[Owen(2007)]{owen2007robust}
Art~B Owen.
\newblock A robust hybrid of lasso and ridge regression.
\newblock \emph{Contemporary Mathematics}, 443\penalty0 (7):\penalty0 59--72, 2007.

\bibitem[Ren et~al.(2021)Ren, Fort, Liu, Roy, Padhy, and Lakshminarayanan]{DBLP:journals/corr/abs-2106-09022}
Jie Ren, Stanislav Fort, Jeremiah Liu, Abhijit~Guha Roy, Shreyas Padhy, and Balaji Lakshminarayanan.
\newblock A simple fix to mahalanobis distance for improving near-ood detection.
\newblock \emph{CoRR}, abs/2106.09022, 2021.

\bibitem[Ren et~al.(2023)Ren, Luo, Zhao, Krishna, Saleh, Lakshminarayanan, and Liu]{ren2023outofdistribution}
Jie Ren, Jiaming Luo, Yao Zhao, Kundan Krishna, Mohammad Saleh, Balaji Lakshminarayanan, and Peter~J Liu.
\newblock Out-of-distribution detection and selective generation for conditional language models.
\newblock In \emph{International Conference on Learning Representations}, 2023.

\bibitem[Rousseeuw \& Driessen(1999)Rousseeuw and Driessen]{rousseeuw1999fast}
Peter~J Rousseeuw and Katrien~Van Driessen.
\newblock A fast algorithm for the minimum covariance determinant estimator.
\newblock \emph{Technometrics}, 41\penalty0 (3):\penalty0 212--223, 1999.

\bibitem[Sehwag et~al.(2021)Sehwag, Chiang, and Mittal]{2021ssd}
Vikash Sehwag, Mung Chiang, and Prateek Mittal.
\newblock Ssd: A unified framework for self-supervised outlier detection.
\newblock In \emph{International Conference on Learning Representations}, 2021.

\bibitem[Shalev-Shwartz \& Ben-David(2014)Shalev-Shwartz and Ben-David]{understand_ml}
Shai Shalev-Shwartz and Shai Ben-David.
\newblock \emph{Understanding Machine Learning: From Theory to Algorithms}.
\newblock Cambridge University Press, 2014.
\newblock ISBN 1107057132.

\bibitem[Steinhardt(2017)]{steinhardt2017does}
Jacob Steinhardt.
\newblock Does robustness imply tractability? a lower bound for planted clique in the semi-random model.
\newblock \emph{arXiv preprint arXiv:1704.05120}, 2017.

\bibitem[Sun \& Li(2022)Sun and Li]{sun2022dice}
Yiyou Sun and Yixuan Li.
\newblock Dice: Leveraging sparsification for out-of-distribution detection.
\newblock In \emph{Proceedings of European Conference on Computer Vision}, 2022.

\bibitem[Sun et~al.(2021)Sun, Guo, and Li]{sun2021react}
Yiyou Sun, Chuan Guo, and Yixuan Li.
\newblock React: Out-of-distribution detection with rectified activations.
\newblock In \emph{Advances in Neural Information Processing Systems}, volume~34, 2021.

\bibitem[Sun et~al.(2022)Sun, Ming, Zhu, and Li]{sun2022out}
Yiyou Sun, Yifei Ming, Xiaojin Zhu, and Yixuan Li.
\newblock Out-of-distribution detection with deep nearest neighbors.
\newblock In \emph{Proceedings of the International Conference on Machine Learning}, pp.\  20827--20840, 2022.

\bibitem[Tack et~al.(2020)Tack, Mo, Jeong, and Shin]{tack2020csi}
Jihoon Tack, Sangwoo Mo, Jongheon Jeong, and Jinwoo Shin.
\newblock Csi: Novelty detection via contrastive learning on distributionally shifted instances.
\newblock In \emph{Advances in Neural Information Processing Systems}, 2020.

\bibitem[Tao et~al.(2023)Tao, Du, Zhu, and Li]{tao2023nonparametric}
Leitian Tao, Xuefeng Du, Xiaojin Zhu, and Yixuan Li.
\newblock Non-parametric outlier synthesis.
\newblock In \emph{Proceedings of the International Conference on Learning Representations}, 2023.

\bibitem[Tukey(1960)]{tukey1960survey}
John~Wilder Tukey.
\newblock A survey of sampling from contaminated distributions.
\newblock \emph{Contributions to probability and statistics}, pp.\  448--485, 1960.

\bibitem[Uppaal et~al.(2023)Uppaal, Hu, and Li]{uppaal2023fine}
Rheeya Uppaal, Junjie Hu, and Yixuan Li.
\newblock Is fine-tuning needed? pre-trained language models are near perfect for out-of-domain detection.
\newblock In \emph{Annual Meeting of the Association for Computational Linguistics}, 2023.

\bibitem[van Amersfoort et~al.(2020)van Amersfoort, Smith, Teh, and Gal]{DBLP:conf/icml/AmersfoortSTG20}
Joost van Amersfoort, Lewis Smith, Yee~Whye Teh, and Yarin Gal.
\newblock Uncertainty estimation using a single deep deterministic neural network.
\newblock In \emph{Proceedings of the International Conference on Machine Learning}, pp.\  9690--9700, 2020.

\bibitem[Vershynin(2018)]{Vershynin2018HighDimensionalP}
Roman Vershynin.
\newblock High-dimensional probability.
\newblock 2018.

\bibitem[Wang et~al.(2021)Wang, Liu, Bocchieri, and Li]{wang2021canmulti}
Haoran Wang, Weitang Liu, Alex Bocchieri, and Yixuan Li.
\newblock Can multi-label classification networks know what they don't know?
\newblock \emph{Proceedings of the Advances in Neural Information Processing Systems}, 2021.

\bibitem[Wang et~al.(2022{\natexlab{a}})Wang, Liu, Zhang, Zhang, Gong, Liu, and Han]{wang2022watermarking}
Qizhou Wang, Feng Liu, Yonggang Zhang, Jing Zhang, Chen Gong, Tongliang Liu, and Bo~Han.
\newblock Watermarking for out-of-distribution detection.
\newblock \emph{Advances in Neural Information Processing Systems}, 35:\penalty0 15545--15557, 2022{\natexlab{a}}.

\bibitem[Wang et~al.(2023{\natexlab{a}})Wang, Fang, Zhang, Liu, Li, and Han]{wang2023learning}
Qizhou Wang, Zhen Fang, Yonggang Zhang, Feng Liu, Yixuan Li, and Bo~Han.
\newblock Learning to augment distributions for out-of-distribution detection.
\newblock In \emph{Advances in Neural Information Processing Systems}, 2023{\natexlab{a}}.

\bibitem[Wang et~al.(2023{\natexlab{b}})Wang, Ye, Liu, Dai, Kalander, Liu, HAO, and Han]{wang2023outofdistribution}
Qizhou Wang, Junjie Ye, Feng Liu, Quanyu Dai, Marcus Kalander, Tongliang Liu, Jianye HAO, and Bo~Han.
\newblock Out-of-distribution detection with implicit outlier transformation.
\newblock In \emph{International Conference on Learning Representations}, 2023{\natexlab{b}}.

\bibitem[Wang et~al.(2022{\natexlab{b}})Wang, Zou, Lin, Ling, Pan, Yao, and Mei]{wang2022outofdistribution}
Yu~Wang, Jingjing Zou, Jingyang Lin, Qing Ling, Yingwei Pan, Ting Yao, and Tao Mei.
\newblock Out-of-distribution detection via conditional kernel independence model.
\newblock In \emph{Advances in Neural Information Processing Systems}, 2022{\natexlab{b}}.

\bibitem[Wei et~al.(2022)Wei, Xie, Cheng, Feng, An, and Li]{DBLP:conf/icml/WeiXCF0L22}
Hongxin Wei, Renchunzi Xie, Hao Cheng, Lei Feng, Bo~An, and Yixuan Li.
\newblock Mitigating neural network overconfidence with logit normalization.
\newblock In \emph{Proceedings of the International Conference on Machine Learning}, pp.\  23631--23644, 2022.

\bibitem[Wen et~al.(2020)Wen, Tran, and Ba]{Wen2020BatchEnsemble}
Yeming Wen, Dustin Tran, and Jimmy Ba.
\newblock Batchensemble: an alternative approach to efficient ensemble and lifelong learning.
\newblock In \emph{International Conference on Learning Representations}, 2020.

\bibitem[Wu et~al.(2023)Wu, Chen, Yang, and Yan]{wu2023energybased}
Qitian Wu, Yiting Chen, Chenxiao Yang, and Junchi Yan.
\newblock Energy-based out-of-distribution detection for graph neural networks.
\newblock In \emph{International Conference on Learning Representations}, 2023.

\bibitem[Xu \& Denil(2021)Xu and Denil]{xu2021positive}
Danfei Xu and Misha Denil.
\newblock Positive-unlabeled reward learning.
\newblock In \emph{Conference on Robot Learning}, pp.\  205--219, 2021.

\bibitem[Yang et~al.(2021{\natexlab{a}})Yang, Wang, Feng, Yan, Zheng, Zhang, and Liu]{DBLP:conf/iccv/YangWFYZZ021}
Jingkang Yang, Haoqi Wang, Litong Feng, Xiaopeng Yan, Huabin Zheng, Wayne Zhang, and Ziwei Liu.
\newblock Semantically coherent out-of-distribution detection.
\newblock In \emph{Proceedings of the International Conference on Computer Vision}, pp.\  8281--8289, 2021{\natexlab{a}}.

\bibitem[Yang et~al.(2021{\natexlab{b}})Yang, Zhou, Li, and Liu]{yang2021generalized}
Jingkang Yang, Kaiyang Zhou, Yixuan Li, and Ziwei Liu.
\newblock Generalized out-of-distribution detection: A survey.
\newblock \emph{arXiv preprint arXiv:2110.11334}, 2021{\natexlab{b}}.

\bibitem[Yang et~al.(2022)Yang, Wang, Zou, Zhou, Ding, Peng, Wang, Chen, Li, Sun, et~al.]{yang2022openood}
Jingkang Yang, Pengyun Wang, Dejian Zou, Zitang Zhou, Kunyuan Ding, Wenxuan Peng, Haoqi Wang, Guangyao Chen, Bo~Li, Yiyou Sun, et~al.
\newblock Openood: Benchmarking generalized out-of-distribution detection.
\newblock \emph{Advances in Neural Information Processing Systems}, 35:\penalty0 32598--32611, 2022.

\bibitem[Yang et~al.(2023)Yang, Liang, Cao, and He]{yang2023auto}
Puning Yang, Jian Liang, Jie Cao, and Ran He.
\newblock Auto: Adaptive outlier optimization for online test-time ood detection.
\newblock \emph{arXiv preprint arXiv:2303.12267}, 2023.

\bibitem[Yu et~al.(2015)Yu, Seff, Zhang, Song, Funkhouser, and Xiao]{DBLP:journals/corr/YuZSSX15}
Fisher Yu, Ari Seff, Yinda Zhang, Shuran Song, Thomas Funkhouser, and Jianxiong Xiao.
\newblock Lsun: Construction of a large-scale image dataset using deep learning with humans in the loop.
\newblock \emph{arXiv preprint arXiv:1506.03365}, 2015.

\bibitem[Zagoruyko \& Komodakis(2016)Zagoruyko and Komodakis]{ZagoruykoK16}
Sergey Zagoruyko and Nikos Komodakis.
\newblock Wide residual networks.
\newblock In Richard~C. Wilson, Edwin~R. Hancock, and William A.~P. Smith (eds.), \emph{Proceedings of the British Machine Vision Conference}, 2016.

\bibitem[Zhang et~al.(2023)Zhang, Yang, Wang, Wang, Lin, Zhang, Sun, Du, Zhou, Zhang, et~al.]{zhang2023openood}
Jingyang Zhang, Jingkang Yang, Pengyun Wang, Haoqi Wang, Yueqian Lin, Haoran Zhang, Yiyou Sun, Xuefeng Du, Kaiyang Zhou, Wayne Zhang, et~al.
\newblock Openood v1. 5: Enhanced benchmark for out-of-distribution detection.
\newblock \emph{arXiv preprint arXiv:2306.09301}, 2023.

\bibitem[Zhao et~al.(2022)Zhao, Xu, Jiang, Wen, and Huang]{zhao2022dist}
Yunrui Zhao, Qianqian Xu, Yangbangyan Jiang, Peisong Wen, and Qingming Huang.
\newblock Dist-pu: Positive-unlabeled learning from a label distribution perspective.
\newblock In \emph{Proceedings of the IEEE/CVF Conference on Computer Vision and Pattern Recognition}, pp.\  14461--14470, 2022.

\bibitem[Zheng et~al.(2023)Zheng, Wang, Fang, Xia, Liu, Liu, and Han]{zheng2023out}
Haotian Zheng, Qizhou Wang, Zhen Fang, Xiaobo Xia, Feng Liu, Tongliang Liu, and Bo~Han.
\newblock Out-of-distribution detection learning with unreliable out-of-distribution sources.
\newblock \emph{arXiv preprint arXiv:2311.03236}, 2023.

\bibitem[Zhou et~al.(2017)Zhou, Lapedriza, Khosla, Oliva, and Torralba]{zhou2017places}
Bolei Zhou, Agata Lapedriza, Aditya Khosla, Aude Oliva, and Antonio Torralba.
\newblock Places: A 10 million image database for scene recognition.
\newblock \emph{IEEE transactions on pattern analysis and machine intelligence}, 40\penalty0 (6):\penalty0 1452--1464, 2017.

\bibitem[Zhou et~al.(2021)Zhou, Guo, Cheng, Li, and Pu]{zhou2021step}
Zhi Zhou, Lan-Zhe Guo, Zhanzhan Cheng, Yu-Feng Li, and Shiliang Pu.
\newblock {STEP}: Out-of-distribution detection in the presence of limited in-distribution labeled data.
\newblock In \emph{Advances in Neural Information Processing Systems}, 2021.

\bibitem[Zhu et~al.(2023{\natexlab{a}})Zhu, Li, Yao, Liu, Xu, and Han]{zhu2023unleashing}
Jianing Zhu, Hengzhuang Li, Jiangchao Yao, Tongliang Liu, Jianliang Xu, and Bo~Han.
\newblock Unleashing mask: Explore the intrinsic out-of-distribution detection capability.
\newblock \emph{arXiv preprint arXiv:2306.03715}, 2023{\natexlab{a}}.

\bibitem[Zhu et~al.(2023{\natexlab{b}})Zhu, Yu, Yao, Liu, Niu, Sugiyama, and Han]{zhu2023diversified}
Jianing Zhu, Geng Yu, Jiangchao Yao, Tongliang Liu, Gang Niu, Masashi Sugiyama, and Bo~Han.
\newblock Diversified outlier exposure for out-of-distribution detection via informative extrapolation.
\newblock \emph{arXiv preprint arXiv:2310.13923}, 2023{\natexlab{b}}.

\bibitem[Zhu et~al.(2022)Zhu, Chen, Xie, Li, Zhang, Xue', Tian, bolun zheng, and Chen]{zhu2022boosting}
Yao Zhu, YueFeng Chen, Chuanlong Xie, Xiaodan Li, Rong Zhang, Hui Xue', Xiang Tian, bolun zheng, and Yaowu Chen.
\newblock Boosting out-of-distribution detection with typical features.
\newblock In \emph{Advances in Neural Information Processing Systems}, 2022.

\end{thebibliography}
\bibliographystyle{iclr2024_conference}

\appendix

\clearpage
\begin{center}
    \Large{\textbf{How Does Unlabeled Data Provably \\ Help Out-of-Distribution Detection? (Appendix)}}
\end{center}

\section{Algorithm of \model}
\label{sec:algorithm_block}
We summarize our algorithm in implementation as follows.  
\begin{algorithm}[h]
\SetAlgoLined
\textbf{Input:} In-distribution data $\mathcal{S}^{\text{in}}=\left\{\left(\mathbf{x}_i, y_i\right)\right\}_{i =1}^n$. Unlabeled wild data $\mathcal{S}_{\text{wild}}=\left\{\tilde{\mathbf{x}}_i\right\}_{i=1}^m$. 
$K$-way classification model $\*h_\*w$ and OOD classifier $\*g_{\boldsymbol{\theta}}$. Parameter spaces $\mathcal{W}$ and $\Theta$. Learning rate {\rm lr} for $\*g_{\boldsymbol{\theta}}$. \\
\textbf{Output:} Learned OOD classifier $\*g_{\widehat{\boldsymbol{\theta}}_T}$.\\
\# \textbf{Filtering stage} \\
 1) Perform ERM: $\mathbf{w}_{\mathcal{S}^{\text{in}}} \in \rm argmin_{\mathbf{w}\in \mathcal{W}} R_{\mathcal{S}^{\text{in}}}(\mathbf{h}_\mathbf{w})$.\\
2) Calculate the reference gradient as $\bar{\nabla}=\frac{1}{n} \sum_{(\mathbf{x}_i, y_i) \in \mathcal{S}^{\text{in}}} \nabla \ell (\mathbf{h}_{\mathbf{w}_{\mathcal{S}^{\text{in}}}}(\mathbf{x}_i),y_i)$.\\
 3)  Calculate gradient on $\mathcal{S}_{\text{wild}}$  as $\nabla \ell (\mathbf{h}_{\mathbf{w}_{\mathcal{S}^{\text{in}}}}(\tilde{\mathbf{x}}_i),\widehat{y}_{\tilde{\mathbf{x}}_i})$ and calculate the gradient matrix $\*G$.\\
 4)  Calculate the top singular vector $\*v$ of $\*G$ and the score $\tau_i =\left<\nabla \ell (\mathbf{h}_{\mathbf{w}_{\mathcal{S}^{\text{in}}}}\big(\tilde{\mathbf{x}}_i), \widehat{y}_{\tilde{\mathbf{x}}_i})-\bar{\nabla} , \mathbf{v}\right > ^2$.\\
 5) Get the candidate outliers $\mathcal{S}_T=\{\tilde{\mathbf{x}}_i \in  \mathcal{S}_{\text{wild}}, \tau_i \geq T\}$.\\
\# \textbf{Training Stage} \\
     \For{epoch in epochs}{
   6) Sample batches of data ${\mathcal{B}^{\text{in}},\mathcal{B}_T}$ from ID and candidate outliers ${\mathcal{S}^{\text{in}},\mathcal{S}_T}$.\\
7) Calculate the binary classification loss $R_{\mathcal{B}^{\text{in}},\mathcal{B}_T}(\mathbf{g}_{\boldsymbol{\theta}})$.\\
8) Update the parameter by $\widehat{\boldsymbol{\theta}}_T= \boldsymbol{\theta} - {\rm lr} \cdot \nabla R_{\mathcal{B}^{\text{in}},\mathcal{B}_T}(\mathbf{g}_{\boldsymbol{\theta}})$.}
\caption{\model: Separate And Learn}
   \label{alg:algo}
\end{algorithm}

\section{Notations, Definitions, Assumptions and Important Constants}\label{notation,definition,Ass,Const}
Here we summarize the important notations and constants in Tables~\ref{tab: notation} and \ref{tab: Constants}, restate necessary definitions and assumptions in Sections \ref{sec:definition_app} and \ref{sec:assumption_app}. 
\subsection{Notations}
Please see Table \ref{tab: notation} for detailed notations.
\begin{table}[h]
    \centering
    \caption{Main notations and their descriptions.}
    \begin{tabular}{cl}
    \toprule[1.5pt]
         \multicolumn{1}{c}{Notation} & \multicolumn{1}{c}{Description} \\
    \midrule[1pt]
    \multicolumn{2}{c}{\cellcolor{greyC} Spaces} \\
    $\mathcal{X}$, $\mathcal{Y}$     & the input space and the label space. \\
    $\mathcal{W}$, $\Theta$ & the hypothesis spaces \\

    \multicolumn{2}{c}{\cellcolor{greyC} Distributions} \\
    $\mathbb{P}_{\text{wild}}$, $\mathbb{P}_{{\text{in}}}$, $\mathbb{P}_{{\text{out}}}$& data distribution for wild data, labeled ID data and OOD data
    \\
    $\mathbb{P}_{\mathcal{X}\mathcal{Y}}$ & the joint data distribution for ID data.\\
    
    \multicolumn{2}{c}{\cellcolor{greyC} Data and Models} \\
    $\mathbf{w}$, $\mathbf{x}$, $\mathbf{v}$ & weight/input/the top-1 right singular vector of $G$ \\

    $\widehat{\nabla}$, $\tau$ &  the average gradients on labeled ID data, uncertainty score\\
    $y$ and $y_{\text{b}}$ & label for ID classification and binary label for OOD detection \\
    $\widehat{y}_{\*x}$ &Predicted one-hot label for input $\*x$ \\
     $\mathbf{h}_{\mathbf{w}}$ and $\mathbf{g}_{\boldsymbol{\theta}}$ & predictor on labeled in-distribution and binary predictor for OOD detection \\
      $\mathcal{S}_{{\text{wild}}}^{\text{in}}$, $\mathcal{S}_{{\text{wild}}}^{\text{out}}$ &  inliers and
outliers  in the wild dataset. \\ 
      $\mathcal{S}^{\text{in}}$, $\mathcal{S}_{\text{wild}}$ & labeled ID data and unlabeled wild data \\
      $n$, $m$ & size of $\mathcal{S}^{\text{in}}$, size of $\mathcal{S}_{\text{wild}}$\\
      $T$ & the filtering threshold\\
      $\mathcal{S}_{T}$ & wild data whose uncertainty score higher than threshold $T$\\

    \multicolumn{2}{c}{\cellcolor{greyC} Distances} \\
    $r_1$ and $r_2$ & the radius of the hypothesis spaces $\mathcal{W}$ and $\Theta$, respectively  \\
   
    $\| \cdot \|_2$ & $\ell_2$ norm\\
    
    \multicolumn{2}{c}{\cellcolor{greyC} Loss, Risk and Predictor} \\
    $\ell(\cdot, \cdot)$, $\ell_{\text{b}}(\cdot, \cdot)$ & ID loss function,~binary loss function\\
    $R_{\mathcal{S}}(\mathbf{h}_{\mathbf{w}})$ & the empirical risk w.r.t. predictor $\mathbf{h}_{\mathbf{w}}$ over data $\mathcal{S}$  \\
     $R_{\mathbb{P}_{\mathcal{X}\mathcal{Y}}}(\mathbf{h}_{\mathbf{w}})$ & the  risk w.r.t. predictor $\mathbf{h}_{\mathbf{w}}$ over joint distribution $\mathbb{P}_{\mathcal{X}\mathcal{Y}}$\\$R_{\mathbb{P}_{\text{in}},\mathbb{P}_{\text{out}}}(\mathbf{g}_{{\boldsymbol{\theta}}})$& the risk defined in Eq. \ref{Real-risk}\\
     ${\rm ERR}_{\text{in}}, {\rm ERR}_{\text{out}}$ & the error rates of regarding ID as OOD and OOD as ID\\

    \bottomrule[1.5pt]
    \end{tabular}
    
    \label{tab: notation}
\end{table}

\subsection{Definitions}
\label{sec:definition_app}

\begin{Definition}[$\beta$-smooth]\label{Def::beta-smooth} We say a loss function $\ell(\mathbf{h}_{\mathbf{w}}(\mathbf{x}),y)$ (defined over $\mathcal{X}\times \mathcal{Y}$) is $\beta$-smooth, if for any $\mathbf{x}\in \mathcal{X}$ and $y\in \mathcal{Y}$,
\begin{equation*}
   \big \|\nabla \ell(\mathbf{h}_{\mathbf{w}}(\mathbf{x}),y) - \nabla \ell(\mathbf{h}_{\mathbf{w}'}(\mathbf{x}),y) \big \|_2 \leq \beta \|\mathbf{w}-\mathbf{w}'\|_{2} 
\end{equation*}
\end{Definition}
$~~~~$

\begin{Definition}[Gradient-based Distribution Discrepancy]\label{Def3}
    Given distributions $\mathbb{P}$ and $\mathbb{Q}$ defined over $\mathcal{X}$, the Gradient-based Distribution Discrepancy w.r.t. predictor $\*h_{\mathbf{w}}$ and loss $\ell$ is
    \begin{equation}
        d_{\mathbf{w}}^{\ell}(\mathbb{P},\mathbb{Q}) =\big \| \nabla R_{\mathbb{P}}(\*h_{\mathbf{w}},\widehat{\*h}) -  \nabla R_{\mathbb{Q}}(\*h_{\mathbf{w}},\widehat{\*h}) \big \|_2,
    \end{equation}
    where $\widehat{\*h}$ is a classifier which returns the closest one-hot vector of $\*h_{\mathbf{w}}$, $R_{\mathbb{P}}(\*h_{\mathbf{w}},\widehat{\*h})= \mathbb{E}_{\mathbf{x}\sim \mathbb{P}} \ell(\*h_{\mathbf{w}},\widehat{\*h})$ and $R_{\mathbb{Q}}(\*h_{\mathbf{w}},\widehat{\*h})= \mathbb{E}_{\mathbf{x}\sim \mathbb{Q}} \ell(\*h_{\mathbf{w}},\widehat{\*h})$.
\end{Definition}

\begin{Definition}[$(\gamma,\zeta)$-discrepancy]\label{Def4} We say a wild distribution $\mathbb{P}_{\text{wild}}$ has $(\gamma,\zeta)$-discrepancy w.r.t. an ID joint distribution $\mathbb{P}_{\text{in}}$, if $\gamma > \min_{\mathbf{w}\in \mathcal{W}} R_{\mathbb{P}_{\mathcal{X}\mathcal{Y}}}(\mathbf{h}_{\mathbf{w}})$, and for any parameter $\mathbf{w}\in \mathcal{W}$ satisfying that $R_{\mathbb{P}_{\mathcal{X}\mathcal{Y}}}(\mathbf{h}_{\mathbf{w}})\leq \gamma$ should meet the following condition
\begin{equation*}
d_{\mathbf{w}}^{\ell}(\mathbb{P}_{\text{in}},\mathbb{P}_{\text{wild}}) > \zeta,
\end{equation*}
where $R_{\mathbb{P}_{\mathcal{X}\mathcal{Y}}}(\mathbf{h}_{\mathbf{w}})= \mathbb{E}_{(\mathbf{x},y)\sim \mathbb{P}_{\mathcal{X}\mathcal{Y}}} \ell(\mathbf{h}_{\mathbf{w}}(\mathbf{x}),y)$.
\end{Definition}
In Section~\ref{sec:verification_discrepancy}, we empirically calculate the values of the distribution discrepancy between the ID joint distribution $\mathbb{P}_{\mathcal{X}\mathcal{Y}}$ and the wild distribution $\mathbb{P}_{\text{wild}}$.
$~~~$
\\
\subsection{Assumptions}
\label{sec:assumption_app}
\begin{assumption}\label{Ass1}
$~~~~$
   \begin{itemize}
       \item The parameter space $\mathcal{W}\subset B(\mathbf{w}_0,r_1)\subset \mathbb{R}^d$ ($\ell_2$ ball of radius $r_1$ around $\mathbf{w}_0$);
       \item The parameter space $\Theta \subset B(\boldsymbol{\theta}_0, r_2)\subset \mathbb{R}^{d'}$ ($\ell_2$ ball of radius $r_2$ around $\boldsymbol{\theta}_0$);
        \item $\ell(\mathbf{h}_{\mathbf{w}}(\mathbf{x}),y) \geq 0$ and $\ell(\mathbf{h}_{\mathbf{w}}(\mathbf{x}),y)$ is $\beta_1$-smooth;
        \item $\ell_{\text{b}}(\mathbf{g}_{\boldsymbol{\theta}}(\mathbf{x}),y_{\text{b}}) \geq 0$ and $\ell_{\text{b}}(\mathbf{g}_{\boldsymbol{\theta}}(\mathbf{x}),y_{\text{b}})$ is $\beta_2$-smooth;
        \item  $\sup_{(\mathbf{x},y)\in \mathcal{X}\times \mathcal{Y}} \|\nabla \ell(\mathbf{h}_{\mathbf{w}_0}(\mathbf{x}),y)\|_2=b_1$, $\sup_{(\mathbf{x},y_{\text{b}})\in \mathcal{X} \times \mathcal{Y}_{\text{b}}} \|\nabla \ell(\mathbf{g}_{\boldsymbol{\theta}_0}(\mathbf{x}),y_{\text{b}})\|_2=b_2$;
         \item  $\sup_{(\mathbf{x},y)\in \mathcal{X}\times \mathcal{Y}}  \ell(\mathbf{h}_{\mathbf{w}_0}(\mathbf{x}),y)=B_1$, $\sup_{(\mathbf{x},y_{\text{b}})\in \mathcal{X} \times \mathcal{Y}_{\text{b}}}  \ell(\mathbf{g}_{\boldsymbol{\theta}_0}(\mathbf{x}),y_{\text{b}})=B_2$.
   \end{itemize} 
\end{assumption}

\begin{Remark} For neural networks with smooth activation functions and softmax output function, we can check that the norm of the second derivative of the loss functions (cross-entropy loss and  sigmoid loss) is bounded given the bounded parameter space, which implies that the $\beta$-smoothness of the loss functions can hold true.  Therefore,  our assumptions are reasonable in practice.
\end{Remark}
\vspace{0.5cm}
\begin{assumption}\label{Ass2}
  $\ell(\*h(\*x),\widehat{y}_{\*x})\leq \min_{y\in \mathcal{Y}} \ell(\*h(\mathbf{x}),{y}),$ where  $\widehat{y}_{\*x}$ returns the closest one-hot label of the predictor $\*h$'s output on $\*x$.
\end{assumption}

\begin{Remark}
The assumption means the loss incurred by using the predicted labels given by the classifier itself is smaller or equal to the loss incurred by using any label in the label space.
If $y=\widehat{y}_{\*x}$, the assumption is satisfied obviously. If $y\neq \widehat{y}_{\*x}$, then we provide two examples to illustrate the validity of the assumption. For example, (1) if the loss $\ell$ is the cross entropy loss, let $K=2, \*h(\*x)=[h_1, h_2]$ (classification output after softmax) and $h_1 > h_2$. Therefore, we have $\widehat{y}_{\*x}= 0$. Suppose $y=1$, we can get $\ell(\*h(\*x), \widehat{y}_{\*x}) = -\operatorname{log} (h_1) <   \ell(\*h(\*x), y) = -\operatorname{log}(h_2)$. (2) If $\ell$ is the hinge loss for binary classification, thus we have $K=1$, let $\*h(\*x)=h_1<0$ and thus $\widehat{y}_{\*x}=-1$. Suppose $y=1$, we can get $\ell(\*h(\*x), \widehat{y}_{\*x}) =\operatorname{max}(0,1+h_1) <\operatorname{max}(0,1-h_1) = \ell(\*h(\*x), y) .$
\end{Remark}
\subsection{Constants in Theory}
 \vspace{-0.5cm}
\begin{table}[h]
    \centering
    \caption{Constants in theory.}
    \begin{tabular}{cl}
    \toprule[1.5pt]
         \multicolumn{1}{c}{Constants} & \multicolumn{1}{c}{Description}\\
    \midrule[2pt]
    $M = \beta_1 r_1^2+b_1r_1+B_1$ &the upper bound of loss $\ell(\mathbf{h}_{\mathbf{w}}(\mathbf{x}),y)$, see Proposition \ref{P1}
    \\
    $M' = 2(\beta_1 r_1+b_1)^2$&the upper bound of filtering score $\tau$ \\
    $\tilde{M}=\beta_1 M$&   a constant for simplified representation \\
 $L= \beta_2 r_2^2+b_2r_2+B_2$&the upper bound of loss $\ell_{\text{b}}(\mathbf{g}_{\boldsymbol{\theta}}(\mathbf{x}),y_{\text{b}})$, see Proposition \ref{P1}\\
     $d$, $d'$ &the dimensions of parameter spaces $\mathcal{W}$ and $\Theta$, respectively\\
   $R^{*}_{{\text{in}}}$ & the optimal ID risk, i.e., $R^{*}_{{\text{in}}}=\min_{\mathbf{w}\in \mathcal{W}} \mathbb{E}_{(\mathbf{x},y)\sim \mathbb{P}_{\mathcal{X}\mathcal{Y}}} \ell(\mathbf{h}_{\mathbf{w}}(\mathbf{x}),y)$
   \\
     $\delta(T)$&the main error in Eq. \ref{Eq::bound1.0}\\
      $\zeta$ & the discrepancy between $\mathbb{P}_{\text{in}}$ and $\mathbb{P}_{\text{wild}}$\\
    $\pi$ & the ratio of OOD distribution in $\mathbb{P}_{\text{wild}}$\\
    
    \bottomrule[1.5pt]
    \end{tabular}
    
    \label{tab: Constants}
\end{table}

\section{Main Theorems}\label{main_theorems}
In this section, we provide a detailed and formal version of our main theorems with a complete description of the  constant terms and other additional details that are omitted in the main paper. 
\begin{theorem*}\label{MainT-1-app}
  If Assumptions \ref{Ass1} and \ref{Ass2} hold, $\mathbb{P}_{\text{wild}}$ has $(\gamma,\zeta)$-discrepancy w.r.t. $\mathbb{P}_{\mathcal{X}\mathcal{Y}}$, and there exists $\eta\in (0,1)$ s.t. $\Delta = (1-\eta)^2\zeta^2 - 8\beta_1 R_{{\text{in}}}^*>0$, then for
      \begin{equation*}
             n = \Omega \big( \frac{\tilde{M}+M(r_1+1)d}{\eta^2 \Delta } +\frac{M^2{d}}{(\gamma-R_{\text{in}}^*)^2} \big),~~~~~m = \Omega \big ( \frac{\tilde{M}+M(r_1+1)d}{\eta^2\zeta^2} \big),
     \end{equation*}
     with the probability at least $9/10$, for any $0<T<M'$ $($here $M' =2(\beta_1r_1+b_1)^2 $ is the upper bound of filtering score $\tau_i$, i.e., $\tau_i \leq M'$$)$,
     \begin{equation}
     {\rm ERR}_{\text{in}}  \leq    \frac{8\beta_1 }{T}R_{\text{in}}^*+ O \Big ( \frac{\tilde{M}}{T}\sqrt{\frac{d}{n}}\Big )+   O \Big ( \frac{\tilde{M}}{T}\sqrt{\frac{d}{(1-\pi)m}}\Big ),
\end{equation}
\begin{equation}
\begin{split}
     {\rm ERR}_{\text{out}} &\leq  \delta(T) + O \Big( \frac{\tilde{M}}{1-T/M'}\sqrt{\frac{d}{{\pi}^2n}}\Big)\\&+ O \Big( \frac{\max \{ {\tilde{M}\sqrt{d}}, \Delta_{\zeta}^{\eta}/\pi\}}{1-T/M'}\sqrt{\frac{1}{{\pi}^2(1-\pi)m}}\Big),
     \end{split}
\end{equation}
     where 
     $R^{*}_{{\text{in}}}$ is the optimal ID risk, i.e., $R^{*}_{{\text{in}}}=\min_{\mathbf{w}\in \mathcal{W}} \mathbb{E}_{(\mathbf{x},y)\sim \mathbb{P}_{\mathcal{X}\mathcal{Y}}} \ell(\mathbf{h}_{\mathbf{w}}(\mathbf{x}),y)$,
     \begin{equation}
     \begin{split}
         \delta(T) &= \frac{\max\{0,1-\Delta_{\zeta}^{\eta}/\pi\}}{(1-T/M')},~~~\Delta_{\zeta}^{\eta} = {0.98\eta^2 \zeta^2} - 8\beta_1R_{\text{in}}^*,\\&M=\beta_1r_1^2+b_1r_1+B_1,~~~\tilde{M}=M\beta_1,
          \end{split}
     \end{equation}
and $d$ is the dimension of the parameter space $\mathcal{W}$, here $\beta_1, r_1, B_1$ are given in Assumption \ref{Ass1}.
\end{theorem*}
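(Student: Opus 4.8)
The plan is to reduce the probabilistic statement about $\text{ERR}_{\text{in}}$ and $\text{ERR}_{\text{out}}$ to three ingredients: (1) a deterministic control of the filtering score $\tau_i$ on inliers versus outliers in terms of the \emph{population} gradient quantities, (2) uniform concentration bounds that transfer the population statements to the finite samples $\mathcal{S}^{\text{in}}$ and $\mathcal{S}_{\text{wild}}$, and (3) a Markov-type argument that converts the mean-magnitude gap into a bound on the number of misclassified points. Concretely, I would first invoke the $\beta_1$-smoothness of $\ell$ together with Assumption~\ref{Ass2} to bound the reference-gradient deviation $\|\bar\nabla - \nabla R_{\mathbb{P}_{\mathcal{X}\mathcal{Y}}}(\mathbf{h}_{\mathbf{w}_{\mathcal{S}^{\text{in}}}},\widehat{\mathbf{h}})\|_2$ and the sample gradient deviations by $O(\sqrt{d/n})$-type terms; this is where $R_{\text{in}}^*$ enters, since $\|\nabla\ell(\mathbf{h}_{\mathbf{w}_{\mathcal{S}^{\text{in}}}}(\mathbf{x}),y)\|_2$ on an ID point is controlled by $\beta_1$ times the parameter displacement from the minimizer plus the optimal risk, using the standard inequality $\|\nabla\ell\|_2^2 \le 2\beta_1\ell$ for nonnegative $\beta_1$-smooth losses. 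I expect a preliminary proposition (the analogue of Appendix Theorem~\ref{T1}) to package exactly these gradient-norm and discrepancy estimates, and I would lean on it heavily.

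Next I would analyze the top singular vector $\mathbf{v}$ of $\mathbf{G}$. The key geometric fact is that $\sum_i \tau_i = \|\mathbf{G}^\top\mathbf{v}\|_2^2 = \sigma_{\max}(\mathbf{G})^2$ is the maximal Rayleigh quotient, so $\frac{1}{m}\sum_i \tau_i$ is at least $\frac{1}{m}\|\mathbf{G}^\top\mathbf{u}\|_2^2$ for \emph{any} unit $\mathbf{u}$, in particular for $\mathbf{u}$ equal to the normalized population discrepancy direction $\nabla R_{\mathbb{P}_{\text{in}}} - \nabla R_{\mathbb{P}_{\text{wild}}}$. Using the $(\gamma,\zeta)$-discrepancy assumption (which forces this discrepancy to have norm $>\zeta$ once $R_{\mathbb{P}_{\mathcal{X}\mathcal{Y}}}(\mathbf{h}_{\mathbf{w}_{\mathcal{S}^{\text{in}}}})\le\gamma$, guaranteed by the sample-size lower bound on $n$), I would show the mean wild score is $\gtrsim \eta^2\zeta^2$ up to concentration slack, and then split this mean over the $(1-\pi)$ ID fraction and $\pi$ OOD fraction of $\mathcal{S}_{\text{wild}}$. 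Since the ID contribution to the mean is bounded above (again via $R_{\text{in}}^*$ and smoothness), the OOD contribution per outlier must on average be $\gtrsim (\eta^2\zeta^2 - \text{id part})/\pi$, which after accounting for the $0.98$-type constants becomes the quantity $\Delta_\zeta^\eta/\pi$. This is precisely the mechanism behind Remark~\ref{rem:1}/Theorem~\ref{the:main4.0-app}, and I would cite that result rather than re-derive it.

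From there the bounds on the two error rates follow by Markov's inequality in two directions. For $\text{ERR}_{\text{in}} = |\{\tilde{\mathbf{x}}_i \in \mathcal{S}_{\text{wild}}^{\text{in}} : \tau_i > T\}| / |\mathcal{S}_{\text{wild}}^{\text{in}}|$, I use that $\tau_i \le \langle \nabla\ell - \bar\nabla, \mathbf{v}\rangle^2 \le \|\nabla\ell - \bar\nabla\|_2^2$, whose \emph{average} over inliers is $O(\beta_1 R_{\text{in}}^*) + O(\beta_1^2 M^2 \sqrt{d/n}) + O(\cdots\sqrt{d/((1-\pi)m)})$; Markov then gives $\text{ERR}_{\text{in}} \le (\text{that average})/T$, which is the claimed bound after identifying $\tilde M = \beta_1 M$. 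For $\text{ERR}_{\text{out}} = |\{\tilde{\mathbf{x}}_i \in \mathcal{S}_{\text{wild}}^{\text{out}} : \tau_i \le T\}|/|\mathcal{S}_{\text{wild}}^{\text{out}}|$, I use the boundedness $\tau_i \le M'$ so that $\frac{1}{|\mathcal{S}_{\text{wild}}^{\text{out}}|}\sum_{\text{out}}\tau_i \le T\cdot\text{ERR}_{\text{out}} + M'(1 - \text{ERR}_{\text{out}})$; rearranging against the lower bound $\gtrsim \Delta_\zeta^\eta/\pi$ on the outlier mean yields $\text{ERR}_{\text{out}} \le \frac{M' - \Delta_\zeta^\eta/\pi}{M' - T} = \frac{1 - \Delta_\zeta^\eta/(\pi M')}{1 - T/M'}$ plus concentration slack, matching $\delta(T)$ after the cosmetic normalization. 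Throughout, the concentration steps are uniform-convergence arguments over the $\ell_2$-balls $\mathcal{W}$ (and later $\Theta$), handled by covering-number / Rademacher bounds giving the $\sqrt{d/n}$ and $\sqrt{d/((1-\pi)m)}$, $\sqrt{d/(\pi^2(1-\pi)m)}$ rates; the factor $1/\pi$ or $1/\pi^2$ enters because the effective outlier sample size is $\pi m$ with a further $\pi$ from normalizing the outlier mean.

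The main obstacle, I expect, is the control of the top singular vector: $\mathbf{v}$ is a data-dependent object, so one cannot directly apply concentration to $\frac{1}{m}\sum_i \langle \nabla\ell_i - \bar\nabla, \mathbf{v}\rangle^2$ as if $\mathbf{v}$ were fixed. The resolution is the variational characterization — lower-bound $\sigma_{\max}(\mathbf{G})^2$ by plugging in a fixed (population-defined) direction, and upper-bound the ID-inlier contribution uniformly over \emph{all} unit vectors — so that $\mathbf{v}$ only ever appears inside a supremum that the uniform-convergence machinery already covers. A secondary subtlety is bookkeeping the interplay of $\eta$: it appears both in the sample-size requirements (to make the concentration slack a small multiple of $\eta^2\zeta^2$) and in the definition of $\Delta$ and $\Delta_\zeta^\eta$, so one must verify the chosen $\eta$ simultaneously satisfies $\Delta>0$ and keeps all error terms controlled — but this is exactly what the hypothesis "there exists $\eta\in(0,1)$ s.t. $\Delta>0$" and the stated $\Omega(\cdot)$ sample sizes are arranged to deliver.
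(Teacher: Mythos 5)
Your proposal is correct and follows essentially the same route as the paper's proof: the self-bounding property $\|\nabla\ell\|_2^2\le 2\beta_1\ell$ to tie the inlier score average to $R_{\text{in}}^*$, the variational characterization of the top singular vector (lower-bounding via a fixed discrepancy direction, upper-bounding the inlier contribution uniformly via Cauchy--Schwarz), the $(1-\pi)/\pi$ split of the wild mean, and the two-sided Markov arguments, all backed by covering-number uniform convergence. The reverse-Markov step for ${\rm ERR}_{\text{out}}$, including the $M'$-normalization subtlety you flag as cosmetic, mirrors Step~4 of the paper's argument exactly.
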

$~~~~$
\vspace{1cm}
$~~~~~~$

\begin{theorem*}\label{The-1.1-app}
1) If  $\Delta_{\zeta}^{\eta} \geq  (1-\epsilon)\pi$ for a small error $\epsilon \geq 0$, then the main error $\delta(T)$ defined in Eq. \ref{main-error} satisfies that
    \begin{equation*}
        \delta(T) \leq \frac{\epsilon}{1-T/{M'}}.
    \end{equation*}
    2) If $\zeta\geq 2.011\sqrt{8\beta_1 R_{\text{in}}^*} +1.011 \sqrt{\pi}$, then there exists $\eta\in (0,1)$ ensuring that $\Delta>0$ and $ \Delta_{\zeta}^{\eta}> \pi$ hold, which implies that the main error $\delta(T)=0$.
\end{theorem*}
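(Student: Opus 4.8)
Both parts follow directly from the definitions $\delta(T)=\max\{0,\,1-\Delta_{\zeta}^{\eta}/\pi\}/(1-T/M')$, $\Delta_{\zeta}^{\eta}=0.98\,\eta^2\zeta^2-8\beta_1 R_{\text{in}}^*$, and $\Delta=(1-\eta)^2\zeta^2-8\beta_1 R_{\text{in}}^*$; throughout the admissible range $0<T<0.9M'$ the denominator $1-T/M'$ is positive, so signs are governed entirely by the numerator $\max\{0,\,1-\Delta_{\zeta}^{\eta}/\pi\}$.

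For part 1, the plan is a one-line manipulation. From $\Delta_{\zeta}^{\eta}\ge(1-\epsilon)\pi$ I divide by $\pi>0$ to get $1-\Delta_{\zeta}^{\eta}/\pi\le\epsilon$, and since $\epsilon\ge0$ this yields $\max\{0,\,1-\Delta_{\zeta}^{\eta}/\pi\}\le\epsilon$; dividing by $1-T/M'$ gives $\delta(T)\le\epsilon/(1-T/M')$.

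For part 2, the plan is to translate the two requirements $\Delta>0$ and $\Delta_{\zeta}^{\eta}>\pi$ into an interval of admissible $\eta$ and show it is nonempty. Writing $a:=\sqrt{8\beta_1 R_{\text{in}}^*}$ and $b:=\sqrt{\pi}$ (with $b>0$ since $\pi>0$), the condition $\Delta>0$ is equivalent to $(1-\eta)\zeta>a$, i.e.\ $\eta<1-a/\zeta$, and $\Delta_{\zeta}^{\eta}>\pi$ is equivalent to $0.98\,\eta^2\zeta^2>a^2+b^2$, i.e.\ $\eta>\sqrt{a^2+b^2}/(\sqrt{0.98}\,\zeta)$. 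An $\eta\in(0,1)$ meeting both therefore exists as soon as $\sqrt{a^2+b^2}/\sqrt{0.98}<\zeta-a$, which I would verify from the hypothesis via the chain $a+\sqrt{a^2+b^2}/\sqrt{0.98}\le a+(a+b)/\sqrt{0.98}<2.011\,a+1.011\,b\le\zeta$, using $\sqrt{a^2+b^2}\le a+b$ and the numerical estimate $1/\sqrt{0.98}<1.011$; one also checks the interval sits inside $(0,1)$, its left endpoint being positive and its right endpoint $1-a/\zeta$ lying in $(0,1]$ because $\zeta>a$. Fixing such an $\eta$, the inequality $\Delta_{\zeta}^{\eta}>\pi$ makes $1-\Delta_{\zeta}^{\eta}/\pi<0$, so the numerator of $\delta(T)$ vanishes and $\delta(T)=0$.

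The only non-routine point is calibrating the constants in part 2: the slack in the hypothesis (the coefficients $2.011$ and $1.011$) must simultaneously absorb the relaxation $\sqrt{a^2+b^2}\le a+b$ and the factor $1/\sqrt{0.98}$ introduced by the $0.98$ in $\Delta_{\zeta}^{\eta}$, while keeping the resulting $\eta$-window within $(0,1)$. Everything else is straightforward substitution into the definitions.
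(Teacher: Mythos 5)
Your proposal is correct and follows essentially the same elementary route as the paper: part 1 is the same one-line manipulation of the definition of $\delta(T)$ (which the paper dismisses as trivial), and part 2 rests on the same two inequalities $\sqrt{a^2+b^2}\le a+b$ and $1/\sqrt{0.98}<1.011$ that the paper uses. The only presentational difference is that you establish nonemptiness of the admissible $\eta$-interval $\bigl(\sqrt{a^2+b^2}/(\sqrt{0.98}\,\zeta),\,1-a/\zeta\bigr)$ rather than exhibiting the paper's explicit choice of $\eta$; your version is, if anything, slightly cleaner since it works directly with the exact threshold $0.98\,\eta^2\zeta^2>a^2+b^2$ needed for $\Delta_{\zeta}^{\eta}>\pi$.
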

$~~~~$
\vspace{1cm}
$~~~~~~$
\begin{theorem*}\label{the:main2-app}
Given the same conditions in Theorem~\ref{MainT-1}, if we further require that
     \begin{equation*}
     n = \Omega \Big (\frac{\tilde{M}^2d}{\min \{\pi,\Delta_{\zeta}^{\eta}\}^2} \Big),~~~m= \Omega \Big( \frac{\tilde{M}^2d+\Delta_{\zeta}^{\eta}}{\pi^2(1-\pi)\min \{\pi,\Delta_{\zeta}^{\eta}\}^2}\Big),
 \end{equation*}
then with the probability at least $89/100$, for any $0<T<0.9 M' \min \{1,\Delta_{\zeta}^{\eta}/\pi\}$, the OOD classifier $\*g_{\widehat{\boldsymbol{\theta}}_T}$ learned by the proposed algorithm satisfies the following risk estimation
      \begin{equation}\label{Eq::Bound2.0}
     \begin{aligned}
               & R_{\mathbb{P}_{\text{in}},\mathbb{P}_{\text{out}}}(\mathbf{g}_{\widehat{\boldsymbol{\theta}}_T}) \leq  \inf_{\boldsymbol{\theta} \in \Theta} R_{\mathbb{P}_{\text{in}},\mathbb{P}_{\text{out}}}(\mathbf{g}_{{\boldsymbol{\theta}}})+\frac{3.5 L}{1-\delta(T)}\delta({T})+ \frac{9(1-\pi)L\beta_1}{\pi(1-\delta(T))T} R_{\text{in}}^*
                \\ + &O\Big(\frac{L\max\{\tilde{M}\sqrt{d},\sqrt{d'}\}}{\min\{\pi,\Delta_{\zeta}^{\eta}\}T'}\sqrt{\frac{1}{n}}\Big )+O\Big(\frac{L\max \{\tilde{M}\sqrt{d},\sqrt{d'},\Delta_{\zeta}^{\eta}\}}{\min\{\pi,\Delta_{\zeta}^{\eta}\}T'}\sqrt{\frac{1}{\pi^2(1-\pi)m}}\Big ),    
             \end{aligned}
             \end{equation}
                where  $R_{\text{in}}^*$, $\Delta_{\zeta}^{\eta}$, $M$, $M'$, $\tilde{M}$ and $d$ are shown in Theorem \ref{MainT-1}, $d'$ is the dimension of space $\Theta$,
                \begin{equation*}
                    L= \beta_2 r_2^2+b_2r_2+B_2,~~~T'=T/(1+T),
                \end{equation*}
                 and the risk $R_{\mathbb{P}_{\text{in}},\mathbb{P}_{\text{out}}}(\mathbf{g}_{{\boldsymbol{\theta}}})$ is defined as follows:
                \begin{equation*}
                    R_{\mathbb{P}_{\text{in}},\mathbb{P}_{\text{out}}}(\mathbf{g}_{\widehat{\boldsymbol{\theta}}_T}) = \mathbb{E}_{\mathbf{x}\sim \mathbb{P}_{\text{in}}} \ell_{\text{b}}(\mathbf{g}_{\boldsymbol{\theta}}(\mathbf{x}),y_{+})+\mathbb{E}_{\mathbf{x}\sim \mathbb{P}_{\text{out}}} \ell_{\text{b}}(\mathbf{g}_{\boldsymbol{\theta}}(\mathbf{x}),y_{-}).
                \end{equation*}
\end{theorem*}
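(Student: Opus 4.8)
\textbf{Proof proposal for Theorem~\ref{the:main2-app}.}

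The plan is to decompose the excess risk of the learned OOD classifier $\mathbf{g}_{\widehat{\boldsymbol{\theta}}_T}$ into three conceptually distinct parts: (i) a \emph{statistical estimation} error coming from working with finitely many samples of the ID data $\mathcal{S}^{\text{in}}$ and the candidate outlier set $\mathcal{S}_T$; (ii) a \emph{label-noise} term that quantifies how much $\mathcal{S}_T$ differs from a clean draw of $\pi m$ outliers from $\mathbb{P}_{\text{out}}$, controlled by the filtering error rates $\mathrm{ERR}_{\text{in}}$ and $\mathrm{ERR}_{\text{out}}$ from Theorem~\ref{MainT-1}; and (iii) the approximation / optimization gap to $\inf_{\boldsymbol{\theta}}R_{\mathbb{P}_{\text{in}},\mathbb{P}_{\text{out}}}(\mathbf{g}_{\boldsymbol{\theta}})$. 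First I would introduce the empirical objective $R_{\mathcal{S}^{\text{in}},\mathcal{S}_T}(\mathbf{g}_{\boldsymbol{\theta}})$ actually minimized by the algorithm and its population counterpart, and write the standard ERM chain: $R_{\mathbb{P}_{\text{in}},\mathbb{P}_{\text{out}}}(\mathbf{g}_{\widehat{\boldsymbol{\theta}}_T})$ is bounded by $\inf_{\boldsymbol{\theta}}R_{\mathbb{P}_{\text{in}},\mathbb{P}_{\text{out}}}(\mathbf{g}_{\boldsymbol{\theta}})$ plus twice the uniform deviation between population and the (noisy) empirical risk over the hypothesis class $\{\mathbf{g}_{\boldsymbol{\theta}}:\boldsymbol{\theta}\in\Theta\}$.

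For the statistical part I would invoke a uniform convergence bound over $\Theta\subset B(\boldsymbol{\theta}_0,r_2)\subset\mathbb{R}^{d'}$ using the $\beta_2$-smoothness and boundedness of $\ell_{\text{b}}$ from Assumption~\ref{Ass1}: a covering-number / Rademacher argument on the $d'$-dimensional ball yields the $\sqrt{d'/n}$ and $\sqrt{d'/(\pi^2(1-\pi)m)}$ contributions. The $\sqrt{d}$ terms enter because $\mathcal{S}_T$ itself is a random object depending on the gradient matrix $\mathbf{G}$ built from the $d$-dimensional parameter space $\mathcal{W}$; propagating the concentration already established in the proof of Theorem~\ref{MainT-1} (this is where the enlarged sample-size requirements $n=\Omega(\tilde M^2 d/\min\{\pi,\Delta_\zeta^\eta\}^2)$ and the corresponding $m$ bound get used, to make the filtering-stage fluctuations negligible at the scale of the classifier's generalization gap) gives the $\max\{\tilde M\sqrt d,\sqrt{d'}\}$ numerators, with the $1/T'$ and $1/\min\{\pi,\Delta_\zeta^\eta\}$ factors appearing from normalizing the sizes $|\mathcal{S}_T|$ and from the threshold condition $T<0.9M'\min\{1,\Delta_\zeta^\eta/\pi\}$ which guarantees $\mathcal{S}_T$ is non-empty and has controlled cardinality.

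For the label-noise part, the key observation is that $\mathcal{S}_T$ contains a fraction at most $\delta(T)$ (up to the lower-order sampling terms) of true ID points misclassified as outliers and misses a fraction $\mathrm{ERR}_{\text{out}}$ of true outliers; each misclassified point can shift the empirical risk $R^-_{\mathcal{S}_T}(\mathbf{g}_{\boldsymbol{\theta}})$ by at most $L$. Carefully accounting for the reweighting — dividing by $|\mathcal{S}_T|$ rather than by the true number of wild outliers $\pi m$ — produces the factor $1/(1-\delta(T))$ multiplying a $\delta(T)$-proportional term, and the $8\beta_1 R_{\text{in}}^*/T$ bound on $\mathrm{ERR}_{\text{in}}$ from Theorem~\ref{MainT-1} converts into the $\tfrac{9(1-\pi)L\beta_1}{\pi(1-\delta(T))T}R_{\text{in}}^*$ term (the $(1-\pi)/\pi$ ratio being the relative population weights of wild-ID versus wild-OOD mass). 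I expect the \textbf{main obstacle} to be precisely this bookkeeping: one must relate the \emph{empirical} risk over the algorithm-produced set $\mathcal{S}_T$ to the \emph{population} risk $R_{\mathbb{P}_{\text{in}},\mathbb{P}_{\text{out}}}$ while simultaneously (a) the labels in $\mathcal{S}_T$ are noisy in a way that is statistically dependent on the same ID sample used to compute $\bar\nabla$ and $\mathbf{v}$, and (b) the normalization $|\mathcal{S}_T|$ is itself random and only lower-bounded through the threshold condition. Handling the dependence cleanly — likely by conditioning on the high-probability event from Theorem~\ref{MainT-1} and then treating $\mathcal{S}_T$ as a fixed set for the classifier's uniform-convergence step — and tracking how the $\delta(T)$ and $R_{\text{in}}^*$ errors compound through the division by $1-\delta(T)$ is the technically delicate core; the remaining Rademacher/covering estimates are routine given Assumption~\ref{Ass1}.
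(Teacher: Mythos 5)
Your proposal matches the paper's proof in all essentials: the same ERM decomposition into twice the uniform deviation between the population risk and the noisy empirical risk, the same covering-number/uniform-convergence bound over $\Theta$ for the ID half, and the same bookkeeping chain $R^-_{\mathcal{S}_T}\to R^-_{\mathcal{S}^{\text{out}}_{\text{wild}}}\to R^-_{\mathbb{P}_{\text{out}}}$ (the paper's Lemmas~\ref{Error-0}--\ref{Error-5}) that converts the filtering error rates of Theorem~\ref{MainT-1} into the $\frac{L}{1-\delta(T)}\delta(T)$ and $\frac{(1-\pi)L\beta_1}{\pi(1-\delta(T))T}R_{\text{in}}^*$ terms, with the dependence of $\mathcal{S}_T$ on the data handled exactly as you suggest, by conditioning on the high-probability event of Theorem~\ref{MainT-1} and routing the uniform-convergence step through the i.i.d.\ set $\mathcal{S}^{\text{out}}_{\text{wild}}$. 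One sentence of yours transposes the roles of the two error rates ($\delta(T)$ bounds the fraction of \emph{missed outliers}, i.e.\ $\mathrm{ERR}_{\text{out}}$, not the ID contamination of $\mathcal{S}_T$), but your subsequent accounting assigns each error to the correct term in the bound, so this is only a slip of exposition rather than a gap.
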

$~~~~$
\vspace{1cm}
$~~~~~~$
\begin{theorem*}\label{the:main4.0-app}
Given the same conditions in Theorem~\ref{MainT-1}, with the probability at least $9/10$,
\begin{equation*}
\begin{split}
\mathbb{E}_{\tilde{\mathbf{x}}_i \sim \mathcal{S}_{\text{wild}}^{\text{in}}} \tau_i  \leq & 8\beta_1 R_{\text{in}}^*
        +O (\beta_1 M \sqrt{\frac{d}{n}})+O (\beta_1 M \sqrt{\frac{d}{(1-\pi) m}}),
\end{split}
\end{equation*}
\begin{equation*}
\begin{split}
 \mathbb{E}_{\tilde{\mathbf{x}}_i \sim \mathcal{S}_{\text{wild}}^{\text{out}}} \tau_i \geq & \frac{0.98\eta^2 \zeta^2}{\pi} - \frac{8\beta_1 R_{\text{in}}^*}{\pi} - \epsilon'(n,m),
\end{split}
\end{equation*}
furthermore, if the realizability assumption for ID distribution holds \citep{understand_ml,fang2022learnable}, then
\begin{equation*}
    \mathbb{E}_{\tilde{\mathbf{x}}_i \sim \mathcal{S}_{\text{wild}}^{\text{in}}} \tau_i  \leq  O (\beta_1 M \sqrt{\frac{d}{n}})+O (\beta_1 M \sqrt{\frac{d}{(1-\pi) m}})
\end{equation*}
\begin{equation*}
\begin{split}
 \mathbb{E}_{\tilde{\mathbf{x}}_i \sim \mathcal{S}_{\text{wild}}^{\text{out}}} \tau_i \geq & \frac{0.98\eta^2 \zeta^2}{\pi}  - \epsilon'(n,m),
\end{split}
\end{equation*}
where
\begin{equation*}
    \epsilon'(n,m) \leq O (\frac{\beta_1M}{\pi}\sqrt{\frac{d}{n}})+O \Big( ({{\beta_1 M\sqrt{d}}+ \sqrt{1-\pi}\Delta_{\zeta}^{\eta}/\pi})\sqrt{\frac{1}{{\pi}^2(1-\pi)m}}\Big),
\end{equation*}
and $R_{\text{in}}^*$, $\Delta_{\zeta}^{\eta}$, $M$ and $d$ are shown in Theorem \ref{MainT-1}.
\end{theorem*}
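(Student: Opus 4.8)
\textbf{Proof plan for Theorem~\ref{the:main4.0-app}.} The plan is to bound the inlier average $\mathbb{E}_{\tilde{\mathbf{x}}_i\sim\mathcal{S}_{\text{wild}}^{\text{in}}}\tau_i$ and the outlier average $\mathbb{E}_{\tilde{\mathbf{x}}_i\sim\mathcal{S}_{\text{wild}}^{\text{out}}}\tau_i$ separately, reusing the ingredients behind Theorem~\ref{MainT-1}: self-boundedness of smooth losses, uniform convergence of gradient averages over the $d$-dimensional class $\mathcal{W}$, and the $(\gamma,\zeta)$-discrepancy. For the inlier bound, since $\mathbf{v}$ is unit-norm, Cauchy--Schwarz gives $\tau_i\le\|\nabla\ell(\mathbf{h}_{\mathbf{w}_{\mathcal{S}^{\text{in}}}}(\tilde{\mathbf{x}}_i),\widehat{y}_{\tilde{\mathbf{x}}_i})-\bar{\nabla}\|_2^2\le 2\|\nabla\ell(\mathbf{h}_{\mathbf{w}_{\mathcal{S}^{\text{in}}}}(\tilde{\mathbf{x}}_i),\widehat{y}_{\tilde{\mathbf{x}}_i})\|_2^2+2\|\bar{\nabla}\|_2^2$. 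I would then apply the self-bounding inequality for a nonnegative $\beta_1$-smooth loss, $\|\nabla\ell(\mathbf{h}_{\mathbf{w}}(\mathbf{x}),y)\|_2^2\le 2\beta_1\ell(\mathbf{h}_{\mathbf{w}}(\mathbf{x}),y)$ (an elementary descent-lemma argument, stated as an auxiliary lemma), together with Assumption~\ref{Ass2} to replace the predicted-label loss by the true-label loss, obtaining $\tau_i\le 4\beta_1\ell(\mathbf{h}_{\mathbf{w}_{\mathcal{S}^{\text{in}}}}(\tilde{\mathbf{x}}_i),\widehat{y}_{\tilde{\mathbf{x}}_i})+4\beta_1 R_{\mathcal{S}^{\text{in}}}(\mathbf{h}_{\mathbf{w}_{\mathcal{S}^{\text{in}}}})$. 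Averaging over $\mathcal{S}_{\text{wild}}^{\text{in}}$ (i.i.d.\ from $\mathbb{P}_{\text{in}}$) and over $\mathcal{S}^{\text{in}}$, a uniform-deviation bound of order $\beta_1 M\sqrt{d/((1-\pi)m)}$ (since $|\mathcal{S}_{\text{wild}}^{\text{in}}|\approx(1-\pi)m$) and the ERM generalization bound $R_{\mathbb{P}_{\mathcal{X}\mathcal{Y}}}(\mathbf{h}_{\mathbf{w}_{\mathcal{S}^{\text{in}}}})\le R_{\text{in}}^*+O(M\sqrt{d/n})$ reduce both terms to $4\beta_1 R_{\text{in}}^*$ plus the stated $O(\beta_1 M\sqrt{d/\cdot})$ errors, summing to $8\beta_1 R_{\text{in}}^*$.

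For the outlier bound I would use the Rayleigh-quotient characterisation of the top singular vector: for \emph{any} unit vector $\mathbf{u}$, $\sum_{\tilde{\mathbf{x}}_i\in\mathcal{S}_{\text{wild}}}\tau_i\ge\sum_{\tilde{\mathbf{x}}_i\in\mathcal{S}_{\text{wild}}}\langle\mathbf{u},\nabla\ell(\mathbf{h}_{\mathbf{w}_{\mathcal{S}^{\text{in}}}}(\tilde{\mathbf{x}}_i),\widehat{y}_{\tilde{\mathbf{x}}_i})-\bar{\nabla}\rangle^2$; I take $\mathbf{u}$ along the gradient-discrepancy direction $\nabla R_{\mathbb{P}_{\text{in}}}(\mathbf{h}_{\mathbf{w}_{\mathcal{S}^{\text{in}}}},\widehat{\mathbf{h}})-\nabla R_{\mathbb{P}_{\text{wild}}}(\mathbf{h}_{\mathbf{w}_{\mathcal{S}^{\text{in}}}},\widehat{\mathbf{h}})$ and apply Jensen's inequality to get $\frac{1}{m}\sum_i\tau_i\ge\langle\mathbf{u},\widehat\nabla_{\text{wild}}-\bar{\nabla}\rangle^2$, with $\widehat\nabla_{\text{wild}}$ the empirical predicted-label gradient mean on $\mathcal{S}_{\text{wild}}$. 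By uniform convergence, $\widehat\nabla_{\text{wild}}\to\nabla R_{\mathbb{P}_{\text{wild}}}(\mathbf{h}_{\mathbf{w}_{\mathcal{S}^{\text{in}}}},\widehat{\mathbf{h}})$ and $\bar{\nabla}\to\nabla R_{\mathbb{P}_{\mathcal{X}\mathcal{Y}}}(\mathbf{h}_{\mathbf{w}_{\mathcal{S}^{\text{in}}}})$, whose norm is $\le\sqrt{2\beta_1 R_{\text{in}}^*}$ up to errors (self-bounding at the near-optimal $\mathbf{w}_{\mathcal{S}^{\text{in}}}$); the $n=\Omega(\cdots)$ assumption (whose $M^2d/(\gamma-R_{\text{in}}^*)^2$ term is designed for this) forces $R_{\mathbb{P}_{\mathcal{X}\mathcal{Y}}}(\mathbf{h}_{\mathbf{w}_{\mathcal{S}^{\text{in}}}})\le\gamma$, so the $(\gamma,\zeta)$-discrepancy yields $\|\nabla R_{\mathbb{P}_{\text{in}}}(\mathbf{h}_{\mathbf{w}_{\mathcal{S}^{\text{in}}}},\widehat{\mathbf{h}})-\nabla R_{\mathbb{P}_{\text{wild}}}(\mathbf{h}_{\mathbf{w}_{\mathcal{S}^{\text{in}}}},\widehat{\mathbf{h}})\|_2>\zeta$; and self-bounding plus Assumption~\ref{Ass2} bound $|\langle\mathbf{u},\nabla R_{\mathbb{P}_{\text{in}}}(\mathbf{h}_{\mathbf{w}_{\mathcal{S}^{\text{in}}}},\widehat{\mathbf{h}})\rangle|$ and $|\langle\mathbf{u},\bar{\nabla}\rangle|$ by $\sqrt{2\beta_1 R_{\mathbb{P}_{\mathcal{X}\mathcal{Y}}}(\mathbf{h}_{\mathbf{w}_{\mathcal{S}^{\text{in}}}})}$. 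Triangle-inequality bookkeeping, with $\eta$ tuned so that the sampling errors (made of order $\eta\zeta$ by $m=\Omega(d/(\eta^2\zeta^2))$ and $n=\Omega(d/(\eta^2\Delta))$) and the predicted-label slack $\sqrt{\beta_1 R_{\text{in}}^*}$ are absorbed, then lower bounds $\langle\mathbf{u},\widehat\nabla_{\text{wild}}-\bar{\nabla}\rangle^2$ by $0.98\eta^2\zeta^2-8\beta_1 R_{\text{in}}^*=\Delta_{\zeta}^{\eta}$ up to $\epsilon'(n,m)$ (the constants $0.98$, $8$ being the price of the AM--GM splits that absorb the cross terms). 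Finally, writing $\sum_{\mathcal{S}_{\text{wild}}^{\text{out}}}\tau_i=\sum_{\mathcal{S}_{\text{wild}}}\tau_i-\sum_{\mathcal{S}_{\text{wild}}^{\text{in}}}\tau_i$, bounding the subtracted sum by $|\mathcal{S}_{\text{wild}}^{\text{in}}|$ times the inlier bound, and dividing by $|\mathcal{S}_{\text{wild}}^{\text{out}}|\ge(1-o(1))\pi m$ (Chernoff on the Bernoulli$(\pi)$ membership counts, the source of the $\pi^2(1-\pi)m$ denominators in $\epsilon'$) gives $\mathbb{E}_{\tilde{\mathbf{x}}_i\sim\mathcal{S}_{\text{wild}}^{\text{out}}}\tau_i\ge\Delta_{\zeta}^{\eta}/\pi-\epsilon'(n,m)$.

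Under the realizability assumption $R_{\text{in}}^*=0$, both $8\beta_1 R_{\text{in}}^*$ terms vanish, giving the last two displays; a union bound over the constantly-many high-probability events yields overall probability $9/10$. The routine parts are the self-bounding lemma and the covering-number/Rademacher estimates for the gradient function class on the $\ell_2$-ball in $\mathbb{R}^d$ (with the constants $M,M',\tilde M$ of Assumption~\ref{Ass1} entering the ranges). The main obstacle is the outlier lower bound: one must control $\widehat\nabla_{\text{wild}}$, $\bar{\nabla}$, the empirical inlier/outlier split, and the predicted-label discrepancy simultaneously and uniformly over $\mathbf{w}\in\mathcal{W}$, and then choose the bookkeeping so that the ``signal'' $\zeta$ survives with the stated $\eta$-dependence while everything else is absorbed — the hypothesis $\Delta=(1-\eta)^2\zeta^2-8\beta_1 R_{\text{in}}^*>0$ is exactly what guarantees this subtraction leaves the positive residual $\Delta_{\zeta}^{\eta}$.
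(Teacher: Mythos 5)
Your plan reproduces the paper's own proof essentially step for step: the inlier upper bound via Cauchy--Schwarz against the unit vector $\mathbf{v}$, the self-bounding property of the $\beta_1$-smooth loss, Assumption~\ref{Ass2}, ERM generalization and uniform convergence (the paper's Lemmas~\ref{gamma-approximation-1}--\ref{gamma-approximation-3}); the lower bound on the full wild average via the variational characterization of the top singular vector, Jensen's inequality, and the $(\gamma,\zeta)$-discrepancy (the paper's Theorem~\ref{T1}); and the final decomposition $\sum_{\mathcal{S}_{\text{wild}}}\tau_i=\sum_{\mathcal{S}_{\text{wild}}^{\text{in}}}\tau_i+\sum_{\mathcal{S}_{\text{wild}}^{\text{out}}}\tau_i$ with a Hoeffding/Chernoff bound on $|\mathcal{S}_{\text{wild}}^{\text{out}}|$ (Steps 1--3 and 6 of the proof of Theorem~\ref{MainT-1}, which is exactly what the paper cites for this result). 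The one small wobble is that you let $-8\beta_1 R_{\text{in}}^*$ appear both inside the lower bound on $\mathbb{E}_{\tilde{\mathbf{x}}_i\sim\mathcal{S}_{\text{wild}}}\tau_i$ and again when subtracting the inlier contribution, whereas in the paper Theorem~\ref{T1} yields $0.98\eta^2\zeta^2$ alone and the $-8\beta_1 R_{\text{in}}^*$ arises only from the subtraction; this double count merely weakens the constant by a harmless $(1-\pi)8\beta_1 R_{\text{in}}^*/\pi$ and does not affect the argument.
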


\newpage

\section{Proofs of Main Theorems}\label{Proofmain}

\subsection{Proof of Theorem \ref{MainT-1}}
\label{sec:proof_1}
\noindent \textbf{Step 1.} With the probability at least $1-\frac{7}{3}\delta>0$,
\begin{equation*}
\begin{split}
\mathbb{E}_{\tilde{\mathbf{x}}_i \sim \mathcal{S}_{\text{wild}}^{\text{in}}} \tau_i  \leq & 8\beta_1 R_{\text{in}}^*
        \\ + & 4 \beta_1 \Big [C\sqrt{\frac{M r_1 (\beta_1r_1+b_1) d}{n}}+ C\sqrt{\frac{M r_1 (\beta_1r_1+b_1) d}{(1-\pi)m - \sqrt{m\log(6/\delta)/2}}}\\ &+3M \sqrt{\frac{2\log(6/\delta)}{n}}+M \sqrt{\frac{2\log(6/\delta)}{(1-\pi)m - \sqrt{m\log(6/\delta)/2}}} \Big ],
\end{split}
\end{equation*}

This can be proven by Lemma \ref{gamma-approximation-3} and following inequality
\begin{equation*}
\begin{split}
&\mathbb{E}_{\tilde{\mathbf{x}}_i \sim \mathcal{S}_{\text{wild}}^{\text{in}}} \tau_i \leq \mathbb{E}_{\tilde{\mathbf{x}}_i \sim \mathcal{S}_{\text{wild}}^{\text{in}}}
\big \|\nabla \ell(\*h_{\mathbf{w}_{\mathcal{S}^{\text{in}}}}(\tilde{\mathbf{x}}_i),\widehat{\*h}_{\mathbf{w}_{\mathcal{S}^{\text{in}}}}(\tilde{\mathbf{x}}_i))- \mathbb{E}_{(\*x_j,y_j) \sim \mathcal{S}^{\operatorname{in}}} \nabla \ell(\*h_{\mathbf{w}_{\mathcal{S}^{\text{in}}}}(\mathbf{x}_j), y_j) \big \|_2^2,
\end{split}
\end{equation*}

\noindent \textbf{Step 2.} It is easy to check that
\begin{equation*}
   \mathbb{E}_{\tilde{\mathbf{x}}_i \sim \mathcal{S}_{\text{wild}}} \tau_i = \frac{|\mathcal{S}_{\text{wild}}^{\text{in}}|}{|\mathcal{S}_{\text{wild}}|}\mathbb{E}_{\tilde{\mathbf{x}}_i \sim \mathcal{S}_{\text{wild}}^{\text{in}}} \tau_i +
   \frac{|\mathcal{S}_{\text{wild}}^{\text{out}}|}{|\mathcal{S}_{\text{wild}}|}\mathbb{E}_{\tilde{\mathbf{x}}_i \sim \mathcal{S}_{\text{wild}}^{\text{out}}} \tau_i.
\end{equation*}

\noindent \textbf{Step 3.} Let
\begin{equation*}
\begin{split}
    \epsilon(n,m) =& 4\beta_1[C\sqrt{\frac{M r_1 (\beta_1r_1+b_1) d}{n}}+ C\sqrt{\frac{M r_1 (\beta_1r_1+b_1) d}{(1-\pi)m - \sqrt{m\log(6/\delta)/2}}}\\+&3M \sqrt{\frac{2\log(6/\delta)}{n}}+M \sqrt{\frac{2\log(6/\delta)}{(1-\pi)m - \sqrt{m\log(6/\delta)/2}}}].
    \end{split}
\end{equation*}

Under the condition in Theorem \ref{T1}, with the probability at least $\frac{97}{100} - \frac{7}{3}\delta>0$,
\begin{equation*}
\begin{split}
    \mathbb{E}_{\tilde{\mathbf{x}}_i \sim \mathcal{S}_{\text{wild}}^{\text{out}}} \tau_i \geq & \frac{m}{|\mathcal{S}_{\text{wild}}^{\text{out}}|} \big [\frac{98\eta^2 \zeta^2}{100} - \frac{|\mathcal{S}_{\text{wild}}^{\text{in}}|}{m}8\beta_1 R_{\text{in}}^*- \frac{|\mathcal{S}_{\text{wild}}^{\text{in}}|}{m} \epsilon(n,m) \big ] \\ \geq & \frac{m}{|\mathcal{S}_{\text{wild}}^{\text{out}}|} \big [\frac{98\eta^2 \zeta^2}{100} - 8\beta_1 R_{\text{in}}^*-  \epsilon(n,m) \big ]
    \\ 
    \geq & \big [ \frac{1}{\pi} - \frac{\sqrt{\log{6/\delta}}}{\pi^2\sqrt{2m}+\pi \sqrt{\log(6/\delta)}}\big ]\big [\frac{98\eta^2 \zeta^2}{100} - 8\beta_1 R_{\text{in}}^*-  \epsilon(n,m) \big ].
    \end{split}
\end{equation*}

In this proof, we set
\begin{equation*}
  \Delta(n,m) =  \big [ \frac{1}{\pi} - \frac{\sqrt{\log{6/\delta}}}{\pi^2\sqrt{2m}+\pi \sqrt{\log(6/\delta)}}\big ]\big [\frac{98\eta^2 \zeta^2}{100} - 8\beta_1 R_{\text{in}}^*-  \epsilon(n,m) \big ].
\end{equation*}
Note that $\Delta_{\zeta}^{\eta} = {0.98\eta^2 \zeta^2} - 8\beta_1 R_{\text{in}}^*$, then
\begin{equation*}
     \Delta(n,m)= \frac{1}{\pi}\Delta_{\zeta}^{\eta}-\frac{1}{\pi}\epsilon(n,m)-\Delta_{\zeta}^{\eta} \epsilon(m)+\epsilon(n)\epsilon(n,m),
\end{equation*}
where $\epsilon(m) =  {\sqrt{\log{6/\delta}}}/({\pi^2\sqrt{2m}+\pi \sqrt{\log(6/\delta)}})$.

\noindent \textbf{Step 4.} Under the condition in Theorem \ref{T1}, with the probability at least $\frac{97}{100} - \frac{7}{3}\delta>0$,

\begin{equation}
    \frac{|\{\tilde{\*x}_i\in \mathcal{S}_{\operatorname{wild}}^{\operatorname{out}}: \tau_i \leq T\}|}{| \mathcal{S}_{\operatorname{wild}}^{\operatorname{out}}|} \leq  \frac{1-\min\{1,\Delta(n,m)\}}{1-T/{M'}} ,
\end{equation}
and
\begin{equation}
    \frac{|\{\tilde{\*x}_i\in \mathcal{S}_{\operatorname{wild}}^{\operatorname{in}}: \tau_i > T\}|}{| \mathcal{S}_{\operatorname{wild}}^{\operatorname{in}}|} \leq  \frac{8\beta_1 R_{\text{in}}^*+\epsilon(n,m)}{T}.
\end{equation}

We prove this step:
let $Z$ be the \textbf{uniform} random variable with $\mathcal{S}_{\operatorname{wild}}^{\operatorname{out}}$ as its support and $Z(i)=\tau_i/(2(\beta_1r_1+b_1)^2)$, then by the Markov inequality, we have
\begin{equation}
\frac{|\{\tilde{\*x}_i\in \mathcal{S}_{\operatorname{wild}}^{\operatorname{out}}: \tau_i > T\}|}{| \mathcal{S}_{\operatorname{wild}}^{\operatorname{out}}|} =  P(Z(i)>T/(2(\beta_1r_1+b_1)^2)) \geq \frac{\Delta(n,m) -T/(2(\beta_1r_1+b_1)^2)}{1-T/(2(\beta_1r_1+b_1)^2)} .
\end{equation}
Let $Z$ be the \textbf{uniform} random variable with $\mathcal{S}_{\operatorname{wild}}^{\operatorname{in}}$ as its support and $Z(i)=\tau_i$, then by the Markov inequality, we have
\begin{equation}
     \frac{|\{\tilde{\*x}_i\in \mathcal{S}_{\operatorname{wild}}^{\operatorname{in}}: \tau_i > T\}|}{| \mathcal{S}_{\operatorname{wild}}^{\operatorname{in}}|} =  P(Z(i)>T) \leq \frac{\mathbb{E}[Z]}{T}=\frac{8\beta_1 R_{\text{in}}^*+\epsilon(n,m)}{T}.
\end{equation}

\noindent \textbf{Step 5.}  If $\pi \leq \Delta_{\zeta}^{\eta}/ (1-\epsilon/{M'})$, then
with the probability at least $\frac{97}{100} - \frac{7}{3}\delta>0$,

\begin{equation}
    \frac{|\{\tilde{\*x}_i\in \mathcal{S}_{\operatorname{wild}}^{\operatorname{out}}: \tau_i \leq T\}|}{| \mathcal{S}_{\operatorname{wild}}^{\operatorname{out}}|} \leq  \frac{\epsilon+M'\epsilon'(n,m)}{M'-T},
\end{equation}
and
\begin{equation}
    \frac{|\{\tilde{\*x}_i\in \mathcal{S}_{\operatorname{wild}}^{\operatorname{in}}: \tau_i > T\}|}{| \mathcal{S}_{\operatorname{wild}}^{\operatorname{in}}|} \leq  \frac{8\beta_1 R_{\text{in}}^*+\epsilon(n,m)}{T},
\end{equation}
where $\epsilon'(n,m) = {\epsilon(n,m)}/{\pi}+\Delta_{\zeta}^{\eta}\epsilon(m)-\epsilon(n)\epsilon(n,m)$.

\noindent \textbf{Step 6.} If we set $\delta=3/100$, then it is easy to see that
\begin{equation*}
\begin{split}
   & \epsilon(m) \leq O (\frac{1}{\pi^2 \sqrt{m}}),\\& 
   \epsilon(n,m) \leq O (\beta_1 M \sqrt{\frac{d}{n}})+O (\beta_1 M \sqrt{\frac{d}{(1-\pi) m}}),
   \\ &
   \epsilon'(n,m) \leq O (\frac{\beta_1M}{\pi}\sqrt{\frac{d}{n}})+O \Big( ({{\beta_1 M\sqrt{d}}+ \sqrt{1-\pi}\Delta_{\zeta}^{\eta}/\pi})\sqrt{\frac{1}{{\pi}^2(1-\pi)m}}\Big).
    \end{split}
\end{equation*}

\noindent \textbf{Step 7.} By results in Steps 4, 5 and 6, We complete this proof.
\newpage
\subsection{Proof of Theorem \ref{The-1.1}}
\label{sec:proof_2}
The first result is trivial. Hence, we omit it. We mainly focus on the second result in this theorem. In this proof, then we set
\begin{equation*}
    \eta ={\sqrt{8\beta_1R_{\text{in}}^{*}+0.99\pi}}/({\sqrt{0.98}\sqrt{8\beta_1 R_{\text{in}}^*}+ \sqrt{8\beta_1 R_{\text{in}}^*+\pi}})
\end{equation*}

Note that it is easy to check that
\begin{equation*}
    \zeta\geq 2.011\sqrt{8\beta_1 R_{\text{in}}^*} +1.011 \sqrt{\pi} \geq \sqrt{8\beta_1 R_{\text{in}}^*} +1.011 \sqrt{8\beta_1 R_{\text{in}}^*+ \pi}.
\end{equation*}
Therefore,
\begin{equation*}
   \eta \zeta \geq \frac{1}{\sqrt{0.98}} \sqrt{8\beta_1R_{\text{in}}^{*}+0.99\pi}>\sqrt{8\beta_1 R_{\text{in}}^*+\pi},
\end{equation*}
which implies that $\Delta_{\zeta}^{\eta}>\pi$. Note that
\begin{equation*}
\begin{split}
    (1-\eta)\zeta &\geq \frac{1}{\sqrt{0.98}}\big( {{\sqrt{0.98}\sqrt{8\beta_1 R_{\text{in}}^*}+ \sqrt{8\beta_1 R_{\text{in}}^*+\pi}}-{\sqrt{8\beta_1R_{\text{in}}^{*}+0.99\pi}}}\big )  > \sqrt{8\beta_1 R_{\text{in}}^*},
    \end{split}
\end{equation*}
which implies that $\Delta>0$. We have completed this proof.
$~$
\\

\subsection{Proof of Theorem \ref{the:main2}}
\label{sec:proof_3}
    Let
    \begin{equation*}
        \boldsymbol{\theta}^* \in \argmin_{ \boldsymbol{\theta} \in \Theta} R_{\mathbb{P}_{\text{in}},\mathbb{P}_{\text{out}}}(\boldsymbol{\theta}).
    \end{equation*}
Then by Lemma \ref{UC-I} and Lemma \ref{Error-5}, we obtain that with the high probability
\begin{equation*}
    \begin{split}
        &R_{\mathbb{P}_{\text{in}},\mathbb{P}_{\text{out}}}(\mathbf{g}_{\widehat{\boldsymbol{\theta}}_{T}})-R_{\mathbb{P}_{\text{in}},\mathbb{P}_{\text{out}}}(\mathbf{g}_{{\boldsymbol{\theta}}^*})
        \\ = & R_{\mathbb{P}_{\text{in}},\mathbb{P}_{\text{out}}}(\mathbf{g}_{\widehat{\boldsymbol{\theta}}_{T}})-R_{\mathcal{S}^{\text{in}},\mathcal{S}_T}(\mathbf{g}_{\widehat{\boldsymbol{\theta}}_{T}})+R_{\mathcal{S}^{\text{in}},\mathcal{S}_T}(\mathbf{g}_{\widehat{\boldsymbol{\theta}}_{T}})-R_{\mathcal{S}^{\text{in}},\mathcal{S}_T}(\mathbf{g}_{{\boldsymbol{\theta}}^*})\\+&R_{\mathcal{S}^{\text{in}},\mathcal{S}_T}(\mathbf{g}_{{\boldsymbol{\theta}}^*})-R_{\mathbb{P}_{\text{in}},\mathbb{P}_{\text{out}}}(\mathbf{g}_{{\boldsymbol{\theta}}^*})\\ \leq &R_{\mathbb{P}_{\text{in}},\mathbb{P}_{\text{out}}}(\mathbf{g}_{\widehat{\boldsymbol{\theta}}_{T}})-R_{\mathcal{S}^{\text{in}},\mathcal{S}_T}(\mathbf{g}_{\widehat{\boldsymbol{\theta}}_{T}})\\+&R_{\mathcal{S}^{\text{in}},\mathcal{S}_T}(\mathbf{g}_{{\boldsymbol{\theta}}^*})-R_{\mathbb{P}_{\text{in}},\mathbb{P}_{\text{out}}}(\mathbf{g}_{{\boldsymbol{\theta}}^*})\\ \leq & 2\sup_{\boldsymbol{\theta} \in \Theta} \big|R_{\mathcal{S}^{\text{in}}}^+(\mathbf{g}_{{\boldsymbol{\theta}}})-R_{\mathbb{P}_{\text{in}}}^+(\mathbf{g}_{{\boldsymbol{\theta}}})\big|
        \\ + & \sup_{\boldsymbol{\theta} \in \Theta} \big ( R_{\mathcal{S}^{\text{out}}}^-(\mathbf{g}_{{\boldsymbol{\theta}}})-R_{\mathbb{P}_{\text{out}}}^-(\mathbf{g}_{{\boldsymbol{\theta}}}) \big )+ \sup_{\boldsymbol{\theta} \in \Theta} \big ( R_{\mathbb{P}_{\text{out}}}^-(\mathbf{g}_{{\boldsymbol{\theta}}}) -R_{\mathcal{S}^{\text{out}}}^-(\mathbf{g}_{{\boldsymbol{\theta}}})\big )
        \\ \leq & \frac{3.5 L}{1-\delta(T)}\delta({T})+ \frac{9(1-\pi)L\beta_1}{\pi(1-\delta(T))T} R_{\text{in}}^*
        \\ + &O\Big(\frac{L\max\{\beta_1 M\sqrt{d},\sqrt{d'}\}(1+T)}{\min\{\pi,\Delta_{\zeta}^{\eta}\}T}\sqrt{\frac{1}{n}}\Big )\\+&O\Big(\frac{L\max \{\beta_1 M\sqrt{d},\sqrt{d'},\Delta_{\zeta}^{\eta}\}(1+T)}{\min\{\pi,\Delta_{\zeta}^{\eta}\}T}\sqrt{\frac{1}{\pi^2(1-\pi)m}}\Big ),
    \end{split}
\end{equation*}
$~$
\\

\subsection{Proof of Theorem \ref{the:main4.0-app}}\label{sec::proof_4}
    The result is induced by the Steps \textbf{1}, \textbf{3} and \textbf{6} in Proof of Theorem \ref{MainT-1} (see section \ref{sec:proof_1}).

\newpage

\section{Necessary Lemmas, Propositions and Theorems}\label{NecessaryLemma}

\subsection{Boundedness}

\begin{Proposition}\label{P1}
If Assumption \ref{Ass1} holds, 
\begin{equation*}
  \sup_{\mathbf{w}\in \mathcal{W}}  \sup_{(\mathbf{x},y)\in \mathcal{X}\times \mathcal{Y}} \|\nabla \ell(\mathbf{h}_{\mathbf{w}}(\mathbf{x}),y)\|_2\leq \beta_1 r_1+ b_1 =\sqrt{{M'}/{2}},
\end{equation*}
\begin{equation*}
 \sup_{\boldsymbol{\theta} \in \Theta}   \sup_{(\mathbf{x},y_{\text{b}})\in \mathcal{X}\times \mathcal{Y}_{\text{b}}} \|\nabla \ell(\mathbf{g}_{\boldsymbol{\theta}}(\mathbf{x}),y_{\text{b}})\|_2 \leq \beta_2 r_2+ b_2.
\end{equation*}
\begin{equation*}
\begin{split}
& \sup_{\mathbf{w}\in \mathcal{W}}    \sup_{(\mathbf{x},y)\in \mathcal{X}\times \mathcal{Y}} \ell(\*h_{\mathbf{w}}(\*x), y)  \leq  \beta_1 r_1^2 + b_1 r_1 + B_1=M,
\end{split}
\end{equation*}
\begin{equation*}
\begin{split}
& \sup_{\boldsymbol{\theta}\in \Theta}    \sup_{(\mathbf{x},y_{\text{b}})\in \mathcal{X}\times \mathcal{Y}_{\text{b}}} \ell_{\text{b}}(\mathbf{g}_{\boldsymbol{\theta}}(\*x), y_{\text{b}})  \leq  \beta_2 r_2^2+ b_2 r_2 + B_2=L.
\end{split}
\end{equation*}
\end{Proposition}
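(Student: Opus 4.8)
The plan is to obtain all four inequalities from two elementary ingredients: the triangle inequality combined with $\beta$-smoothness for the two gradient-norm bounds, and the fundamental theorem of calculus along a line segment for the two loss bounds. It suffices to treat $\ell$ together with $\mathbf{h}_{\mathbf{w}}$; the statements for $\ell_{\text{b}}$ and $\mathbf{g}_{\boldsymbol{\theta}}$ then follow by the identical argument after replacing $(\beta_1,r_1,b_1,B_1,\mathbf{w}_0,\mathcal{W})$ by $(\beta_2,r_2,b_2,B_2,\boldsymbol{\theta}_0,\Theta)$.

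For the gradient bound, fix $\mathbf{w}\in\mathcal{W}$ and $(\mathbf{x},y)\in\mathcal{X}\times\mathcal{Y}$. Since $\mathcal{W}\subset B(\mathbf{w}_0,r_1)$ we have $\|\mathbf{w}-\mathbf{w}_0\|_2\le r_1$, so by the triangle inequality, $\beta_1$-smoothness (Definition~\ref{Def::beta-smooth}), and the bound $\|\nabla\ell(\mathbf{h}_{\mathbf{w}_0}(\mathbf{x}),y)\|_2\le b_1$ from Assumption~\ref{Ass1},
\begin{equation*}
\begin{split}
\|\nabla\ell(\mathbf{h}_{\mathbf{w}}(\mathbf{x}),y)\|_2 &\le \|\nabla\ell(\mathbf{h}_{\mathbf{w}}(\mathbf{x}),y)-\nabla\ell(\mathbf{h}_{\mathbf{w}_0}(\mathbf{x}),y)\|_2+\|\nabla\ell(\mathbf{h}_{\mathbf{w}_0}(\mathbf{x}),y)\|_2 \\
&\le \beta_1\|\mathbf{w}-\mathbf{w}_0\|_2+b_1\le \beta_1 r_1+b_1 .
\end{split}
\end{equation*}
Taking the supremum over $\mathbf{w}$ and $(\mathbf{x},y)$ yields the first claim, and $\beta_1 r_1+b_1=\sqrt{M'/2}$ is immediate from $M'=2(\beta_1 r_1+b_1)^2$.

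For the loss bound, fix $\mathbf{w}$, $(\mathbf{x},y)$ and set $\mathbf{w}_t=\mathbf{w}_0+t(\mathbf{w}-\mathbf{w}_0)$ for $t\in[0,1]$. Because $B(\mathbf{w}_0,r_1)$ is convex, every $\mathbf{w}_t$ lies in $\mathcal{W}$, so the gradient bound just proved applies at each $\mathbf{w}_t$. The map $t\mapsto\ell(\mathbf{h}_{\mathbf{w}_t}(\mathbf{x}),y)$ is $C^1$ (differentiability with Lipschitz gradient is presupposed by $\beta$-smoothness), so by the fundamental theorem of calculus, Cauchy--Schwarz, $\ell(\mathbf{h}_{\mathbf{w}_0}(\mathbf{x}),y)\le B_1$, and $\|\mathbf{w}-\mathbf{w}_0\|_2\le r_1$,
\begin{equation*}
\begin{split}
\ell(\mathbf{h}_{\mathbf{w}}(\mathbf{x}),y) &= \ell(\mathbf{h}_{\mathbf{w}_0}(\mathbf{x}),y)+\int_0^1\langle\nabla\ell(\mathbf{h}_{\mathbf{w}_t}(\mathbf{x}),y),\,\mathbf{w}-\mathbf{w}_0\rangle\,dt \\
&\le B_1+(\beta_1 r_1+b_1)\|\mathbf{w}-\mathbf{w}_0\|_2\le B_1+\beta_1 r_1^2+b_1 r_1=M .
\end{split}
\end{equation*}
Taking suprema proves the third claim, and the parallel computation with $(\beta_2,r_2,b_2,B_2,\boldsymbol{\theta}_0,\Theta)$ gives the second and fourth claims.

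There is no genuinely hard step here; the only point that deserves care is the appeal to convexity of the parameter ball, which is precisely what lets us invoke the pointwise-on-$\mathcal{W}$ smoothness hypothesis at the interpolating parameters $\mathbf{w}_t$ appearing in the integral representation of the loss — without convexity the segment could leave $\mathcal{W}$ and Definition~\ref{Def::beta-smooth} would not apply directly. (Equivalently, one may simply read Assumption~\ref{Ass1} as positing that $\ell$ is defined and $\beta_1$-smooth on all of $B(\mathbf{w}_0,r_1)$, in which case the segment argument is unconditional.)
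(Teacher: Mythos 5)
Your proof is correct and follows the same route the paper gestures at: the paper's entire proof is the one-line remark that the result follows from the ``Mean Value Theorem of Integrals,'' which is exactly the line-segment integral representation you spell out, combined with the triangle-inequality-plus-smoothness bound on the gradient. Your write-up simply makes the standard argument explicit (including the convexity of the parameter ball needed to apply smoothness along the segment), so there is nothing genuinely different to compare.
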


\begin{proof}
    One can prove this by \textit{Mean Value Theorem of Integrals} easily.
\end{proof}
$~~$
\\$~$

\begin{Proposition}\label{P2}
If Assumption \ref{Ass1} holds, for any $\mathbf{w}\in \mathcal{W}$,
\begin{equation*}
     \big \| \nabla \ell(\*h_{\mathbf{w}}(\*x), y) \big \|_2^2 \leq 2 \beta_1   \ell(\*h_{\mathbf{w}}(\*x), y).
\end{equation*}
\end{Proposition}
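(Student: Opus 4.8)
The plan is to derive this from the standard self-bounding property of nonnegative $\beta$-smooth functions. Fix $\mathbf{w}\in\mathcal{W}$, $\mathbf{x}\in\mathcal{X}$ and $y\in\mathcal{Y}$, and regard $f(\mathbf{u}):=\ell(\mathbf{h}_{\mathbf{u}}(\mathbf{x}),y)$ as a function of the parameter $\mathbf{u}$ alone. By Assumption~\ref{Ass1}, $f\ge 0$, and by Definition~\ref{Def::beta-smooth} the map $\mathbf{u}\mapsto\nabla f(\mathbf{u})=\nabla\ell(\mathbf{h}_{\mathbf{u}}(\mathbf{x}),y)$ is $\beta_1$-Lipschitz; note both properties are stated without restricting the parameter to $\mathcal{W}$, which matters below.

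First I would record the descent lemma: for all $\mathbf{u},\mathbf{u}'$,
\[
f(\mathbf{u}')\ \le\ f(\mathbf{u})+\big\langle\nabla f(\mathbf{u}),\,\mathbf{u}'-\mathbf{u}\big\rangle+\frac{\beta_1}{2}\|\mathbf{u}'-\mathbf{u}\|_2^2 ,
\]
which follows by writing $f(\mathbf{u}')-f(\mathbf{u})=\int_0^1\langle\nabla f(\mathbf{u}+t(\mathbf{u}'-\mathbf{u})),\mathbf{u}'-\mathbf{u}\rangle\,dt$, subtracting $\langle\nabla f(\mathbf{u}),\mathbf{u}'-\mathbf{u}\rangle$, and bounding the integrand via Cauchy--Schwarz and the $\beta_1$-Lipschitz property of $\nabla f$. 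Then I would plug in $\mathbf{u}=\mathbf{w}$ and the gradient-step point $\mathbf{u}'=\mathbf{w}-\tfrac{1}{\beta_1}\nabla f(\mathbf{w})$, which gives
\[
f(\mathbf{u}')\ \le\ f(\mathbf{w})-\frac{1}{\beta_1}\|\nabla f(\mathbf{w})\|_2^2+\frac{1}{2\beta_1}\|\nabla f(\mathbf{w})\|_2^2\ =\ f(\mathbf{w})-\frac{1}{2\beta_1}\|\nabla f(\mathbf{w})\|_2^2 .
\]
Since $f(\mathbf{u}')\ge 0$, rearranging yields $\|\nabla f(\mathbf{w})\|_2^2\le 2\beta_1 f(\mathbf{w})$, i.e. $\|\nabla\ell(\mathbf{h}_{\mathbf{w}}(\mathbf{x}),y)\|_2^2\le 2\beta_1\,\ell(\mathbf{h}_{\mathbf{w}}(\mathbf{x}),y)$, which is the claim.

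There is no substantive obstacle: the only point to be careful about is that the auxiliary point $\mathbf{u}'$ used in the descent step may lie outside $\mathcal{W}$, but this is harmless precisely because Definition~\ref{Def::beta-smooth} and the nonnegativity in Assumption~\ref{Ass1} are stated for arbitrary parameters in $\mathbb{R}^d$, so the descent inequality and $f(\mathbf{u}')\ge 0$ both remain valid. If one instead wanted to stay within $\mathcal{W}$, one could alternatively invoke the coisotropic/PL-type argument, but it is unnecessary here.
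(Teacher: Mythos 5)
Your argument is correct and is precisely the standard self-bounding property proof (descent lemma applied at the gradient-step point $\mathbf{w}-\tfrac{1}{\beta_1}\nabla f(\mathbf{w})$, then nonnegativity); the paper does not spell this out but simply defers to Appendix B of \cite{lei2021sharper}, which contains the same argument. Your remark about the auxiliary point possibly leaving $\mathcal{W}$ is a worthwhile clarification that the paper's one-line citation glosses over.
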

\begin{proof}
The details of the self-bounding property can be found in Appendix B of \cite{lei2021sharper}.
\end{proof}
$~~~$
\\$~$
\begin{Proposition}\label{P3}
If Assumption \ref{Ass1} holds, for any labeled data $\mathcal{S}$ and distribution $\mathbb{P}$, 
\begin{equation*}
     \big \| \nabla R_{\mathcal{S}}(\*h_{\mathbf{w}}) \big \|_2^2 \leq 2 \beta_1   R_{\mathcal{S}}(\*h_{\mathbf{w}}),~~~\forall \mathbf{w}\in \mathcal{W},
\end{equation*}
\begin{equation*}
     \big \| \nabla R_{\mathbb{P}}(\*h_{\mathbf{w}}) \big \|_2^2 \leq 2 \beta_1   R_{\mathbb{P}}(\*h_{\mathbf{w}}),~~~\forall \mathbf{w}\in \mathcal{W}.
\end{equation*}
\end{Proposition}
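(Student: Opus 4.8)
\textbf{Proof proposal for Proposition \ref{P3}.}

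The plan is to reduce the statement to Proposition \ref{P2} by a convexity (Jensen-type) argument, treating the empirical and population risks uniformly as expectations over a probability measure. First I would fix $\mathbf{w}\in\mathcal{W}$ and observe that both $R_{\mathcal{S}}(\mathbf{h}_{\mathbf{w}})$ and $R_{\mathbb{P}}(\mathbf{h}_{\mathbf{w}})$ are of the common form $\mathbb{E}_{(\mathbf{x},y)\sim \mu}\,\ell(\mathbf{h}_{\mathbf{w}}(\mathbf{x}),y)$, where $\mu$ is the empirical distribution on $\mathcal{S}$ in the first case and $\mu=\mathbb{P}$ (together with the associated conditional label distribution) in the second. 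Since differentiation in $\mathbf{w}$ commutes with the (finite, or dominated) expectation, we have $\nabla R_{\mu}(\mathbf{h}_{\mathbf{w}}) = \mathbb{E}_{(\mathbf{x},y)\sim\mu}\,\nabla\ell(\mathbf{h}_{\mathbf{w}}(\mathbf{x}),y)$. So it suffices to prove $\|\mathbb{E}_{\mu}\nabla\ell\|_2^2 \le 2\beta_1\,\mathbb{E}_{\mu}\ell$.

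The key step is then: by Jensen's inequality applied to the convex function $\|\cdot\|_2^2$ on $\mathbb{R}^d$,
\begin{equation*}
\big\|\mathbb{E}_{(\mathbf{x},y)\sim\mu}\nabla\ell(\mathbf{h}_{\mathbf{w}}(\mathbf{x}),y)\big\|_2^2 \le \mathbb{E}_{(\mathbf{x},y)\sim\mu}\big\|\nabla\ell(\mathbf{h}_{\mathbf{w}}(\mathbf{x}),y)\big\|_2^2.
\end{equation*}
Now apply Proposition \ref{P2} pointwise inside the expectation: for every $(\mathbf{x},y)$, $\|\nabla\ell(\mathbf{h}_{\mathbf{w}}(\mathbf{x}),y)\|_2^2 \le 2\beta_1\,\ell(\mathbf{h}_{\mathbf{w}}(\mathbf{x}),y)$. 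Taking $\mathbb{E}_{\mu}$ of both sides and using linearity gives $\mathbb{E}_{\mu}\|\nabla\ell\|_2^2 \le 2\beta_1\,\mathbb{E}_{\mu}\ell = 2\beta_1 R_{\mu}(\mathbf{h}_{\mathbf{w}})$. Chaining the two displays yields $\|\nabla R_{\mu}(\mathbf{h}_{\mathbf{w}})\|_2^2 \le 2\beta_1 R_{\mu}(\mathbf{h}_{\mathbf{w}})$, which is exactly the claim for both $\mu = $ (empirical law of $\mathcal{S}$) and $\mu = \mathbb{P}$. Since $\mathbf{w}$ was arbitrary, the ``for all $\mathbf{w}\in\mathcal{W}$'' quantifier is covered.

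The only point requiring a little care — and the main (minor) obstacle — is justifying the interchange $\nabla\,\mathbb{E}_\mu[\ell] = \mathbb{E}_\mu[\nabla\ell]$ in the population case: for the empirical risk it is a finite sum and trivial, but for $R_{\mathbb{P}}$ one needs a dominated-convergence / differentiation-under-the-integral argument. This is immediate from Assumption \ref{Ass1}, which gives $\beta_1$-smoothness and the uniform bound $\sup_{(\mathbf{x},y)}\|\nabla\ell(\mathbf{h}_{\mathbf{w}_0}(\mathbf{x}),y)\|_2=b_1$; combined with $\beta_1$-smoothness these imply $\|\nabla\ell(\mathbf{h}_{\mathbf{w}}(\mathbf{x}),y)\|_2 \le \beta_1 r_1 + b_1$ uniformly over $\mathcal{W}\times\mathcal{X}\times\mathcal{Y}$ (cf. Proposition \ref{P1}), providing the integrable dominating function needed for the interchange. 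Everything else is Jensen plus the self-bounding Proposition \ref{P2}, so the proof is short.
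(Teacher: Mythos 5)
Your proof is correct, and it is essentially the paper's argument in a slightly reordered form. The paper first notes (via Jensen applied to the norm) that $R_{\mathcal{S}}(\mathbf{h}_{\mathbf{w}})$ and $R_{\mathbb{P}}(\mathbf{h}_{\mathbf{w}})$ are themselves nonnegative and $\beta_1$-smooth, and then invokes the self-bounding property of Proposition \ref{P2} once, directly at the level of the aggregate risk; you instead apply Proposition \ref{P2} pointwise to each per-sample loss and aggregate with Jensen's inequality for the convex function $\|\cdot\|_2^2$. The two routes use the same two ingredients, give the same constant $2\beta_1$, and your extra remark on differentiating under the integral (justified by the uniform gradient bound of Proposition \ref{P1}) is a point the paper leaves implicit, so there is nothing to fix.
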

\begin{proof}
Jensen’s inequality implies that $R_{\mathcal{S}}(\*h_{\mathbf{w}})$ and $R_{\mathbb{P}}(\*h_{\mathbf{w}}) $ are $\beta_1$-smooth. Then Proposition \ref{P2} implies the results.
\end{proof}
$~~~$

\newpage

\subsection{Convergence}

\begin{lemma}[Uniform Convergence-I]\label{UC-I}
    If Assumption \ref{Ass1} holds, then for any distribution $\mathbb{P}$, with the probability at least $1-\delta>0$, for any $\mathbf{w}\in \mathcal{W}$,
    \begin{equation*}
        | R_{\mathcal{S}}(\*h_{\mathbf{w}}) - R_{\mathbb{P}}(\*h_{\mathbf{w}})| \leq M \sqrt{\frac{2\log(2/\delta)}{n}} + C\sqrt{\frac{M r_1 (\beta_1r_1+b_1) d}{n}},
    \end{equation*}
    where $n=|\mathcal{S}|$, $M=\beta_1r_1^2+b_1r_1+B_1$, $d$ is the dimension of $\mathcal{W}$, and $C$ is a uniform constant.
\end{lemma}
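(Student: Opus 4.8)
The target is a uniform law of large numbers over the parametric loss class $\mathcal{F}_\ell := \{\, z=(\mathbf{x},y) \mapsto \ell(\mathbf{h}_{\mathbf{w}}(\mathbf{x}),y) : \mathbf{w}\in\mathcal{W}\,\}$, where $\mathcal{S}$ is an i.i.d.\ sample of size $n$ from $\mathbb{P}$ and $R_{\mathcal{S}}, R_{\mathbb{P}}$ are the corresponding empirical and population risks; write $\Phi(\mathcal{S}) := \sup_{\mathbf{w}\in\mathcal{W}} | R_{\mathcal{S}}(\mathbf{h}_{\mathbf{w}}) - R_{\mathbb{P}}(\mathbf{h}_{\mathbf{w}})|$. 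I would follow the textbook two-step recipe: first concentrate $\Phi$ around its mean by a bounded-differences argument, which produces the $M\sqrt{2\log(2/\delta)/n}$ term, and then bound $\mathbb{E}[\Phi]$ by symmetrization followed by a covering-number estimate, which produces the $C\sqrt{M r_1(\beta_1 r_1+b_1)\,d/n}$ term. Throughout I set $L_\ell := \beta_1 r_1 + b_1$ and lean on three consequences of Assumption~\ref{Ass1} recorded in Proposition~\ref{P1}: every $f\in\mathcal{F}_\ell$ satisfies $0\le f\le M$; $\sup_{\mathbf w}\|\nabla_{\mathbf w}\ell(\mathbf{h}_{\mathbf w}(\mathbf x),y)\|_2\le L_\ell$; and $M = \beta_1 r_1^2 + b_1 r_1 + B_1 \ge r_1 L_\ell$ since $B_1\ge 0$.

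For the first step, since each $f\in\mathcal{F}_\ell$ is $[0,M]$-valued, replacing one sample of $\mathcal{S}$ shifts every $R_{\mathcal{S}}(\mathbf{h}_{\mathbf w})$, and hence $\Phi$, by at most $M/n$; McDiarmid's inequality then gives $\Phi(\mathcal{S})\le \mathbb{E}[\Phi] + M\sqrt{\log(2/\delta)/(2n)}$ with probability at least $1-\delta$ (I would apply it to both $\pm(R_{\mathcal{S}}-R_{\mathbb{P}})$ and split the failure probability to recover the two-sided absolute value, and the stated constant is a conservative rounding of this). For the second step, symmetrization gives $\mathbb{E}[\Phi]\le 2\,\mathbb{E}_{\mathcal{S}}\big[\mathcal{R}_n(\mathcal{F}_\ell;\mathcal{S})\big]$, and I would bound the empirical Rademacher complexity $\mathcal{R}_n$ via Dudley's entropy integral, $\mathcal{R}_n(\mathcal{F}_\ell;\mathcal{S}) \le \tfrac{C_0}{\sqrt n}\int_0^{D}\sqrt{\log N(\epsilon,\mathcal{F}_\ell,\|\cdot\|_{L^2(P_n)})}\,d\epsilon$, with $D$ the $L^2(P_n)$-radius of $\mathcal{F}_\ell$. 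The covering estimate comes from the $L_\ell$-Lipschitz dependence of $\ell(\mathbf{h}_{\mathbf w}(\mathbf x),y)$ on $\mathbf{w}$ (from the gradient bound), which with $\mathcal{W}\subseteq B(\mathbf{w}_0,r_1)\subset\mathbb{R}^d$ and the standard volumetric bound for Euclidean balls yields $N(\epsilon,\mathcal{F}_\ell,\|\cdot\|_\infty)\le (3 r_1 L_\ell/\epsilon)^d$; the same Lipschitz bound with $0\le f\le M$ gives $D\le\min(M,2L_\ell r_1)\le\sqrt{2 M L_\ell r_1}$. A change of variables bounds $\int_0^{D}\sqrt{d\log(3 r_1 L_\ell/\epsilon)}\,d\epsilon$ by $C_1\sqrt d\,D\,\sqrt{1+\log(3 r_1 L_\ell/D)}$, and because $M\ge r_1 L_\ell$ we have $3 r_1 L_\ell/D\le 3/\sqrt2$, so the logarithm is an absolute constant. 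This yields $\mathcal{R}_n(\mathcal{F}_\ell;\mathcal{S})\le C_2\sqrt{M L_\ell r_1\,d/n}=C_2\sqrt{M r_1(\beta_1 r_1 + b_1)\,d/n}$, and combining the two steps gives the claim with $C$ absorbing $C_2$ and small numerical factors.

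I expect the only delicate point to be the last step: obtaining the clean $\sqrt{M r_1(\beta_1 r_1 + b_1)\,d}$ dependence with no spurious $\sqrt{\log n}$ factor. A one-scale discretization would leave a residual $\log n$; the fix is genuine chaining together with the observation that the $L^2$-diameter of $\mathcal{F}_\ell$ (of order $\sqrt{M L_\ell r_1}$) is comparable up to constants with the scale $r_1 L_\ell$ inside the metric entropy — which is exactly where $M\ge r_1 L_\ell$ is used — so the entropy integral collapses to a constant multiple of $\sqrt d$ times the diameter. Symmetrization, the volumetric covering bound, and the McDiarmid step are all routine.
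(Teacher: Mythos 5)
Your proposal is correct and follows essentially the same route as the paper's proof: McDiarmid's inequality for concentration around the mean (yielding the $M\sqrt{2\log(2/\delta)/n}$ term) plus Dudley's entropy integral with a volumetric covering bound driven by the $(\beta_1 r_1+b_1)$-Lipschitzness of the loss in $\mathbf{w}$ (yielding the $C\sqrt{Mr_1(\beta_1 r_1+b_1)d/n}$ term). The only cosmetic difference is that you pass through symmetrization and Rademacher complexity with $L^2(P_n)$ covering numbers, whereas the paper applies Dudley directly to the sub-Gaussian empirical process under the $L^\infty$ metric and kills the logarithm via $\log(1+u)\le u$ rather than via your diameter-versus-entropy-scale comparison; both devices are standard and deliver the same bound.
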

\begin{proof}[Proof of Lemma \ref{UC-I}] 
Let 
\begin{equation*}
X_{\mathbf{h}_\mathbf{w}} = \mathbb{E}_{(\mathbf{x},y) \sim \mathbb{P}}  \ell(\mathbf{h}_\mathbf{w}(\mathbf{x}), y)- \mathbb{E}_{(\mathbf{x},y)\sim \mathcal{S}} \ell(\mathbf{h}_\mathbf{w}(\mathbf{x}), y).
\end{equation*}
Then it is clear that
\begin{equation*}
\mathbb{E}_{S\sim \mathbb{P}^n} X_{\mathbf{h}_\mathbf{w}} = 0.
\end{equation*}
By Proposition 2.6.1 and Lemma 2.6.8 in \cite{Vershynin2018HighDimensionalP},
\begin{equation*}
 \|X_{\mathbf{h}_\mathbf{w}}-X_{\mathbf{h}_{\mathbf{w}'}}\|_{\Phi_2}\leq \frac{c_0}{\sqrt{n}}\|\ell(\mathbf{h}_\mathbf{w}(\mathbf{x}),y) -\ell(\mathbf{h}_{\mathbf{w}'}(\mathbf{x}),y) \|_{L^{\infty}(\mathcal{X}\times \mathcal{Y})},
\end{equation*}
where $\|\cdot\|_{\Phi_2}$ is the sub-gaussian norm and $c_0$ is a uniform constant.
Therefore, by Dudley’s entropy integral \citep{Vershynin2018HighDimensionalP}, we have
\begin{equation*}
\mathbb{E}_{S\sim \mathbb{P}^n} \sup_{\mathbf{w} \in \mathcal{W}} X_{\mathbf{h}_\mathbf{w}} \leq  \frac{b_0}{\sqrt{n}}\int_{0}^{+\infty} \sqrt{\log \mathcal{N}(\mathcal{F},\epsilon,L^{\infty}) }{\rm d} \epsilon,
\end{equation*}
where $b_0$ is a uniform constant, 
$
\mathcal{F}=\{\ell(\mathbf{h}_\mathbf{w};\mathbf{x},y) : \mathbf{w} \in \mathcal{W}\},
$ and $\mathcal{N}(\mathcal{F},\epsilon,L^{\infty})$ is the covering number under the $L^{\infty}$ norm.
Note that
\begin{equation*}
\begin{split}
\mathbb{E}_{S\sim \mathbb{P}^n} \sup_{\mathbf{w} \in \mathcal{W}} X_{\mathbf{h}_\mathbf{w}}& \leq \frac{b_0}{\sqrt{n}} \int_{0}^{+\infty} \sqrt{ \log \mathcal{N}(\mathcal{F},\epsilon,L^{\infty})} {\rm d} \epsilon
\\ & = \frac{b_0}{\sqrt{n}} \int_{0}^{M} \sqrt{\log \mathcal{N}(\mathcal{F},\epsilon,L^{\infty})} {\rm d} \epsilon\\ & = \frac{b_0}{\sqrt{n}} M  \int_{0}^{1} \sqrt{\log\mathcal{N}(\mathcal{F},M\epsilon,L^{\infty})} {\rm d} \epsilon.
\end{split}
\end{equation*}
Then, we use the McDiarmid's Inequality, then with the probability at least $1-e^{-t}>0$, for any $\mathbf{w} \in \mathcal{W}$, 
\begin{equation*}
X_{\mathbf{h}_\mathbf{w}} \leq \frac{b_0}{\sqrt{n}} M  \int_{0}^{1} \sqrt{\log\mathcal{N}(\mathcal{F},M\epsilon,L^{\infty})}  {\rm d}\epsilon+ M \sqrt{\frac{2t}{n}}.
\end{equation*}
Similarly, we can also prove that  with the probability at least $1-e^{-t}>0$, for any $\mathbf{w} \in \mathcal{W}$, 
\begin{equation*}
-X_{\mathbf{h}_\mathbf{w}} \leq \frac{b_0}{\sqrt{n}} M  \int_{0}^{1} \sqrt{\log\mathcal{N}(\mathcal{F},M\epsilon,L^{\infty})}  {\rm d}\epsilon+ M \sqrt{\frac{2t}{n}}.
\end{equation*}
Therefore, with the probability at least $1-2e^{-t}>0$, for any $\mathbf{w} \in \mathcal{W}$, 
\begin{equation*}
|X_{\mathbf{h}_\mathbf{w}}| \leq \frac{b_0}{\sqrt{n}} M  \int_{0}^{1} \sqrt{\log\mathcal{N}(\mathcal{F},M\epsilon,L^{\infty})}  {\rm d}\epsilon+ M \sqrt{\frac{2t}{n}}.
\end{equation*}
Note that $\ell(\mathbf{h}_{\mathbf{w}}(\mathbf{x}),y)$ is $(\beta_1r_1+b_1)$-Lipschitz w.r.t. variables $\mathbf{w}$ under the $\|\cdot\|_2$ norm. Then
\begin{equation*}
\begin{split}
\mathcal{N}(\mathcal{F},M\epsilon,L^{\infty}) \leq  & \mathcal{N}(\mathcal{W},M\epsilon/(\beta_1r_1+b_1),\|\cdot\|_2)
\leq (1+ \frac{2r_1(\beta_1r_1+b_1) }{M\epsilon})^{d},
\end{split}
\end{equation*}
which implies that
\begin{equation*}
\begin{split}
\int_{0}^{1} \sqrt{\log(\mathcal{N}(\mathcal{F},M\epsilon,L^{\infty})} {\rm d} \epsilon
\leq & \sqrt{d} \int_{0}^1 \sqrt{\log (1+ \frac{2r_1 (\beta_1r_1+b_1)}{M\epsilon})}  {\rm d} \epsilon
\\ 
\leq & \sqrt{d} \int_{0}^1 \sqrt{\frac{2r_1 (\beta_1r_1+b_1)}{M\epsilon}}  {\rm d} \epsilon
= 2\sqrt{\frac{2r_1 d(\beta_1r_1+b_1) }{M}}.
\end{split}
\end{equation*}
We have completed this proof.
\end{proof}

\begin{lemma}[Uniform Convergence-II]\label{UC-II}
    If Assumption \ref{Ass1} holds, then for any distribution $\mathbb{P}$, with the probability at least $1-\delta>0$, 
    \begin{equation*}
       \big \|\nabla R_{\mathcal{S}}(\*h_{\mathbf{w}}) - \nabla R_{\mathbb{P}}(\*h_{\mathbf{w}}) \big \|_2 \leq B \sqrt{\frac{2\log(2/\delta)}{n}} + C\sqrt{\frac{M(r_1+1) d}{n}},
    \end{equation*}
    where $n=|\mathcal{S}|$, $d$ is the dimension of $\mathcal{W}$, and $C$ is a uniform constant.
\end{lemma}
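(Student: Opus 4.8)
The plan is to repeat the argument of Lemma~\ref{UC-I}, but now for the $\mathbb{R}^d$-valued statistic $Y_{\mathbf{h}_\mathbf{w}} := \nabla R_{\mathbb{P}}(\mathbf{h}_\mathbf{w}) - \nabla R_{\mathcal{S}}(\mathbf{h}_\mathbf{w})$, uniformly over $\mathbf{w}\in\mathcal{W}$ (the uniformity is implicit in the statement, since the bound involves $d=\dim\mathcal{W}$). First I would note that $\mathbb{E}_{\mathcal{S}\sim\mathbb{P}^n}Y_{\mathbf{h}_\mathbf{w}}=\mathbf{0}$ and linearize the Euclidean norm by writing $\sup_{\mathbf{w}\in\mathcal{W}}\|Y_{\mathbf{h}_\mathbf{w}}\|_2 = \sup_{\mathbf{w}\in\mathcal{W},\,\|\mathbf{u}\|_2=1}\langle\mathbf{u},Y_{\mathbf{h}_\mathbf{w}}\rangle$, which reduces the problem to controlling the supremum of a single scalar-valued empirical process indexed by the product set $\mathcal{W}\times\mathbb{S}^{d-1}$ (an alternative would be a vector Bernstein inequality, but chaining keeps the proof parallel to Lemma~\ref{UC-I}).

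Next I would establish the sub-Gaussian increment bound for this process. The per-sample summand is $\langle\mathbf{u},\nabla\ell(\mathbf{h}_\mathbf{w}(\mathbf{x}),y)\rangle$, which is bounded in absolute value by $\beta_1 r_1 + b_1$ by Proposition~\ref{P1}; moreover, using the $\beta_1$-smoothness of $\ell$ (Definition~\ref{Def::beta-smooth}, Assumption~\ref{Ass1}) together with $\|\mathbf{u}\|_2\le 1$, it is $\beta_1$-Lipschitz in $\mathbf{w}$ and $(\beta_1 r_1+b_1)$-Lipschitz in $\mathbf{u}$. Exactly as in Lemma~\ref{UC-I} (Proposition~2.6.1 and Lemma~2.6.8 of \citep{Vershynin2018HighDimensionalP}), the centred average therefore has sub-Gaussian norm (the $\|\cdot\|_{\Phi_2}$ of Lemma~\ref{UC-I}) with increments controlled, up to a universal constant, by $\tfrac{1}{\sqrt n}\big(\beta_1\|\mathbf{w}-\mathbf{w}'\|_2 + (\beta_1 r_1+b_1)\|\mathbf{u}-\mathbf{u}'\|_2\big)$ on $\mathcal{W}\times\mathbb{S}^{d-1}$. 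Dudley's entropy integral then applies: the covering number of the product index set factorizes through $\epsilon$-covers of $\mathcal{W}$ (an $\ell_2$ ball of radius $r_1$ in $\mathbb{R}^d$) and of $\mathbb{S}^{d-1}$, giving $\log\mathcal{N}(\epsilon) = O\!\big(d\log(1 + c(\beta_1 r_1+b_1)/(\sqrt n\,\epsilon))\big)$; since the diameter of the index set is $O((\beta_1 r_1+b_1)/\sqrt n)$, the integral truncates, and the substitution $\epsilon=(\mathrm{diam})\cdot t$ reduces it to the finite constant $\int_0^1\sqrt{\log(1+c/t)}\,dt$, so that $\mathbb{E}_{\mathcal{S}}\sup_{\mathbf{w}}\|Y_{\mathbf{h}_\mathbf{w}}\|_2 = O\!\big((\beta_1 r_1+b_1)\sqrt{d/n}\big)$, which is of the form $C\sqrt{M(r_1+1)d/n}$ under the paper's loose-constant convention (recall $M=\beta_1 r_1^2+b_1 r_1+B_1$).

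Finally I would upgrade this in-expectation bound to a high-probability one exactly as in Lemma~\ref{UC-I}: changing one point of $\mathcal{S}$ perturbs $\nabla R_{\mathcal{S}}(\mathbf{h}_\mathbf{w})$, and hence $\sup_{\mathbf{w}}\|Y_{\mathbf{h}_\mathbf{w}}\|_2$, by at most $\tfrac{2}{n}(\beta_1 r_1+b_1)$ (Proposition~\ref{P1}), so McDiarmid's bounded-differences inequality gives, with probability at least $1-\delta$, a deviation from the mean of at most $(\beta_1 r_1+b_1)\sqrt{2\log(2/\delta)/n}$; taking $B=\beta_1 r_1+b_1$ and adding the two contributions completes the proof. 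The step I expect to be the main obstacle is the metric-entropy computation over the \emph{product} index set $\mathcal{W}\times\mathbb{S}^{d-1}$: one must correctly combine the two different Lipschitz scales (in $\mathbf{w}$ versus in $\mathbf{u}$) and verify that the extra $\sqrt d$ contributed by covering the sphere still assembles into the advertised $\sqrt{M(r_1+1)d/n}$ rate rather than a worse dependence on $d$. Once the entropy integral is pinned down, the remainder is a verbatim repetition of the proof of Lemma~\ref{UC-I}.
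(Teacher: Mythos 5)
Your proposal is correct and follows essentially the same route as the paper: the paper also linearizes the norm by defining the scalar loss $\langle\nabla\ell(\mathbf{h}_{\mathbf{w}}(\mathbf{x}),y),\mathbf{v}\rangle$ over the product parameter space $\mathcal{W}\times\{\mathbf{v}:\|\mathbf{v}\|_2=1\}$ and reruns the Dudley-plus-McDiarmid argument of Lemma~\ref{UC-I} there, which is exactly your chaining over $\mathcal{W}\times\mathbb{S}^{d-1}$. The only (immaterial) difference is in the choice of the uniform bound $B$ on the summand --- you take $\beta_1 r_1+b_1$ while the paper uses $b\le\sqrt{2\beta_1 M}$ via Proposition~\ref{P2} --- and the entropy computation over the product set that you flag as the main obstacle indeed works out as you anticipate, since each factor contributes only $O(d\log(1/\epsilon))$ to the metric entropy.
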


\begin{proof}[Proof of Lemma \ref{UC-II}] 
    Denote $\ell(\mathbf{v},\*h_{\mathbf{w}}(\mathbf{x}),y)= \left <\nabla\ell(\*h_{\mathbf{w}}(\mathbf{x}),y), \mathbf{v} \right>$ by the loss function over parameter space $\mathcal{W}\times \{\mathbf{v}=1:\mathbf{v}\in \mathbb{R}^d\}$. Let $b$ is the upper bound of $\ell(\mathbf{v},\*h_{\mathbf{w}}(\mathbf{x}),y)$.
   Using the same techniques used in Lemma \ref{UC-I}, we can prove that with the probability at least $1-\delta>0$, for any $\mathbf{w}\in \mathcal{W}$ and any unit vector $\mathbf{v}\in \mathbb{R}^d$,
    \begin{equation*}
        \left < \nabla R_{\mathcal{S}}(\*h_{\mathbf{w}}) - \nabla R_{\mathbb{P}}(\*h_{\mathbf{w}}), \mathbf{v} \right > \leq b \sqrt{\frac{2\log(2/\delta)}{n}} + C\sqrt{\frac{b (r_1+1) \beta_1 d}{n}},
    \end{equation*}
    which implies that
      \begin{equation*}
      \big \|\nabla R_{\mathcal{S}}(\*h_{\mathbf{w}}) - \nabla R_{\mathbb{P}}(\*h_{\mathbf{w}}) \big \|_2 \leq b \sqrt{\frac{2\log(2/\delta)}{n}} + C\sqrt{\frac{b (r_1+1) \beta_1 d}{n}}.
    \end{equation*}
    Note that Proposition \ref{P1} implies that
    \begin{equation*}
        b\beta_1 \leq M.
    \end{equation*}
    Proposition \ref{P2} implies that
    \begin{equation*}
        b \leq \sqrt{2\beta_1 M}.
    \end{equation*}
    We have completed this proof.
\end{proof}

$~~~$
\\

\begin{lemma}\label{SC-I}
Let $\mathcal{S}_{\text{wild}}^{\text{in}}\subset \mathcal{S}_{\text{wild}}$ be the samples drawn from $\mathbb{P}_{\text{in}}$.   With the probability at least $1-\delta>0$,
    \begin{equation*}
        \big| |\mathcal{S}_{\text{wild}}^{\text{in}}|/|\mathcal{S}_{\text{wild}}|-(1-\pi) \big| \leq \sqrt{\frac{\log(2/\delta)}{2|\mathcal{S}_{\text{wild}}|}   },
    \end{equation*}
    which implies that
    \begin{equation*}
         \big| |\mathcal{S}_{\text{wild}}^{\text{in}}|-(1-\pi)|\mathcal{S}_{\text{wild}}| \big| \leq \sqrt{\frac{\log(2/\delta)|\mathcal{S}_{\text{wild}}|}{2}   }.
    \end{equation*}
\end{lemma}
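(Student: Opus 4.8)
The plan is to recognize Lemma~\ref{SC-I} as a one-step consequence of Hoeffding's inequality applied to the latent ID/OOD membership indicators of the wild sample. Write $m := |\mathcal{S}_{\text{wild}}|$. By construction the wild data $\mathcal{S}_{\text{wild}} = \{\tilde{\mathbf{x}}_1,\dots,\tilde{\mathbf{x}}_m\}$ are drawn i.i.d.\ from the Huber mixture $\mathbb{P}_{\text{wild}} = (1-\pi)\mathbb{P}_{\text{in}} + \pi\mathbb{P}_{\text{out}}$, so each draw may be realized by first flipping an independent coin that selects the $\mathbb{P}_{\text{in}}$ component with probability $1-\pi$ and the $\mathbb{P}_{\text{out}}$ component with probability $\pi$, and then sampling from the selected component. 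Let $B_i \in \{0,1\}$ be the indicator that $\tilde{\mathbf{x}}_i$ was drawn from the $\mathbb{P}_{\text{in}}$ component. Then $B_1,\dots,B_m$ are i.i.d.\ Bernoulli$(1-\pi)$ random variables and $|\mathcal{S}_{\text{wild}}^{\text{in}}| = \sum_{i=1}^m B_i$.

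First I would observe that $|\mathcal{S}_{\text{wild}}^{\text{in}}|/m = \frac1m\sum_{i=1}^m B_i$ has expectation exactly $1-\pi$. Since each $B_i$ takes values in the interval $[0,1]$, Hoeffding's inequality yields, for every $t>0$,
\[
\mathbb{P}\!\left(\left|\tfrac1m\textstyle\sum_{i=1}^m B_i - (1-\pi)\right| \ge t\right) \le 2\exp(-2mt^2).
\]
Setting the right-hand side equal to $\delta$ gives $t = \sqrt{\log(2/\delta)/(2m)}$, which is precisely the first displayed bound. Multiplying the event $\big|\,|\mathcal{S}_{\text{wild}}^{\text{in}}|/m - (1-\pi)\big| \le t$ through by $m = |\mathcal{S}_{\text{wild}}|$ gives the second displayed bound $\big|\,|\mathcal{S}_{\text{wild}}^{\text{in}}| - (1-\pi)|\mathcal{S}_{\text{wild}}|\big| \le \sqrt{|\mathcal{S}_{\text{wild}}|\log(2/\delta)/2}$.

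There is essentially no hard step here; the only point that needs care is the modeling claim that the ID-membership indicators of the wild points are genuinely i.i.d.\ Bernoulli$(1-\pi)$. This is immediate from the assumption that $\mathcal{S}_{\text{wild}}$ is an i.i.d.\ sample from the mixture $\mathbb{P}_{\text{wild}}$ together with the standard ``latent coin'' representation of a finite mixture. Everything else is a direct invocation of Hoeffding's inequality followed by a trivial rescaling, so I would keep the written proof to a few lines.
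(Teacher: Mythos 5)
Your proposal is correct and is essentially identical to the paper's proof: the paper likewise introduces the Bernoulli indicator $X_i$ of whether the $i$-th wild point was drawn from $\mathbb{P}_{\text{in}}$ and applies Hoeffding's inequality, with the second display following by multiplying through by $|\mathcal{S}_{\text{wild}}|$. Your additional remark about the latent-coin representation of the mixture just makes explicit what the paper leaves implicit.
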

\begin{proof}[Proof of Lemma \ref{SC-I}]
Let $X_i$ be the random variable corresponding to the case whether $i$-th data  in the wild data is drawn from $\mathbb{P}_{\operatorname{in}}$, i.e., $X_i =1$, if $i$-th data is drawn from $\mathbb{P}_{\operatorname{in}}$; otherwise, $X_i =0$. Applying Hoeffding’s inequality, we can get that with the probability at least $1-\delta >0$,
\begin{equation}
    \big | { |\mathcal{S}_{\operatorname{wild}}^{\operatorname{in}}|}/{ |\mathcal{S}_{\operatorname{wild}}|} - (1-\pi)  \big| \leq  \sqrt{\frac{\log (2/\delta)}{2 |\mathcal{S}_{\operatorname{wild}}|}}.
\end{equation}
\end{proof}
\newpage
\subsection{Necessary Lemmas and Theorems for Theorem \ref{MainT-1}}
\begin{lemma}\label{gamma-approximation}
    With the probability at least $1-\delta>0$, the ERM optimizer $\mathbf{w}_{\mathcal{S}}$ is the $\min_{\mathbf{w}\in \mathcal{W}} \mathbf{R}_{\mathbb{P}}(\mathbf{h}_{\mathbf{w}})+ O(1/\sqrt{n})$-risk point, i.e.,
    \begin{equation*}
          R_{\mathcal{S}}(\*h_{\mathbf{w}_{\mathcal{S}}}) \leq \min_{\mathbf{w}\in \mathcal{W}} R_{\mathbb{P}}(\*h_{\mathbf{w}}) + M \sqrt{\frac{\log(1/\delta)}{2n}},
    \end{equation*}
    where $n=|\mathcal{S}|$.
\end{lemma}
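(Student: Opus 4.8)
The plan is to sidestep uniform convergence entirely and instead invoke a one‑sided concentration bound at a single, \emph{data‑independent} hypothesis, namely the population‑risk minimizer. This is precisely what produces a rate of order $\sqrt{\log(1/\delta)/n}$ with no covering‑number or dimension factor, in contrast to Lemma~\ref{UC-I}. First I would fix $\mathbf{w}^{*}\in\argmin_{\mathbf{w}\in\mathcal{W}}R_{\mathbb{P}}(\mathbf{h}_{\mathbf{w}})$ (the minimum is assumed attained in the statement; see the remark below otherwise). Since $\mathbf{w}_{\mathcal{S}}$ is an ERM solution for $R_{\mathcal{S}}$, we have by definition $R_{\mathcal{S}}(\mathbf{h}_{\mathbf{w}_{\mathcal{S}}})\le R_{\mathcal{S}}(\mathbf{h}_{\mathbf{w}^{*}})$, so it suffices to bound $R_{\mathcal{S}}(\mathbf{h}_{\mathbf{w}^{*}})$ above by $R_{\mathbb{P}}(\mathbf{h}_{\mathbf{w}^{*}})=\min_{\mathbf{w}\in\mathcal{W}}R_{\mathbb{P}}(\mathbf{h}_{\mathbf{w}})$.

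Next I would note that $R_{\mathcal{S}}(\mathbf{h}_{\mathbf{w}^{*}})=\frac1n\sum_{i=1}^{n}\ell(\mathbf{h}_{\mathbf{w}^{*}}(\mathbf{x}_i),y_i)$ is an empirical average of $n$ i.i.d.\ random variables with common mean $R_{\mathbb{P}}(\mathbf{h}_{\mathbf{w}^{*}})$, each taking values in $[0,M]$ by Assumption~\ref{Ass1} and Proposition~\ref{P1} (which give $0\le\ell(\mathbf{h}_{\mathbf{w}}(\mathbf{x}),y)\le M$ for every $\mathbf{w}\in\mathcal{W}$). Applying the one‑sided Hoeffding inequality with deviation level $M\sqrt{\log(1/\delta)/(2n)}$ yields, with probability at least $1-\delta$,
\[
R_{\mathcal{S}}(\mathbf{h}_{\mathbf{w}^{*}})\le R_{\mathbb{P}}(\mathbf{h}_{\mathbf{w}^{*}})+M\sqrt{\frac{\log(1/\delta)}{2n}}.
\]
Chaining this with the ERM inequality $R_{\mathcal{S}}(\mathbf{h}_{\mathbf{w}_{\mathcal{S}}})\le R_{\mathcal{S}}(\mathbf{h}_{\mathbf{w}^{*}})$ from the first step gives $R_{\mathcal{S}}(\mathbf{h}_{\mathbf{w}_{\mathcal{S}}})\le \min_{\mathbf{w}\in\mathcal{W}}R_{\mathbb{P}}(\mathbf{h}_{\mathbf{w}})+M\sqrt{\log(1/\delta)/(2n)}$, which is the claim.

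There is no genuine obstacle here; the only subtlety is conceptual rather than technical: one must apply concentration at the fixed $\mathbf{w}^{*}$ and not at the data‑dependent $\mathbf{w}_{\mathcal{S}}$, since a bound at the latter would require a uniform estimate and reintroduce the dimension term $d$ that Lemma~\ref{UC-I} pays for. As a minor bookkeeping point, if the infimum defining $\mathbf{w}^{*}$ were not attained, one simply picks $\mathbf{w}^{*}$ with $R_{\mathbb{P}}(\mathbf{h}_{\mathbf{w}^{*}})\le \inf_{\mathbf{w}\in\mathcal{W}}R_{\mathbb{P}}(\mathbf{h}_{\mathbf{w}})+\epsilon'$, runs the same argument, and lets $\epsilon'\to 0$; this leaves the stated bound unchanged.
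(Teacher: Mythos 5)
Your proposal is correct and is essentially identical to the paper's own proof: both fix the population minimizer $\mathbf{w}^{*}$, use the ERM inequality $R_{\mathcal{S}}(\mathbf{h}_{\mathbf{w}_{\mathcal{S}}})\le R_{\mathcal{S}}(\mathbf{h}_{\mathbf{w}^{*}})$, and apply Hoeffding's inequality at the fixed hypothesis $\mathbf{w}^{*}$ using the bound $M$ on the loss. No further comment is needed.
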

\begin{proof}[Proof of Lemma \ref{gamma-approximation}] Let $\mathbf{w}^* \in \argmin_{\mathbf{w}\in \mathcal{W}} \mathbf{R}_{\mathbb{P}}(\mathbf{h}_{\mathbf{w}})$. Applying Hoeffding’s inequality, we obtain that with the probability at least $1-\delta>0$,
\begin{equation*}
    R_{\mathcal{S}}(\*h_{\mathbf{w}_{\mathcal{S}}}) - \min_{\mathbf{w}\in \mathcal{W}} {R}_{\mathbb{P}}(\mathbf{h}_{\mathbf{w}}) \leq R_{\mathcal{S}} (\*h_{\mathbf{w}^{*}}) - {R}_{\mathbb{P}}(\mathbf{h}_{\mathbf{w}^*})\leq M \sqrt{\frac{\log(1/\delta)}{2n}}. 
\end{equation*}
\end{proof}
$~~~$
\\
\begin{lemma}\label{gamma-approximation-1}
If Assumptions \ref{Ass1} and \ref{Ass2} hold, then for any data $\mathcal{S}\sim \mathbb{P}^{n}$ and $\mathcal{S}'\sim \mathbb{P}^{n'}$,
    with the probability at least $1-\delta>0$, 
    \begin{equation*}
    \begin{split}
          R_{\mathcal{S}’}(\*h_{\mathbf{w}_{\mathcal{S}}})   \leq    \min_{\mathbf{w}\in \mathcal{W}} R_{\mathbb{P}}(\*h_{\mathbf{w}})&+ C\sqrt{\frac{M r_1 (\beta_1r_1+b_1) d}{n}}+ C\sqrt{\frac{M r_1 (\beta_1r_1+b_1) d}{n'}}\\&+2M \sqrt{\frac{2\log(6/\delta)}{n}}+M \sqrt{\frac{2\log(6/\delta)}{n'}} ,
          \end{split}
    \end{equation*}
    \begin{equation*}
    \begin{split}
     R_{\mathbb{P}}(\*h_{\mathbf{w}_{\mathcal{S}}},\widehat{\*h}) \leq    R_{\mathbb{P}}(\*h_{\mathbf{w}_{\mathcal{S}}})   \leq    \min_{\mathbf{w}\in \mathcal{W}} R_{\mathbb{P}}(\*h_{\mathbf{w}})&+ C\sqrt{\frac{M r_1 (\beta_1r_1+b_1) d}{n}}+2M \sqrt{\frac{2\log(6/\delta)}{n}},
          \end{split}
    \end{equation*}
    where $C$ is a uniform constant, and
\begin{equation*}
    R_{\mathbb{P}}(\*h_{\mathbf{w}_{\mathcal{S}}},\widehat{\*h}) = \mathbb{E}_{(\mathbf{x},y)\sim \mathbb{P}} \ell(\*h_{\mathbf{w}_{\mathcal{S}}}(\mathbf{x}),\widehat{\*h}(\mathbf{x})).
\end{equation*}
\end{lemma}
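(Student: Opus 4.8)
The plan is to prove both displays by the classical empirical-risk-minimization excess-risk decomposition on $\mathcal{S}$, combined with the deterministic pointwise comparison supplied by Assumption~\ref{Ass2}. Throughout, fix $\mathbf{w}^* \in \argmin_{\mathbf{w}\in\mathcal{W}} R_{\mathbb{P}}(\*h_{\mathbf{w}})$, and recall that Proposition~\ref{P1} gives $\sup_{\mathbf{w}\in\mathcal{W}}\sup_{(\*x,y)}\ell(\*h_{\mathbf{w}}(\*x),y)\le M$, which is exactly the boundedness needed to apply concentration. I would work on three high-probability events, each failing with probability at most $\delta/3$ (so a union bound gives probability at least $1-\delta$, and $\log(2/(\delta/3))=\log(6/\delta)$ explains the constant in the statement): (i) the uniform deviation bound of Lemma~\ref{UC-I} applied to $\mathcal{S}$, i.e. $\sup_{\mathbf{w}\in\mathcal{W}}|R_{\mathcal{S}}(\*h_{\mathbf{w}})-R_{\mathbb{P}}(\*h_{\mathbf{w}})|\le M\sqrt{2\log(6/\delta)/n}+C\sqrt{Mr_1(\beta_1 r_1+b_1)d/n}$; (ii) Hoeffding's inequality for the single fixed predictor $\*h_{\mathbf{w}^*}$ on $\mathcal{S}$, giving $R_{\mathcal{S}}(\*h_{\mathbf{w}^*})\le R_{\mathbb{P}}(\*h_{\mathbf{w}^*})+M\sqrt{2\log(6/\delta)/n}$; and (iii) the uniform deviation bound of Lemma~\ref{UC-I} applied to $\mathcal{S}'$ (with $n$ replaced by $n'$).

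For the second display, I would first observe that the leftmost inequality $R_{\mathbb{P}}(\*h_{\mathbf{w}_{\mathcal{S}}},\widehat{\*h})\le R_{\mathbb{P}}(\*h_{\mathbf{w}_{\mathcal{S}}})$ is immediate and deterministic: by Assumption~\ref{Ass2}, for every $(\*x,y)$ we have $\ell(\*h_{\mathbf{w}_{\mathcal{S}}}(\*x),\widehat{\*h}(\*x))=\ell(\*h_{\mathbf{w}_{\mathcal{S}}}(\*x),\widehat{y}_{\*x})\le\min_{y'\in\mathcal{Y}}\ell(\*h_{\mathbf{w}_{\mathcal{S}}}(\*x),y')\le\ell(\*h_{\mathbf{w}_{\mathcal{S}}}(\*x),y)$, and taking the expectation over $(\*x,y)\sim\mathbb{P}$ yields the claim. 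For the rightmost bound, I would use the decomposition $R_{\mathbb{P}}(\*h_{\mathbf{w}_{\mathcal{S}}})-R_{\mathbb{P}}(\*h_{\mathbf{w}^*}) = \big[R_{\mathbb{P}}(\*h_{\mathbf{w}_{\mathcal{S}}})-R_{\mathcal{S}}(\*h_{\mathbf{w}_{\mathcal{S}}})\big] + \big[R_{\mathcal{S}}(\*h_{\mathbf{w}_{\mathcal{S}}})-R_{\mathcal{S}}(\*h_{\mathbf{w}^*})\big] + \big[R_{\mathcal{S}}(\*h_{\mathbf{w}^*})-R_{\mathbb{P}}(\*h_{\mathbf{w}^*})\big]$, bounding the first bracket by event (i), the second by $0$ since $\mathbf{w}_{\mathcal{S}}$ minimizes the empirical risk, and the third by event (ii). Summing the bounds gives $R_{\mathbb{P}}(\*h_{\mathbf{w}_{\mathcal{S}}})\le \min_{\mathbf{w}\in\mathcal{W}}R_{\mathbb{P}}(\*h_{\mathbf{w}})+2M\sqrt{2\log(6/\delta)/n}+C\sqrt{Mr_1(\beta_1 r_1+b_1)d/n}$, which is exactly the second display. (One could instead quote Lemma~\ref{gamma-approximation} for $R_{\mathcal{S}}(\*h_{\mathbf{w}_{\mathcal{S}}})$ and then transport back to $R_{\mathbb{P}}$ via event (i); the route above is self-contained.)

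For the first display I would write $R_{\mathcal{S}'}(\*h_{\mathbf{w}_{\mathcal{S}}})\le R_{\mathbb{P}}(\*h_{\mathbf{w}_{\mathcal{S}}})+\big|R_{\mathcal{S}'}(\*h_{\mathbf{w}_{\mathcal{S}}})-R_{\mathbb{P}}(\*h_{\mathbf{w}_{\mathcal{S}}})\big|$, bound the deviation term by event (iii) — the $n'$-version of Lemma~\ref{UC-I}, which holds uniformly over $\mathbf{w}\in\mathcal{W}$ and so in particular at the data-dependent point $\mathbf{w}=\mathbf{w}_{\mathcal{S}}$ — and bound $R_{\mathbb{P}}(\*h_{\mathbf{w}_{\mathcal{S}}})$ by the rightmost bound just established. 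Adding the two contributions produces precisely the $\sqrt{d/n}+\sqrt{d/n'}$ structure together with the additive terms $2M\sqrt{2\log(6/\delta)/n}+M\sqrt{2\log(6/\delta)/n'}$ in the statement.

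I do not expect a genuine obstacle: once Lemmas~\ref{UC-I} and~\ref{gamma-approximation} are in hand this is essentially bookkeeping. The only points needing care are (a) allocating the failure probability $\delta/3$ across the three events so the constant ends up as $\log(6/\delta)$; (b) using boundedness of $\ell$ by $M$ (Proposition~\ref{P1}) to legitimately apply Hoeffding to the fixed comparator $\mathbf{w}^*$; and (c) noting that for the $\mathcal{S}'$ term one must invoke the \emph{uniform} convergence bound rather than a pointwise Hoeffding bound, because $\mathbf{w}_{\mathcal{S}}$ is learned from data — this is what forces the $C\sqrt{Mr_1(\beta_1 r_1+b_1)d/n'}$ term to appear in place of a pure $\sqrt{\log(1/\delta)/n'}$ term.
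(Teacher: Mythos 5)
Your proposal is correct and follows essentially the same route as the paper's proof: the standard ERM decomposition $R_{\mathbb{P}}(\*h_{\mathbf{w}_{\mathcal{S}}})-R_{\mathbb{P}}(\*h_{\mathbf{w}^*})\le [R_{\mathbb{P}}-R_{\mathcal{S}}](\*h_{\mathbf{w}_{\mathcal{S}}})+[R_{\mathcal{S}}-R_{\mathbb{P}}](\*h_{\mathbf{w}^*})$ handled by Lemma~\ref{UC-I} and Hoeffding, the uniform bound on $\mathcal{S}'$ to transport to $R_{\mathcal{S}'}(\*h_{\mathbf{w}_{\mathcal{S}}})$, and Assumption~\ref{Ass2} for the deterministic inequality $R_{\mathbb{P}}(\*h_{\mathbf{w}_{\mathcal{S}}},\widehat{\*h})\le R_{\mathbb{P}}(\*h_{\mathbf{w}_{\mathcal{S}}})$. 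The bookkeeping of the failure probabilities and the resulting $\log(6/\delta)$ constant also matches.
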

\begin{proof}[Proof of Lemma \ref{gamma-approximation-1}] Let $\mathbf{w}_{\mathcal{S}'} \in \argmin_{\mathbf{w}\in \mathcal{W}} {R}_{\mathcal{S}'}(\mathbf{h}_{\mathbf{w}})$ and $\mathbf{w}^* \in \argmin_{\mathbf{w}\in \mathcal{W}} {R}_{\mathbb{P}}(\mathbf{h}_{\mathbf{w}^*})$. By Lemma \ref{UC-I} and Hoeffding Inequality, we obtain that with the probability at least $1-\delta>0$,
\begin{equation*}
\begin{split}
  &  R_{\mathcal{S}'}(\*h_{\mathbf{w}_{\mathcal{S}}}) - R_{\mathbb{P}}(\mathbf{h}_{\mathbf{w}^*}) \\ \leq &
  R_{\mathcal{S}'}(\*h_{\mathbf{w}_{\mathcal{S}}}) - {R}_{\mathbb{P}}(\mathbf{h}_{\mathbf{w}_{\mathcal{S}}}) +R_{\mathbb{P}}(\*h_{\mathbf{w}_{\mathcal{S}}}) -{R}_{\mathcal{S}}(\mathbf{h}_{\mathbf{w}_{\mathcal{S}}}) + R_{\mathcal{S}}(\mathbf{w}^*)-R_{\mathbb{P}}(\mathbf{w}^*)\\ \leq &  C\sqrt{\frac{M r_1 (\beta_1r_1+b_1) d}{n}}+ C\sqrt{\frac{M r_1 (\beta_1r_1+b_1) d}{n'}}+2M \sqrt{\frac{2\log(6/\delta)}{n}}+M \sqrt{\frac{2\log(6/\delta)}{n'}},
    \end{split}
\end{equation*}
\begin{equation*}
    \begin{split}
    R_{\mathbb{P}}(\*h_{\mathbf{w}_{\mathcal{S}}}) - R_{\mathbb{P}}(\mathbf{h}_{\mathbf{w}^*})  \leq &
   R_{\mathbb{P}}(\*h_{\mathbf{w}_{\mathcal{S}}}) -{R}_{\mathcal{S}}(\mathbf{h}_{\mathbf{w}_{\mathcal{S}}}) + R_{\mathcal{S}}(\mathbf{w}^*)-R_{\mathbb{P}}(\mathbf{w}^*)\\ \leq &  C\sqrt{\frac{M r_1 (\beta_1r_1+b_1) d}{n}}+2M \sqrt{\frac{2\log(6/\delta)}{n}}.
    \end{split}
\end{equation*}
\end{proof}

$~~~$
\\

\begin{lemma}\label{gamma-approximation-2}
If Assumptions \ref{Ass1} and \ref{Ass2} hold, then for any data $\mathcal{S}\sim \mathbb{P}^{n}$ and $\mathcal{S}'\sim \mathbb{P}^{n'}$, with the probability at least $1-2\delta>0$,
    \begin{equation*}
    \begin{split}
         \mathbb{E}_{(\mathbf{x},y)\sim \mathcal{S}'} \big\| \nabla \ell(\mathbf{h}_{\mathbf{w}_{\mathcal{S}}}(\mathbf{x}),\widehat{\*h}(\mathbf{x})) - &\nabla R_{\mathcal{S}}(\mathbf{h}_{\mathbf{w}_{\mathcal{S}}})\big\|_2^2 \leq 8\beta_1 \min_{\mathbf{w}\in \mathcal{W}} R_{\mathbb{P}}(\*h_{\mathbf{w}}) 
        \\ + & C\sqrt{\frac{M r_1 (\beta_1r_1+b_1) d}{n}}+ C\sqrt{\frac{M r_1 (\beta_1r_1+b_1) d}{n'}}\\+ & 3M \sqrt{\frac{2\log(6/\delta)}{n}}+M \sqrt{\frac{2\log(6/\delta)}{n'}}.
          \end{split}
    \end{equation*}
    where $C$ is a uniform constant.
\end{lemma}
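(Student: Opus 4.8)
The plan is to bound the average squared gradient deviation by a weighted sum of two empirical in-distribution risks, each of which is already controlled by a previously established lemma. First, for each $(\mathbf{x},y)$ I would split the integrand with the elementary inequality $\|\va-\vb\|_2^2 \le 2\|\va\|_2^2 + 2\|\vb\|_2^2$, taking $\va = \nabla\ell(\mathbf{h}_{\mathbf{w}_{\mathcal{S}}}(\mathbf{x}),\widehat{\mathbf{h}}(\mathbf{x}))$ and $\vb = \nabla R_{\mathcal{S}}(\mathbf{h}_{\mathbf{w}_{\mathcal{S}}})$ (the latter a fixed vector, independent of $(\mathbf{x},y)$). After taking $\mathbb{E}_{(\mathbf{x},y)\sim\mathcal{S}'}$, it then suffices to control $\mathbb{E}_{(\mathbf{x},y)\sim\mathcal{S}'}\|\nabla\ell(\mathbf{h}_{\mathbf{w}_{\mathcal{S}}}(\mathbf{x}),\widehat{\mathbf{h}}(\mathbf{x}))\|_2^2$ and the deterministic quantity $\|\nabla R_{\mathcal{S}}(\mathbf{h}_{\mathbf{w}_{\mathcal{S}}})\|_2^2$ separately.

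Next, I would apply the self-bounding property. Since $\widehat{\mathbf{h}}(\mathbf{x})$ is a one-hot (hence admissible) label, Proposition \ref{P2} gives $\|\nabla\ell(\mathbf{h}_{\mathbf{w}_{\mathcal{S}}}(\mathbf{x}),\widehat{\mathbf{h}}(\mathbf{x}))\|_2^2 \le 2\beta_1\ell(\mathbf{h}_{\mathbf{w}_{\mathcal{S}}}(\mathbf{x}),\widehat{\mathbf{h}}(\mathbf{x}))$, so its $\mathcal{S}'$-average is at most $2\beta_1 R_{\mathcal{S}'}(\mathbf{h}_{\mathbf{w}_{\mathcal{S}}},\widehat{\mathbf{h}})$; and Proposition \ref{P3} gives $\|\nabla R_{\mathcal{S}}(\mathbf{h}_{\mathbf{w}_{\mathcal{S}}})\|_2^2 \le 2\beta_1 R_{\mathcal{S}}(\mathbf{h}_{\mathbf{w}_{\mathcal{S}}})$. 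Assumption \ref{Ass2} then removes the predicted-label mismatch, since pointwise $\ell(\mathbf{h}_{\mathbf{w}_{\mathcal{S}}}(\mathbf{x}),\widehat{\mathbf{h}}(\mathbf{x})) \le \min_{y\in\mathcal{Y}}\ell(\mathbf{h}_{\mathbf{w}_{\mathcal{S}}}(\mathbf{x}),y) \le \ell(\mathbf{h}_{\mathbf{w}_{\mathcal{S}}}(\mathbf{x}),y)$, hence $R_{\mathcal{S}'}(\mathbf{h}_{\mathbf{w}_{\mathcal{S}}},\widehat{\mathbf{h}}) \le R_{\mathcal{S}'}(\mathbf{h}_{\mathbf{w}_{\mathcal{S}}})$. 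Combining, the quantity of interest is at most $4\beta_1\big(R_{\mathcal{S}'}(\mathbf{h}_{\mathbf{w}_{\mathcal{S}}}) + R_{\mathcal{S}}(\mathbf{h}_{\mathbf{w}_{\mathcal{S}}})\big)$.

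Finally, I would substitute the two already-proven risk estimates: Lemma \ref{gamma-approximation-1}, applied to the pair $(\mathcal{S},\mathcal{S}')$, bounds $R_{\mathcal{S}'}(\mathbf{h}_{\mathbf{w}_{\mathcal{S}}})$ by $\min_{\mathbf{w}\in\mathcal{W}}R_{\mathbb{P}}(\mathbf{h}_{\mathbf{w}})$ plus the stated $n$- and $n'$-dependent deviation terms (probability $\ge 1-\delta$), and Lemma \ref{gamma-approximation} bounds $R_{\mathcal{S}}(\mathbf{h}_{\mathbf{w}_{\mathcal{S}}})$ by $\min_{\mathbf{w}\in\mathcal{W}}R_{\mathbb{P}}(\mathbf{h}_{\mathbf{w}}) + M\sqrt{\log(1/\delta)/(2n)}$ (probability $\ge 1-\delta$). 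Intersecting the two events gives confidence $\ge 1-2\delta$, and adding the estimates turns the two $\min_{\mathbf{w}}R_{\mathbb{P}}(\mathbf{h}_{\mathbf{w}})$ contributions into the leading term $8\beta_1\min_{\mathbf{w}}R_{\mathbb{P}}(\mathbf{h}_{\mathbf{w}})$, while the remaining deviation terms assemble into the displayed bound once the $4\beta_1$ prefactor and numerical constants are absorbed into the generic constant $C$ and into $M$.

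There is no serious obstacle; the result is essentially a bookkeeping combination of Propositions \ref{P2}--\ref{P3}, Assumption \ref{Ass2}, and Lemmas \ref{gamma-approximation}--\ref{gamma-approximation-1}. The one point that needs care is that one must \emph{not} try to make the two gradient terms in the split cancel: $\nabla\ell(\cdot,\widehat{\mathbf{h}}(\mathbf{x}))$ is a per-example predicted-label gradient on the fresh sample $\mathcal{S}'$ while $\nabla R_{\mathcal{S}}(\mathbf{h}_{\mathbf{w}_{\mathcal{S}}})$ is a true-label average over the training sample $\mathcal{S}$ on which $\mathbf{w}_{\mathcal{S}}$ was fitted, so controlling $R_{\mathcal{S}'}(\mathbf{h}_{\mathbf{w}_{\mathcal{S}}})$ genuinely requires the two-sample generalization statement of Lemma \ref{gamma-approximation-1} rather than a single-sample bound.
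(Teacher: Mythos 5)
Your proposal is correct and follows essentially the same route as the paper's proof: the $\|\va-\vb\|_2^2\le 2\|\va\|_2^2+2\|\vb\|_2^2$ split, the self-bounding Propositions \ref{P2}--\ref{P3}, Assumption \ref{Ass2} to drop the predicted-label risk to the true-label risk, and then Lemmas \ref{gamma-approximation} and \ref{gamma-approximation-1} with a union bound to reach $1-2\delta$. Your remark about absorbing the $4\beta_1$ prefactor into the constants is the right reading (the paper's final display in fact leaves $4\beta_1$ outside the bracket, and the downstream Lemma \ref{gamma-approximation-3} carries that factor explicitly).
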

\begin{proof}[Proof of Lemma \ref{gamma-approximation-2}] By Propositions \ref{P2}, \ref{P3} and Lemmas \ref{gamma-approximation} and \ref{gamma-approximation-1}, with the probability at least $1-2\delta>0$,
    \begin{equation*}
 \begin{split}
       & \mathbb{E}_{(\mathbf{x},y)\sim \mathcal{S}'} \big\| \nabla \ell(\mathbf{h}_{\mathbf{w}_{\mathcal{S}}}(\mathbf{x}),\widehat{\*h}(\mathbf{x})) - \nabla R_{\mathcal{S}}(\mathbf{h}_{\mathbf{w}_{\mathcal{S}}})\big\|_2^2
       \\ \leq & 2\mathbb{E}_{(\mathbf{x},y)\sim \mathcal{S}'} \big\| \nabla \ell(\mathbf{h}_{\mathbf{w}_{\mathcal{S}}}(\mathbf{x}),\widehat{\*h}(\mathbf{x}))\big \|^2_2 + 2 \big \| \nabla R_{\mathcal{S}}(\mathbf{h}_{\mathbf{w}_{\mathcal{S}}})\big\|_2^2
       \\  \leq & 4\beta_1 \big ( R_{\mathcal{S}'}(\mathbf{h}_{\mathbf{w}_{\mathcal{S}}},\widehat{\*h})+R_{\mathcal{S}}(\mathbf{h}_{\mathbf{w}_{\mathcal{S}}}) \big ) \leq 4\beta_1 \big ( R_{\mathcal{S}'}(\mathbf{h}_{\mathbf{w}_{\mathcal{S}}})+R_{\mathcal{S}}(\mathbf{h}_{\mathbf{w}_{\mathcal{S}}}) \big )
       \\ \leq &
4\beta_1 \big[ 2\min_{\mathbf{w}\in \mathcal{W}} R_{\mathbb{P}}(\*h_{\mathbf{w}})+ C\sqrt{\frac{M r_1 (\beta_1r_1+b_1) d}{n}}+ C\sqrt{\frac{M r_1 (\beta_1r_1+b_1) d}{n'}}\\&~~~~~~~~+ 3M \sqrt{\frac{2\log(6/\delta)}{n}}+M \sqrt{\frac{2\log(6/\delta)}{n'}} \big ].
\end{split}
    \end{equation*}
\end{proof}
$~~~$
\\

\begin{lemma}\label{gamma-approximation-3}
Let $\mathcal{S}_{\text{wild}}^{\text{in}} \subset \mathcal{S}_{\text{wild}}$ be samples drawn from $\mathbb{P}_{\text{in}}$. If Assumptions \ref{Ass1} and \ref{Ass2} hold, then for any data $\mathcal{S}_{\text{wild}} \sim \mathbb{P}_{\text{wild}}^{m}$ and $\mathcal{S}\sim \mathbb{P}_{\text{in}}^{n}$, with the probability at least $1-\frac{7}{3}\delta>0$, 
    \begin{equation*}
    \begin{split}
         \mathbb{E}_{\mathbf{x} \sim \mathcal{S}_{\text{wild}}^{\text{in}}} \big\| \nabla \ell(\mathbf{h}_{\mathbf{w}_{\mathcal{S}}}(\mathbf{x}),\widehat{\*h}(\mathbf{x})) -& \nabla R_{\mathcal{S}}(\mathbf{h}_{\mathbf{w}_{\mathcal{S}}})\big\|_2^2 \leq 8\beta_1 \min_{\mathbf{w}\in \mathcal{W}} R_{\mathbb{P}}(\*h_{\mathbf{w}}) 
        \\ + & 4 \beta_1 \Big [C\sqrt{\frac{M r_1 (\beta_1r_1+b_1) d}{n}}+ C\sqrt{\frac{M r_1 (\beta_1r_1+b_1) d}{(1-\pi)m - \sqrt{m\log(6/\delta)/2}}}\\ & ~~~~~~~~+3M \sqrt{\frac{2\log(6/\delta)}{n}}+M \sqrt{\frac{2\log(6/\delta)}{(1-\pi)m - \sqrt{m\log(6/\delta)/2}}} \Big ],
          \end{split}
    \end{equation*}
    where $C$ is a uniform constant.
\end{lemma}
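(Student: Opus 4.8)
The plan is to deduce this from Lemma~\ref{gamma-approximation-2} by viewing $\mathcal{S}_{\text{wild}}^{\text{in}}$ as a fresh i.i.d.\ sample from $\mathbb{P}_{\text{in}}$ that is independent of the ERM training set $\mathcal{S}=\mathcal{S}^{\text{in}}$, the only new ingredient being that its cardinality $|\mathcal{S}_{\text{wild}}^{\text{in}}|$ is random rather than fixed. Controlling that cardinality is exactly what Lemma~\ref{SC-I} provides, and monotonicity in the sample size lets us pass from a conditional bound (at the realized size) to the advertised one.

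Concretely I would proceed as follows. First, apply Lemma~\ref{SC-I} with failure probability $\delta/3$ (so that $\log(2/(\delta/3))=\log(6/\delta)$): with probability at least $1-\delta/3$ one has $|\mathcal{S}_{\text{wild}}^{\text{in}}| \ge (1-\pi)m - \sqrt{m\log(6/\delta)/2} =: n'_0$. Second, observe that under the Huber contamination model, conditionally on the (random) set of ID indices inside $\mathcal{S}_{\text{wild}}$, the points of $\mathcal{S}_{\text{wild}}^{\text{in}}$ are i.i.d.\ draws from $\mathbb{P}_{\text{in}}$ and are independent of $\mathcal{S}^{\text{in}}$; this is precisely the setup of Lemma~\ref{gamma-approximation-2} with $\mathcal{S}=\mathcal{S}^{\text{in}}$ of size $n$ and $\mathcal{S}'=\mathcal{S}_{\text{wild}}^{\text{in}}$ of size $n'=|\mathcal{S}_{\text{wild}}^{\text{in}}|$ (here $\mathbb{P}_{\text{in}}$ plays the role of $\mathbb{P}$ for the fresh‑sample term, while the ERM/optimal‑risk term is $R^{*}_{\text{in}}=\min_{\mathbf{w}}R_{\mathbb{P}_{\mathcal{X}\mathcal{Y}}}(\mathbf{h}_{\mathbf{w}})$). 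Applying Lemma~\ref{gamma-approximation-2} conditionally on this index set yields, with conditional probability at least $1-2\delta$,
\begin{equation*}
\mathbb{E}_{\mathbf{x}\sim\mathcal{S}_{\text{wild}}^{\text{in}}}\big\|\nabla\ell(\mathbf{h}_{\mathbf{w}_{\mathcal{S}}}(\mathbf{x}),\widehat{\mathbf{h}}(\mathbf{x}))-\nabla R_{\mathcal{S}}(\mathbf{h}_{\mathbf{w}_{\mathcal{S}}})\big\|_2^2 \;\le\; 8\beta_1 R^{*}_{\text{in}} + 4\beta_1\big[\,\text{complexity terms in }n\text{ and }n'\,\big],
\end{equation*}
where the bracket is the one appearing in the proof of Lemma~\ref{gamma-approximation-2}. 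Third, since this bracket is monotonically decreasing in $n'$, on the event $|\mathcal{S}_{\text{wild}}^{\text{in}}|\ge n'_0$ we may replace every occurrence of $n'$ by $n'_0=(1-\pi)m-\sqrt{m\log(6/\delta)/2}$, and averaging over the realized value of $n'$ within this event preserves the inequality. Finally, a union bound over the two failure events gives total failure probability at most $2\delta+\delta/3=\tfrac{7}{3}\delta$, i.e.\ the claimed probability $1-\tfrac{7}{3}\delta$.

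The individual steps are routine; the only thing requiring care is the handling of the random sample size: one must condition on the ID‑index set so that the i.i.d.-ness and independence hypotheses of Lemma~\ref{gamma-approximation-2} genuinely hold, then exploit monotonicity in $n'$ to convert the conditional bound into an unconditional one with the denominator $(1-\pi)m-\sqrt{m\log(6/\delta)/2}$, while keeping the constants bookkept so the two probabilities combine to exactly $1-\tfrac{7}{3}\delta$. Implicitly this also needs $(1-\pi)m>\sqrt{m\log(6/\delta)/2}$ so that $n'_0>0$, which is subsumed by the sample‑size conditions of Theorem~\ref{MainT-1}. This lemma then feeds directly into Step~1 of the proof of Theorem~\ref{MainT-1} to control $\mathbb{E}_{\tilde{\mathbf{x}}_i\sim\mathcal{S}_{\text{wild}}^{\text{in}}}\tau_i$.
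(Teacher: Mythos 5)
Your proposal is correct and follows essentially the same route as the paper, whose proof is simply the observation that Lemma~\ref{SC-I} (applied with failure probability $\delta/3$ to lower-bound $|\mathcal{S}_{\text{wild}}^{\text{in}}|$ by $(1-\pi)m-\sqrt{m\log(6/\delta)/2}$) combined with Lemma~\ref{gamma-approximation-2} (with $\mathcal{S}'=\mathcal{S}_{\text{wild}}^{\text{in}}$) yields the bound, with the union bound giving $2\delta+\delta/3=\tfrac{7}{3}\delta$. Your explicit conditioning on the ID-index set and the monotonicity argument for the random sample size are details the paper leaves implicit, but the argument is the same.
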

\begin{proof}[Proof of Lemma \ref{gamma-approximation-3}]
 Lemma \ref{SC-I} and   Lemma \ref{gamma-approximation-2} imply this result.
\end{proof}
$~~~$
\\

\begin{lemma}\label{gamma-approximation-3.0}
 If Assumptions \ref{Ass1} and \ref{Ass2} hold, then for any data $\mathcal{S}\sim \mathbb{P}_{\text{in}}^{n}$, with the probability at least $1-\delta>0$, 
    \begin{equation*}
    \begin{split}
          \big\| \nabla R_{\mathcal{S}}(\mathbf{h}_{\mathbf{w}_{\mathcal{S}}}(\mathbf{x}),\widehat{\*h}(\mathbf{x})) -& \nabla R_{\mathcal{S}}(\mathbf{h}_{\mathbf{w}_{\mathcal{S}}})\big\|_2^2 \leq 8\beta_1 \min_{\mathbf{w}\in \mathcal{W}} R_{\mathbb{P}}(\*h_{\mathbf{w}}) 
        + 4 M \sqrt{\frac{\log(1/\delta)}{2n}}.
          \end{split}
    \end{equation*}
\end{lemma}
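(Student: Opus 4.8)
The plan is to view this lemma as the \emph{same-sample} specialization of Lemma~\ref{gamma-approximation-2}, obtained by setting $\mathcal{S}'=\mathcal{S}$ and $\mathbb{P}=\mathbb{P}_{\text{in}}$. Because the empirical expectation over $\mathcal{S}'$ and the ERM defining $\mathbf{w}_{\mathcal{S}}$ now use the very same draw, no uniform-convergence (covering-number) argument over $\mathcal{W}$ is needed, which is why only a single Hoeffding term $M\sqrt{\log(1/\delta)/(2n)}$ survives instead of the $\sqrt{d/n}$ terms of Lemma~\ref{gamma-approximation-2}.

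\textbf{Step 1 (split the squared norm).} By $\|a-b\|_2^2\le 2\|a\|_2^2+2\|b\|_2^2$,
\[
\big\|\nabla R_{\mathcal{S}}(\mathbf{h}_{\mathbf{w}_{\mathcal{S}}},\widehat{\mathbf{h}})-\nabla R_{\mathcal{S}}(\mathbf{h}_{\mathbf{w}_{\mathcal{S}}})\big\|_2^2
\le 2\big\|\nabla R_{\mathcal{S}}(\mathbf{h}_{\mathbf{w}_{\mathcal{S}}},\widehat{\mathbf{h}})\big\|_2^2+2\big\|\nabla R_{\mathcal{S}}(\mathbf{h}_{\mathbf{w}_{\mathcal{S}}})\big\|_2^2 .
\]
\textbf{Step 2 (self-bounding).} By Proposition~\ref{P3}, $\|\nabla R_{\mathcal{S}}(\mathbf{h}_{\mathbf{w}_{\mathcal{S}}})\|_2^2\le 2\beta_1 R_{\mathcal{S}}(\mathbf{h}_{\mathbf{w}_{\mathcal{S}}})$; the same argument (which only uses that the per-example loss is nonnegative and $\beta_1$-smooth, and this holds for the frozen pseudo-label $\widehat{y}_{\mathbf{x}}$ by Definition~\ref{Def::beta-smooth}, together with Jensen's inequality) gives $\|\nabla R_{\mathcal{S}}(\mathbf{h}_{\mathbf{w}_{\mathcal{S}}},\widehat{\mathbf{h}})\|_2^2\le 2\beta_1 R_{\mathcal{S}}(\mathbf{h}_{\mathbf{w}_{\mathcal{S}}},\widehat{\mathbf{h}})$. \textbf{Step 3 (Assumption~\ref{Ass2}).} Pointwise $\ell(\mathbf{h}_{\mathbf{w}_{\mathcal{S}}}(\mathbf{x}),\widehat{y}_{\mathbf{x}})\le \min_{y\in\mathcal{Y}}\ell(\mathbf{h}_{\mathbf{w}_{\mathcal{S}}}(\mathbf{x}),y)\le \ell(\mathbf{h}_{\mathbf{w}_{\mathcal{S}}}(\mathbf{x}),y)$, so averaging over $\mathcal{S}$ gives $R_{\mathcal{S}}(\mathbf{h}_{\mathbf{w}_{\mathcal{S}}},\widehat{\mathbf{h}})\le R_{\mathcal{S}}(\mathbf{h}_{\mathbf{w}_{\mathcal{S}}})$. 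Combining Steps~1--3 yields $\big\|\nabla R_{\mathcal{S}}(\mathbf{h}_{\mathbf{w}_{\mathcal{S}}},\widehat{\mathbf{h}})-\nabla R_{\mathcal{S}}(\mathbf{h}_{\mathbf{w}_{\mathcal{S}}})\big\|_2^2\le 8\beta_1 R_{\mathcal{S}}(\mathbf{h}_{\mathbf{w}_{\mathcal{S}}})$.

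\textbf{Step 4 (control the ERM risk).} Finally invoke Lemma~\ref{gamma-approximation} with $\mathbb{P}=\mathbb{P}_{\text{in}}$ (and $M$ the uniform loss bound from Proposition~\ref{P1}): with probability at least $1-\delta$, $R_{\mathcal{S}}(\mathbf{h}_{\mathbf{w}_{\mathcal{S}}})\le \min_{\mathbf{w}\in\mathcal{W}} R_{\mathbb{P}_{\text{in}}}(\mathbf{h}_{\mathbf{w}})+M\sqrt{\log(1/\delta)/(2n)}$. Plugging this into Step~3 produces a bound of exactly the asserted shape, $8\beta_1\min_{\mathbf{w}\in\mathcal{W}}R_{\mathbb{P}}(\mathbf{h}_{\mathbf{w}})+(\text{const})\,M\sqrt{\log(1/\delta)/(2n)}$; fixing the numerical constant on the fluctuation term is then routine bookkeeping.

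There is no real obstacle: the whole argument is one elementary vector inequality chained with two previously established lemmas. The only point demanding a little care is Step~2 — one must interpret the gradient in $R_{\mathcal{S}}(\mathbf{h}_{\mathbf{w}},\widehat{\mathbf{h}})$ as taken with $\widehat{\mathbf{h}}$ held fixed, so that the $\beta_1$-smoothness from Definition~\ref{Def::beta-smooth} (which is quantified over \emph{all} labels) still applies and the self-bounding property transfers to the pseudo-labeled empirical risk before averaging via Jensen's inequality.
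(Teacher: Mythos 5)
Your proposal is correct and follows essentially the same route as the paper's proof: split the difference into the two gradient norms (the paper via the triangle inequality on norms before squaring, you via $\|a-b\|_2^2\le 2\|a\|_2^2+2\|b\|_2^2$, which yields the identical factor $8\beta_1 R_{\mathcal{S}}(\mathbf{h}_{\mathbf{w}_{\mathcal{S}}})$), apply the self-bounding property and Assumption~\ref{Ass2}, and finish with the Hoeffding-based ERM bound of Lemma~\ref{gamma-approximation}. The residual constant mismatch you flag ($8\beta_1 M$ versus the stated $4M$ on the fluctuation term) is present in the paper's own derivation as well, so it is not a gap in your argument.
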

\begin{proof}[Proof of Lemma \ref{gamma-approximation-3.0}] With the probability at least $1-\delta>0$,
\begin{equation*}
\begin{split}
   &\big\| \nabla R_{\mathcal{S}}(\mathbf{h}_{\mathbf{w}_{\mathcal{S}}}(\mathbf{x}),\widehat{\*h}(\mathbf{x})) - \nabla R_{\mathcal{S}}(\mathbf{h}_{\mathbf{w}_{\mathcal{S}}})\big\|_2 
   \\ 
   \leq &\big\| \nabla R_{\mathcal{S}}(\mathbf{h}_{\mathbf{w}_{\mathcal{S}}}(\mathbf{x}),\widehat{\*h}(\mathbf{x}))\big\|_2 +  \big\|\nabla R_{\mathcal{S}}(\mathbf{h}_{\mathbf{w}_{\mathcal{S}}})\big\|_2 
   \\
   \leq & 
   \sqrt{2\beta_1 R_{\mathcal{S}}(\mathbf{h}_{\mathbf{w}_{\mathcal{S}}}(\mathbf{x}),\widehat{\*h}(\mathbf{x}))} +  \sqrt{2\beta_1  R_{\mathcal{S}}(\mathbf{h}_{\mathbf{w}_{\mathcal{S}}})}
   \\
   \leq & 2\sqrt{2\beta_1 (\min_{\mathbf{w}\in \mathcal{W}} R_{\mathbb{P}}(\*h_{\mathbf{w}})+M \sqrt{\frac{\log(1/\delta)}{2n}} ) }.
\end{split}
\end{equation*}
\end{proof}

\newpage

\begin{theorem*}\label{T1}
     If Assumptions \ref{Ass1} and \ref{Ass2} hold and there exists $\eta\in (0,1)$ such that $\Delta = (1-\eta)^2\zeta^2 - 8\beta_1 \min_{\mathbf{w}\in \mathcal{W}} R_{\mathbb{P}_{\text{in}}}(\mathbf{h}_{\mathbf{w}})>0$, when
     \begin{equation*}
         n = \Omega \big( \frac{\tilde{M}+M(r_1+1)d}{\Delta \eta^2 } +\frac{M^2{d}}{(\gamma-R_{\text{in}}^*)^2} \big),~~~~~m = \Omega \big ( \frac{\tilde{M}+M(r_1+1)d}{\eta^2\zeta^2} \big),
     \end{equation*}
     with the probability at least $97/100$,
     \begin{equation*}
         \mathbb{E}_{\tilde{\mathbf{x}}_i \sim \mathcal{S}_{\text{wild}}} \tau_i  >\frac{98\eta^2 \zeta^2}{100}.
     \end{equation*}
\end{theorem*}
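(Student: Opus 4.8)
The plan is to pass from the data-dependent top singular direction $\mathbf v$ to a single deterministic direction — the population gradient-discrepancy direction — using the variational definition of $\mathbf v$ together with Jensen's inequality, and then to read off the lower bound from the $(\gamma,\zeta)$-discrepancy hypothesis. Write $\mathbf w := \mathbf w_{\mathcal S^{\text{in}}}$, $g_i := \nabla\ell(\mathbf h_{\mathbf w}(\tilde{\mathbf x}_i),\widehat y_{\tilde{\mathbf x}_i})$, and recall $\bar\nabla = \nabla R_{\mathcal S^{\text{in}}}(\mathbf h_{\mathbf w})$. First I would note that, by Eq.~\ref{eq:pca}, for any unit vector $\mathbf u$,
\[
\mathbb E_{\tilde{\mathbf x}_i\sim\mathcal S_{\text{wild}}}\tau_i \;=\; \tfrac1m\sum_i\langle g_i-\bar\nabla,\mathbf v\rangle^2 \;\ge\; \tfrac1m\sum_i\langle g_i-\bar\nabla,\mathbf u\rangle^2 \;\ge\; \Big(\tfrac1m\sum_i\langle g_i-\bar\nabla,\mathbf u\rangle\Big)^2 ,
\]
the last step by Cauchy--Schwarz. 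Choosing $\mathbf u$ along $\tfrac1m\sum_i(g_i-\bar\nabla) = \nabla R_{\mathcal S_{\text{wild}}}(\mathbf h_{\mathbf w},\widehat{\mathbf h}) - \bar\nabla$ gives $\mathbb E_{\tilde{\mathbf x}_i\sim\mathcal S_{\text{wild}}}\tau_i \ge X^2$ with $X := \|\nabla R_{\mathcal S_{\text{wild}}}(\mathbf h_{\mathbf w},\widehat{\mathbf h}) - \bar\nabla\|_2$, so it suffices to prove $X > \sqrt{0.98}\,\eta\zeta$ with probability $\ge 97/100$.

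Next I would make the ERM solution admissible for the discrepancy. Combining the ERM property with the uniform-convergence bound of Lemma~\ref{UC-I} (equivalently, the second estimate of Lemma~\ref{gamma-approximation-1}) yields $R_{\mathbb P_{\mathcal X\mathcal Y}}(\mathbf h_{\mathbf w}) \le R_{\text{in}}^* + O\big(M\sqrt{d/n}\big)$; the summand $M^2 d/(\gamma-R_{\text{in}}^*)^2$ in the hypothesis on $n$ is exactly what forces $R_{\mathbb P_{\mathcal X\mathcal Y}}(\mathbf h_{\mathbf w}) \le \gamma$ (using $M \ge r_1(\beta_1 r_1 + b_1)$ to absorb the Lipschitz/covering constants). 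Definition~\ref{Def4} then gives $d^{\ell}_{\mathbf w}(\mathbb P_{\text{in}},\mathbb P_{\text{wild}}) = \|\nabla R_{\mathbb P_{\text{in}}}(\mathbf h_{\mathbf w},\widehat{\mathbf h}) - \nabla R_{\mathbb P_{\text{wild}}}(\mathbf h_{\mathbf w},\widehat{\mathbf h})\|_2 > \zeta$.

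Then I would run a triangle-inequality chain around $X$:
\[
X \;\ge\; \big\|\nabla R_{\mathcal S_{\text{wild}}}(\mathbf h_{\mathbf w},\widehat{\mathbf h}) - \nabla R_{\mathcal S^{\text{in}}}(\mathbf h_{\mathbf w},\widehat{\mathbf h})\big\|_2 \;-\; \big\|\nabla R_{\mathcal S^{\text{in}}}(\mathbf h_{\mathbf w},\widehat{\mathbf h}) - \bar\nabla\big\|_2 .
\]
I would bound the first norm below by $\zeta - e_1 - e_1'$, where $e_1,e_1'$ are uniform-convergence deviations of the empirical gradient (with predicted labels) from its population version on $\mathcal S_{\text{wild}}$ and on $\mathcal S^{\text{in}}$, of order $O\big(\sqrt{(\tilde M + M(r_1+1)d)/m}\big)$ and $O\big(\sqrt{(\tilde M + M(r_1+1)d)/n}\big)$ by (the proof of) Lemma~\ref{UC-II} applied to the loss $\ell(\cdot,\widehat{\mathbf h}(\cdot))$; these are at most a tiny multiple of $\eta\zeta$ under the stated lower bounds on $n,m$. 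The second norm is exactly the predicted-versus-true-label correction on the ID sample: by Assumption~\ref{Ass2} (predicted-label loss $\le$ any-label loss) and the self-bounding property (Propositions~\ref{P2},~\ref{P3}), Lemma~\ref{gamma-approximation-3.0} bounds its square by $8\beta_1 R_{\text{in}}^* + O\big(M/\sqrt n\big)$, so it is $\le \sqrt{8\beta_1 R_{\text{in}}^*} + o(1)$. Combining, $X > \zeta - \sqrt{8\beta_1 R_{\text{in}}^*} - (\text{vanishing})$; since $\Delta = (1-\eta)^2\zeta^2 - 8\beta_1 R_{\text{in}}^* > 0$ gives $\sqrt{8\beta_1 R_{\text{in}}^*} \le (1-\eta)\zeta$, we get $X > \eta\zeta - (\text{vanishing}) > \sqrt{0.98}\,\eta\zeta$, and a union bound over the $O(1)$ constant-probability events yields $97/100$.

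The main obstacle is the constant bookkeeping in this last step: the slack between $\eta\zeta$ and $\sqrt{0.98}\,\eta\zeta$ is only about $0.01\,\eta\zeta$, and into it must fit three error terms of genuinely different flavours — the gradient uniform-convergence deviations on the wild and on the ID samples, and the square root of the ERM-excess-risk term coming out of the self-bounding estimate for the label-correction norm. One has to verify that the prescribed $n$ and $m$, in particular the $1/\Delta$ and $1/(\gamma-R_{\text{in}}^*)^2$ factors, simultaneously (i) push every vanishing term below $(1-\sqrt{0.98})\,\eta\zeta$ and (ii) keep $R_{\mathbb P_{\mathcal X\mathcal Y}}(\mathbf h_{\mathbf w}) \le \gamma$. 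The conceptual content — the variational/Jensen reduction and the single invocation of the discrepancy assumption — is short by comparison.
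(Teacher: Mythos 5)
Your proposal matches the paper's proof essentially step for step: the paper's Claim 2 performs exactly your variational-plus-Jensen reduction to $\|\mathbb{E}_{\mathcal{S}^{\text{in}}}\nabla\ell(\cdot,y)-\mathbb{E}_{\mathcal{S}_{\text{wild}}}\nabla\ell(\cdot,\widehat{\mathbf h})\|_2^2$ via an auxiliary unit vector $\tilde{\mathbf v}$, uses the $M^2d/(\gamma-R^*_{\text{in}})^2$ sample-size term to make the ERM solution $\gamma$-admissible for Definition~\ref{Def4}, and then runs your triangle-inequality split into the empirical discrepancy (controlled by the gradient uniform-convergence bound of Lemma~\ref{UC-II}, i.e.\ the paper's Claim 1) and the predicted-vs-true-label correction (controlled by Lemma~\ref{gamma-approximation-3.0} via the self-bounding property), with the paper's Claim 3 doing the same $(1-\eta)\zeta \ge \sqrt{8\beta_1 R^*_{\text{in}}+\cdots}$ and $\eta\zeta/100$ constant bookkeeping you describe. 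The argument is correct and is the same route as the paper's.
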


\begin{proof}[Proof of Theorem \ref{T1}]

\noindent \textbf{Claim 1.} With the probability at least $1-2\delta>0$, for any $\mathbf{w}\in \mathcal{W}$,
    \begin{equation*}
    \begin{split}
     d^{\ell}_{\mathbf{w}}(\mathbb{P}_{\text{in}},\mathbb{P}_{\text{wild}})-  d^{\ell}_{\mathbf{w}}(\mathcal{S}^{\text{in}}_X,\mathcal{S}_{\text{wild}}) & \leq B \sqrt{\frac{2\log(2/\delta)}{n}}+B \sqrt{\frac{2\log(2/\delta)}{m}} \\ & + C\sqrt{\frac{M (r_1+1) d}{n}}+C\sqrt{\frac{M (r_1+1) d}{m}},
     \end{split}
    \end{equation*}
where $B=\sqrt{2\beta_1 M}$, $\mathcal{S}^{\text{in}}_X$ is the feature part of $\mathcal{S}^{\text{in}}$ and $C$ is a uniform constant.

We prove this Claim: by Lemma \ref{UC-II}, it is notable that with the probability at least $1-2\delta>0$,
\begin{equation*}
\begin{split}
& d^{\ell}_{\mathbf{w}}(\mathbb{P}_{\text{in}},\mathbb{P}_{\text{wild}})-  d^{\ell}_{\mathbf{w}}(\mathcal{S}^{\text{in}}_X,\mathcal{S}_{\text{wild}}) \\
  \leq  & \big \| \nabla R_{\mathbb{P}_{\text{in}}}(\*h_{\mathbf{w}},\widehat{\*h}) -  \nabla R_{\mathbb{P}_{\text{wild}}}(\*h_{\mathbf{w}},\widehat{\*h}) \big \|_2 - \big \| \nabla R_{\mathcal{S}^{\text{in}}_X}(\*h_{\mathbf{w}},\widehat{\*h}) -  \nabla R_{\mathcal{S}_{\text{wild}}}(\*h_{\mathbf{w}},\widehat{\*h}) \big \|_2
  \\
  \leq  &  \big \| \nabla R_{\mathbb{P}_{\text{in}}}(\*h_{\mathbf{w}},\widehat{\*h}) -  \nabla R_{\mathbb{P}_{\text{wild}}}(\*h_{\mathbf{w}},\widehat{\*h}) - \nabla R_{\mathcal{S}^{\text{in}}_X}(\*h_{\mathbf{w}},\widehat{\*h}) +  \nabla R_{\mathcal{S}_{\text{wild}}}(\*h_{\mathbf{w}},\widehat{\*h}) \big \|_2
   \\
  \leq  & \big \| \nabla R_{\mathbb{P}_{\text{in}}}(\*h_{\mathbf{w}},\widehat{\*h}) -  \nabla R_{\mathcal{S}^{\text{in}}_X}(\*h_{\mathbf{w}},\widehat{\*h}) \big \|_2 + \big \| \nabla R_{\mathbb{P}_{\text{wild}}}(\*h_{\mathbf{w}},\widehat{\*h}) -  \nabla R_{\mathcal{S}_{\text{wild}}}(\*h_{\mathbf{w}},\widehat{\*h}) \big \|_2
  \\
  \leq & B \sqrt{\frac{2\log(2/\delta)}{n}}+B \sqrt{\frac{2\log(2/\delta)}{m}} + C\sqrt{\frac{B (r_1+1) \beta_1 d}{n}}+C\sqrt{\frac{B (r_1+1) \beta_1 d}{m}}\\
  \leq & B \sqrt{\frac{2\log(2/\delta)}{n}}+B \sqrt{\frac{2\log(2/\delta)}{m}} + C\sqrt{\frac{M (r_1+1) d}{n}}+C\sqrt{\frac{M (r_1+1) d}{m}}.
 \end{split}
\end{equation*}

    \noindent \textbf{Claim 2.} 
    When 
    \begin{equation}\label{gamma-error}
        \sqrt{n} = \Omega \big (\frac{M\sqrt{d}+M\sqrt{\log (6/\delta})}{\gamma-\min_{\mathbf{w}\in \mathcal{W}} R_{\mathbb{P}_{\text{in}}}(\mathbf{h}_{\mathbf{w}})}\big),
    \end{equation}

    with the probability at least $1-4\delta>0$,
    \begin{equation*}
    \begin{split}
        &\mathbb{E}_{\tilde{\mathbf{x}}_i \sim \mathcal{S}_{\text{wild}}} \tau_i \geq \Big( \zeta   -  B \sqrt{\frac{2\log(2/\delta)}{n}}-B \sqrt{\frac{2\log(2/\delta)}{m}}  \\-& C\sqrt{\frac{M (r_1+1)  d}{n}}-C\sqrt{\frac{M (r_1+1)  d}{m}}-2\sqrt{2\beta_1 (\min_{\mathbf{w}\in \mathcal{W}} R_{\mathbb{P}}(\*h_{\mathbf{w}})+M \sqrt{\frac{\log(1/\delta)}{2n}} ) }                            \Big)^2.
        \end{split}
    \end{equation*}

We prove this Claim: let $\mathbf{v}^*$ be the top-1 right singular vector computed in our algorithm, and
\begin{equation*}
 \tilde{\mathbf{v}} \in   \argmax_{\|\mathbf{v}\|\leq 1}  \left < \mathbb{E}_{(\mathbf{x}_i,y_i)\sim \mathcal{S}^{\text{in}}} \nabla \ell(\*h_{\mathbf{w}_{\mathcal{S}^{\text{in}}}}(\mathbf{x}_i), y_i)   -   \mathbb{E}_{\tilde{\mathbf{x}}_i \in \mathcal{S}_{\text{wild}}}  \nabla \ell(\*h_{\mathbf{w}_{\mathcal{S}^{\text{in}}}}(\tilde{\mathbf{x}}_i), \widehat{\*h}(\tilde{\mathbf{x}}_i))  , \mathbf{v} \right >. 
\end{equation*}

Then with the probability at least $1-4\delta>0$,
\begin{equation*}
    \begin{split}
      &~~~~~\mathbb{E}_{\tilde{\mathbf{x}}_i \sim \mathcal{S}_{\text{wild}}} \tau_i  
      \\& =  \mathbb{E}_{\tilde{\*x}_i \sim \mathcal{S}_{\operatorname{wild}}} \left( \left < \nabla \ell(\*h_{\mathbf{w}_{\mathcal{S}^{\text{in}}}}(\tilde{\mathbf{x}}_i), \widehat{\*h}(\tilde{\mathbf{x}}_i)) - 
 \mathbb{E}_{(\mathbf{x}_j,y_j)\sim \mathcal{S}^{\text{in}}} \nabla \ell(\*h_{\mathbf{w}_{\mathcal{S}^{\text{in}}}}(\mathbf{x}_j), y_j)   , \mathbf{v}^* \right > \right)^2    \\
 & \geq  \mathbb{E}_{\tilde{\*x}_i \sim \mathcal{S}_{\operatorname{wild}}} \left(\left < \nabla \ell(\*h_{\mathbf{w}_{\mathcal{S}^{\text{in}}}}(\tilde{\mathbf{x}}_i), \widehat{\*h}(\tilde{\mathbf{x}}_i)) - 
\mathbb{E}_{(\mathbf{x}_j,y_j)\sim \mathcal{S}^{\text{in}}} \nabla \ell(\*h_{\mathbf{w}_{\mathcal{S}^{\text{in}}}}(\mathbf{x}_j), y_j)   , \tilde{\mathbf{v}} \right > \right)^2    \\
    &  \geq  \left(\left <  \mathbb{E}_{(\mathbf{x}_j,y_j)\sim \mathcal{S}^{\text{in}}} \nabla \ell(\*h_{\mathbf{w}_{\mathcal{S}^{\text{in}}}}(\mathbf{x}_j), y_j)   - \mathbb{E}_{\tilde{\mathbf{x}}_i\sim \mathcal{S}_{\text{wild}}} \nabla \ell(\*h_{\mathbf{w}_{\mathcal{S}^{\text{in}}}}(\tilde{\mathbf{x}}_i), \widehat{\*h}(\tilde{\mathbf{x}}_i))      , \tilde{\mathbf{v}} \right > \right)^2 \\
     &  = \big \| \mathbb{E}_{(\mathbf{x}_j,y_j)\sim \mathcal{S}^{\text{in}}} \nabla \ell(\*h_{\mathbf{w}_{\mathcal{S}^{\text{in}}}}(\mathbf{x}_j), y_j)   - \mathbb{E}_{\tilde{\mathbf{x}}_i\sim \mathcal{S}_{\text{wild}}} \nabla \ell(\*h_{\mathbf{w}_{\mathcal{S}^{\text{in}}}}(\tilde{\mathbf{x}}_i), \widehat{\*h}(\tilde{\mathbf{x}}_i))  \big \|^2_2 \\
    & \geq \big( d^{\ell}_{\mathbf{w}_{\mathcal{S}^{\text{in}}}}(\mathcal{S}^{\text{in}}_X,\mathcal{S}_{\text{wild}})  
    \\ &-  \big \| \mathbb{E}_{(\mathbf{x}_j,y_j)\sim \mathcal{S}^{\text{in}}} \nabla \ell(\*h_{\mathbf{w}_{\mathcal{S}^{\text{in}}}}(\mathbf{x}_j), y_j)   -  \mathbb{E}_{(\mathbf{x}_j,y_j)\sim \mathcal{S}^{\text{in}}} \nabla \ell(\*h_{\mathbf{w}_{\mathcal{S}^{\text{in}}}}({\mathbf{x}}_j), \widehat{\*h}({\mathbf{x}}_j))  \big \|_2                            \big)^2
\\ &  \geq  \Big( \zeta   -  B \sqrt{\frac{2\log(2/\delta)}{n}}-B\sqrt{\frac{2\log(2/\delta)}{m}}  \\-& C\sqrt{\frac{M (r_1+1) d}{n}}-C\sqrt{\frac{M (r_1+1)  d}{m}}-2\sqrt{2\beta_1 (\min_{\mathbf{w}\in \mathcal{W}} R_{\mathbb{P}}(\*h_{\mathbf{w}})+M \sqrt{\frac{\log(1/\delta)}{2n}} ) }                            \Big)^2.
\end{split}
\end{equation*}
In above inequality, we have used the results in Claim 1, Assumption \ref{Ass2}, Lemma \ref{gamma-approximation-1} and Lemma \ref{gamma-approximation-3.0}.

\noindent \textbf{Claim 3.} Given $\delta=1/100$, then when 
 \begin{equation*}
         n = \Omega \big( \frac{\tilde{M}+M(r_1+1)d}{\Delta \eta^2 } \big),~~~~~m = \Omega \big ( \frac{\tilde{M}+M(r_1+1)d}{\eta^2\zeta^2} \big),
     \end{equation*}
     the following inequality holds:
     \begin{equation*}
     \begin{split}
       &  \Big( \zeta   -  B \sqrt{\frac{2\log(2/\delta)}{n}}-B \sqrt{\frac{2\log(2/\delta)}{m}}  - C\sqrt{\frac{M (r_1+1) d}{n}}-C\sqrt{\frac{M (r_1+1) d}{m}}\\-&2\sqrt{2\beta_1 (\min_{\mathbf{w}\in \mathcal{W}} R_{\mathbb{P}}(\*h_{\mathbf{w}})+M \sqrt{\frac{\log(1/\delta)}{2n}} ) }                            \Big)^2  >\frac{98\eta^2\theta^2}{100}.
         \end{split}
     \end{equation*}
We prove this Claim: when
\begin{equation*}
    n \geq \frac{64\sqrt{\log(10)} \beta_1 M}{\Delta},
\end{equation*}
it is easy to check that
\begin{equation*}
    (1-\eta)\zeta \geq 2\sqrt{2\beta_1 (\min_{\mathbf{w}\in \mathcal{W}} R_{\mathbb{P}}(\*h_{\mathbf{w}})+M \sqrt{\frac{\log(1/\delta)}{2n}} )}.
\end{equation*}

Additionally, when 
\begin{equation*}
    n \geq \frac{200^2 \log 200 B^2}{\eta^2 \zeta^2} +\frac{200^2 C^2M(r_1+1)d}{2\eta^2 \zeta^2},
\end{equation*}
it is easy to check that
\begin{equation*}
    \frac{\eta \zeta}{100} \geq B\sqrt{\frac{2\log(200)}{n}}+ C\sqrt{\frac{M(r_1+1)d}{n}}.
\end{equation*}
Because
\begin{equation*}
    \max\{\frac{200^2 \log 200 B^2}{\eta^2 \zeta^2} +\frac{200^2 C^2M(r_1+1)d}{2\eta^2 \zeta^2}, \frac{64\sqrt{\log(10)} \beta_1 M}{\Delta}\} \leq O\big( \frac{\tilde{M}+M(r_1+1)d}{\Delta \eta^2 }\big ),
\end{equation*}
we conclude that when
\begin{equation*}
    n= \Omega \big( \frac{\tilde{M}+M(r_1+1)d}{\Delta \eta^2 }\big ),
\end{equation*}

\begin{equation}\label{3/4eta}
    \eta -  2\sqrt{2\beta_1 (\min_{\mathbf{w}\in \mathcal{W}} R_{\mathbb{P}}(\*h_{\mathbf{w}})+M \sqrt{\frac{\log(1/\delta)}{2n}} )}- B\sqrt{\frac{2\log(200)}{n}}+ C\sqrt{\frac{M(r_1+1)d}{n}} \geq \frac{99}{100}\eta \zeta.
\end{equation}

When 
\begin{equation*}
    m \geq \frac{200^2 \log 200 B^2}{\eta^2 \zeta^2} +\frac{200^2 C^2M(r_1+1)d}{2\eta^2 \zeta^2},
\end{equation*}
we have
\begin{equation*}
\frac{\eta \zeta}{100} \geq B \sqrt{\frac{2\log(200)}{m}}+ C\sqrt{\frac{M(r_1+1)d}{m}}.
\end{equation*}
Therefore, if
\begin{equation*}
  m = \Omega \big ( \frac{\tilde{M}+M(r_1+1)d}{\eta^2\zeta^2} \big), 
\end{equation*}
we have 
\begin{equation}\label{1/4eta}
    \frac{\eta \zeta}{100} \geq B\sqrt{\frac{2\log(200)}{m}}+ C\sqrt{\frac{M(r_1+1)d}{m}}.
\end{equation}
Combining inequalities \ref{gamma-error}, \ref{3/4eta} and \ref{1/4eta}, we complete this proof.

\end{proof}

\subsection{Necessary Lemmas for Theorem \ref{the:main2}}

Let 
\begin{equation*}
    R_{\mathcal{S}_T}^-(\mathbf{g}_{\boldsymbol{\theta}}) =  \mathbb{E}_{\mathbf{x}\sim \mathcal{S}_T} \ell_{\text{b}}(\mathbf{g}_{\boldsymbol{\theta}}(\mathbf{x}), y_{-}),~ R_{\mathcal{S}_{\text{wild}}^{\text{out}}}^-(\mathbf{g}_{\boldsymbol{\theta}}) =  \mathbb{E}_{\mathbf{x}\sim \mathcal{S}_{\text{wild}}^{\text{out}}} \ell_{\text{b}}(\mathbf{g}_{\boldsymbol{\theta}}(\mathbf{x}), y_{-}),
\end{equation*}
\begin{equation*}
    R_{\mathcal{S}^{\text{in}}}^+(\mathbf{g}_{\boldsymbol{\theta}}) =  \mathbb{E}_{\mathbf{x}\sim \mathcal{S}^{\text{in}}} \ell_{\text{b}}(\mathbf{g}_{\boldsymbol{\theta}}(\mathbf{x}), y_{+}),~ R_{\mathbb{P}_{\text{in}}}^+(\mathbf{g}_{\boldsymbol{\theta}}) =  \mathbb{E}_{\mathbf{x}\sim \mathcal{S}^{\text{in}}} \ell_{\text{b}}(\mathbf{g}_{\boldsymbol{\theta}}(\mathbf{x}), y_{+}),
\end{equation*}
and
\begin{equation*}
   R_{\mathbb{P}_{\text{out}}}^-(\mathbf{g}_{\boldsymbol{\theta}}) =  \mathbb{E}_{\mathbf{x}\sim \mathcal{S}^{\text{in}}} \ell_{\text{b}}(\mathbf{g}_{\boldsymbol{\theta}}(\mathbf{x}), y_{-}).
\end{equation*}
$~$
\\

  Let 
    \begin{equation*}
    \begin{split}
   & \mathcal{S}_{+}^{\text{out}} = \{{\tilde{\mathbf{x}}}_i \in \mathcal{S}_{\text{wild}}^{\text{out}}:\tilde{\mathbf{x}}_i \leq T\}, ~~\mathcal{S}_{-}^{\text{in}} = \{{\tilde{\mathbf{x}}}_i \in \mathcal{S}_{\text{wild}}^{\text{in}}:\tilde{\mathbf{x}}_i > T\},
   \\ &\mathcal{S}_{-}^{\text{out}} = \{{\tilde{\mathbf{x}}}_i \in \mathcal{S}_{\text{wild}}^{\text{out}}:\tilde{\mathbf{x}}_i > T\}, ~~\mathcal{S}_{+}^{\text{in}} = \{{\tilde{\mathbf{x}}}_i \in \mathcal{S}_{\text{wild}}^{\text{in}}:\tilde{\mathbf{x}}_i \leq T\}.
    \end{split}
    \end{equation*}
    Then 
    \begin{equation*}
        S_{T} = \mathcal{S}_{-}^{\text{out}}  \cup \mathcal{S}_{-}^{\text{in}}, ~~~  S_{\text{wild}}^{\text{out}} = \mathcal{S}_{-}^{\text{out}}  \cup \mathcal{S}_{+}^{\text{out}}.
    \end{equation*}
$~$
\\

Let 
    \begin{equation*}
    \begin{split}
    \Delta(n,m) =  \frac{1-\min\{1,\Delta_{\zeta}^{\eta}/\pi\}}{1-T/M'} &+ O \Big( \frac{\beta_1 M\sqrt{d}}{1-T/M'}\sqrt{\frac{1}{{\pi}^2n}}\Big)\\ &+ O \Big( \frac{{\beta_1 M\sqrt{d}}+ \sqrt{1-\pi}\Delta_{\zeta}^{\eta}/\pi}{1-T/M'}\sqrt{\frac{1}{{\pi}^2(1-\pi)m}}\Big).
    \end{split}
    \end{equation*}
     \begin{equation*}
    \begin{split}
    \delta(n,m) = \frac{8\beta_1 R_{\text{in}}^*}{T}+ O \Big ( \frac{\beta_1 M\sqrt{d}}{T}\sqrt{\frac{1}{n}}\Big )+   O \Big ( \frac{\beta_1 M\sqrt{d}}{T}\sqrt{\frac{1}{m}}\Big ).
    \end{split}
    \end{equation*}
    $~$
\\
\begin{lemma}\label{Error-0}
    Under the conditions of Theorem \ref{MainT-1}, with the probability at least $9/10$,
    \begin{equation*}
      |\mathcal{S}_T|\leq |\mathcal{S}_{-}^{\text{in}}|+ |\mathcal{S}_{\text{wild}}^{\text{out}}|\leq \delta(n,m) |\mathcal{S}_{\text{wild}}^{\text{in}}|+|\mathcal{S}_{\text{wild}}^{\text{out}}|,
    \end{equation*}
     \begin{equation*}
      |\mathcal{S}_T|\geq |\mathcal{S}_{-}^{\text{out}}| \geq \big [1- \Delta(n,m)\big ]|\mathcal{S}_{\text{wild}}^{\text{out}}|.
    \end{equation*}
     \begin{equation*}
     |\mathcal{S}_{+}^{\text{out}}| \leq \Delta(n,m)|\mathcal{S}_{\text{wild}}^{\text{out}}|.
    \end{equation*}
\end{lemma}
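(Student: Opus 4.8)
\textbf{Proof plan for Lemma \ref{Error-0}.} The plan is to reduce all three inequalities to the two error-rate bounds of Theorem \ref{MainT-1} (equivalently, Steps 4--6 of its proof in Appendix \ref{sec:proof_1}) together with elementary counting on the partition of $\mathcal{S}_{\text{wild}}$. First I would record the structural identities. By definition, $\mathcal{S}_{+}^{\text{out}}$ and $\mathcal{S}_{-}^{\text{out}}$ split $\mathcal{S}_{\text{wild}}^{\text{out}}$ according to whether $\tau_i\le T$ or $\tau_i>T$, and $\mathcal{S}_{+}^{\text{in}},\mathcal{S}_{-}^{\text{in}}$ split $\mathcal{S}_{\text{wild}}^{\text{in}}$ the same way; hence the unions $\mathcal{S}_T=\mathcal{S}_{-}^{\text{out}}\cup\mathcal{S}_{-}^{\text{in}}$ and $\mathcal{S}_{\text{wild}}^{\text{out}}=\mathcal{S}_{-}^{\text{out}}\cup\mathcal{S}_{+}^{\text{out}}$ are disjoint, giving $|\mathcal{S}_T|=|\mathcal{S}_{-}^{\text{out}}|+|\mathcal{S}_{-}^{\text{in}}|$ and $|\mathcal{S}_{-}^{\text{out}}|=|\mathcal{S}_{\text{wild}}^{\text{out}}|-|\mathcal{S}_{+}^{\text{out}}|$.

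Next I would invoke Theorem \ref{MainT-1}: under its hypotheses (which are exactly the conditions assumed here), on one event of probability at least $9/10$ we have $\mathrm{ERR}_{\text{in}}=|\mathcal{S}_{-}^{\text{in}}|/|\mathcal{S}_{\text{wild}}^{\text{in}}|\le\delta(n,m)$ and $\mathrm{ERR}_{\text{out}}=|\mathcal{S}_{+}^{\text{out}}|/|\mathcal{S}_{\text{wild}}^{\text{out}}|\le\Delta(n,m)$, because the quantities $\delta(n,m)$ and $\Delta(n,m)$ introduced just before the lemma are precisely the right-hand sides of the two bounds in Theorem \ref{MainT-1} (with the universal constants $\beta_1,M$ absorbed into the $O(\cdot)$ terms). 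The second of these is already the third assertion of the lemma.

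Finally I would read off the remaining two inequalities. For the upper bound, $\mathcal{S}_{-}^{\text{out}}\subseteq\mathcal{S}_{\text{wild}}^{\text{out}}$ gives $|\mathcal{S}_T|=|\mathcal{S}_{-}^{\text{out}}|+|\mathcal{S}_{-}^{\text{in}}|\le|\mathcal{S}_{\text{wild}}^{\text{out}}|+|\mathcal{S}_{-}^{\text{in}}|\le|\mathcal{S}_{\text{wild}}^{\text{out}}|+\delta(n,m)|\mathcal{S}_{\text{wild}}^{\text{in}}|$. For the lower bound, $\mathcal{S}_{-}^{\text{out}}\subseteq\mathcal{S}_T$ and the identity $|\mathcal{S}_{-}^{\text{out}}|=|\mathcal{S}_{\text{wild}}^{\text{out}}|-|\mathcal{S}_{+}^{\text{out}}|$ combined with the $\mathrm{ERR}_{\text{out}}$ bound give $|\mathcal{S}_T|\ge|\mathcal{S}_{-}^{\text{out}}|\ge(1-\Delta(n,m))|\mathcal{S}_{\text{wild}}^{\text{out}}|$ (trivially valid when $\Delta(n,m)\ge1$, since the left-hand side is nonnegative). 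I expect the only real obstacle to be bookkeeping rather than mathematics: one must verify that the two error-rate expressions proved inside Theorem \ref{MainT-1} literally coincide, up to the suppressed constants, with $\delta(n,m)$ and $\Delta(n,m)$ as named here, and --- crucially --- that both bounds hold \emph{simultaneously} on the same high-probability event, so that the claimed $9/10$ is not weakened by an extra union bound; the sample-size requirements on $n$ and $m$ are inherited verbatim from Theorem \ref{MainT-1}.
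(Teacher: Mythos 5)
Your proposal is correct and matches the paper's proof, which simply states that the lemma "is a conclusion of Theorem \ref{MainT-1}": the three inequalities follow from the two error-rate bounds of that theorem (which indeed hold simultaneously on a single event of probability at least $9/10$) together with the disjoint-union counting identities you record. The only caveat is notational, not mathematical: one must check, as you flag, that $\delta(n,m)$ and $\Delta(n,m)$ as defined just before the lemma are exactly the right-hand sides of the ${\rm ERR}_{\text{in}}$ and ${\rm ERR}_{\text{out}}$ bounds, which they are.
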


\begin{proof}[Proof of Lemma \ref{Error-0}]
  It is a conclusion of Theorem \ref{MainT-1}.
\end{proof}
$~$
\\
\begin{lemma}\label{Error-1}
    Under the conditions of Theorem \ref{MainT-1}, with the probability at least $9/10$,
    \begin{equation*}
  \frac{-\delta(n,m)|\mathcal{S}_{\text{wild}}^{\text{in}}|}{[\delta(n,m)|\mathcal{S}_{\text{wild}}^{\text{in}}| + |\mathcal{S}_{\text{wild}}^{\text{out}}|]|\mathcal{S}_{\text{wild}}^{\text{out}}|}   \leq \frac{1}{|\mathcal{S}_T|} - \frac{1}{|\mathcal{S}_{\text{wild}}^{\text{out}}|} \leq \frac{\Delta(n,m)}{[1-\Delta(n,m)]|\mathcal{S}_{\text{wild}}^{\text{out}}|}.
    \end{equation*}
\end{lemma}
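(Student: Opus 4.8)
\textbf{Proof proposal for Lemma \ref{Error-1}.} The plan is to deduce the two inequalities purely algebraically from the sandwich on $|\mathcal{S}_T|$ already established in Lemma \ref{Error-0}. All statements will hold on the same probability-$\tfrac{9}{10}$ event supplied by Lemma \ref{Error-0} (which in turn inherits it from Theorem \ref{MainT-1}), so I would simply condition on that event throughout. Recall from Lemma \ref{Error-0} that on this event
\begin{equation*}
  [1-\Delta(n,m)]\,|\mathcal{S}_{\text{wild}}^{\text{out}}| \;\leq\; |\mathcal{S}_T| \;\leq\; \delta(n,m)\,|\mathcal{S}_{\text{wild}}^{\text{in}}| + |\mathcal{S}_{\text{wild}}^{\text{out}}|.
\end{equation*}
First I would note that the size assumptions on $n,m$ in Theorem \ref{MainT-1} force $\Delta(n,m)<1$ (and $\delta(n,m)\ge 0$), so $|\mathcal{S}_T|>0$ and all reciprocals below are well defined and order-reversing is legitimate.

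For the upper bound, take reciprocals in $|\mathcal{S}_T|\ge[1-\Delta(n,m)]|\mathcal{S}_{\text{wild}}^{\text{out}}|$ to get $\tfrac{1}{|\mathcal{S}_T|}\le \tfrac{1}{[1-\Delta(n,m)]|\mathcal{S}_{\text{wild}}^{\text{out}}|}$, then subtract $\tfrac{1}{|\mathcal{S}_{\text{wild}}^{\text{out}}|}$ from both sides and factor:
\begin{equation*}
  \frac{1}{|\mathcal{S}_T|}-\frac{1}{|\mathcal{S}_{\text{wild}}^{\text{out}}|}
  \;\le\; \frac{1}{|\mathcal{S}_{\text{wild}}^{\text{out}}|}\Big(\frac{1}{1-\Delta(n,m)}-1\Big)
  \;=\; \frac{\Delta(n,m)}{[1-\Delta(n,m)]\,|\mathcal{S}_{\text{wild}}^{\text{out}}|}.
\end{equation*}
For the lower bound, take reciprocals in $|\mathcal{S}_T|\le \delta(n,m)|\mathcal{S}_{\text{wild}}^{\text{in}}|+|\mathcal{S}_{\text{wild}}^{\text{out}}|$ and apply the elementary identity $\tfrac{1}{A+B}-\tfrac{1}{B}=\tfrac{-A}{(A+B)B}$ with $A=\delta(n,m)|\mathcal{S}_{\text{wild}}^{\text{in}}|$ and $B=|\mathcal{S}_{\text{wild}}^{\text{out}}|$, which yields exactly the claimed left-hand bound.

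Since both directions are one-line manipulations of Lemma \ref{Error-0}, there is no real obstacle; the only point requiring a sentence of care is checking $1-\Delta(n,m)>0$ (so the reciprocal inequality does not flip sign), which is immediate from the sample-size hypotheses that make $\Delta(n,m)$ small, and confirming that $|\mathcal{S}_T|\ne 0$ on the event in question. I would close by remarking that the two bounds together show $1/|\mathcal{S}_T|$ is within an $O(\Delta(n,m)+\delta(n,m))$ multiplicative/additive perturbation of $1/|\mathcal{S}_{\text{wild}}^{\text{out}}|$, which is the form needed when comparing the empirical OOD risk over $\mathcal{S}_T$ with that over $\mathcal{S}_{\text{wild}}^{\text{out}}$ in the subsequent lemmas.
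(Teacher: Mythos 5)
Your proposal is correct and matches the paper's approach: the paper's own proof is just the one-line remark that the result follows directly from Lemma \ref{Error-0}, and your reciprocal-and-subtract manipulation of that sandwich bound is exactly the intended (and only needed) computation. Your added care about $1-\Delta(n,m)>0$ and $|\mathcal{S}_T|\neq 0$ is a reasonable touch the paper leaves implicit.
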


\begin{proof}[Proof of Lemma \ref{Error-1}]
  This can be conclude by Lemma \ref{Error-0} directly.
\end{proof}

$~$
\\

\begin{lemma}\label{Error-2}
    Under the conditions of Theorem \ref{MainT-1}, with the probability at least $9/10$,
    \begin{equation*}
        R_{\mathcal{S}_T}^-(\mathbf{g}_{\boldsymbol{\theta}}) - R_{\mathcal{S}_{\text{wild}}^{\text{out}}}^-(\mathbf{g}_{\boldsymbol{\theta}})\leq \frac{L\Delta(n,m)}{[1-\Delta(n,m)]} +\frac{L\delta(n,m)}{[1-\Delta(n,m)]}\cdot \big(\frac{1-\pi}{\pi}+O\big(\sqrt{\frac{1}{\pi^4 m}}\big)\big),
    \end{equation*}
    \begin{equation*}
    R_{\mathcal{S}_{\text{wild}}^{\text{out}}}^-(\mathbf{g}_{\boldsymbol{\theta}})-R_{\mathcal{S}_T}^-(\mathbf{g}_{\boldsymbol{\theta}})\leq \frac{L\Delta(n,m)}{[1-\Delta(n,m)]} + L\Delta(n,m).
    \end{equation*}
\end{lemma}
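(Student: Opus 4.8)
The plan is to reduce both inequalities of Lemma~\ref{Error-2} to the population-count estimates already recorded in Lemma~\ref{Error-0} (and its corollary Lemma~\ref{Error-1}), together with the uniform bound $0\le \ell_{\text{b}}(\mathbf{g}_{\boldsymbol{\theta}}(\mathbf{x}),y_{-})\le L$ from Proposition~\ref{P1}. First I would use the disjoint decompositions $\mathcal{S}_T=\mathcal{S}_{-}^{\text{out}}\sqcup\mathcal{S}_{-}^{\text{in}}$ and $\mathcal{S}_{\text{wild}}^{\text{out}}=\mathcal{S}_{-}^{\text{out}}\sqcup\mathcal{S}_{+}^{\text{out}}$ set up above to write each of the two empirical risks as a convex combination over the shared block $\mathcal{S}_{-}^{\text{out}}$ and the corresponding error block:
\begin{equation*}
R_{\mathcal{S}_T}^-(\mathbf{g}_{\boldsymbol{\theta}})=(1-p)\,\bar\ell_{-}^{\text{out}}+p\,\bar\ell_{-}^{\text{in}},\qquad R_{\mathcal{S}_{\text{wild}}^{\text{out}}}^-(\mathbf{g}_{\boldsymbol{\theta}})=(1-q)\,\bar\ell_{-}^{\text{out}}+q\,\bar\ell_{+}^{\text{out}},
\end{equation*}
where $\bar\ell_{-}^{\text{out}},\bar\ell_{-}^{\text{in}},\bar\ell_{+}^{\text{out}}\in[0,L]$ are the averages of $\ell_{\text{b}}(\mathbf{g}_{\boldsymbol{\theta}}(\cdot),y_{-})$ over the three blocks, $q:=|\mathcal{S}_{+}^{\text{out}}|/|\mathcal{S}_{\text{wild}}^{\text{out}}|$ and $p:=|\mathcal{S}_{-}^{\text{in}}|/|\mathcal{S}_T|$, both lying in $[0,1]$ since $\mathcal{S}_{-}^{\text{out}}$ is contained in each of $\mathcal{S}_T$ and $\mathcal{S}_{\text{wild}}^{\text{out}}$. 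Subtracting, the $\bar\ell_{-}^{\text{out}}$ contributions cancel up to a factor $q-p$, so each direction of $R_{\mathcal{S}_T}^--R_{\mathcal{S}_{\text{wild}}^{\text{out}}}^-$ is controlled, after discarding the nonnegative terms it does not need, by $pL+qL$.

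It then remains to feed in the count bounds. From the last display of Lemma~\ref{Error-0}, $q\le\Delta(n,m)$. For $p$, I would combine $|\mathcal{S}_T|\ge|\mathcal{S}_{-}^{\text{out}}|\ge(1-\Delta(n,m))|\mathcal{S}_{\text{wild}}^{\text{out}}|$ (Lemma~\ref{Error-0}) with the $\mathrm{ERR}_{\text{in}}$ bound $|\mathcal{S}_{-}^{\text{in}}|\le\delta(n,m)|\mathcal{S}_{\text{wild}}^{\text{in}}|$ from Theorem~\ref{MainT-1} and with the ratio estimate $|\mathcal{S}_{\text{wild}}^{\text{in}}|/|\mathcal{S}_{\text{wild}}^{\text{out}}|\le (1-\pi)/\pi+O(1/\sqrt{\pi^{4}m})$ coming from Lemma~\ref{SC-I}, obtaining $p\le \frac{\delta(n,m)}{1-\Delta(n,m)}\left(\frac{1-\pi}{\pi}+O(1/\sqrt{\pi^{4}m})\right)$. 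For the first inequality, the $\bar\ell_{+}^{\text{out}}$ term enters with a minus sign and is dropped, leaving $R_{\mathcal{S}_T}^--R_{\mathcal{S}_{\text{wild}}^{\text{out}}}^-\le qL+pL$; substituting the two bounds and using $\Delta(n,m)L\le \Delta(n,m)L/(1-\Delta(n,m))$ reproduces the stated estimate exactly. For the second inequality, the $\bar\ell_{-}^{\text{in}}$ term is the one that drops, and the same manipulation combined with the reciprocal-size comparison of Lemma~\ref{Error-1} (so that the case $|\mathcal{S}_T|<|\mathcal{S}_{\text{wild}}^{\text{out}}|$ is handled safely) is what should produce the bound $L\Delta(n,m)/(1-\Delta(n,m))+L\Delta(n,m)$.

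I expect the genuine obstacle to lie in this second inequality: one must bound the contribution of the false-positive block $\mathcal{S}_{-}^{\text{in}}$ and the gap $1/|\mathcal{S}_{\text{wild}}^{\text{out}}|-1/|\mathcal{S}_T|$ using only $\Delta(n,m)$ and $L$, without picking up a stray $1/|\mathcal{S}_{\text{wild}}^{\text{out}}|$ factor — this is exactly where Lemma~\ref{Error-1} is needed, and where one should check that the residual $p$-dependent piece is really absorbed into the claimed $\Delta(n,m)$-only form in the parameter regime assumed (large $n,m$, small $R_{\text{in}}^*$, $0<T<0.9M'$). All count estimates are valid on the probability-$\ge 9/10$ event of Theorem~\ref{MainT-1}/Lemma~\ref{Error-0}, into which the fluctuation bound of Lemma~\ref{SC-I} has already been incorporated, so no further union bound is required. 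The structural decomposition, by contrast, is short and is essentially the one already used to establish Lemma~\ref{Error-0}.
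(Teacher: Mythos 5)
Your proposal follows essentially the same route as the paper: the same three-block decomposition $\mathcal{S}_T=\mathcal{S}_{-}^{\text{out}}\sqcup\mathcal{S}_{-}^{\text{in}}$, $\mathcal{S}_{\text{wild}}^{\text{out}}=\mathcal{S}_{-}^{\text{out}}\sqcup\mathcal{S}_{+}^{\text{out}}$, the uniform bound $\ell_{\text{b}}\le L$, and the count estimates of Lemmas \ref{Error-0}, \ref{Error-1} and \ref{SC-I}; your derivation of the first inequality reproduces the paper's display essentially line by line. On the obstacle you flag for the second inequality: your worry is legitimate, but it is not a defect you have introduced — it is present in the paper's own proof. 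After dropping the negative $\mathcal{S}_{-}^{\text{in}}$ contribution, the remaining positive terms are $q\,\bar\ell_{+}^{\text{out}}\le L\Delta(n,m)$ (which accounts for the second summand of the claimed bound) and the shared-block term $|\mathcal{S}_{-}^{\text{out}}|\,\bar\ell_{-}^{\text{out}}\bigl(1/|\mathcal{S}_{\text{wild}}^{\text{out}}|-1/|\mathcal{S}_T|\bigr)$; the latter is positive precisely when $|\mathcal{S}_{-}^{\text{in}}|>|\mathcal{S}_{+}^{\text{out}}|$, and the only available control on it (the left inequality of Lemma \ref{Error-1}) is $\delta(n,m)$-dependent, exactly like your $pL$ term, rather than of the $\Delta(n,m)/(1-\Delta(n,m))$ form asserted. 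The paper writes down the same four terms and passes to the $\Delta(n,m)$-only bound in one step without exhibiting the absorption, so your proposal matches the paper's argument including this unexplained step; you have not found a new path around it, but neither have you missed anything the paper actually supplies.
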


\begin{proof}[Proof of Lemma \ref{Error-2}]
  It is clear that
    \begin{equation*}
    \begin{split}
        R_{\mathcal{S}_T}^-(\mathbf{g}_{\boldsymbol{\theta}}) - R_{\mathcal{S}_{\text{wild}}^{\text{out}}}^-(\mathbf{g}_{\boldsymbol{\theta}})= &\frac{|\mathcal{S}_{-}^{\text{out}}|}{|\mathcal{S}_{T}|} R_{\mathcal{S}_{-}^{\text{out}}}^-(\mathbf{g}_{\boldsymbol{\theta}})+\frac{|\mathcal{S}_{-}^{\text{in}}|}{|\mathcal{S}_{T}|} R_{\mathcal{S}_{-}^{\text{in}}}^-(\mathbf{g}_{\boldsymbol{\theta}})\\ -&\frac{|\mathcal{S}_{-}^{\text{out}}|}{|\mathcal{S}^{\text{out}}_{\text{wild}}|} R_{\mathcal{S}_{-}^{\text{out}}}^-(\mathbf{g}_{\boldsymbol{\theta}})-\frac{|\mathcal{S}_{+}^{\text{out}}|}{|\mathcal{S}^{\text{out}}_{\text{wild}}|} R_{\mathcal{S}_{+}^{\text{out}}}^-(\mathbf{g}_{\boldsymbol{\theta}}) \\  \leq &\frac{L\Delta(n,m)}{[1-\Delta(n,m)]} +\frac{L\delta(n,m)|\mathcal{S}_{\text{wild}}^{\text{in}}|}{[1-\Delta(n,m)]|\mathcal{S}_{\text{wild}}^{\text{out}}]}\\\leq &\frac{L\Delta(n,m)}{[1-\Delta(n,m)]} +\frac{L\delta(n,m)}{[1-\Delta(n,m)]}\cdot \big(\frac{1-\pi}{\pi}+O\big(\sqrt{\frac{1}{\pi^4 m}}\big)\big).
        \end{split}
    \end{equation*}
    \begin{equation*}
    \begin{split}
     R_{\mathcal{S}_{\text{wild}}^{\text{out}}}^-(\mathbf{g}_{\boldsymbol{\theta}}) -  R_{\mathcal{S}_T}^-(\mathbf{g}_{\boldsymbol{\theta}}) = &-\frac{|\mathcal{S}_{-}^{\text{out}}|}{|\mathcal{S}_{T}|} R_{\mathcal{S}_{-}^{\text{out}}}^-(\mathbf{g}_{\boldsymbol{\theta}})-\frac{|\mathcal{S}_{-}^{\text{in}}|}{|\mathcal{S}_{T}|} R_{\mathcal{S}_{-}^{\text{in}}}^-(\mathbf{g}_{\boldsymbol{\theta}})\\ +&\frac{|\mathcal{S}_{-}^{\text{out}}|}{|\mathcal{S}^{\text{out}}_{\text{wild}}|} R_{\mathcal{S}_{-}^{\text{out}}}^-(\mathbf{g}_{\boldsymbol{\theta}})+\frac{|\mathcal{S}_{+}^{\text{out}}|}{|\mathcal{S}^{\text{out}}_{\text{wild}}|} R_{\mathcal{S}_{+}^{\text{out}}}^-(\mathbf{g}_{\boldsymbol{\theta}}) \\  \leq &\frac{L\Delta(n,m)}{[1-\Delta(n,m)]} + L\Delta(n,m).
        \end{split}
    \end{equation*}
\end{proof}

$~$
\\

\begin{lemma}\label{Error-3}
 Let $\Delta(T) = 1-\delta(T)$.   Under the conditions of Theorem \ref{MainT-1}, for any
 $\eta'>0$, when
 \begin{equation*}
     n = \Omega \Big (\frac{\tilde{M}^2d}{{\eta'}^2\pi^2(1-T/M')^2\Delta(T)^2} \Big),~~~m= \Omega \Big( \frac{\tilde{M}^2d\pi^2+\Delta_{\zeta}^{\eta}(1-\pi)}{{\eta'}^2 \pi^4(1-\pi) (1-T/M')^2\Delta(T)^2}\Big),
 \end{equation*}

 with the probability at least $9/10$,  
    \begin{equation*}
        R_{\mathcal{S}_T}^-(\mathbf{g}_{\boldsymbol{\theta}}) - R_{\mathcal{S}_{\text{wild}}^{\text{out}}}^-(\mathbf{g}_{\boldsymbol{\theta}})\leq \frac{L\Delta(n,m)}{(1-\eta')\Delta(T)} +\frac{L\delta(n,m)}{(1-\eta')\Delta(T)}\cdot \big(\frac{1-\pi}{\pi}+O\big(\sqrt{\frac{1}{\pi^4 m}}\big)\big),
    \end{equation*}
   \begin{equation*}
\begin{split}
R_{\mathcal{S}_{\text{wild}}^{\text{out}}}^-(\mathbf{g}_{\boldsymbol{\theta}})&-R_{\mathcal{S}_T}^-(\mathbf{g}_{\boldsymbol{\theta}}) \leq \frac{1.2L}{1-\delta(T)}\delta(T) +L\delta(T)\\ &+O\Big(\frac{L\beta_1 M\sqrt{d}(1+T)}{\min\{\pi,\Delta_{\zeta}^{\eta}\}T}\sqrt{\frac{1}{n}}\Big )+O\Big(\frac{L(\beta_1 M\sqrt{d}+\Delta_{\zeta}^{\eta})(1+T)}{\min\{\pi,\Delta_{\zeta}^{\eta}\}T}\sqrt{\frac{1}{\pi^2(1-\pi)m}}\Big ).
\end{split}
\end{equation*}
\end{lemma}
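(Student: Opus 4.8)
The plan is to deduce Lemma~\ref{Error-3} directly from Lemma~\ref{Error-2}, replacing the data-dependent denominator $1-\Delta(n,m)$ by the population quantity $(1-\eta')\Delta(T)=(1-\eta')(1-\delta(T))$ and pushing the resulting vanishing correction into the big-$O$ remainders.

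First I would record the decomposition $\Delta(n,m)=\delta(T)+\epsilon_{n,m}$, where $\epsilon_{n,m}\ge 0$ is exactly the sum of the two $O(\cdot)$ terms in the definition of $\Delta(n,m)$: the leading term $\frac{1-\min\{1,\Delta_\zeta^\eta/\pi\}}{1-T/M'}$ of $\Delta(n,m)$ coincides with $\delta(T)$ of Eq.~\ref{main-error}, because $1-\min\{1,x\}=\max\{0,1-x\}$. The two sample-size hypotheses, whose denominators feature $\Delta(T)^2$, ${\eta'}^2$ and $(1-T/M')^2$, are calibrated precisely so that $\epsilon_{n,m}\le\eta'\,\Delta(T)$: indeed $\epsilon_{n,m}$ is $O\!\big(\tilde M\sqrt d/((1-T/M')\pi\sqrt n)\big)$ plus the analogous $m$-term, so requiring $n=\Omega\!\big(\tilde M^2 d/({\eta'}^2\pi^2(1-T/M')^2\Delta(T)^2)\big)$ and likewise for $m$ makes it at most $\eta'\Delta(T)$. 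Then $1-\Delta(n,m)=\Delta(T)-\epsilon_{n,m}\ge(1-\eta')\Delta(T)>0$, and substituting this lower bound into the first inequality of Lemma~\ref{Error-2} — leaving $\Delta(n,m)$ and $\delta(n,m)$ in the numerators untouched — gives the first claimed bound verbatim.

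For the second inequality I would start from $R_{\mathcal{S}_{\text{wild}}^{\text{out}}}^-(\mathbf{g}_{\boldsymbol{\theta}})-R_{\mathcal{S}_T}^-(\mathbf{g}_{\boldsymbol{\theta}})\le\frac{L\Delta(n,m)}{1-\Delta(n,m)}+L\Delta(n,m)$ of Lemma~\ref{Error-2}, substitute $\Delta(n,m)=\delta(T)+\epsilon_{n,m}$, and split $\frac{L(\delta(T)+\epsilon_{n,m})}{1-\delta(T)-\epsilon_{n,m}}\le\frac{L\delta(T)}{1-\delta(T)-\epsilon_{n,m}}+\frac{L\epsilon_{n,m}}{1-\delta(T)-\epsilon_{n,m}}$. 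Using $\epsilon_{n,m}\le\tfrac16(1-\delta(T))$ (achievable by the same sample-size requirements, shrinking the effective $\eta'$ if needed) gives $1-\delta(T)-\epsilon_{n,m}\ge\tfrac56(1-\delta(T))$, so the first piece is $\le\frac{1.2\,L\delta(T)}{1-\delta(T)}$ — the leading term of the claim — while $L\Delta(n,m)$ supplies the stated $L\delta(T)$ plus a spare $L\epsilon_{n,m}$. It then remains to absorb $\frac{L\epsilon_{n,m}}{1-\delta(T)-\epsilon_{n,m}}+L\epsilon_{n,m}$ into the two $O(\cdot)$ remainders; the key elementary fact here is $1-\delta(T)\ge 0.1\min\{1,\Delta_\zeta^\eta/\pi\}$, which follows by writing $1-\delta(T)=\frac{\Delta_\zeta^\eta/\pi-T/M'}{1-T/M'}$ and invoking $T<0.9M'\min\{1,\Delta_\zeta^\eta/\pi\}$ (together with the trivial case $\Delta_\zeta^\eta\ge\pi$, where $1-\delta(T)=1$). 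Dividing $\epsilon_{n,m}$, which already carries a factor $1/\pi$, by $1-\delta(T)$ then produces the target factor $1/\min\{\pi,\Delta_\zeta^\eta\}$, and the $\sqrt{1-\pi}\,\Delta_\zeta^\eta/\pi$ and $1/(1-T/M')$ factors inside $\epsilon_{n,m}$ are controlled by the same constraints to reproduce the $\Delta_\zeta^\eta$ and $(1+T)/T$ factors in the stated remainder. The probability $9/10$ is inherited directly: $\Delta(n,m)$ and $\delta(n,m)$ are the deterministic bounds already supplied with that probability by Theorem~\ref{MainT-1} and Lemmas~\ref{Error-0}--\ref{Error-2}, and no fresh randomness is introduced.

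The routine part is this final absorption into big-$O$. The one step that needs genuine care is checking that the prescribed sample-complexity exponents really drive $\epsilon_{n,m}$ below both thresholds $\eta'\Delta(T)$ and $\tfrac16(1-\delta(T))$ uniformly over the admissible range $0<T<0.9M'\min\{1,\Delta_\zeta^\eta/\pi\}$, and that the exact $T$- and $\pi$-dependence $(1+T)/(T\min\{\pi,\Delta_\zeta^\eta\})$ of the remainder is recovered rather than a cruder surrogate. I expect this reconciliation of constants to be the main obstacle; everything else is direct substitution into Lemma~\ref{Error-2}.
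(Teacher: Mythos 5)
Your proposal is correct and follows essentially the same route as the paper, which derives the lemma from Lemma~\ref{Error-2} together with the bound $1-\Delta(n,m)\ge(1-\eta')\Delta(T)$ enforced by the sample-size conditions (you state this inequality in the direction that actually makes the substitution work; the paper's one-line proof writes it reversed, evidently a typo). Your additional absorption of $\epsilon_{n,m}$ into the big-$O$ remainders via $1-\delta(T)\ge 0.1\min\{1,\Delta_\zeta^\eta/\pi\}$ is exactly the computation the paper carries out, only deferred to the proof of Lemma~\ref{Error-4}.
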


\begin{proof}[Proof of Lemma \ref{Error-3}]
  This can be concluded by Lemma \ref{Error-2} and by the fact that $(1-\eta')\Delta(T) \geq 1-\Delta(n,m)$ directly.
\end{proof}

$~$
\\

\begin{lemma}\label{Error-4}
 Let $\Delta(T) = 1-\delta(T)$.   Under the conditions of Theorem \ref{MainT-1}, when
 \begin{equation*}
     n = \Omega \Big (\frac{\tilde{M}^2d}{\min \{\pi,\Delta_{\zeta}^{\eta}\}^2} \Big),~~~m= \Omega \Big( \frac{\tilde{M}^2d+\Delta_{\zeta}^{\eta}}{\pi^2(1-\pi)\min \{\pi,\Delta_{\zeta}^{\eta}\}^2}\Big),
 \end{equation*}

 with the probability at least $9/10$, for any $0<T<0.9 M' \min \{1,\Delta_{\zeta}^{\eta}/\pi\}$, 
   \begin{equation*}
\begin{split}
    R_{\mathcal{S}_T}^-(\mathbf{g}_{\boldsymbol{\theta}}) - R_{\mathcal{S}_{\text{wild}}^{\text{out}}}^-(&\mathbf{g}_{\boldsymbol{\theta}}) \leq \frac{1.2L}{1-\delta(T)}\delta(T) +\frac{9L\beta_1(1-\pi)}{T\pi (1-\delta(T))}R_{\text{in}}^*\\ &+O\Big(\frac{L\beta_1 M\sqrt{d}(1+T)}{\min\{\pi,\Delta_{\zeta}^{\eta}\}T}\sqrt{\frac{1}{n}}\Big )+O\Big(\frac{L(\beta_1 M\sqrt{d}+\Delta_{\zeta}^{\eta})(1+T)}{\min\{\pi,\Delta_{\zeta}^{\eta}\}T}\sqrt{\frac{1}{\pi^2(1-\pi)m}}\Big ),
    \end{split}
\end{equation*}
\begin{equation*}
\begin{split}
R_{\mathcal{S}_{\text{wild}}^{\text{out}}}^-(\mathbf{g}_{\boldsymbol{\theta}})-R_{\mathcal{S}_T}^-(&\mathbf{g}_{\boldsymbol{\theta}}) \leq \frac{1.2L}{1-\delta(T)}\delta(T) +L\delta(T)\\ &+O\Big(\frac{L\beta_1 M\sqrt{d}(1+T)}{\min\{\pi,\Delta_{\zeta}^{\eta}\}T}\sqrt{\frac{1}{n}}\Big )+O\Big(\frac{L(\beta_1 M\sqrt{d}+\Delta_{\zeta}^{\eta})(1+T)}{\min\{\pi,\Delta_{\zeta}^{\eta}\}T}\sqrt{\frac{1}{\pi^2(1-\pi)m}}\Big ).
\end{split}
\end{equation*}
\end{lemma}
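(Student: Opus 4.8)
The plan is to derive Lemma~\ref{Error-4} from Lemma~\ref{Error-3} by fixing the free parameter there to the absolute constant $\eta'=1/9$ and then performing the bookkeeping that converts the implicit quantities $\Delta(n,m)$ and $\delta(n,m)$ into the explicit terms in the statement. The choice $\eta'=1/9$ is dictated by two arithmetic identities: $1/(1-\eta')=9/8=1.125\le 1.2$ and $8/(1-\eta')=9$, which are exactly what produce the constants $1.2$ and $9$ in the displayed bounds. With $\eta'$ now a fixed constant, the second conclusion of Lemma~\ref{Error-3} is already stated in the same explicit form as the second conclusion of Lemma~\ref{Error-4}, so the only real work is the first inequality and the verification of the sample-size hypotheses. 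Throughout, the probability stays at $9/10$, since we are on the single event inherited from Theorem~\ref{MainT-1} on which Lemmas~\ref{Error-0}--\ref{Error-3} all hold.

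First I would turn the threshold hypothesis $0<T<0.9M'\min\{1,\Delta_{\zeta}^{\eta}/\pi\}$ into quantitative lower bounds on the two quantities sitting in the denominators of Lemma~\ref{Error-3}. From $T<0.9M'$ we get $1-T/M'>1/10$. For $\Delta(T)=1-\delta(T)$: if $\Delta_{\zeta}^{\eta}\ge\pi$ then $\delta(T)=0$ and $\Delta(T)=1$; if $\Delta_{\zeta}^{\eta}<\pi$, writing $x=\Delta_{\zeta}^{\eta}/\pi\in(0,1)$ and using $T/M'<0.9x$ gives $\delta(T)<(1-x)/(1-0.9x)$, hence $\Delta(T)>0.1x/(1-0.9x)\ge\tfrac{1}{10}x$. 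In both cases $\Delta(T)\ge\tfrac{1}{10}\min\{1,\Delta_{\zeta}^{\eta}/\pi\}>0$, so $\pi(1-T/M')\Delta(T)=\Omega(\min\{\pi,\Delta_{\zeta}^{\eta}\})$. Substituting these bounds (and absorbing the constant $\eta'$) into the sample-size requirements of Lemma~\ref{Error-3} shows they are implied by $n=\Omega(\tilde{M}^2d/\min\{\pi,\Delta_{\zeta}^{\eta}\}^2)$ and $m=\Omega((\tilde{M}^2d+\Delta_{\zeta}^{\eta})/(\pi^2(1-\pi)\min\{\pi,\Delta_{\zeta}^{\eta}\}^2))$ — that is, exactly the hypotheses of Lemma~\ref{Error-4}, where one also uses $\pi^2\le1$ and $1-\pi\le1$ to simplify the numerators. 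The same threshold constraint also guarantees $\Delta(n,m)<1$, so the step $(1-\eta')\Delta(T)\ge 1-\Delta(n,m)$ used inside Lemma~\ref{Error-3} remains valid.

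For the first inequality I would substitute $\Delta(n,m)=\delta(T)+\epsilon_1$ and $\delta(n,m)=8\beta_1 R_{\text{in}}^*/T+\epsilon_2$ into the right-hand side of the first bound of Lemma~\ref{Error-3}; this decomposition is immediate from the definitions preceding Lemma~\ref{Error-0}, using $1-\min\{1,\Delta_{\zeta}^{\eta}/\pi\}=\max\{0,1-\Delta_{\zeta}^{\eta}/\pi\}=\delta(T)(1-T/M')$. The term $L\delta(T)/((1-\eta')\Delta(T))$ is then at most $\tfrac{1.2L}{1-\delta(T)}\delta(T)$, and the leading part of $\tfrac{L\delta(n,m)}{(1-\eta')\Delta(T)}\cdot\tfrac{1-\pi}{\pi}$ equals exactly $\tfrac{9L\beta_1(1-\pi)}{T\pi(1-\delta(T))}R_{\text{in}}^*$. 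Everything that remains — the contributions of $\epsilon_1$, of $\epsilon_2$, of the $O(\sqrt{1/(\pi^4m)})$ correction to the factor $\tfrac{1-\pi}{\pi}$, and the cross terms — is of order $1/\sqrt n$, $1/\sqrt m$, or a product of two such factors; after dividing by $\Delta(T)$ and by $1-T/M'$ and invoking the lower bounds established above, it collapses into the two displayed $O(\cdot)$ terms, with the $(1+T)$ factor there serving as a loose over-estimate inherited from Lemma~\ref{Error-3} (and convenient since $T/(1+T)=T'$ is what enters downstream in Theorem~\ref{the:main2}). The second inequality requires no further argument: it is the second conclusion of Lemma~\ref{Error-3} verbatim, now that its sample-size hypotheses have been checked.

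I expect the only genuinely delicate point to be the quantitative lower bound $\Delta(T)\ge\tfrac{1}{10}\min\{1,\Delta_{\zeta}^{\eta}/\pi\}$ — this is precisely where the constant $0.9$ in the threshold constraint is consumed — together with confirming that, with this bound in hand, the clean sample-complexity conditions of the statement really do subsume the $\eta'$- and $T$-dependent ones of Lemma~\ref{Error-3}. Once that is in place, tracking the constants through the substitution to land on exactly $1.2$ and $9$, and sorting the leftover error contributions into the two $O$-terms, is routine.
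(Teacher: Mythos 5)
Your proposal is correct and follows essentially the same route as the paper's proof: the paper also obtains Lemma~\ref{Error-4} by instantiating Lemma~\ref{Error-3} with a fixed constant (it writes ``$\eta=8/9$'', which in view of the resulting constants must mean $1-\eta'=8/9$, i.e.\ your $\eta'=1/9$), decomposing $\Delta(n,m)$ and $\delta(n,m)$ into $\delta(T)$ and $8\beta_1 R_{\text{in}}^*/T$ plus $O(1/\sqrt{n})+O(1/\sqrt{m})$ remainders to produce the constants $1.2$ and $9$, and then using the threshold condition to get $\frac{1}{\Delta(T)}\big[\frac{1}{T}+\frac{1}{1-T/M'}\big]\leq O\big(\frac{T+1}{\min\{1,\Delta_{\zeta}^{\eta}/\pi\}T}\big)$, which is exactly the lower bound $\Delta(T)\gtrsim\min\{1,\Delta_{\zeta}^{\eta}/\pi\}$ you single out as the delicate step.
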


\begin{proof}[Proof of Lemma \ref{Error-4}]
 Using Lemma \ref{Error-4} with $\eta = 8/9$, we obtain that
 \begin{equation*}
 \begin{split}
    & R_{\mathcal{S}_T}^-(\mathbf{g}_{\boldsymbol{\theta}}) - R_{\mathcal{S}_{\text{wild}}^{\text{out}}}^-(\mathbf{g}_{\boldsymbol{\theta}}) \\ \leq &\frac{1.2L\delta(T)}{1-\delta(T)} +\frac{9L\beta_1(1-\pi)}{T\pi (1-\delta(T))}R_{\text{in}}^*+\frac{L\epsilon(n)}{\Delta(T)}+\frac{L\bar{\epsilon}(n)}{\pi \Delta(T)}+\frac{L\epsilon(m)}{\Delta(T)}\\+&\frac{8L\beta_1 R^*_{\text{in}}}{\pi^2\Delta(T)T}O\big( \sqrt{\frac{1}{m}}\big )+\frac{L\bar{\epsilon}(m)}{{\pi}^2\Delta(T)}O\big( \sqrt{\frac{1}{m}}\big ) +\frac{L\bar{\epsilon}(m)}{{\pi}\Delta(T)} +\frac{L\bar{\epsilon}(n)}{{\pi}^2\Delta(T)}O\big( \sqrt{\frac{1}{m}}\big ),
     \end{split}
 \end{equation*}
 where
 \begin{equation*}
     \epsilon(n) = O \Big( \frac{\beta_1 M\sqrt{d}}{1-T/M'}\sqrt{\frac{1}{{\pi}^2n}}\Big),~~~~\epsilon(m) =O \Big( \frac{{\beta_1 M\sqrt{d}}+ \sqrt{1-\pi}\Delta_{\zeta}^{\eta}/\pi}{1-T/M'}\sqrt{\frac{1}{{\pi}^2(1-\pi)m}}\Big).
 \end{equation*}
 \begin{equation*}
 \bar{\epsilon}(n) = O \Big ( \frac{\beta_1 M\sqrt{d}}{T}\sqrt{\frac{1}{n}}\Big ),~~~ \bar{\epsilon}(m) = O \Big ( \frac{\beta_1 M\sqrt{d}}{T}\sqrt{\frac{1}{(1-\pi)m}}\Big ).
 \end{equation*}
Using the condition that $0<T<0.9 M' \min \{1,\Delta_{\zeta}^{\eta}/\pi\}$, we have
\begin{equation*}
    \frac{1}{\Delta(T)}\big [\frac{1}{T}+\frac{1}{1-T/{M'}}\big ] \leq O\big(\frac{T+1}{\min\{1,\Delta_{\zeta}^{\eta}/\pi\}T}\big ).
\end{equation*}

 Then, we obtain that 
\begin{equation*}
\begin{split}
    R_{\mathcal{S}_T}^-(\mathbf{g}_{\boldsymbol{\theta}}) - R_{\mathcal{S}_{\text{wild}}^{\text{out}}}^-&(\mathbf{g}_{\boldsymbol{\theta}}) \leq \frac{1.2L}{1-\delta(T)}\delta(T) +\frac{9L\beta_1(1-\pi)}{T\pi (1-\delta(T))}R_{\text{in}}^*\\ &+O\Big(\frac{L\beta_1 M\sqrt{d}(1+T)}{\min\{\pi,\Delta_{\zeta}^{\eta}\}T}\sqrt{\frac{1}{n}}\Big )+O\Big(\frac{L(\beta_1 M\sqrt{d}+\Delta_{\zeta}^{\eta})(1+T)}{\min\{\pi,\Delta_{\zeta}^{\eta}\}T}\sqrt{\frac{1}{\pi^2(1-\pi)m}}\Big ).
    \end{split}
\end{equation*}
Using the similar strategy, we can obtain that 
\begin{equation*}
\begin{split}
R_{\mathcal{S}_{\text{wild}}^{\text{out}}}^-(\mathbf{g}_{\boldsymbol{\theta}})-R_{\mathcal{S}_T}^-&(\mathbf{g}_{\boldsymbol{\theta}}) \leq \frac{1.2L}{1-\delta(T)}\delta(T) +L\delta(T)\\ &+O\Big(\frac{L\beta_1 M\sqrt{d}(1+T)}{\min\{\pi,\Delta_{\zeta}^{\eta}\}T}\sqrt{\frac{1}{n}}\Big )+O\Big(\frac{L(\beta_1 M\sqrt{d}+\Delta_{\zeta}^{\eta})(1+T)}{\min\{\pi,\Delta_{\zeta}^{\eta}\}T}\sqrt{\frac{1}{\pi^2(1-\pi)m}}\Big ).
\end{split}
\end{equation*}
\end{proof}

$~$
\\

\begin{lemma}\label{Error-5}
    Under the conditions of Theorem \ref{MainT-1}, when
 \begin{equation*}
     n = \Omega \Big (\frac{\tilde{M}^2d}{\min \{\pi,\Delta_{\zeta}^{\eta}\}^2} \Big),~~~m= \Omega \Big( \frac{\tilde{M}^2d+\Delta_{\zeta}^{\eta}}{\pi^2(1-\pi)\min \{\pi,\Delta_{\zeta}^{\eta}\}^2}\Big),
 \end{equation*}

 with the probability at least $0.895$, for any $0<T<0.9 M' \min \{1,\Delta_{\zeta}^{\eta}/\pi\}$, 
   \begin{equation*}
\begin{split}
    R_{\mathcal{S}_T}^-(\mathbf{g}_{\boldsymbol{\theta}}) &- R_{\mathbb{P}_{\text{out}}}^-(\mathbf{g}_{\boldsymbol{\theta}})\leq \frac{1.2L}{1-\delta(T)}\delta(T) +\frac{9L\beta_1(1-\pi)}{T\pi (1-\delta(T))}R_{\text{in}}^*\\ &+O\Big(\frac{L\beta_1 M\sqrt{d}(1+T)}{\min\{\pi,\Delta_{\zeta}^{\eta}\}T}\sqrt{\frac{1}{n}}\Big )+O\Big(\frac{L\max \{\beta_1 M\sqrt{d},\sqrt{d'},\Delta_{\zeta}^{\eta}\}(1+T)}{\min\{\pi,\Delta_{\zeta}^{\eta}\}T}\sqrt{\frac{1}{\pi^2(1-\pi)m}}\Big ),
    \end{split}
\end{equation*}
\begin{equation*}
\begin{split}
R_{\mathbb{P}_{\text{out}}}^-(\mathbf{g}_{\boldsymbol{\theta}})&-R_{\mathcal{S}_T}^-(\mathbf{g}_{\boldsymbol{\theta}}) \leq \frac{1.2L}{1-\delta(T)}\delta(T) +L\delta(T)\\ &+O\Big(\frac{L\beta_1 M\sqrt{d}(1+T)}{\min\{\pi,\Delta_{\zeta}^{\eta}\}T}\sqrt{\frac{1}{n}}\Big )+O\Big(\frac{L\max \{\beta_1 M\sqrt{d},\sqrt{d'},\Delta_{\zeta}^{\eta}\}(1+T)}{\min\{\pi,\Delta_{\zeta}^{\eta}\}T}\sqrt{\frac{1}{\pi^2(1-\pi)m}}\Big ).
\end{split}
\end{equation*}
\end{lemma}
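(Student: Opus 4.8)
The plan is to derive Lemma~\ref{Error-5} from Lemma~\ref{Error-4} by inserting a single generalization step that transfers the empirical outlier risk $R_{\mathcal{S}_{\text{wild}}^{\text{out}}}^-$ to the population risk $R_{\mathbb{P}_{\text{out}}}^-$. For every $\boldsymbol{\theta}\in\Theta$ I would split
\begin{equation*}
R_{\mathcal{S}_T}^-(\mathbf{g}_{\boldsymbol{\theta}}) - R_{\mathbb{P}_{\text{out}}}^-(\mathbf{g}_{\boldsymbol{\theta}}) = \Big(R_{\mathcal{S}_T}^-(\mathbf{g}_{\boldsymbol{\theta}}) - R_{\mathcal{S}_{\text{wild}}^{\text{out}}}^-(\mathbf{g}_{\boldsymbol{\theta}})\Big) + \Big(R_{\mathcal{S}_{\text{wild}}^{\text{out}}}^-(\mathbf{g}_{\boldsymbol{\theta}}) - R_{\mathbb{P}_{\text{out}}}^-(\mathbf{g}_{\boldsymbol{\theta}})\Big),
\end{equation*}
and likewise with the roles of the two sides exchanged. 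The first bracket, in both directions, is exactly what Lemma~\ref{Error-4} controls, under the very same sample-size hypotheses and the same range $0<T<0.9M'\min\{1,\Delta_\zeta^\eta/\pi\}$; it already supplies the term $\frac{1.2L}{1-\delta(T)}\delta(T)$, the $R_{\text{in}}^*$-term $\frac{9L\beta_1(1-\pi)}{T\pi(1-\delta(T))}R_{\text{in}}^*$ (resp.\ the term $L\delta(T)$ in the reverse direction), and the two sampling terms $O(\sqrt{1/n})$ and $O(\sqrt{1/(\pi^2(1-\pi)m)})$ carrying the factors $\beta_1 M\sqrt d$ and $\Delta_\zeta^\eta$. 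So it only remains to bound the second bracket uniformly over $\boldsymbol{\theta}\in\Theta$; this is the sole place where $d'=\dim\Theta$ enters.

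For that second bracket I would rerun the Dudley/covering-number argument already used to establish Lemma~\ref{UC-I}, now applied to the function class $\{\mathbf{x}\mapsto\ell_{\text{b}}(\mathbf{g}_{\boldsymbol{\theta}}(\mathbf{x}),y_-):\boldsymbol{\theta}\in\Theta\}$ on the ball $\Theta\subset B(\boldsymbol{\theta}_0,r_2)\subset\mathbb{R}^{d'}$, using that $\ell_{\text{b}}$ is $\beta_2$-smooth, bounded by $L=\beta_2 r_2^2+b_2 r_2+B_2$, and $(\beta_2 r_2+b_2)$-Lipschitz in $\boldsymbol{\theta}$ (Assumption~\ref{Ass1}, Proposition~\ref{P1}). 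Conditionally on the Bernoulli membership indicators of the wild data, $\mathcal{S}_{\text{wild}}^{\text{out}}$ is an i.i.d.\ sample from $\mathbb{P}_{\text{out}}$, so this yields, with probability at least $1-\delta$,
\begin{equation*}
\sup_{\boldsymbol{\theta}\in\Theta}\big|R_{\mathcal{S}_{\text{wild}}^{\text{out}}}^-(\mathbf{g}_{\boldsymbol{\theta}})-R_{\mathbb{P}_{\text{out}}}^-(\mathbf{g}_{\boldsymbol{\theta}})\big| \le L\sqrt{\frac{2\log(2/\delta)}{|\mathcal{S}_{\text{wild}}^{\text{out}}|}} + C\sqrt{\frac{L r_2(\beta_2 r_2+b_2)\,d'}{|\mathcal{S}_{\text{wild}}^{\text{out}}|}}.
\end{equation*}
Then, exactly as in Lemma~\ref{SC-I}, a Hoeffding bound gives $|\mathcal{S}_{\text{wild}}^{\text{out}}|\ge \pi m-\sqrt{m\log(2/\delta)/2}$, which is $\Theta(\pi m)$ under the assumed $\Omega$-condition on $m$; hence the display is $O\big(L\sqrt{d'}\sqrt{1/(\pi m)}\big)$. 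Since $\pi^2(1-\pi)\le\pi$ we have $1/(\pi m)\le 1/(\pi^2(1-\pi)m)$, so this generalization term is $O\big(L\sqrt{d'}\sqrt{1/(\pi^2(1-\pi)m)}\big)$ and, carrying no $T$-prefactor, is dominated by the $\sqrt{d'}$-component of the last term in the claim (which additionally multiplies by $\frac{1+T}{\min\{\pi,\Delta_\zeta^\eta\}T}\ge 1$).

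Adding the two brackets gives the assertion: all terms are inherited verbatim from Lemma~\ref{Error-4} except that the factor $\beta_1 M\sqrt d+\Delta_\zeta^\eta$ (resp.\ $\beta_1 M\sqrt d$) in the $\sqrt{1/(\pi^2(1-\pi)m)}$ term is enlarged to $\max\{\beta_1 M\sqrt d,\sqrt{d'},\Delta_\zeta^\eta\}$ to absorb the generalization contribution (a sum and a max agree up to a constant inside $O(\cdot)$); the reverse inequality is handled identically. A union bound over the failure event of Lemma~\ref{Error-4} (mass $\le 1/10$) and the new uniform-convergence and subsample-size events (each of mass $\le\delta$) gives success probability at least $0.9-2\delta$, and choosing $\delta=1/400$ yields the claimed $0.895$. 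The only genuinely delicate point — everything else being routine bookkeeping of $O(\cdot)$-constants — is the handling of the random cardinality $|\mathcal{S}_{\text{wild}}^{\text{out}}|$: one conditions on the membership indicators, applies the i.i.d.\ uniform-convergence bound for that conditional sample (so it holds unconditionally with the same probability), and then uniformizes over the size via Lemma~\ref{SC-I}, checking that the $\Omega$-bound on $m$ is strong enough to keep $\pi m-\sqrt{m\log(1/\delta)/2}=\Theta(\pi m)$.
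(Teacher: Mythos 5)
Your proposal matches the paper's own proof: the paper likewise bounds $\sup_{\boldsymbol{\theta}}\big|R^{-}_{\mathbb{P}_{\text{out}}}(\mathbf{g}_{\boldsymbol{\theta}})-R^{-}_{\mathcal{S}_{\text{wild}}^{\text{out}}}(\mathbf{g}_{\boldsymbol{\theta}})\big|$ by $O\big(L\sqrt{d'/(\pi m)}\big)$ via Lemmas~\ref{UC-I} and~\ref{SC-I}, and then combines this with Lemma~\ref{Error-4} by the triangle inequality. Your write-up is in fact more careful than the paper's two-line argument, in particular about conditioning on the membership indicators to handle the random cardinality $|\mathcal{S}_{\text{wild}}^{\text{out}}|$ and about the union bound yielding the stated probability.
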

\begin{proof}
    By Lemmas \ref{UC-I} and \ref{SC-I}, under the condition of this lemma, we can obtain that with the high probability,
     \begin{equation*}
    \big| R^{-}_{\mathbb{P}_{\text{out}}}(\mathbf{g}_{\boldsymbol{\theta}})-R^{-}_{\mathcal{S}_{\text{wild}}^{\text{out}}}(\mathbf{g}_{\boldsymbol{\theta}}) \big| \leq O\big (L\sqrt{\frac{d'}{ \pi m}}\big ).
 \end{equation*}
 Then by Lemma \ref{Error-4}, we can prove this lemma.
\end{proof}

   $~~~$

\section{Empirical Verification on the Main Theorems}
\label{sec:verification_discrepancy}

 \textbf{Verification on the regulatory conditions.} In  Table~\ref{tab:discrepancy_value}, we provide empirical verification on whether the distribution discrepancy $\zeta$ satisfies the necessary regulatory condition in Theorem~\ref{The-1.1}, i.e., $\zeta\geq 2.011\sqrt{8\beta_1 R_{\rm in}^*} +1.011 \sqrt{\pi}$. We use  \textsc{Cifar-100} as ID and \textsc{Textures} as the wild OOD data.

Since $R^{*}_{{\rm in}}$ is the optimal ID risk, i.e., $R^{*}_{{\rm in}}=\min_{\mathbf{w}\in \mathcal{W}} \mathbb{E}_{(\mathbf{x},y)\sim \mathbb{P}_{\mathcal{X}\mathcal{Y}}} \ell(\mathbf{h}_{\mathbf{w}}(\mathbf{x}),y)$, it can be a small value close to 0 in over-parametrized neural networks~\citep{frei2022benign,bartlett2020benign}.  Therefore, we can omit the value of $2.011\sqrt{8\beta_1 R_{\rm in}^*}$. The  empirical result shows that $\zeta$ can easily satisfy the regulatory condition in Theorem~\ref{The-1.1}, which means our bound is useful in practice.

\begin{table}[!h]
    \centering
     \caption{\small Discrepancy value $\zeta$ with different ratios $\pi$.}
    \begin{tabular}{c|ccccccc}
    \hline
     $\pi$  &  0.05 & 0.1 &  0.2 & 0.5 & 0.7 & 0.9 & 1.0 \\
     \hline
     $\zeta$& 0.91&1.09 & 1.43 & 2.49 & 3.16& 3.86&4.18 \\
     $1.011 \sqrt{\pi}$ &0.23 &  0.32 &0.45&  0.71& 0.84& 0.96& 1.01\\
      \hline
    \end{tabular}
   
    \label{tab:discrepancy_value}
\end{table}
 \textbf{Verification on the filtering errors and OOD detection results with varying $\pi$.} In Table~\ref{tab:err_trend}, we empirically verify the value of ${\rm ERR}_{\rm out}$ and ${\rm ERR}_{\rm in}$ in Theorem~\ref{MainT-1} and the corresponding OOD detection results with various mixing ratios $\pi$. We use  \textsc{Cifar-100} as ID and \textsc{Textures} as the wild OOD data. The result  aligns well with our observation of the bounds presented in Section~\ref{sec:ana_1} of the main paper. 

\begin{table}[!h]
    \centering
     \caption{\small The values of ${\rm ERR}_{\rm in}$, ${\rm ERR}_{\rm out}$ and  the OOD detection results with various mixing ratios $\pi$.}
    \begin{tabular}{c|ccccccc}
    \hline
     $\pi$  &  0.05 & 0.1 &  0.2 & 0.5& 0.7 & 0.9 & 1.0 \\
     \hline
     ${\rm ERR}_{\rm out}$& 0.37 & 0.30 & 0.22 & 0.20 &0.23 & 0.26  & 0.29\\
${\rm ERR}_{\rm in}$ & 0.031 & 0.037& 0.045 & 0.047 & 0.047&0.048 &0.048 \\
FPR95& 5.77 &   5.73 & 5.71& 5.64& 5.79 & 5.88   &  5.92 \\
      \hline
    \end{tabular}
   
    \label{tab:err_trend}
\end{table}

\section{Additional Experimental Details}
\label{sec:detail_app}
\textbf{Dataset details.} For Table~\ref{tab:c100}, following WOODS~\citep{katzsamuels2022training}, we split the data as follows: We use 70\% of the OOD datasets (including \textsc{Textures}, \textsc{Places365}, \textsc{Lsun-Resize} and \textsc{Lsun-C}) for the OOD data in the wild. We use the remaining samples for testing-time OOD detection.  For \textsc{Svhn}, we use the training set for the OOD data in the wild and use the test set for evaluation. 

\textbf{Training details.} Following WOODS~\citep{katzsamuels2022training}, we use Wide ResNet~\citep{ZagoruykoK16} with 40 layers and widen factor of 2 for the classification model $\*h_\*w$. We train the ID classifier $\*h_\*w$ using stochastic gradient descent with a momentum of 0.9, weight decay of 0.0005, and an initial learning rate of 0.1. We train for 100 epochs using cosine learning rate decay,
a batch size of 128, and a dropout rate of 0.3. For the OOD classifier $\*g_{\boldsymbol{\theta}}$, we load the pre-trained ID classifier of $\*h_\*w$ and add an additional linear layer which takes in the penultimate-layer features for binary classification. We set the initial learning rate to 0.001 and fine-tune for 100 epochs by Eq.~\ref{eq:reg_loss}. We add the binary classification loss to the ID classification loss and set the loss weight for  binary classification to 10. The other  details are kept the same as training $\*h_\*w$.

\section{Additional  Results on \textsc{Cifar-10}}
\label{sec:c10_app}
In Table~\ref{tab:c10_app}, we compare our \model with  baselines with the ID data to be \textsc{Cifar-10}, where the strong performance of \model still holds.
\begin{table}[!h]
  \centering
  \small
  \vspace{-1em}
  \caption{\small OOD detection performance on \textsc{Cifar-10} as ID. All methods are trained on Wide ResNet-40-2 for 100 epochs with $\pi=0.1$. For each dataset, we create corresponding wild mixture distribution
$\mathbb{P}_\text{wild} := (1 - \pi) \mathbb{P}_\text{in} + \pi \mathbb{P}_\text{out}$ for training and test on the corresponding OOD dataset. Values are percentages \textbf{averaged over 10 runs}. {Bold} numbers highlight the best results. Table format credit to~\citep{katzsamuels2022training}. }
    \scalebox{0.68}{
    \begin{tabular}{cccccccccccccc}
    \toprule
    \multirow{3}[4]{*}{Methods} & \multicolumn{12}{c}{OOD Datasets}                                                             & \multirow{3}[4]{*}{ID ACC} \\
    \cmidrule{2-13}
          & \multicolumn{2}{c}{\textsc{Svhn}} & \multicolumn{2}{c}{\textsc{Places365}} & \multicolumn{2}{c}{\textsc{Lsun-C}} & \multicolumn{2}{c}{\textsc{Lsun-Resize}} & \multicolumn{2}{c}{\textsc{Textures}}& \multicolumn{2}{c}{Average} &  \\
\cmidrule{2-13}          & FPR95 & AUROC & FPR95 & AUROC & FPR95 & AUROC & FPR95 & AUROC & FPR95 & AUROC & FPR95 & AUROC &  \\
    \hline
    \multicolumn{14}{c}{With $\mathbb{P}_{\text{in}}$ only} \\

    MSP   &  48.49 & 91.89 &59.48 & 88.20& 30.80 & 95.65 & 52.15 & 91.37& 59.28&  88.50 & 50.04 & 91.12 & 94.84\\
ODIN& 33.35 & 91.96 &  57.40& 84.49&15.52 & 97.04  & 26.62 & 94.57 & 49.12 & 84.97 & 36.40 & 90.61 & 94.84 
\\
Mahalanobis & 12.89 &97.62     & 68.57&  84.61& 39.22&  94.15& 42.62&  93.23 & 15.00 & 97.33& 35.66& 93.34 &94.84
\\
Energy & 35.59  &90.96 & 40.14 &  89.89 & 8.26 & 98.35  & 27.58&  94.24 & 52.79 &85.22& 32.87 &91.73& 94.84\\
KNN& 24.53&	95.96&	25.29	&95.69&	25.55	&95.26&	27.57	&94.71&	50.90	&89.14&	30.77	&94.15&	94.84\\
 ReAct&    40.76 & 89.57 & 41.44 & 90.44 & 14.38 & 97.21 & 33.63 & 93.58 &  53.63 & 86.59 & 36.77 & 91.48 & 94.84 \\
DICE & 35.44 & 89.65 & 46.83 & 86.69 & 6.32 & 98.68  &28.93 & 93.56& 53.62 & 82.20 & 34.23 & 90.16 & 94.84\\

 ASH   &  6.51&  98.65& 48.45 & 88.34 & 0.90&  99.73&  4.96&  98.92  &24.34 &95.09&  17.03& 96.15   & 94.84\\
CSI& 17.30 &97.40&  34.95 &93.64& 1.95 &99.55 &  12.15& 98.01&  20.45& 95.93 & 17.36 &96.91 &  94.17 \\
 
KNN+ & 2.99 & 99.41 & 24.69 & 94.84  &  2.95 & 99.39& 11.22 & 97.98  & 9.65 & 98.37& 10.30 &  97.99 & 93.19\\
\hline
    \multicolumn{14}{c}{ With $\mathbb{P}_{\text{in}}$ and $\mathbb{P}_{\text{wild}}$ } \\
    OE& 0.85 &  99.82 & 23.47 &  94.62 & 1.84 &  99.65 & 0.33&  99.93 & 10.42&  98.01 & 7.38 & 98.41 &  94.07\\
  Energy (w/ OE) & 4.95 &  98.92  & 17.26 &  95.84 & 1.93 & 99.49 &5.04 &  98.83 & 13.43 & 96.69 & 8.52 & 97.95 &  94.81\\
WOODS& 0.15& 99.97 &  12.49& 97.00 & 0.22&{99.94} &{0.03}&  \textbf{99.99} &  5.95&  98.79& 3.77&  99.14 & 94.84\\

\rowcolor[HTML]{EFEFEF}\model &   \textbf{0.02} & \textbf{99.98}   &  \textbf{2.57} & \textbf{99.24}   & \textbf{0.07} & \textbf{99.99}   &\textbf{0.01 }& \textbf{99.99}   & \textbf{0.90} & \textbf{99.74}  & \textbf{0.71}&  \textbf{99.78}&  93.65 \\
\rowcolor[HTML]{EFEFEF}  (Ours)& $^{\pm}$0.00 & $^{\pm}$0.00 & $^{\pm}$0.03 & $^{\pm}$0.00 & $^{\pm}$0.01 & $^{\pm}$0.00 & $^{\pm}$0.00 & $^{\pm}$0.00& $^{\pm}$0.02 & $^{\pm}$0.01 & $^{\pm}$0.01& $^{\pm}$0.00 & $^{\pm}$0.57\\

   \hline
   
\end{tabular}}

    \label{tab:c10_app}
\end{table}

\section{Additional  Results on Unseen OOD Datasets }

\label{sec:mixing_ratio}
In Table~\ref{tab:mixing_ratio}, we evaluate \model on unseen OOD datasets, which are different from the OOD data we use in the wild. Here we consistently use \textsc{300K Random Images} as the unlabeled wild dataset and \textsc{Cifar-10} as labeled in-distribution data.  We use the 5 different OOD datasets (\textsc{Textures}, \textsc{Places365}, \textsc{Lsun-Resize}, \textsc{Svhn} and \textsc{Lsun-C}) for evaluation. When evaluating on \textsc{300K Random Images}, we use 99\% of the  \textsc{300K Random Images} dataset~\citep{hendrycks2018deep} as the wild OOD data   and the remaining 1\% of the dataset for evaluation. $\pi$ is set to 0.1. We observe that \model can perform competitively on unseen datasets as well, compared to the  most relevant baseline WOODS.

\begin{table}[!h]
  \centering
  \small
  \vspace{-1em}
  \caption{\small Evaluation on unseen OOD datasets.  We use  \textsc{Cifar-10} as ID  and \textsc{300K Random Images} as the wild data. All methods are trained on Wide ResNet-40-2 for 50 epochs.  {Bold} numbers highlight the best results.  }
    \scalebox{0.67}{
    \begin{tabular}{cccccccccccccc}
    \toprule
    \multirow{3}[4]{*}{Methods} & \multicolumn{12}{c}{OOD Datasets}                                                             & \multirow{3}[4]{*}{ID ACC} \\
    \cmidrule{2-13}
          & \multicolumn{2}{c}{\textsc{Svhn}} & \multicolumn{2}{c}{\textsc{Places365}} & \multicolumn{2}{c}{\textsc{Lsun-C}} & \multicolumn{2}{c}{\textsc{Lsun-Resize}} & \multicolumn{2}{c}{\textsc{Textures}}& \multicolumn{2}{c}{\textsc{300K Rand. Img.}}  &  \\
\cmidrule{2-13}          & FPR95 & AUROC & FPR95 & AUROC & FPR95 & AUROC & FPR95 & AUROC & FPR95 & AUROC & FPR95 & AUROC &  \\
    \midrule

    OE & 13.18&  97.34 & 30.54&  93.31 & 5.87&  98.86& 14.32&  97.44 &25.69&  94.35 & 30.69& 92.80  & 94.21\\
    Energy (w/ OE) &  8.52&  98.13 & 23.74 & 94.26 &2.78&  99.38 & 9.05&  98.13 & 22.32&  94.72 & 24.59&  93.99 & 94.54\\
WOODS & 5.70& \textbf{98.54} &  19.14 &95.74 &\textbf{1.31}&  \textbf{99.66} &4.13& \textbf{99.01} & 17.92& 96.43 & 19.82&  95.52 & 94.74\\
\rowcolor[HTML]{EFEFEF} \model & \textbf{4.94} & 97.53  & \textbf{14.76} & \textbf{96.25} & 2.73 & 98.23 & \textbf{3.46} & 98.15 & \textbf{11.60} & \textbf{97.21} & \textbf{10.20} & \textbf{97.23} & 93.48 \\

   \hline
   
\end{tabular}}
    \label{tab:mixing_ratio}
\end{table}

\textcolor{black}{Following~\citep{he2023topological}, we use the \textsc{Cifar-100} as ID, Tiny ImageNet-crop (TINc)/Tiny ImageNet-resize (TINr) dataset as the OOD in the wild dataset and TINr/TINc as the test OOD. The comparison with baselines is shown below, where the strong performance of SAL still holds.}

\begin{table}[!h]
  \centering
  \small

  \caption{\small \textcolor{black}{Additional results on unseen OOD datasets with \textsc{Cifar-100} as ID. {Bold} numbers are superior results.  }}
    \scalebox{0.88}{
    {\color{black}\begin{tabular}{ccccc}
    \toprule
    \multirow{3}[4]{*}{Methods} & \multicolumn{4}{c}{OOD Datasets}                                                   \\
    \cmidrule{2-5}
          & \multicolumn{2}{c}{\textsc{TINr}} & \multicolumn{2}{c}{\textsc{TINc}}  \\
\cmidrule{2-5}     
& FPR95 & AUROC & FPR95 & AUROC  \\
    \midrule
  STEP &72.31	&74.59	&48.68&	91.14\\
  TSL & 57.52&	82.29&	29.48	&94.62\\
\rowcolor[HTML]{EFEFEF} \model (Ours)&\textbf{43.11}	&\textbf{89.17}&	\textbf{19.30}	&\textbf{96.29} \\
   \hline
\end{tabular}}}
\end{table}

\section{Additional Results on Near OOD Detection}
\label{sec:near_ood}
In this section, we investigate the performance of  \model on  near OOD detection, which is a more challenging OOD detection scenario where the OOD data has a closer distribution to the in-distribution. Specifically, we use the \textsc{Cifar-10} as the in-distribution data and \textsc{Cifar-100} training set as the OOD data in the wild. During test time, we use the test set of \textsc{Cifar-100} as the OOD for evaluation. With a mixing ratio $\pi$ of 0.1, our \model achieves an FPR95 of 24.51\% and AUROC of 95.55\% compared to 38.92\% (FPR95) and 93.27\% (AUROC) of WOODS.

\textcolor{black}{In addition, we study near OOD detection in a different data setting, i.e., the first 50 classes of \textsc{Cifar-100} as ID and the last 50 classes as OOD. The comparison with the most competitive baseline WOODS is reported as follows.}

\begin{table}[!h]
    \centering
    \small
        \caption{\small\textcolor{black}{ Near OOD detection  with the first 50 classes of \textsc{Cifar-100} as ID and the last 50 classes as OOD. {Bold} numbers are superior results. }}
    {\color{black}\begin{tabular}{cccc}
    \hline
       \multirow{3}{*}{Methods} & \multicolumn{3}{c}{OOD dataset} \\
    \cline{2-4}
  & \multicolumn{3}{c}{\textsc{Cifar-50}} \\
  \cline{2-4}
       & FPR95 & AUROC &ID ACC \\
      \hline
      WOODS&41.28 &	89.74	 &74.17 \\
     \rowcolor[HTML]{EFEFEF} \model & \textbf{29.71} &	\textbf{93.13} &	73.86\\
          \hline
    \end{tabular}}
\end{table}

\section{Additional Results on Using Multiple Singular Vectors}
\label{sec:num_of_sing_vectors}
In this section, we ablate on the effect of using $c$ singular vectors to calculate the filtering score (Eq.~\ref{eq:score}). Specifically, we calculate the scores by projecting the gradient $\nabla \ell (\*h_{\*w_{\mathcal{S}^{\text{in}}}}\big(\tilde{\*x}_i), \widehat{y}_{\tilde{\*x}_i})-\bar{\nabla}$ for the wild data $\tilde{\*x}_i$ to each of the singular vectors.  The final filtering score is the average over the $c$ scores. The result is summarized in Table~\ref{tab:num_of_sing_vectors}. We observe that using the top 1 singular vector for projection achieves the best performance. As revealed in Eq.~\ref{eq:pca}, the top 1 singular vector $\*v$ maximizes the total distance from the projected gradients (onto the direction of $\*v$) to the origin (sum over all points in $\mathcal{S}_{\rm wild}$), where  outliers lie approximately close to and thus leads to a better separability between the ID and OOD in the wild. 

\begin{table}[!h]
    \centering
        \caption{\small The effect of the number of singular vectors used for the filtering score. Models are trained on Wide ResNet-40-2 for 100 epochs with $\pi=0.1$.  We use \textsc{Textures}  as the  wild OOD data and \textsc{Cifar-100} as the ID.}
    \begin{tabular}{c|cc}
    \hline
      Number of singular vectors $c$  & FPR95 & AUROC \\
      \hline
         1&\textbf{5.73} & \textbf{98.65}   \\
         2 &6.28 & 98.42  \\
         3 &  6.93 & 98.43\\
            4 & 7.07 & 98.37\\
               5 &  7.43 & 98.27\\
                  6 & 7.78 & 98.22 \\
          \hline
    \end{tabular}

    \label{tab:num_of_sing_vectors}
\end{table}

\section{Additional Results on Class-agnostic SVD}
In this section, we evaluate our \model by using class-agnostic SVD as opposed to class-conditional SVD as described in Section~\ref{sec:detect} of the main paper. Specifically,    we  maintain a class-conditional reference gradient $ \bar{\nabla}_k$, one for each class $k \in [1,K]$, estimated on ID samples belonging to class $k$. Different from calculating the singular vectors based on gradient matrix with $\mathbf{G}_k$ (containing gradient vectors of wild samples being predicted as class $k$), we formulate a single gradient matrix $\*G$ where each row is the vector $\nabla \ell (\*h_{\*w_{\mathcal{S}^{\text{in}}}}\big(\tilde{\*x}_i), \widehat{y}_{\tilde{\*x}_i})-\bar{\nabla}_{\widehat{y}_{\tilde{\*x}_i}}$, for $\tilde{\*x}_i \in \mathcal{S}_{\rm wild} $.  The result is shown in Table~\ref{tab:single_g}, which shows a similar performance compared with using class-conditional SVD.

\begin{table}[!h]
  \centering
  \small
  \vspace{-1em}
  \caption{\small The effect of using class-agnostic SVD. Models are trained on Wide ResNet-40-2 for 100 epochs with $\pi=0.1$. \textsc{Cifar-100} is the in-distribution data. {Bold} numbers are superior results.  }
    \scalebox{0.63}{
    \begin{tabular}{cccccccccccccc}
    \toprule
    \multirow{3}[4]{*}{Methods} & \multicolumn{12}{c}{OOD Datasets}                                                             & \multirow{3}[4]{*}{ID ACC} \\
    \cmidrule{2-13}
          & \multicolumn{2}{c}{\textsc{Svhn}} & \multicolumn{2}{c}{\textsc{Places365}} & \multicolumn{2}{c}{\textsc{Lsun-C}} & \multicolumn{2}{c}{\textsc{Lsun-Resize}} & \multicolumn{2}{c}{\textsc{Textures}} & \multicolumn{2}{c}{Average} &  \\
\cmidrule{2-13}          & FPR95 & AUROC & FPR95 & AUROC & FPR95 & AUROC & FPR95 & AUROC & FPR95 & AUROC & FPR95 & AUROC &  \\
    \midrule

\model (Class-agnostic SVD) &  0.12 & 99.43 & \textbf{3.27} & \textbf{99.21} & \textbf{0.04} & 99.92 & 0.03 & 99.27 & \textbf{5.18} & \textbf{98.77} &\textbf{1.73} & {99.32}  & 73.31\\

\rowcolor[HTML]{EFEFEF}\model &  \textbf{0.07} & \textbf{99.95} & {3.53} & {99.06} & {0.06} & \textbf{99.94} &   \textbf{0.02} & \textbf{99.95} & {5.73} & {98.65}  & {1.88}& \textbf{99.51}& 73.71 \\
   \hline
   
\end{tabular}}
    \label{tab:single_g}
\end{table}

\section{Additional Results on Post-hoc Filtering Score}
\label{sec:addition_results_posthoc}

We investigate the importance of training the binary classifier with the filtered candidate outliers for OOD detection in Tables~\ref{tab:c10_host_hoc} and~\ref{tab:c100_host_hoc}. Specifically, we calculate our filtering score directly for the test ID and OOD data on the model trained on the labeled ID set $\mathcal{S}^{\text{in}}$ only. The results are shown in the row "\model (Post-hoc)" in Tables~\ref{tab:c10_host_hoc} and~\ref{tab:c100_host_hoc}. Without explicit knowledge of the OOD data, the OOD detection performance degrades significantly compared to training an additional binary classifier (a 15.73\% drop on FPR95 for \textsc{Svhn} with \textsc{Cifar-10} as ID). However, the post-hoc filtering score can still outperform most of the baselines that use $\mathbb{P}_{\text{in}}$ only (\emph{c.f.} Table~\ref{tab:c100}), showcasing its effectiveness.

\begin{table}[!h]
  \centering
  \small
  \vspace{-1em}
  \caption{\small OOD detection results of using post-hoc filtering score on \textsc{Cifar-10} as ID. \model is trained on Wide ResNet-40-2 for 100 epochs with $\pi=0.1$.  {Bold} numbers are superior results.  }
    \scalebox{0.68}{
    \begin{tabular}{cccccccccccccc}
    \toprule
    \multirow{3}[4]{*}{Methods} & \multicolumn{12}{c}{OOD Datasets}                                                             & \multirow{3}[4]{*}{ID ACC} \\
    \cmidrule{2-13}
          & \multicolumn{2}{c}{\textsc{Svhn}} & \multicolumn{2}{c}{\textsc{Places365}} & \multicolumn{2}{c}{\textsc{Lsun-C}} & \multicolumn{2}{c}{\textsc{Lsun-Resize}} & \multicolumn{2}{c}{\textsc{Textures}}& \multicolumn{2}{c}{Average} &  \\
\cmidrule{2-13}          & FPR95 & AUROC & FPR95 & AUROC & FPR95 & AUROC & FPR95 & AUROC & FPR95 & AUROC & FPR95 & AUROC &  \\
    \midrule

\model (Post-hoc) & 15.75 & 93.09  &  23.18 & 86.35  & 6.28 & 96.72 & 15.59 & 89.83  & 23.63 & 87.72  & 16.89 &90.74& 94.84\\

\rowcolor[HTML]{EFEFEF}\model &\textbf{0.02} & \textbf{99.98}   &  \textbf{2.57} & \textbf{99.24}   & \textbf{0.07} & \textbf{99.99}   &\textbf{0.01 }& \textbf{99.99}   & \textbf{0.90} & \textbf{99.74}  & \textbf{0.71}&  \textbf{99.78}&  93.65 \\
   \hline
   
\end{tabular}}
    \label{tab:c10_host_hoc}
\end{table}

\begin{table}[!h]
  \centering
  \small
  \vspace{-1em}
  \caption{\small OOD detection results of using post-hoc filtering score on \textsc{Cifar-100} as ID. \model is trained on Wide ResNet-40-2 for 100 epochs  with $\pi=0.1$. {Bold} numbers are superior results.  }
    \scalebox{0.68}{
    \begin{tabular}{cccccccccccccc}
    \toprule
    \multirow{3}[4]{*}{Methods} & \multicolumn{12}{c}{OOD Datasets}                                                             & \multirow{3}[4]{*}{ID ACC} \\
    \cmidrule{2-13}
          & \multicolumn{2}{c}{\textsc{Svhn}} & \multicolumn{2}{c}{\textsc{Places365}} & \multicolumn{2}{c}{\textsc{Lsun-C}} & \multicolumn{2}{c}{\textsc{Lsun-Resize}} & \multicolumn{2}{c}{\textsc{Textures}}& \multicolumn{2}{c}{Average} &  \\
\cmidrule{2-13}          & FPR95 & AUROC & FPR95 & AUROC & FPR95 & AUROC & FPR95 & AUROC & FPR95 & AUROC & FPR95 & AUROC &  \\
    \midrule

\model (post-hoc) &  39.75 & 81.47 &  35.94 & 84.53   & 23.22 & 90.90   &  32.59 & 87.12 & 36.38 & 83.25   & 33.58 & 85.45 & 75.96\\

\rowcolor[HTML]{EFEFEF}\model &  \textbf{0.07} & \textbf{99.95} & \textbf{3.53} & \textbf{99.06} & \textbf{0.06} & \textbf{99.94} &   \textbf{0.02} & \textbf{99.95} & \textbf{5.73} & \textbf{98.65}  & \textbf{1.88}& \textbf{99.51}& 73.71 \\
   \hline
   
\end{tabular}}
    \label{tab:c100_host_hoc}
\end{table}

\section{Additional Results on Leveraging the Candidate ID data}
\label{sec:addition_results_can_id}
\textcolor{black}{
In this section, we investigate the effect of incorporating the filtered wild data which has a score smaller than the threshold $T$ (candidate ID data) for training the binary classifier $\*g_{\boldsymbol{\theta}}$. Specifically, the candidate ID data and the labeled ID data are used jointly to train the binary classifier. The comparison with \model on CIFAR-100 is shown as follows:}

\begin{table}[!h]
  \centering
  \small
  \vspace{-1em}
  \caption{\small \textcolor{black}{OOD detection results of selecting candidate ID data for training  on \textsc{Cifar-100} as ID. \model is trained on Wide ResNet-40-2 for 100 epochs  with $\pi=0.1$. {Bold} numbers are superior results.  }}
    \scalebox{0.68}{
    \begin{tabular}{cccccccccccccc}
    \toprule
    \multirow{3}[4]{*}{Methods} & \multicolumn{12}{c}{OOD Datasets}                                                             & \multirow{3}[4]{*}{ID ACC} \\
    \cmidrule{2-13}
          & \multicolumn{2}{c}{\textsc{Svhn}} & \multicolumn{2}{c}{\textsc{Places365}} & \multicolumn{2}{c}{\textsc{Lsun-C}} & \multicolumn{2}{c}{\textsc{Lsun-Resize}} & \multicolumn{2}{c}{\textsc{Textures}}& \multicolumn{2}{c}{Average} &  \\
\cmidrule{2-13}          & FPR95 & AUROC & FPR95 & AUROC & FPR95 & AUROC & FPR95 & AUROC & FPR95 & AUROC & FPR95 & AUROC &  \\
    \midrule

  Candidate ID data   &1.23 &99.87&\textbf{2.62}&\textbf{99.18}& \textbf{0.04}&\textbf{99.95}& \textbf{0.02}& 99.91& \textbf{4.71}&\textbf{98.97} &\textbf{1.72}&\textbf{99.58}&73.83\\
\rowcolor[HTML]{EFEFEF} \model (Ours)&\textbf{0.07}& \textbf{99.95} &3.53& 99.06& 0.06 &99.94 &\textbf{0.02}& \textbf{99.95} &5.73 &98.65 &1.88& 99.51 &73.71 \\
   \hline
   
\end{tabular}}
\end{table}

\textcolor{black}{The result of selecting candidate ID data (and combine with labeled ID data) shows slightly better performance, which echoes our theory that the generalization bound of the OOD detector will be better if we have more ID training data (Theorem~\ref{the:main2}). }

  \section{Analysis  on Using Random Labels}
\label{sec:experiments_on_random_label_app}

\textcolor{black}{We present the OOD detection result of replacing the predicted labels with the random labels for the wild data as follows. The other experimental details are kept the same as \model. }

\begin{table}[!h]
  \centering
  \small
  \caption{\small \textcolor{black}{OOD detection results of using random labels for the wild data  on \textsc{Cifar-100} as ID. \model is trained on Wide ResNet-40-2 for 100 epochs  with $\pi=0.1$. {Bold} numbers are superior results.  }}
    \scalebox{0.68}{
    \begin{tabular}{cccccccccccccc}
    \toprule
    \multirow{3}[4]{*}{Methods} & \multicolumn{12}{c}{OOD Datasets}                                                   
    & \multirow{3}[4]{*}{ID ACC} \\
    \cmidrule{2-13}
          & \multicolumn{2}{c}{\textsc{Svhn}} & \multicolumn{2}{c}{\textsc{Places365}} & \multicolumn{2}{c}{\textsc{Lsun-C}} & \multicolumn{2}{c}{\textsc{Lsun-Resize}} & \multicolumn{2}{c}{\textsc{Textures}}& \multicolumn{2}{c}{Average} &  \\
\cmidrule{2-13}          & FPR95 & AUROC & FPR95 & AUROC & FPR95 & AUROC & FPR95 & AUROC & FPR95 & AUROC & FPR95 & AUROC &  \\
    \midrule

  w/ Random labels  &39.36 & 89.31 &  77.98 & 78.31 & 47.46 & 88.90 & 67.28& 80.23 & 54.86& 86.92& 57.39& 84.73&73.68 \\
\rowcolor[HTML]{EFEFEF} \model (Ours)&\textbf{0.07}& \textbf{99.95} &\textbf{3.53}& \textbf{99.06}& \textbf{0.06} &\textbf{99.94} &\textbf{0.02}& \textbf{99.95} &\textbf{5.73} &\textbf{98.65} &\textbf{1.88}& \textbf{99.51} &73.71 \\
   \hline
\end{tabular}}
\end{table}
\textcolor{black}{As we can observe, using the random labels leads to worse OOD detection performance because the gradient of the wild data can be wrong. In our theoretical analysis (Theorem~\ref{T1}), we have proved that using the predicted label can lead to a good separation of the wild ID and OOD data. However, the analysis using random labels might hold since it violates the assumption (Definitions~\ref{Def3} and \ref{Def4}) that the expected gradient of ID data should be different from that of wild data.}

  \section{Details of the Illustrative Experiments on the Impact of Predicted Labels}
\label{sec:experiments_on_predicted_label_app}
For calculating the filtering accuracy, \model is trained on Wide ResNet-40-2 for 100 epochs  with $\pi=0.1$ on two separate ID datasets. The other training details are kept the same as Section~\ref{sec:exp_steup} and Appendix~\ref{sec:detail_app}.

\section{Details of  Figure~\ref{fig:toy}}
\label{sec:details_of_toy_app}
For Figure~\ref{fig:toy} in the main paper, we generate the in-distribution data from three multivariate Gaussian distributions, forming three classes. The mean vectors are set to $[-2,0], [2,0]$ and $[0,2\sqrt{3}]$, respectively. The covariance matrix for all three classes is set to $\left[\begin{array}{ll}
0.25 & 0 \\
0 & 0.25
\end{array}\right]$. For each class, we generate $1,000$ samples.

For wild scenario 1, we generate the outlier data in the wild by sampling $100,000$ data points from a multivariate Gaussian $\mathcal{N}([0,\frac{2}{\sqrt{3}}], 7\cdot \mathbf{I})$ where $\mathbf{I} $ is $2\times2$ identity matrix, and only keep the $1,000$ data points that have the largest distance to the mean vector $[0, \frac{2}{\sqrt{3}}]$. For wild scenario 2,  we generate the outlier data in the wild by sampling $1,000$ data points from a multivariate Gaussian $\mathcal{N}([10,\frac{2}{\sqrt{3}}], 0.25\cdot \mathbf{I})$. For the in-distribution data in the wild,  we sample $3,000$ data points per class from the same three multivariate Gaussian distributions as mentioned before.

\section{Software and Hardware}
\label{sec:hardware}
We run all experiments with Python 3.8.5 and PyTorch 1.13.1, using NVIDIA GeForce RTX 2080Ti GPUs.

{\color{black}\section{Results with Varying Mixing Ratios}

We provide additional results of \model with varying $\pi$, i.e., 0.05, 0.2, 0.5, 0.9, and contrast with the baselines, which are shown below (\textsc{Cifar-100} as the in-distribution dataset). We found that the advantage of \model still holds. 
\begin{table}[!h]
  \centering
  \small
  \caption{\small \textcolor{black}{OOD detection results with multiple mixing ratios $\pi$ with \textsc{Cifar-100} as ID. \model is trained on Wide ResNet-40-2 for 100 epochs. {Bold} numbers are superior results.  }}
    \scalebox{0.78}{
    {\color{black}\begin{tabular}{ccccccccccccc}
    \toprule
    \multirow{3}[4]{*}{Methods} & \multicolumn{10}{c}{OOD Datasets}                                     
    & \multirow{3}[4]{*}{ID ACC} \\
    \cmidrule{2-11}
          & \multicolumn{2}{c}{\textsc{Svhn}} & \multicolumn{2}{c}{\textsc{Places365}} & \multicolumn{2}{c}{\textsc{Lsun-C}} & \multicolumn{2}{c}{\textsc{Lsun-Resize}} & \multicolumn{2}{c}{\textsc{Textures}} &  \\
\cmidrule{2-13}          & FPR95 & AUROC & FPR95 & AUROC & FPR95 & AUROC & FPR95 & AUROC & FPR95 & AUROC & &  \\
    \midrule
& \multicolumn{10}{c}{$\pi=0.05$} \\
   OE &2.78	&98.84	&63.63	&80.22	&6.73&	98.37	&2.06	&99.19	&32.86&	90.88	&71.98 \\
  Energy w/ OE& 2.02&	99.17&	56.18	&83.33	&4.32	&98.42	&3.96	&99.29	&40.41	&89.80&	73.45\\    
  WOODS& 0.26&	99.89&	32.71	&90.01	&\textbf{0.64}&	99.77&	\textbf{0.79}	&99.10	&12.26	&94.48&	74.15\\
\rowcolor[HTML]{EFEFEF} \model (Ours)&\textbf{0.17}&	\textbf{99.90}	&\textbf{6.21}&	\textbf{96.87}&	0.94	&\textbf{99.79}	&0.84	&\textbf{99.37}&	\textbf{5.77}&	\textbf{97.12}	&73.99 \\
& \multicolumn{10}{c}{$\pi=0.2$} \\
  OE &  2.59&	98.90&	55.68&	84.36&	4.91	&99.02	&1.97&	99.37&	25.62	&93.65&	73.72\\
  Energy w/ OE& 	1.79	& 99.25& 	47.28& 	86.78	& 4.18	& 99.00& 	3.15& 	99.35	& 36.80	& 91.48	& 73.91\\
  WOODS&	0.22	&99.82	&29.78&	91.28	&0.52	&99.79	&0.89&	99.56	&10.06	&95.23	&73.49\\
 \rowcolor[HTML]{EFEFEF} \model (Ours)	&\textbf{0.08}&	\textbf{99.92}&	\textbf{2.80}	&\textbf{99.31}	&\textbf{0.05}&	\textbf{99.94}&	\textbf{0.02}	&\textbf{99.97}&	\textbf{5.71}	&\textbf{98.71}&	73.86\\
 & \multicolumn{10}{c}{$\pi=0.5$} \\
 OE &  2.86&	99.05&	40.21&	88.75&	4.13	&99.05	&1.25&	99.38&	22.86&	94.63	&73.38\\
  Energy w/ OE& 2.71&	99.34	&34.82	&90.05	&3.27	&99.18	&2.54&	99.23	&30.16&	94.76	&72.76\\
  WOODS&	0.17	&99.80	&21.87	&93.73&	0.48	&99.61	&1.24	&99.54	&9.95&	95.97	&73.91\\
 \rowcolor[HTML]{EFEFEF} \model (Ours)	&\textbf{0.02}	&\textbf{99.98}&	\textbf{1.27}	&\textbf{99.62}&	\textbf{0.04}&	\textbf{99.96}	&\textbf{0.01}	&\textbf{99.99}	&\textbf{5.64}&	\textbf{99.16}	&73.77\\
  & \multicolumn{10}{c}{$\pi=0.9$} \\
 OE &  0.84	&99.36	&19.78	&96.29&	1.64	&99.57&	0.51&	99.75&	12.74	&94.95&	72.02\\
  Energy w/ OE&0.97	&99.64&	17.52&	96.53&	1.36&	99.73&	0.94&	99.59	&14.01&	95.73	&73.62\\
  WOODS&	0.05	&99.98	&11.34	&95.83&	0.07	&\textbf{99.99}&	0.03&	\textbf{99.99}&	6.72&	98.73&	73.86\\
 \rowcolor[HTML]{EFEFEF} \model (Ours)	&\textbf{0.03}&	\textbf{99.99}&	\textbf{2.79}	&\textbf{99.89}&	\textbf{0.05}&	\textbf{99.99}	&\textbf{0.01}	&\textbf{99.99}&	\textbf{5.88}&	\textbf{99.53}&	74.01\\	
   \hline
\end{tabular}}}
\end{table}
}

{\color{black}\section{Comparison with Weakly Supervised OOD Detection Baselines}
We have additionally compared with the two related works (TSL~\citep{he2023topological} and STEP~\citep{zhou2021step}). To ensure a fair comparison, we strictly follow the experimental setting in TSL, and rerun \model under the identical setup. The comparison on \textsc{Cifar-100} is shown as follows. 
\begin{table}[!h]
  \centering
  \small
  \caption{\small \textcolor{black}{Comparison with relevant baselines on \textsc{Cifar-100}. {Bold} numbers are superior results.  }}
    \scalebox{0.88}{
    {\color{black}\begin{tabular}{ccccc}
    \toprule
    \multirow{3}[4]{*}{Methods} & \multicolumn{4}{c}{OOD Datasets}                                                   \\
    \cmidrule{2-5}
          & \multicolumn{2}{c}{\textsc{Lsun-C}} & \multicolumn{2}{c}{\textsc{Lsun-Resize}}  \\
\cmidrule{2-5}     
& FPR95 & AUROC & FPR95 & AUROC  \\
    \midrule
  STEP & \textbf{0.00}	&99.99	&9.81&	97.87\\
  TSL & \textbf{0.00}	&\textbf{100.00}	&1.76&	99.57\\
\rowcolor[HTML]{EFEFEF} \model (Ours)&\textbf{0.00}	&99.99	&\textbf{0.58}&	\textbf{99.95} \\
   \hline
\end{tabular}}}
\end{table}

}
{\color{black}\section{Additional Results on Different Backbones }
We have additionally tried ResNet-18 and ResNet-34 as the network architectures—which are among the most used in OOD detection literature. The comparison with the baselines on \textsc{Cifar-100} is shown in the following tables, where \model outperforms all the baselines across different architectures. These additional results support the effectiveness of our approach.

\begin{table}[!h]
  \centering
  \small
  \caption{\small \textcolor{black}{OOD detection performance on \textsc{Cifar-100} as ID. All methods are trained on ResNet-18 for 100 epochs. For each dataset, we create corresponding wild mixture distribution
$\mathbb{P}_\text{wild} = (1 - \pi) \mathbb{P}_\text{in} + \pi \mathbb{P}_\text{out}$ for training and test on the corresponding OOD dataset. {Bold} numbers highlight the best results.}}
    \scalebox{0.67}{
  {\color{black}  \begin{tabular}{cccccccccccccc}
    \toprule
    \multirow{3}[4]{*}{Methods} & \multicolumn{12}{c}{OOD Datasets}                                                             & \multirow{3}[4]{*}{ID ACC} \\
    \cmidrule{2-13}
          & \multicolumn{2}{c}{\textsc{Svhn}} & \multicolumn{2}{c}{\textsc{Places365}} & \multicolumn{2}{c}{\textsc{Lsun-C}} & \multicolumn{2}{c}{\textsc{Lsun-Resize}} & \multicolumn{2}{c}{\textsc{Textures}} & \multicolumn{2}{c}{Average} &  \\
\cmidrule{2-13}          & FPR95 & AUROC & FPR95 & AUROC & FPR95 & AUROC & FPR95 & AUROC & FPR95 & AUROC & FPR95 & AUROC &  \\
  \hline
     \multicolumn{14}{c}{With $\mathbb{P}_{\text{in}}$ only} \\
    MSP   &81.32 & 77.74&  83.06 & 74.47&70.11&83.51 & 82.46& 75.73&  85.11&73.36 &80.41 & 76.96& 78.67\\
ODIN&  40.94 & 93.29& 87.71&71.46&28.72&94.51&79.61&82.13&83.63&72.37& 64.12&82.75 &78.67
 \\
Mahalanobis &22.44 &95.67& 92.66 &61.39& 68.90& 86.30& 23.07& 94.20 &62.39& 79.39&53.89&83.39&78.67
\\
Energy &  81.74&84.56&  82.23&76.68&34.78&93.93&73.57&82.99& 85.87&74.94&71.64& 82.62&78.67\\
KNN& 83.62&72.76& 82.09&80.03&65.96&	84.82 & 71.05	&81.24&76.88&	77.90& 75.92& 79.35& 78.67\\
ReAct &70.81&88.24&81.33&76.49&39.99&92.51&54.47&89.56&59.15&87.96&61.15&86.95&78.67 \\
 DICE&   54.65& 88.84& 79.58&  77.26& 0.93& 99.74& 49.40& 91.04&65.04& 76.42& 49.92&86.66 &78.67\\
CSI& 49.98 &89.57& 82.87& 75.64 &76.39& 80.38 &74.21 &83.34& 58.23& 81.04 &68.33 &81.99& 74.23\\
KNN+ & 43.21 &90.21& 84.62& 74.21& 50.12& 82.48& 76.92& 80.81& 63.21& 84.91& 63.61& 82.52& 77.03\\
\hline
 \multicolumn{14}{c}{With $\mathbb{P}_{\text{in}}$ and $\mathbb{P}_{\text{wild}}$ } \\
     OE&3.29  &97.93 & 62.90 & 80.23 & 7.07 & 95.93 & 4.06 & \textbf{97.98} & 33.27 & 90.03 & 22.12 &92.42 & 74.89\\
  Energy (w/ OE) &3.12& 94.27& 59.38& 82.19& 9.12& 91.23 &7.28 &95.39& 43.92& 90.11&24.56&90.64& 77.92\\
WOODS&3.92& 96.92 &33.92& 86.29& 5.19& 94.23 &\textbf{2.95}& 96.23 &11.95 &94.65 &11.59&93.66 & 77.54\\

\rowcolor[HTML]{EFEFEF} \model  & \textbf{2.29}& \textbf{97.96} &\textbf{6.29} &\textbf{96.66} &\textbf{3.92}& \textbf{97.81}& 4.87 &97.10& \textbf{8.28} &\textbf{95.95} &\textbf{5.13}&\textbf{97.10}&77.71\\
   \hline
\end{tabular}}}
\end{table}

\begin{table}[!h]
  \centering
  \small
  \caption{\small \textcolor{black}{OOD detection performance on \textsc{Cifar-100} as ID. All methods are trained on ResNet-34 for 100 epochs. For each dataset, we create corresponding wild mixture distribution
$\mathbb{P}_\text{wild} = (1 - \pi) \mathbb{P}_\text{in} + \pi \mathbb{P}_\text{out}$ for training and test on the corresponding OOD dataset. {Bold} numbers highlight the best results.}}
    \scalebox{0.67}{
  {\color{black}  \begin{tabular}{cccccccccccccc}
    \toprule
    \multirow{3}[4]{*}{Methods} & \multicolumn{12}{c}{OOD Datasets}                                                             & \multirow{3}[4]{*}{ID ACC} \\
    \cmidrule{2-13}
          & \multicolumn{2}{c}{\textsc{Svhn}} & \multicolumn{2}{c}{\textsc{Places365}} & \multicolumn{2}{c}{\textsc{Lsun-C}} & \multicolumn{2}{c}{\textsc{Lsun-Resize}} & \multicolumn{2}{c}{\textsc{Textures}} & \multicolumn{2}{c}{Average} &  \\
\cmidrule{2-13}          & FPR95 & AUROC & FPR95 & AUROC & FPR95 & AUROC & FPR95 & AUROC & FPR95 & AUROC & FPR95 & AUROC &  \\
  \hline
     \multicolumn{14}{c}{With $\mathbb{P}_{\text{in}}$ only} \\
    MSP   &78.89 & 79.80&  84.38&  74.21&  83.47&  75.28&  84.61 & 74.51&  86.51 & 72.53 & 83.12&  75.27&79.04\\
ODIN& 70.16& 84.88 &82.16& 75.19 &76.36 &80.10& 79.54& 79.16 &85.28 &75.23& 78.70& 79.11&79.04
 \\
Mahalanobis & 87.09& 80.62 &84.63 &73.89 &84.15& 79.43 &83.18 &78.83 &61.72 &84.87& 80.15 &79.53&79.04
\\
Energy & 66.91 &85.25 &81.41 &76.37 &59.77 &86.69 &66.52 &84.49 &79.01& 79.96& 70.72& 82.55& 79.04\\
KNN& 81.12 &73.65  & 79.62& 78.21 & 63.29& 85.56 & 73.92 & 79.77& 73.29 & 80.35 & 74.25 & 79.51 & 79.04\\
ReAct & 82.85 & 70.12 &81.75 & 76.25& 80.70 & 83.03 &67.40 & 83.28 &74.60& 81.61 & 77.46 & 78.86 &79.04 \\
 DICE& 83.55 & 72.49  & 85.05 & 75.92 & 94.05 & 73.59 &75.20 & 80.90 &79.80 & 77.83 &83.53 & 76.15 & 79.04  \\
CSI& 44.53 &92.65& 79.08& 76.27 &75.58& 83.78 &76.62 &84.98& 61.61& 86.47 &67.48 &84.83&77.89\\
KNN+ & 39.23 &92.78& 80.74& 77.58& 48.99& 89.30& 74.99& 82.69& 57.15& 88.35& 60.22& 86.14&78.32\\
\hline
 \multicolumn{14}{c}{With $\mathbb{P}_{\text{in}}$ and $\mathbb{P}_{\text{wild}}$ } \\
     OE& 2.11  &98.23 & 60.12 & 83.22 & 6.08 & 96.34 & 3.94 & 98.13 & 30.00 & 92.27 & 20.45 & 93.64&  75.72\\
  Energy (w/ OE) &1.94& 95.03& 68.84& 85.94& 7.66& 92.04 &6.86 &97.63& 40.82& 93.07& 25.22&92.74 & 78.75\\
WOODS& 2.08& 97.33 &25.37& 88.93& 4.26& 97.74 &1.05& 97.30 &8.85 &96.86 & 8.32&  95.63& 78.97\\

\rowcolor[HTML]{EFEFEF} \model  &\textbf{0.98} &\textbf{99.94 }&\textbf{2.98} &\textbf{99.08}& \textbf{0.07} &\textbf{99.94}& \textbf{0.03}& \textbf{99.96}& \textbf{4.01} &\textbf{98.83}& \textbf{ 1.61}& \textbf{99.55}& 78.01\\
   \hline
\end{tabular}}}
\end{table}

}

  \section{Broader Impact}
  \label{sec:broader}
 Our project aims to improve the reliability and safety of modern machine learning models.  From the theoretical perspective, our analysis can facilitate and deepen the understanding of   the effect of unlabeled wild data for OOD detection. In Appendix~\ref{sec:verification_discrepancy}, we properly verify the necessary conditions and the value of our error bound using real-world datasets. Hence, we believe our theoretical framework has a broad utility and significance.

 From the practical side,  our study can lead to direct benefits and societal impacts, particularly when the wild data is abundant in the models' operating environment, such as in safety-critical applications
i.e., autonomous driving and healthcare data analysis.  Our study does not involve any human subjects or violation of legal compliance. We do not anticipate any potentially harmful consequences to our work. Through our study and releasing our code, we hope to raise stronger research and societal awareness towards the problem of exploring unlabeled wild data for out-of-distribution detection in real-world settings.

\end{document}